\newcommand{\red}[1]{\textcolor{red}{#1}}
\theoremstyle{plain}
\newtheorem{theorem}{Theorem}[section]
\newtheorem{lemma}[theorem]{Lemma}
\newtheorem{example}{Example}
\theoremstyle{definition}
\theoremstyle{remark}
\def\1{\bm{1}}
\def\vr{{\bm{r}}}
\def\vx{{\bm{x}}}
\DeclareMathAlphabet{\mathsfit}{\encodingdefault}{\sfdefault}{m}{sl}
\SetMathAlphabet{\mathsfit}{bold}{\encodingdefault}{\sfdefault}{bx}{n}
\newcommand{\E}{\mathbb{E}}
\DeclareMathOperator*{\argmax}{arg\,max}
\DeclareMathOperator*{\argmin}{arg\,min}
\renewcommand{\E}{\mathbb{E}}
\newcommand{\subG}{\operatorname{subG}}
\newcommand{\subE}{\operatorname{subE}}
\newcommand{\his}{\mathcal{H}}
\newcommand{\var}{\operatorname{var}}
\newcommand{\cov}{\operatorname{cov}}
\newcommand{\pr}{\mathbb{P}}
\newcommand{\name}[1]{\texttt{#1}}
\newcommand{\prob}{\mathbb{P}}
\newcommand{\ind}{\mathbb{I}}
\newlength\oversetwidth
\newlength\underwidth
\newcommand\alignedoverset[2]{
  % #1 = over
  % #2 = under
  \settowidth\oversetwidth{$\overset{#1}{#2}$}
  \settowidth\underwidth{$#2$}
  \setlength\oversetwidth{\oversetwidth-\underwidth}
  \hspace{.5\oversetwidth}
  &
  \settowidth\oversetwidth{$\overset{#1}{#2}$}
  \settowidth\underwidth{$#2$}
  \setlength\oversetwidth{\oversetwidth-\underwidth}
  \hspace{-.5\oversetwidth}
  \overset{#1}{#2}
}
\newcommand{\todohw}[2][]{\todo[color=orange!20,size=\tiny,inline,#1]{HW: #2}}
\DeclareFontFamily{U}{cjheb}{}
\DeclareFontShape{U}{cjheb}{m}{n}{%
  <-11> s*[1.2] cjhblsm
  <11-> s*[1.2] cjhbltx
}{}
\newcommand{\cjhebfamily}{\fontencoding{U}\fontfamily{cjheb}\selectfont}
\DeclareTextFontCommand{\textcjheb}{\cjhebfamily}
\icmltitlerunning{Multiplier Bootstrap-Based Exploration}
\begin{document}

%\maketitle
\twocolumn[
\icmltitle{Multiplier Bootstrap-based Exploration
% : \\ 
% A General Bandit Algorithm Framework  \\ 
}
\icmlsetsymbol{equal}{*}

\begin{icmlauthorlist}
\icmlauthor{Runzhe Wan *}{to2}
\icmlauthor{Haoyu Wei *}{to3}
\icmlauthor{Branislav Kveton}{to2}
\icmlauthor{Rui Song}{to2}
\end{icmlauthorlist}

\icmlaffiliation{to2}{Amazon}
\icmlaffiliation{to3}{Department of Statistics, North Carolina State University}
% \icmlaffiliation{goo}{Googol ShallowMind, New London, Michigan, USA}
% \icmlaffiliation{ed}{School of Computation, University of Edenborrow, Edenborrow, United Kingdom}

\icmlcorrespondingauthor{Runzhe Wan}{runzhe.wan@gmail.com}

% \icmlcorrespondingauthor{Eee Pppp}{ep@eden.co.uk}

% You may provide any keywords that you
% find helpful for describing your paper; these are used to populate
% the "keywords" metadata in the PDF but will not be shown in the document
\icmlkeywords{Machine Learning, ICML}

\vskip 0.3in
]

\printAffiliationsAndNotice{\icmlEqualContribution} 

%%%%%%%%%%%%%%%%%%%%%%%%%%%%%%%%%%%%%%%%%%%%%%%%%%%%%%%%%%%%%%%%%%%%%%%%%%%%%%%%%%%%%%%%%%%%%%%%%%%%%%%%%%%%%%%%%%%%%%%%%%%%%%%%%%%%%%%%%%%%%%%%%%%%%%%%%%%%%%%%%%%%%%%%%%%%%%%%%%%%%%%%%%%%%%%%%%%%%%%%%%%%%%%%%%%%%%%%%%%%%%%%%%%%%%%%%%%%%%%%%%%%%%%%%%%%%%%%%%%%%%%%%%%%%%%%%%%%%%%%%%%%%%%%%%%%%%%%%%%%%%%%%%%%%%%%%%%%%%%%%%%%%%%%%%%%%%%%%%%%%%%%%%%%%%%%%%%%%%%%%%%%%%%%%%%%%%%%%%%%%%%%%%%%%%%%%%%%%%%%%%%%%%%%%%%%%%%%%%%%%%%%%%%%%%%%%%%%%%%%%%%%%%%%%%

\setlength{\textfloatsep}{8pt plus 1.0pt minus 2.0pt}

%%%%%%%%%%%%%%%%%%%%%%%%%%%%%%%%%%%%%%%%%%%%%%%%%%%%%%%%%%%%%%%%%%%%%%%%%%%%%%%%%%%%%%%%%%%%%%%%%%%%%%%%%%%%%%%%%%%%%%%%%%%%%%%%%%%%%%%%%%%%%%%%%%%%%%%%%%%%%%%%%%%%%%%%%%%%%%%%%%%%%%%%%%%%%%%%%%%%%%%%%%%%%%%%%%%%%%%%%%%%%%%%%%%%%%%%%%%%%%%%%%%%%%%%%%%%%%%%%%%%%%%%%%%%%%%%%%%%%%%%%%%%%%%%%%%%%%%%%%%%%%%%%%%%%%%%%%%%%%%%%%%%%%%%%%%%%%%%%%%%%%%%%%%%%%%%%%%%%%%%%%%%%%%%%%%%%%%%%%%%%%%%%%%%%%%%%%%%%%%%%%%%%%%%%%%%%%%%%%%%%%%%%%%%%%%%%%%%%%%%%%%%%%%%%%
\begin{abstract}
Despite the great interest in the bandit problem, 
designing efficient algorithms for complex models remains challenging, as there is typically no analytical way to quantify uncertainty.
In this paper, we propose Multiplier Bootstrap-based Exploration (\name{MBE}), a novel exploration strategy that is applicable to any reward model amenable to weighted loss minimization. 
We prove both instance-dependent and instance-independent rate-optimal regret bounds for  \name{MBE} in sub-Gaussian multi-armed bandits. 
With extensive simulation and real data experiments, we show the generality 
 and adaptivity of \name{MBE}.  
\end{abstract}
% computation
% the instanilization of
% quantifying the uncertainty in an analytical way. 
% or system % , even being black-box. 

% since it is in general intractable
% and design the exploration strategy
% Despite the wide applications of bandit algorithms, 

% The superior performance is further supported by . 
% general algorithm framework

% and desired generalizability

% We first introduce the formulation of the bandit problem, then introduce the Thompson sampling algorithm and the limitation of its vanilla form, and finally provide a general form of Multiplier Bootstrap-based Thompson Sampling. The main objective is to understand the theoretical performance (regret bound) of \name{MBTS}, and what kind of second-order adjustment/variance inflation (if needed) might be required for \name{MBTS} to yield a desired performance. 

\vspace{-.2cm}

\section{Introduction}

%\red{Update with ICML template.}

The bandit problem has found wide applications in various areas such as clinical trials \citep{durand2018contextual}, finance \citep{shen2015portfolio}, recommendation systems \citep{zhou2017large}, among others. % Tight and valid
Accurate uncertainty quantification is the key to address the exploration-exploitation trade-off. 
Most existing bandit algorithms critically rely on certain analytical property of the imposed model (e.g., linear bandits) to quantify the uncertainty and derive the exploration strategy. % stylized
Thompson Sampling \citep[TS,][]{thompson1933likelihood} and Upper Confidence Bound \citep[UCB,][]{auer2002finite} are two prominent examples, which are typically based on explicit-form posterior distributions or confidence sets, respectively. 

However, in many real problems, the reward model is fairly complex: e.g., a general graphical model \citep{chapelle2009dynamic} or a pipeline with multiple prediction modules and manual rules. 
% manually specified rules. 
% , sometimes even being black-box
In these cases, it is typically impossible to quantify the uncertainty in an analytical way, and frameworks such as TS or UCB are either methodologically not applicable or computationally infeasible.
Motivated by the real needs, we are concerned with the following question:

% \red{More citations here.}

% However, UCB and TS are typically only analytically/computationally feasible when $f$ is simple. 
% For a complex or even black-box (pipeline-like) $f$, some perturbation/bootstrap-based UCB/TS-type exploration would be a natural option. 

\textit{Can we design a practical bandit algorithm framework that is general, adaptive, and computationally tractable, with certain theoretical guarantee? 
% and provide certain theoretical guarantee for it?
% applicable
}

% In this paper, we are concerned with the following question: 
% \textit{can we design a practical bandit algorithm framework that is generally applicable, 
% computationally tractable, and theoretically reliable?
% }
% has tractable computational performance, and yields reliable theoretical guarantees?} 

A straightforward idea is to apply the bootstrap method \citep{efron1992bootstrap}, a widely applicable data-driven approach for measuring uncertainty. 
However, as discussed in Section \ref{sec:related_work}, most existing bootstrap-based bandit algorithms are either heuristic without a theoretical guarantee, computationally intensive, or only applicable in limited scenarios. 
% We aim to address all the three limitations simultaneously and provide a practical solution by
To address these limitations, we propose a new exploration strategy based on \textit{multiplier bootstrap} \citep{van1996weak}, an easy-to-adapt bootstrap framework that only requires randomly weighted data points. 
We further show that a naive application of multiplier bootstrap may result in linear regret, and we introduce a suitable way to add additional perturbations for sufficient exploration.  

% To overcome this issue, 

% Due to its sampling nature and the intuitive interpretation as replacing the posterior distribution in TS with the bootstrap distribution, we refer to this new framework as Multiplier Bootstrap-based TS (\name{MBE}).  

% computation

% We aim to  by  [], a popular and generally applicable data-driven approach to measure the uncertainty. 

% applicable to any complicated system

% Generative model
% Xgboost

% to our knowledge

\textbf{Contribution. }
Our contributions are three-fold. 
First,  we propose a general-purpose bandit algorithm framework, \textit{Multiplier Bootstrap-based Exploration} (\name{MBE}). 
The main advantage of \name{MBE} is that it is \textit{general}: it is applicable to any reward model amenable to weighted loss minimization,  without need of analytical-form uncertainty quantification or case-by-case algorithm design. 
As a data-driven exploration strategy, \name{MBE} is also \textit{adaptive} to different environments. 
% We also provide a more computationally efficient variant of \name{MBE} that is easy to implement. 

Second, theoretically, we prove near-optimal regret bounds for \name{MBE} under sub-Gaussian multi-armed bandits (MAB), in both the instance-dependent and the instance-independent sense. % when being applied to
Compared with all existing results for bootstrap-based bandit algorithms, our result is strictly more general (see Table \ref{tab:comparison}), since existing results only apply to some special cases of sub-Gaussian distributions. 
% Specifically, it holds for any sub-Gaussian distributions, while the 
To overcome the technical challenges, 
we proved a novel concentration inequality for some function of sub-exponential variables, and also developed the first \textit{finite-sample} concentration and anti-concentration analysis for multiplier bootstrap, to the best of our knowledge. 
Given the broad applications of multiplier bootstrap in statistics and machine learning, we believe our theoretical analysis has separate interest. 

% and as by-products

\begin{table*}[t!]
\caption{Comparisons between several bootstrap-based bandit algorithms. 
We divide the sources of exploration into leveraging the \textit{intrinsic} randomness in the observed data and manually adding  \textit{extrinsic} perturbations that are independent of the observed data.
% \todob{It is not very clear what the difference between intrinsic and extrinsic is.}
% \name{PHE} is annotated as a \textit{relatively} general approach because, although it is conceptually simple, the fact that it purely relies on manual perturbations usually make it non-trivial to design a reliable version in complex problems. 
All papers derive near-optimal regret bounds in MAB, with different reward distribution requirements. 
To compare the computational cost, we focus on MAB to illustrate, and consider Algorithm \ref{alg:MBTS-practical} for \name{MBE}. 
See Section \ref{sec:related_work} for more details of discussions in this table. 
}
\label{tab:comparison}
\centering
\begin{tabular}{lllll}
\toprule
&  \makecell{Exploration  \\ Source }  
& \makecell{Methodology \\ Generality}
& \makecell{Theory \\ Requirement}  % no;  \\ theory for Gaussian; \\ Exp in numerical
% any blackbox
& \makecell{Computation \\ Cost}  \\ \hline
\name{MBE} (this paper)  & \makecell{intrinsic \\ \& extrinsic}   &       general          &        \makecell{sub-Gaussian}  &   $\mathcal{O}(KT)$  \\ \hline
\name{GIRO} \citep{kveton2019garbage} &   \makecell{intrinsic \\ \& extrinsic}  &    general                      & \makecell{Bernoulli}  &   $\mathcal{O}(T^2)$  \\ \hline
\makecell{\name{ReBoot} } \citep{wang2020residual, wu2022residual} & \makecell{intrinsic \\ \& extrinsic}  &  \makecell{fixed \& finite \\ set of arms}       &    \makecell{ Gaussian}              &  $\mathcal{O}(KT)$ \\ \hline
\name{PHE} \citep{kveton2019perturbed, kveton2019perturbedLB, kveton2019perturbedGLB} &  \makecell{only extrinsic}  &   \makecell{general} & bounded  &   $\mathcal{O}(KT)$  \\\bottomrule % finite-arm (x)         
%(\cite{jin2022finite} \cite{jin2021mots})
\end{tabular}

% TODO: add more without theoretical guarantee, o.w., our advantage on this is limited. 
% Hopefully, we can get more advantage in reward assumption.}

\end{table*}

Third, with extensive simulation and real data experiments, we demonstrate that \name{MBE} yields comparable performance with existing  algorithms in different MAB settings and three real-world problems (online learning to rank, online combinatorial optimization, and dynamic slate optimization). 
This supports that \name{MBE} is easily generalizable, as it requires minimal modifications and derivations to match the performance of those near-optimal algorithms specifically designed for each problem. 
Moreover, we also show that \name{MBE} adapts to different environments and is relatively robust, due to its data-driven nature.

\section{Related Work}\label{sec:related_work}

The most popular bandit algorithms, arguably, include $\epsilon$-greedy \citep{watkins1989learning}, TS, and UCB. 
$\epsilon$-greedy is simple and thus widely used. 
However, its exploration strategy is not aware of the uncertainty in data and thus is known to be statistically sub-optimal. 
TS and UCB reply on posteriors and confidence sets, respectively. 
Yet, their closed forms only exist in limited cases, such as MAB or linear bandits. 
For a few other models (such as generalized linear model or neural nets), 
we know how to construct the \textit{approximate} posteriors or confidence sets \citep{filippi2010parametric, li2017provably, phan2019thompson, kveton2020randomized, zhou2020neural}, though the corresponding algorithms are usually costly or conservative. 
% These approximations are usually costly or conservative. 
In more general cases, it is often not clear how to adapt UCB and TS in a valid and efficient way. 
% \red{Moreover, the model-driven nature make them less robust. }
Moreover, the dependency on the probabilistic model assumptions also pose challenges to being robust. 

% Beyond UCB: Optimal and Efficient Contextual Bandits with Regression Oracles

% Bandit problem as a famous learning problem for a quite long time. There have been various algorithms to resolve this problem. In quite early stage, $\epsilon$-greedy algorithm \cite{cesa1998finite}, Softmax Algorithm \cite{cesa1998finite}, Pursuit Algorithms \cite{thathachar1985new}, Boltzmann distribution-Reinforcement Comparison \cite{sutton1998introduction} were purposed, and did achieve some contribution. However, these early works are neither computational intense ($\epsilon$-greedy algorithm, Softmax Algorithm), only valid in asymptotic sense (Pursuit Algorithms, Boltzmann distribution-Reinforcement Comparison), or have strict restrictions for the underlying distribution and sensitive to the parameters (Boltzmann distribution-Reinforcement Comparison). More discussions of these early-stage algorithms can be seen in \cite{vermorel2005multi}.

% Most papers construct TS-type algorithm, 
% follow-the-pertubated-leader 
% [in contrast to UCB that I will mention]
% and discuss our advances. 
% provide a comprehensive review

To enable wider applications of bandit algorithms, several bootstrap-based (and related perturbation-based) methods have been proposed in the literature. 
Most algorithms are TS-type, by  replacing the posterior with a bootstrap distribution. %  (possibly modified)
We next review the related papers, and summarize those with near-optimal regret bounds in Table \ref{tab:comparison}. 
% in the remaining of this section
% major regret analysis results

% \citet{lu2017ensemble} pioneers this line of research and proposes ensemble sampling. 
% The algorithm maintains $M$ models which are independently initialized and based on different data samples, with the rewards perturbed by additive Gaussian noise. 
% The method is heuristic and no sublinear regret bound is provided. 

 % (non-parametric bootstrap)
Arguably, the non-parametric bootstrap is the most well-known bootstrap method, which works by re-sampling data with replacement. 
\citet{vaswani2018new} proposes a version of non-parametric bootstrap 
with forced exploration to achieve 
a $\mathcal{O}(T^{2/3})$ regret bound in Bernoulli MAB. 
% shows that a direct application of a non-parametric bootstrap has a nearly linear regret in Bernoulli MAB. 
% An improved version with a $\mathcal{O}(T^{2/3})$ regret bound using forced exploration is proposed. 
\name{GIRO} proposed in \citet{kveton2019garbage} successfully achieves a rate-optimal regret bound in Bernoulli MAB, by adding Bernoulli perturbations to non-parametric bootstrap. 
% is proposed as a general bandit algorithm framework. 
% The method depends on non-parametric bootstrap and additive Bernoulli perturbation. 
% A  is established in Bernoulli MAB. 
However, due to the re-sampling nature of non-parametric bootstrap, it is hard to be updated efficiently other than in Bernoulli MAB (see Section \ref{sec:computation_variant}). 
Specifically, the computational cost of re-sampling scales quadratically in $T$. 

% in an efficient online manner

% Besides, the algorithm design requires the reward distribution to be bounded, which could be restricted. 

% For example, in MAB, the computational cost of \name{GIRO} 
% Explain more why the need bounded (our multiplier weights free of bounded assumption.) Not emphasize bounded property.
% Besides, either the design itself or the theory requires the reward distribution to be bounded, which could be restricted. 

% empirically study 
% empirically studies the performance of directly applying multiplier bootstrap to explore, with a specific double-or-nothing multiplier distribution. 
% double-or-nothing bootstrap

% attempt to adapt bootstrap algorithms
Another line of research is the residual bootstrap-based approach (\name{ReBoot}) \citep{hao2019bootstrapping, wang2020residual, tang2021robust, wu2022residual}. 
% , which was proposed in \citet{wang2020residual} for MAB and later extended to linear bandits  \citep{wu2022residual}. 
% The UCB counterparts are proposed in \citet{hao2019bootstrapping} and 
 % \citet{tang2021robust}. %, respectively. 
% The multiplier bootstrap based UCB and robust UCB are proposed in \citet{hao2019bootstrapping} and 
%\todob{This paper talks about multiplier bootstrap as well.} and  
 % \citet{tang2021robust}, respectively. 
For each arm, \name{ReBoot} randomly perturbs the residuals of the corresponding observed rewards with respect to the estimated model to quantify the uncertainty for its mean reward. 
% , instead of randomly weighting the training data points, chooses to
We note that, although these methods also use random weights, they are applied to residuals and hence are in a way fundamentally different from ours. 
The limitation is that, by design, this approach is only applicable to problems with a  \textit{fixed}  and  \textit{finite} set of arms, since the residuals are attached closely to each arm (see Appendix \ref{sec:ReBoot} for more details).

%  Hao et al. (2019) proposed a nonparametric and data-dependent UCB algorithm based on the multiplier bootstrap and derived a second-order correction term to boost the agent away from sub-optimal solutions. However, their approach is computational expensive since at each round, they need to resample a large number of times for the history. By contrast, Reboot only needs to resample once at each round.

% , the design heavily relies on adding perturbations to an arm when it is pulled and hence is only applicable to the context-free problem with a \textit{finite} and \textit{fixed} set of arms.

% \blue{check my statements about finite arms here. Do they also study contextual?}

% \citet{vaswani2018new} 
% Similarly, in the bandit setting with Gaussian rewards, we show that WB with additive Gaussian weights achieves near-optimal regret

% The perturbed history exploration (\name{PHE}) algorithm extends the idea 
% PHE can be viewed as an extension beyond Gaussians.

The perturbed history exploration (\name{PHE}) algorithm \citep{kveton2019perturbed, kveton2019perturbedLB, kveton2019perturbedGLB} is also related. 
\name{PHE} works by adding additive noise to the observed rewards.  
% \name{PHE} is studied in \citet{kveton2019perturbed}, \citet{kveton2019perturbedLB}, and \citet{kveton2019perturbedGLB} for MAB, linear bandits, and generalized linear bandits, respectively. 
% \todob{Too many references to my works. GLM is also cited in two different ways. I suggest that we drop the GLM references.}
% The main idea is all to manually add noise to the observed rewards. 
\citet{osband2019deep} applies similar ideas to reinforcement learning. 
However, \name{PHE} has two main limitations. 
First, for models where adding additive noise is not feasible (e.g., decision trees), \name{PHE} is not applicable. 
Second, as demonstrated in both \cite{wang2020residual} and our experiments, the fact that \name{PHE} relies on only the extrinsically injected noise for exploration makes it less robust. 
For a complex structured problem, it may not be clear how to add the noise in a sound way \citep{wang2020residual}. 
In contrast, it is typically more natural (and hence easier to be accepted) to leverage the intrinsic randomness in the observed data. 
% , e.g., by assigning a random weight to each data point. 

% As such, \name{PHE} require a delicate design; 

% , and combine this resource with a moderate amount of extrinsic perturbations. 

% and \citet{kveton2019perturbedLB} itself
% While the perturbed history is conceptually simple, it is unclear how to ex- tend it to , and assessing if any such generalization is sound is non-trivial.
% generalize and the assessment of its validity is non-trivial 
% makes it hard to adapt the approach to other setups. 
% (e.g., via confidence bounds, posteriors, or bootstrap) 

% easier to be accepted. % robustness % larger regret, 

Finally, we note that multiplier bootstrap has been considered in the bandit literature, mostly as a computationally efficient approximation to non-parametric bootstrap studied in those papers. 
\citet{eckles2014thompson}  studies the direct adaption of multiplier bootstrap (see Section \ref{sec:fail_naive}) in simulation, and its empirical performance in contextual bandits is studied later \citep{tang2015personalized, elmachtoub2017practical, riquelme2018deep, bietti2021contextual}. 
However, no theoretical guarantee is provided in these works. 
In fact, as demonstrated in Section \ref{sec:fail_naive}, such a naive adaptation may have a linear regret.
\citet{osband2015bootstrapped} shows that, in Bernoulli MAB, a variant of multiplier bootstrap is mathematically equivalent to TS. 
% with conjugate priors. 
% with exponential weights and additional perturbation
% Therefore, the algorithm yields near-optimal regret bounds. 
% To the best of our knowledge, 
No further theoretical or numerical results are established except for this special case. 
Our work is the first systematic study of multiplier bootstrap in bandits.  
Our unique contributions include: 
we identify the potential failure of naively applying multiplier bootstrap, 
highlight the importance of additional perturbations, 
design a general algorithm framework to make this heuristic idea concrete, 
provide the first theoretical guarantee in general MAB settings, 
and conduct extensive numerical experiments to study its generality and adaptivity. 

% To address the above difficulties, bootstrapping Efron (1992) has been used in the bandit Baransi et al. (2014); Eckles and Kaptein (2014), contextual bandit Tang et al. (2015); McNellis et al. (2017) and deep reinforcement learning Osband and Van Roy (2015); Osband et al. (2016) settings. All previous work uses non-parametric bootstrapping (explained in Section 3.1) as an approximation to TS. }

% The equivalence is extended to MAB with categorical rewards in \citet{vaswani2018new}. 

\section{Preliminary}\label{sec:preliminary}% on Multiplier Bootstrap
\textbf{Setup. } 
% To present \name{MBE} as a general framework, we first introduce our setup as a general-form  (stochastic)  bandit problem. 
We consider a general stochastic bandit problem. 
For any positive integer $M$, let $[M] = \{1, \dots, M\}$. 
 At each round $t \in [T]$, the agent observes a context vector $\vx_t$ (it is empty in non-contextual problems) and an action set $\mathcal{A}_t$, then chooses an action $A_t \in \mathcal{A}_t$, and finally receives the corresponding reward $R_t = f(\vx_t, A_t) + \epsilon_t$, 
Here, $f$ is an unknown function and $\epsilon_t$ is the noise term. %  a mean-zero independent
Without loss of generality, we assume $f(\vx_t, A_t) \in [0,1]$. 
The goal is to minimize the cumulative regret 
\[
\operatorname{Reg}_T = \sum_{t=1}^T \E \big[ \max_{a \in \mathcal{A}_t} f(\vx_t, a)  - f(\vx_t, A_t) \big].
\]
At time $t$, 
with an existing dataset $\mathcal{D}_t = \{(\vx_l, A_l, R_l)\}_{l \in [t]}$, 
to decide the action $A_{t+1}$, 
most algorithms typically first estimate $f$ in some function class $\mathcal{F}$ by solving a weighted loss minimization problem (also called weighted empirical risk minimization or cost-sensitive training) 
\begin{equation}\label{eqn:weighted_loss_mini}
\widehat{f} = 
\argmin_{f \in \mathcal{F}} 
\frac{1}{t}
\sum_{l=1}^{t}
\omega_l \mathcal{L}\big( 
f(\vx_l, A_l), R_l
\big) + {J}(f).  % (\vx_t, A_t, R_t, \omega_t) \in \mathcal{D}
\end{equation}
    % \widehat{f} = 
    % \argmin_{f \in \mathcal{F}} \mathcal{L}\Big( \{ 
    % \big(f(\vx_t, A_t), R_t, \omega_t \big)
    % \}_{t=1}^{t'} \Big).  % (\vx_t, A_t, R_t, \omega_t) \in \mathcal{D}
% \begin{align}\label{eqn:weighted_loss_mini}
%     \widehat{f} = 
%     \argmin_{f \in \mathcal{F}} \mathcal{L}\Big( \{ 
%     \big(f(\vx_t, A_t), R_t, \omega_t \big)
%     \}_{t=1}^T \Big).  % (\vx_t, A_t, R_t, \omega_t) \in \mathcal{D}
% \end{align}
Here, $\mathcal{L}$ is a loss function (e.g., $\ell_2$ loss or negative log-likelihood), $\omega_l$ is the weight of the $l$th data point, and $J$ is an optional penalty function. 
% For example, with the commonly used squared loss, the objective is 
% $\sum_t \omega_t  (f(\vx_t, A_t) - R_t)^2$. 
We consider the weighted problem as it is general and related to our proposal below. 
One can just set $\omega_l \equiv 1$ to get the unweighted problem. 
As the simplest example, consider the $K$-armed bandit problem where $\vx_l$ is empty and $\mathcal{A}_l = [K]$.  
Let $\mathcal{L}$ be the $\ell_2$ loss, $J \equiv 0$, and $f(\vx_l, A_l) \equiv r_{A_l}$ where $r_k$ is the mean reward of the $k$-th arm.  % weighted mean squared error
Then, \eqref{eqn:weighted_loss_mini} reduces to 
% \begin{align*}
$\argmin_{\{r_1, \dots, r_K\}} \sum_{l=1}^t \omega_l (R_l - r_{A_l})^2$, 
% \todob{Why applying a norm to the scalar difference?}
% \end{align*}
which gives the estimator $\widehat{r}_k = 
% |\{t:A_t = k\}|^{-1} 
(\sum_{l:A_l = k} \omega_l)^{-1}
\sum_{l:A_l = k} \omega_l R_l$, i.e., the arm-wise weighted average. 
Similarly, in linear bandits, \eqref{eqn:weighted_loss_mini} reduces to the weighted least-square problem (see Appendix \ref{sec:MBTS_LB} for details).

% , which can be solved  efficiently. 

% \red{more examples of the loss function. point-wise loss function. penalty function?}

% \red{seems not convincing. Constraints on the action space? Discriminative model? Policy-based?}

\textbf{Challenges. } 
The estimation of $f$, together with the related uncertainty quantification, forms the foundation of most bandit algorithms. 
In the literature, $\mathcal{F}$ is typically a class of models that permit closed-form uncertainty quantification (e.g., linear models, Gaussian processes, etc.). % logistic model, 
However, in many real applications, the reward model can yield a fairly complicated structure, e.g., a hierarchical pipeline with both classification and regression modules. 
Manually specified rules are also commonly part of the model. % Safety constraints and
It is challenging to quantify the uncertainty of these complicated models in analytical forms. 
Even when feasible, 
the dependency on the probabilistic model assumptions also pose challenges to being robust. 

% \red{more here}

% via standard approaches. 
% Let $\mathcal{F}$ be the function class that $f$ belongs to. 
% \subsection{Preliminary: Multiplier bootstrap}

% The workflow above is the foundation of many real-world systems, and the lack of exploration usually impacts its performance, which leads to the need for uncertainty quantification and an efficient bandit algorithm. 

% approximate the sampling distribution and
Therefore, in this paper, we focus on the bootstrap-based approach due to its generality and data-driven nature.  
Bootstrapping, as a general approach to quantify the model uncertainty, has many variants. 
The most popular one, arguably, is non-parametric bootstrap (used in \name{GIRO}), which constructs bootstrap samples by re-sampling the dataset with replacement. 
However, due to the re-sampling nature, it is computationally intense (see Section \ref{sec:computation_variant} for more discussions). 
In contrast, multiplier bootstrap \citep{van1996weak}, as an efficient and easy-to-implement alternative, is popular in statistics and machine learning.

% and has been successfully applied in various statistical contexts [18, 19, 20]

\textbf{Multiplier bootstrap. }
The main idea of multiplier bootstrap is to learn the model using randomly weighted data points. 
Specifically, given a multiplier weight distribution $\rho(\omega)$, 
for every bootstrap sample, % indexed by $b$, 
we first randomly sample $\{\omega_t^{MB}\}_{t=1}^{t'} \sim \rho(\omega)$, and then solve \eqref{eqn:weighted_loss_mini} with $\omega_t = \omega_t^{MB}$ to obtain $\widehat{f}^{MB}$. 
% \begin{align}\label{eqn:MB_greedy}
% \widehat{f}^{MB} = 
% \argmin_{f \in \mathcal{F}} 
% \sum_{t=1}^{t'}
% \omega_t^{MB} \mathcal{L}\big( 
% f(\vx_t, A_t), R_t
% \big) + \mathcal{J}(f).  % (\vx_t, A_t, R_t, \omega_t) \in \mathcal{D}
% \end{align}
% \begin{align}\label{eqn:MB_greedy}
%     \widehat{f}^{MB} = 
%     \argmin_{f \in \mathcal{F}} \mathcal{L}\Big( \{ 
%     \big(f(\vx_t, A_t), R_t, \omega_t^{MB} \big)
%     \}_{t=1}^{t'} \Big),  % (\vx_t, A_t, R_t, \omega_t) \in \mathcal{D}
% \end{align}
Repeat the procedure and the distribution of $\widehat{f}^{MB}$ forms the \textit{bootstrap distribution} that quantifies our uncertainty over $f$. 
% There exist a few popular choices of the weight distribution $\rho(\omega)$, and most of them have the mean $1$. 
The popular choices of  $\rho(\omega)$ include  $\mathcal{N}(1, \sigma_{\omega}^2)$,  $\text{Exp}(1)$,  $\text{Poisson}(1)$, and the double-or-nothing distribution $ 2 \times \text{Bernoulli}(0.5)$.

\section{Multiplier Bootstrap-based Exploration}
%%%%%%%%%%%%%%%%%%%%%%%%%%%%%%%%%%%%%%%%%%%%%%%%%%%%%%%%%%%%%%%%%%%%%%%%%%%%%%%%%%%%%%%%%%%%%%%%%%%%%%%%%%%%%%%%%%%%%%%%%%%%%%%%%%%%%%%%%%%%%%%%%%%%%%%%%%%%%%%%%%%%%%
\subsection{Failure of the naive adaption of multiplier bootstrap}\label{sec:fail_naive}
To design an exploration strategy based on multiplier bootstrap, 
a natural idea is to replace the posterior distribution in TS with the bootstrap distribution. 
Specifically, at every time point, 
we sample a function $\widehat{f}$ following the multiplier bootstrap procedure as described in Section \ref{sec:preliminary}, %\eqref{eqn:MB_greedy}, 
and then take the greedy action $\argmax_{a \in \mathcal{A}_t} \widehat{f}(\vx_t, a)$. 
However, perhaps surprisingly, such an adaptation may not be valid. 
The main reason is that the intrinsic randomness in a finite dataset is, in some cases, not enough to guarantee sufficient exploration. 
We illustrate with the following toy example. 
%%%%%%%%%%%%
%%%%%%%%%%%%
\begin{example}\label{eg1}
Consider a two-armed Bernoulli bandit. 
Let the mean rewards of the two arms be $p_1$ and $p_2$, respectively. 
Without loss of generality, assume $1 > p_1 > p_2 > 0$. 
Let $\prob(\omega = 0) = 0$. 
Then, with non-zero probability, an agent following the naive adaption of multiplier bootstrap (breaking ties randomly; see Algorithm \ref{alg:MBTS-naive} in Appendix \ref{sec:naive} for details) takes arm $1$ only once. 
Therefore, The agent suffers a linear regret. 
% initialize the estimate of an arm with $+\infty$ 
%; see Algorithm XX in Appendix XX for a rigorous description  

% To see the argument, w
Proof. We first define two events $\mathcal{E}_1 = \{A_t = 1, R_1 = 0\}$ and $\mathcal{E}_2 = \{A_2 = 2, R_2 = 1\}$. 
By design, 
at time $t = 1$, the agent randomly choose an arm and hence will pull arm $1$ with probability $0.5$. 
Then the observed reward $R_1$ is $0$ with probability $1 - p_1$. 
Therefore, $\prob(\mathcal{E}_1) = 0.5(1-p_1)$. 
Conditioned on $\mathcal{E}_1$, at $t = 2$, the agent will pull arm $2$ (since multiplying $R_1 = 0$ with any weight always gives $0$), then it will observe reward $R_2 = 1$ with probability $p_2$. 
Conditioned on $\mathcal{E}_1 \cap \mathcal{E}_2$, 
% From then on, 
by induction, the agent will pull arm $2$ for any $t > 2$. 
This is because the only reward record for arm $1$ is $R_1 = 0$ and hence its weighted average is always $0$, which is smaller than the weighted average for arm $2$, which is at least positive. 
In conclusion, with probability at least $0.5 \times (1-p_1) \times p_2 > 0$, the algorithm takes the optimal arm $1$ only once. 
% A linear regret is hence incured. 
\end{example}

%\todob{This is similar to the example in Giro. Maybe worth saying?}
%\todohw{Added the citation to the example above. }

%%%%%%%%%%%%%%%%%%%%%%%%%%%%%%%%%%%%%%%%%%%%%%%%%%%%%%%%%%%%%%%%%%%%%%%%%%%%%%%%%%%%%%%%%%%%%%%%%%%%%%%%%%%%%%%%%%%%%%%%%%%%%%%%%%%%%%%%%%%%%%%%%%%%%%%%%%%%%%%%%%%%%%
\subsection{Main algorithm}\label{sec:main_alg}
% General form of : \name{MBE}

The failure of the naive application of multiplier bootstrap implies that some additional randomness is needed to ensure sufficient exploration. 
In this paper, we consider achieving that by adding \textit{pseudo-rewards}, an approach that proves its effectiveness in a few other setups \citep{kveton2019garbage, wang2020residual}. 
% a popular approach \citep{kveton2019garbage, wang2020residual} by adding \textit{pseudo-rewards}. 
% , which can also be motivated by Example 1: 
The intuition is as follows. 
The under-exploration issue happens when, by randomness, the observed rewards are in the low-value region (compared with the expected reward). 
% \blue{This motivated us to blend some extrinsic noises into the observed rewards to achieve relatively wide coverage. Then the agent would have a higher chance to explore. }
Therefore, if we can blend in some data points with rewards that have a relatively wide coverage, then the agent would have a higher chance to explore. 
% could recover

% \todob{Very informal sentence. Rewrite.}
%\todohw{Revised.}

% of the best arm
% and treat them as \textit{true} rewards
% Another immediate aspect of our algorithm is that our target function $\overline{Y}_{k}$ should concentrate to $\frac{0 + \lambda + \mu_k}{2\lambda + 1}$, which preserves the order between arms.

These discussions motivate the design of our main algorithm, Multiplier Bootstrap-based Exploration (\name{MBE}), as in Algorithm \ref{alg:MBTS-general}. 
Specifically, at every round, in addition to the observed reward, we additionally add two pseudo-rewards with value $0$ and $1$. 
The pseudo-rewards are associated with the pulled arm and the context (if exists). 
Then, we solve a weighted loss minimization problem to update the model estimation  (line 8). 
The weights are first sampled from a multiplier distribution  (line 7), and then those of pseudo-rewards are additionally multiplied by a tuning parameter $\lambda$. 
In MAB, the estimates are arm-wise weighted average of all (observed or pseudo-) rewards 
$\overline{Y}_k = \sum_{\ell: A_\ell = k} (\omega_{\ell}  R_{k, \ell} +  \lambda  \omega_{\ell}')   /   \sum_{\ell: A_\ell = k} (\omega_{\ell} + \lambda \omega_{\ell}' + \lambda \omega_{\ell}'') $. 
See Appendix \ref{sec:MBTS_MAB} for details.

% \todob{It was not immediately clear whether the pseudo-rewards are randomly weighted. The algorithm makes it clear.}

% weighted loss minimization step reduces to

% After collecting all these rewards

We make three remarks on the algorithm design. 
% as $0$ and $1$, since they are at
First, we choose to add pseudo-rewards at the boundaries of the mean reward range (i.e.,  $[0,1]$), since such a design naturally induces a high variance (and hence more exploration). 
Adding pseudo-rewards in other manners is also possible. 
Second, the tuning parameter $\lambda$ controls the amount of extrinsic perturbation and determines the degree of exploration (together with the dispersion of $\rho(\omega)$). % intrinsic
In Section \ref{sec:theory}, we give a theoretically valid range for $\lambda$. 
% analyze the choice of $\lambda$ theoretically. 
Finally and critically, besides guaranteeing sufficient exploration, we need to make sure the optimal arm can still be identified (asymptotically) after adding the pseudo-rewards. 
Intuitively, this is guaranteed, since we shift and scale the (asymptotic) mean reward from $f(\vx, a)$ to $\big( f(\vx, a) + \lambda \big) / (1 + 2\lambda) = 
 f(\vx, a) / (1 + 2\lambda) + \lambda / (1 + 2\lambda)$, which preserves the order between arms. 
A detailed analysis for MAB can be found in Appendix \ref{sec:MBTS_MAB}.

We conclude this section by re-visiting Example \ref{eg1} to provide some insights into how do the pseudo-rewards help. 
\addtocounter{example}{-1}
\begin{example}[Continued]
    Even under the event $\mathcal{E}_1 \, \cap \, \mathcal{E}_2$, 
    Algorithm \ref{alg:MBTS-general} keeps the chance to explore. 
    To see this, consider the example where the multiplier distribution is $2 \times \text{Bernoulli}(0.5)$. 
    Then, we have 
    $\pr(A_3 = 1) \ge \pr(\overline{Y}_1 > \overline{Y}_0) = 
            \pr\left(\frac{\lambda \omega_1'}{\omega_1 + \lambda \omega_1' + \lambda \omega_1''} > \frac{\omega_2 + \lambda \omega_2'}{\omega_2 + \lambda \omega_2' + \lambda \omega_2''}\right)
            \ge 
            \pr(\omega'_1 = 2, \omega_1 = \omega_1'' = \omega_2 = \omega_2' = \omega_2'' = 0) = (1/2)^6. $
    % \[
    %     \begin{aligned}
    %         & \pr(A_3 = 1) = 
    %         \pr\left(\frac{\lambda \omega_1'}{\omega_1 + \lambda \omega_1' + \lambda \omega_1''} > \frac{\omega_2 + \lambda \omega_2'}{\omega_2 + \lambda \omega_2' + \lambda \omega_2''}\right) \\
    %         &~~~~~~~~~~~~~~~~ + \frac{1}{2} \pr\left(\frac{\lambda \omega_1'}{\omega_1 + \lambda \omega_1' + \lambda \omega_1''} = \frac{\omega_2 + \lambda \omega_2'}{\omega_2 + \lambda \omega_2' + \lambda \omega_2''}\right), 
    %     \end{aligned}
    % \]
    %where the first term is positive, since the corresponding event at least holds  when $\omega'_1 = 1$ and $\omega_1 = \omega_1'' = \omega_2 = \omega_2' = \omega_2'' = 0$.
    % When $\omega'_1 = 1$ and $\omega_1 = \omega_1'' = \omega_2 = \omega_2' = \omega_2'' = 0$, we have $\frac{\lambda \omega_1'}{\omega_1 + \lambda \omega_1' + \lambda \omega_1''} > \frac{\omega_2 + \lambda \omega_2'}{\omega_2 + \lambda \omega_2' + \lambda \omega_2''}$.
    Therefore, the agent still has chance to choose the optimal arm. 
\end{example} 

\begin{algorithm}[!h]
%\DontPrintSemicolon
%\SetAlgoLined
\setcounter{AlgoLine}{1}
\KwData{
Function class $\mathcal{F}$, 
loss function $\mathcal{L}$, 
(optional) penalty function ${J}$, 
multiplier weight distribution $\rho(\omega)$, 
tuning parameter $\lambda$
}

% Set $\his = \{\}$ be the history and $\his' = \{\}$ be the pseudo-history

Initialize $\widehat{f}$ 
% in an optimistic way \todob{What does this mean?}
% of the arm $k$, $c_k = 0$ ,and $\overline{Y}_{k} = +\infty, \forall k \in [K]$

% with finite set of arms, we can use \infty as in giro
% but in the current way, hard to write

\For{$t = 1, \dots, T$}{

    Observe context $\vx_t$ and action set $\mathcal{A}_t$

    Offer $A_t = \argmax_{a \in \mathcal{A}_t} \widehat{f}(\vx_t, A)$ (break ties randomly)

    Observe reward $R_t$
    
    % Update $\his = \his \cup \{(\vx_t, A_t, R_t)\}$
    % and $\his' = \his' \cup \{(\vx_t, A_t, 0), (\vx_t, A_t, 1)\}$

    Sample the multiplier weights $\{\omega_l, \omega_l', \omega_l''\}_{l=1}^t \sim \rho(\omega)$

    Solve the weighted loss minimization problem 
    \vspace{-.4cm}
    \begin{align*}
    \widehat{f} = 
    \argmin_{f \in \mathcal{F}} 
    \sum_{l=1}^{t}
    &\Big[
    \omega_l \mathcal{L}\big( 
    f(\vx_l, A_l), R_l
    \big) \\
    &+
        \lambda\omega'_l \mathcal{L}\big( 
    f(\vx_l, A_l), 0
    \big) \\
    &+
        \lambda\omega''_l \mathcal{L}\big( 
    f(\vx_l, A_l), 1
    \big) 
    \Big]
    + {J}(f).  % (\vx_t, A_t, R_t, \omega_t) \in \mathcal{D}
    \end{align*}
    \vspace{-.6cm}

    % \begin{align*}
    %     \widehat{f} = \argmin_{f \in \mathcal{F}} \mathcal{L}\Big( 
    %     \big\{ 
    %     \big(f(\vx_l, A_l), R_l, \omega_l\big)
    %     , \big(f(\vx_l, A_l), 0, \lambda\omega_l' \big)
    %     , \big(f(\vx_l, A_l), 1, \lambda\omega_l'' \big)
    %     \big\}_{l=1}^t 
    %     \Big)
    % \end{align*}
    
    % Solve the weighted loss minimization problem to update $\widehat{f}$ as
    % \begin{align*}
    %     \widehat{f} = \argmin_{f \in \mathcal{F}} \mathcal{L}\Big( 
    %     \big\{ 
    %     \big(f(\vx_l, A_l), R_l, \omega_l\big)
    %     , \big(f(\vx_l, A_l), 0, \lambda\omega_l' \big)
    %     , \big(f(\vx_l, A_l), 1, \lambda\omega_l'' \big)
    %     \big\}_{l=1}^t 
    %     \Big)
    % \end{align*}
} 
% Multiplier Bootstrap-based Thompson Sampling

\caption{General Template for \name{MBE} }\label{alg:MBTS-general}
\end{algorithm}

%%%%%%%%%%%%%%%%%%%%%%%%%%%%%%%%%%%%%%%%%%%%%%%%%%%%%%%%%%%%%%%%%%%%%%%%%%%%%%%%%%%%%%%%%%%%%%%%%%%%%%%%%%%%%%%%%%%%%%%%%%%%%%%%%%%%%%%%%%%%%%%%%%%%%%%%%%%%%%%%%%%%%%%%

%%%%%%%%%%%%%%%%%%%%%%%%%%%%%%%%%%%%%%%%%%%%%%%%%%%%%%%%%%%%%%%%%%%%%%%%%%%%%%%%%%%%%%%%%%%%%%%%%%%%%%%%%%%%%%%%%%%%%%%%%%%%%%%%%%%%%%%%%%%%%%%%%%%%%%%%%%%%%%%%%%%%%%

\subsection{Computationally efficient implementation}\label{sec:computation_variant}

% With the non-parametric bootstrap, it is typically infeasible to obtain $\mathcal{B}(\mathcal{D}_{t+1})$ (even only approximately) by updating  $\mathcal{B}(\mathcal{D}_t)$ \citep{kveton2019garbage}, due to the need of re-sampling from the whole dataset. 
% This is why \name{GIRO} has a high computational cost (e.g., $\mathcal{O}(T^2)$ for re-sampling data). 

% although the optimization step (line 10) can typically be done in an online manner (e.g., by taking gradient descents from the current parameter), 

% ReBoot

Efficient computation is critical for real applications of bandit algorithms. 
One potential limitation of Algorithm  \ref{alg:MBTS-general}  is the computational burden: 
at every decision point, 
we need to re-sample the weights for all historical observations (line 8). 
This leads to a total computational cost of order $\mathcal{O}(T^2)$, similar to \name{GIRO}. 

% The computational cost hence has a quadratic order in $T$. 
% maintain all history and re-sample all weights

% mention the ensemble idea

% (over non-parametric bootstrap used in \name{GIRO}) 

Fortunately, one prominent advantage of multiplier bootstrap over other bootstrap methods (such as non-parametric bootstrap or residual bootstrap) is that the (approximate) bootstrap distribution can be efficiently updated in an online manner, such that the per-round computation cost does not grow over time. 
Suppose we have a dataset $\mathcal{D}_t$ at time  $t$, and 
denote $\mathcal{B}(\mathcal{D}_t)$ as the corresponding bootstrap distribution for $f$.
With multiplier bootstrap, it is feasible to update $\mathcal{B}(\mathcal{D}_{t+1})$ approximately based on $\mathcal{B}(\mathcal{D}_{t})$. 
We detail the procedure below and elaborate more in Algorithm \ref{alg:MBTS-practical}. 

% , the efficient implementation of \name{MBE}. 

% the root reason
% Though Poisson approximation is possible. 

% only sample a new weight $\omega_{t+1}$ for the new data point, 
% and append $(x_t, w_t)$ to  $\mathcal{B}_t$, which still returns a valid multiplier bootstrapping sample. 
% Denote $\mathcal{B}_t$ as a bootstrap sample from $\mathcal{D}_t$, and we regard $\mathcal{B}_t$ as stochastic. 
% At time $t+1$, we append a new data point $x_t$ to $\mathcal{D}_t$ and obtain $\mathcal{D}_{t+1} = \{ \mathcal{D}_t, x_t\}$. 
% We aim to draw a new bootstrap sample $\mathcal{B}_{t+1}$ based on $\mathcal{D}_{t+1}$. 
% In the non-parametric bootstrap, there is no easy way to obtain $\mathcal{B}_{t+1}$ (even approximately) by updating $\mathcal{B}_{t}$. 
% This is the root reason why \name{GIRO} has a high computational cost (e.g., $\mathcal{O}(T^2)$ in MAB). 
% In contrast, with multiplier bootstrap, it is feasible to only sample a new weight $\omega_{t+1}$ and append $(x_t, w_t)$ to  $\mathcal{B}_t$, which still returns a valid multiplier bootstrapping sample. 

% residual does not require so?
% why no updates?
% sample is depedent across time

% Motivated by this good property of multiplier bootstrap, we propose an efficient implementation of \name{MBE} in Algorithm \ref{alg:MBTS-practical}. 

% In contrast, it is feasible to update multiplier bootstrap distribution approximately. 

Specifically, we maintain $B$ different models $\{\widehat{f}_{b,t}\}_{b=1}^B$ 
and the corresponding history (with random weights) as $\{\his_b, \his'_b\}_{b=1}^B$. 
% multiplier bootstrap samples $\{\his_b\}_{b=1}^B$ as well as the corresponding estimated 
% update each of  them in an online manner. 
 $\{\widehat{f}_{b,t}\}_{b=1}^B$ can be regarded as sampled from $\mathcal{B}(\mathcal{D}_t)$ and hence the empirical distribution over them is an approximation to the bootstrap distribution. 
% sampled from multiplier bootstrap distribution,  and hence $\{\widehat{f}_b\}$ as the  them is 
% The $B$ replicates can be regarded as sampled from multiplier bootstrap distribution,  and hence $\{\widehat{f}_b\}$ as the empirical distribution over them is an approximation to the bootstrap distribution  $\mathcal{B}(\mathcal{D}_t)$. 
% when $B$ is large enough. 
At every time point $t$, for each replicate $b$, we only need to sample one weight for the new data point and then update $\widehat{f}_{b,t}$ as  $\widehat{f}_{b,t+1}$.  
Then, $\{\widehat{f}_{b,t+1}\}_{b=1}^B$ are still $B$ valid samples from $\mathcal{B}(\mathcal{D}_{t+1})$ and hence still a valid approximation. 
% the resulting empirical distribution is still a valid approximation. 
We note that, since we only have one new data point, 
the updating of $f$ can typically be done efficiently (e.g., with closed-form updating or via online gradient descent). 
The per-round computational cost is hence independent of $t$. 

% The order of the total computational cost for sampling data hence becomes  $\mathcal{O}(BT)$. 

% by only additionally sampling a weight $\omega_{t+1}$ for the new data point

Such an approximation is a common practice in the online bootstrap literature and can be regarded as an ensemble sampling-type algorithm \citep{lu2017ensemble, qin2022an}. 
The hyper-parameter $B$ is typically not treated as a tuning parameter but depends on the available computational resource \citep{hao2019bootstrapping}. 
In our numerical experiments, this practical variant shows desired performance with $B = 50$. 
Moreover, the algorithm is embarrassingly parallel and also easy to implement: 
given an existing implementation for estimating $f$ (i.e., solving \eqref{eqn:weighted_loss_mini}), 
the major requirement is to replicate it for $B$ times and use random weights for each. 
This feature is attactive in real applications.

%\todob{We may want to add \url{https://openreview.net/forum?id=c6ibx0yl-aG}. This was NeurIPS 2022.}

% estimation pipeline for , 

% acknowledge the challenge to analyze 
% cite some papers to support the challenge 

% most commonly used algorithms naturally support weighted loss minimization, and the major requirement here is to replicate the existing modules for $B$ times. 

\begin{algorithm}[!h]
%\SetAlgoLined
%\LinesNotNumbered
\SetAlgoLined
\setcounter{AlgoLine}{1}
\KwData{
Number of bootstrap replicates $B$,
function class $\mathcal{F}$, 
Loss function $\mathcal{L}$, 
(optional) penalty function ${J}$, 
weight distribution $\rho(\omega)$, 
tuning parameter $\lambda$
}

Set $\his_b = \{\}$ be the history and $\his'_b = \{\}$ be the pseudo-history, for any $b \in [B]$

Initialize $\widehat{f}_{b,0}$ for any $b \in [B]$ 
% in an optimistic way

% of the arm $k$, $c_k = 0$ ,and $\overline{Y}_{k} = +\infty, \forall k \in [K]$

\For{$t = 1, \dots, T$}{
    Observe context $\vx_t$ and action set $\mathcal{A}_t$
    
    Sample an index $b_t$ uniformly from $\{1, \dots, B\}$

    Offer $A_t = \arg \max_{A \in \mathcal{A}_t} \widehat{f}_{b_t,t-1}(\vx_t, A)$ (break ties randomly)

    Observe reward $R_t$

    \For{b = 1, \dots, B}{    
    Sample the weights $\omega_{l,b}, \omega_{l,b}', \omega_{l,b}'' \sim \rho(\omega)$

    Update $\his_b = \his_b \cup \big\{(\vx_t, A_t, R_t, \omega_{l,b})\big\}$
    and  $\his'_b = \his'_b \cup \big\{(\vx_t, A_t, 0, \omega_{l, b}'), (\vx_t, A_t, 1, \omega_{l, b}'')\big\}$ 
    
    Solve the weighted loss minimization problem %to update $\hat{f}_b$ as
    \vspace{-.4cm}
    \begin{align*}
    \widehat{f}_{b,t} = 
    \argmin_{f \in \mathcal{F}} 
    \sum_{l=1}^{t}
    &\Big[
    \omega_{l,b} \mathcal{L}\big( 
    f(\vx_l, A_l), R_l
    \big) \\
    &+
        \lambda\omega'_{l,b} \mathcal{L}\big( 
    f(\vx_l, A_l), 0
    \big) \\
    &+
        \lambda\omega''_{l,b} \mathcal{L}\big( 
    f(\vx_l, A_l), 1
    \big) 
    \Big]
    + {J}(f).  % (\vx_t, A_t, R_t, \omega_t) \in \mathcal{D}
    \end{align*}
    %     \begin{align*}
    %     \widehat{f} = \argmin_{f \in \mathcal{F}} \mathcal{L}\Big( 
    %     \big\{ 
    %     \big(f(\vx_{l}, A_{l}), R_{l}, \omega_{l,b}\big)
    %     , \big(f(\vx_{l}, A_{l}), 0, \lambda \omega_{l,b}'\big)
    %     , \big(f(\vx_{l}, A_{l}), 1, \lambda \omega_{l,b}''\big)
    %     \big\}_{l=1}^t 
    %     \Big)
    % \end{align*}
    \vspace{-.6cm}
    }
} 

% Multiplier Bootstrap-based Thompson Sampling
% Variant
% Practical
 \caption{Practical Implementation of \name{MBE} }\label{alg:MBTS-practical}
\end{algorithm}

\section{Regret Analysis}\label{sec:theory}
In this section, we provide the regret bound for Algorithm \ref{alg:MBTS-general} under MAB with sub-Gaussian rewards. %\blue{And the detailed derivation for MAB algorithm can be seen in Appendix \ref{sec:MBTS_MAB}.}
We regard this as the first step towards the theoretical understanding of \name{MBE}, and leave the analysis of more general settings to future research. 
%Denote $a(T) \lesssim b(T)$ if $a( T) \leq c \times b( T)$ for some constant $c$. %free of $s$ and $T$. 
%Write $a(T) \asymp b( T)$ if $a(T) \lesssim b(T)$ and $a( T) \gtrsim b(T)$. 
We call a random variable $X$ as $\sigma$-sub-Gaussian if $\E \exp \{ t (X - \E X)\} \leq \exp \{ {t^2 \sigma^2} / {2}\}$ for any $t \in \mathbb{R}$.

\begin{theorem}\label{thm.MAB.1}
Consider a $K$-armed bandit, where the reward distribution of arm $k$ is $1$-sub-Gaussian with mean $\mu_k$. Suppose arm $1$ is the unique best arm that has the highest mean reward and $\Delta_k=\mu_1-\mu_k$. 
% is the sub-optimality gap for arm $k$.
% $\mu_1 = \max_{k \in [K]} \ \mu_k$. 
Take the multiplier weight distribution as $\mathcal{N}(1, \sigma_{\omega}^2)$ in Algorithm \ref{alg:MBTS-MAB-2}. 
Let the tuning parameters satisfy $\lambda \geq \left(1 + 4 / \sigma_{\omega} \right) + \sqrt{4  \left(1 + 4 / \sigma_{\omega} \right)/ \sigma_{\omega}  }$.
%\[
    %\sigma_{\omega}^2 > \frac{48 \sigma_1^2}{(2 \mu_1 - 1)^2}, \quad \lambda \geq  \frac{ - \sqrt{2 \pi } \left(2 \mu _1-1\right) \sigma _{\omega }^2+2 \sqrt{3}
    %\sqrt{\left(\left(4 \mu _1^2-\left(4+8 \sqrt{2 \pi }\right) \mu _1+4 \sqrt{2 \pi }+8
   %\pi +1\right) \sigma _1^2 \sigma _{\omega }^2\right)} + 24 \sigma _1^2}{\left(1-2 \mu _1\right){}^2 \sigma _{\omega }^2 - 48 \sigma
   %_1^2}.
%\]
%If $\frac{\sqrt{2 \pi}}{\lambda (1 - 2 \mu_1)}$ is not an integer, 
Then, the problem-dependent regret is upper bounded by
\[
     \operatorname{Reg}_T \leq \sum_{k = 2}^K \Big\{ 7 \Delta_k  + \frac{10 \big[ C_1^*(\lambda, \sigma_{\omega}) + C_2^*( \lambda, \sigma_{\omega}) \big]}{\Delta_k} \log T \Big\},
\]
%\[
    %\operatorname{Reg}_T \lesssim \sum_{k = 2}^K  \frac{\log T}{\Delta_k} + \sum_{k = 2}^K \Delta_k, 
    %\red{\text{This regret is really weird}}
%\]
and the problem-independent regret is  bounded by
\begin{align*}
    \operatorname{Reg}_T  \leq 7 K \mu_1 +  C_1^*(\lambda, \sigma_{\omega}) K \log T + 2 \sqrt{ C_2^* (\lambda, \sigma_{\omega})  K T \log T}. 
\end{align*}
Here, 
$C_1^*(\lambda, \sigma_{\omega}) = 8 \sqrt{2}  C^*_3(\lambda, \sigma_{\omega}) + 38  \sigma_{\omega}^2$ and
$C_2^*( \lambda, \sigma_{\omega}) =  5 \lambda^2 + \left[ 45 (3 + \sigma_{\omega}^2)\lambda^4 C^*_3(\lambda, \sigma_{\omega})  + 38  \sigma_{\omega}^2 \right]$  
are tuning parameter-related components. 
$C^*_3(\lambda, \sigma_{\omega})$ is a logarithmic term as $\log \big[ ( 1 + 15 \sigma_{\omega}^{-2} + 3 \sigma_{\omega} + 10 \sigma_{\omega}^2)  \lambda^2\big] / (3 \log 2) + 1$. \\ 

\end{theorem}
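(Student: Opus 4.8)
The plan is to follow the regret decomposition standard for Thompson-sampling-type randomized exploration (as in the analyses of \name{GIRO} and \name{PHE}), writing $\operatorname{Reg}_T = \sum_{k=2}^K \Delta_k\,\E[N_k(T)]$ and bounding the expected number of pulls $\E[N_k(T)]$ of each suboptimal arm. The shift-and-scale remark in Section~\ref{sec:main_alg} tells us that after adding the pseudo-rewards the effective mean of arm $k$ becomes $\nu_k := (\mu_k+\lambda)/(1+2\lambda)$, with gaps $\nu_1-\nu_k = \Delta_k/(1+2\lambda)$. Fixing a threshold $\tau_k\in(\nu_k,\nu_1)$ — I expect $\tau_k$ to be placed near the midpoint — I would split $\E[N_k(T)]$ into a \emph{concentration} term counting rounds in which arm $k$'s perturbed estimate $\overline{Y}_k$ exceeds $\tau_k$, and an \emph{anti-concentration} term of the form $\sum_n \E[1/g_{1,n}-1]$, where $g_{1,n}=\Pr(\overline{Y}_1>\tau_k\mid\mathcal{D})$ is the probability that the optimal arm's estimate clears $\tau_k$ after $n$ pulls. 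The skeleton is classical; all the novelty is in controlling these two quantities for the multiplier-bootstrap ratio.

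The key reduction I would exploit is that \emph{Gaussian multiplier weights turn the threshold event into a Gaussian tail after conditioning on the rewards}. For an arm with observed rewards $R_1,\dots,R_n$, the event $\{\overline{Y}_k>c\}$ is, on the event that the denominator $\sum_i(\omega_i+\lambda\omega_i'+\lambda\omega_i'')$ is positive, equivalent to $\{Z>0\}$ where $Z=\sum_i[\omega_i(R_i-c)+\lambda\omega_i'(1-c)-\lambda\omega_i''c]$. Since the weights are independent Gaussians, $Z$ is Gaussian conditionally on the rewards, with mean $m_n(c)=\sum_i R_i+n\lambda-nc(1+2\lambda)$ and variance $v_n(c)=\sigma_\omega^2\sum_i[(R_i-c)^2+\lambda^2(1-c)^2+\lambda^2 c^2]$. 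Thus $\Pr(\overline{Y}_k>c\mid R)=\Phi\big(m_n(c)/\sqrt{v_n(c)}\big)$, up to a correction for the rare event that the denominator is nonpositive, which I would discharge via a separate Gaussian tail on $\sum_i(\omega_i+\lambda\omega_i'+\lambda\omega_i'')$ (mean $n(1+2\lambda)$, variance $n\sigma_\omega^2(1+2\lambda^2)$).

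With this reduction, the concentration bound for a suboptimal arm amounts to showing $m_n(\tau_k)/\sqrt{v_n(\tau_k)}$ is sufficiently negative with high probability. Since $\E[m_n(\tau_k)]=n(1+2\lambda)(\nu_k-\tau_k)<0$, a sub-Gaussian tail on $\sum_i R_i$ keeps the numerator negative, while the denominator $v_n$ — which contains $\sum_i(R_i-\tau_k)^2$, a sum of squares of sub-Gaussian variables and hence only \emph{sub-exponential} — is upper bounded via a Bernstein-type inequality; together these make $\Phi(m_n/\sqrt{v_n})$ summable in $n$ at a geometric rate with exponent of order $\Delta_k^2$. The anti-concentration bound is the mirror image: I would lower bound $g_{1,n}=\Phi(m_n^{(1)}(\tau_k)/\sqrt{v_n^{(1)}(\tau_k)})$ by a constant by showing $m_n^{(1)}(\tau_k)>0$ (arm $1$'s transformed mean exceeds $\tau_k$) with its empirical mean near $\mu_1$ and $v_n^{(1)}$ controlled from above. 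The lower bound on $\lambda$ in the statement is precisely what guarantees this Gaussian-tail lower bound holds \emph{uniformly}, including in the early rounds where few rewards have been observed and the injected pseudo-reward variance must carry the exploration.

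The main obstacle I anticipate is the joint finite-sample control of the ratio $m_n/\sqrt{v_n}$: a sub-Gaussian numerator over the square root of a sub-exponential denominator, which must be bounded on an event of overwhelming probability with \emph{explicit} constants, while simultaneously discharging the denominator-positivity event uniformly in $n$. This is presumably the novel concentration inequality for a function of sub-exponential variables advertised in the introduction, and extracting the constants $C_1^*,C_2^*,C_3^*$ together with the admissible range of $\lambda$ from it is where the real effort lies. Once the per-arm concentration and anti-concentration bounds are in hand, the remaining steps are routine: optimize $\tau_k$, absorb the burn-in rounds into the $7\Delta_k$ term, sum over $k$ for the instance-dependent bound, and split arms by gap size (small gaps contributing the $\sqrt{KT\log T}$ term, large gaps the $\log T$ term) for the instance-independent bound.
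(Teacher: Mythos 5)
Your proposal follows essentially the same route as the paper's proof: the same regret decomposition (Lemma~\ref{lem.a.b}) into anti-concentration of the optimal arm via $\sum_s \E\big[N_{1,s}(\tau_k)\wedge T\big]$ with $N_{1,s}=1/Q_{1,s}-1$ and concentration of each suboptimal arm, the same reduction of $\{\overline{Y}_{k,s}>\tau\}$ to a conditional Gaussian tail $\Phi\big(m_s/\sqrt{v_s}\big)$ with the denominator-sign event discharged separately (Lemmas~\ref{lemma:gaussian_weight} and~\ref{lem_gau_frac}), and the same identification that the crux is the sub-exponential sum of squares inside $v_s$ plus the uniform-in-$s$ anti-concentration bound where the condition on $\lambda$ enters through the pseudo-reward variance (Lemma~\ref{lem_bounding_a_k_s_constant}). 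The only divergence is a technical detail within this architecture: where you would apply a Bernstein-type bound to the denominator and union-bound, the paper integrates the Gaussian tail directly against the sub-exponential average via its new inequality for $\E\exp\{-s/(a\overline{X}+b)\}$ (Lemma~\ref{lem_subE_ineq}).
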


% \todob{A hard to parse bound.} \todob{The bound should be $1 / \Delta_k$. It is not. The reviewers will grill us on this.}
%\todohw{We can use the fact $\Delta_k^{-1} \geq 1 $ to get the looser bound with $1 / \Delta_k$ that 
%\[
     %\operatorname{Reg}_T \leq \sum_{k = 2}^K \Delta_k \Big\{ 7 + \frac{10 \left[ C_1^*(\lambda, \sigma_{\omega}) + C_2^*( \lambda, \sigma_{\omega}) \right]}{\Delta_k^2} \log T \Big\}.
 %\]
 %So the bound we write is sharper in some sense.}

The two regret bounds are known as near-optimal (up to a logarithm term) in both the problem-dependent and problem-independent sense \citep{lattimore2020bandit}. 
Notably, recall that the Gaussian distribution and all bounded distributions belong to the sub-Gaussian class. 
Therefore, as reviewed in Table \ref{tab:comparison}, our theory is strictly more general than all existing results for bootstrap-based MAB algorithms. % bandit

%{Analyze the conditon in the Theorem.}

%From Theorem \ref{thm.MAB} gives that either the full version algorithm 
%A slight different from the existent literature is that: {\color{red}{here we ask $\mu_1 \neq 1 / 2$, which is still reasonable as $\mu_1$ is the maximal mean of the rewards on the $K$'s arms.}}

%Here  $\mu_1 \neq 1 / 2$ is just a requirement for proof technique not for theoretical guarantee. Note that $\mu_1 \in [0, 1]$, and $\mu_1 \neq 1 / 2$ is just a mild restriction, which will not need to verify in practice.

% $\log T$

% Our another theoretical contribution lays in that we provide near-optimal regret bound under the sub-Gaussian case, which is strictly more general than the other bootstrapping-based bandit papers as summarized in Table \ref{tab:comparison}. 

% In fact, to the best of our knowledge, this is the first \textit{finite-sample} concentration and anti-concentration results for multiplier bootstrap. 
% Besides, compared with papers that only prove for a specific parametric class for rewards (e.g., Bernoulli in \citet{kveton2019garbage}  and Gaussian  in  \citet{wang2020residual}), our result is general and holds for any sub-Gaussian distributions.
% \blue{The sub-Gaussian and sub-exponential concentration inequalities will play an important role in dealing with the general sub-Gaussian distribution without closed forms.}
% \red{(Mention why difficult and  discuss/refer to how we solve it.)} 

\textbf{Technical challenges. }
It is particularly  challenging to analyze \name{MBE} due to two reasons. 
First, the probabilistic analysis of multiplier bootstrap itself is technically challenging, since the same random weights appear in both the denominator and the numerator (recall \name{MBE} uses the weighted averages 
$\big\{ \overline{Y}_k = \sum_{\ell: A_\ell = k} (\omega_{\ell}  R_{k, \ell} +  \lambda  \omega_{\ell}')   /   \sum_{\ell: A_\ell = k} (\omega_{\ell} + \lambda \omega_{\ell}' + \lambda \omega_{\ell}'')   \big\}_{k=1}^K$ to select actions in MAB).
% \todob{To follow this, it would be helpful to introduce an MAB version of our algorithm.}
% \todohw{We put it in the appendix}
It is notoriously complicated to analyze the ratio of random variables, especially when they are correlated. 
Besides, existing bootstrap-based papers rely on the properties of specific \textit{parametric} reward classes (e.g., Bernoulli in \citet{kveton2019garbage} and Gaussian  in  \citet{wang2020residual}), while  we lose these nice structures when considering sub-Gaussian rewards. 

To overcome these challenges, 
we start with  carefully defining two good events on which the weighted average $\overline{Y}_k$, the non-weighted average (with pseudo-rewards)  $\overline{R}_{k}^* = \big( {\sum_{\ell: A_\ell = k} (R_{k, \ell} + 1 \times \lambda + 0 \times \lambda) } \big) / \big( {\sum_{\ell: A_\ell = k} (1 + \lambda + \lambda) } \big)$, and the shifted asymptotic mean $(\mu_k + \lambda) / ({1 + 2 \lambda})$ are close to each other (see Appendix \ref{proof_thm}).
% \todob{To follow this, it would be helpful to introduce an MAB version of our algorithm.}
To bound the probability of the bad event and to control the regret on the bad event, 
we face two major technical challenges. 
First, when transforming the ratio into an analyzable form, a summation of correlated sub-Gaussian and sub-exponential variables appears and is hard to analyze. 
We carefully design and analyze a novel event to remove the correlation and the sub-Gaussian terms (see proof of Lemma \ref{lem_bounding_a_k_s_2}). 
Second, the proof needs a new concentration inequality for functions of sub-exponential variables that do not exist in the literature. 
We obtain such a new concentration inequality (Lemma \ref{lem_subE_ineq}) via careful analysis of sub-exponential distributions.

% and 
% %recent progress in tight concentration inequality \citet{wainwright2019high}.
% also leveraging recent progress in \citet{wainwright2019high}.
%\red{(that 2020 result you talked about); if not that novel, we can delete everything after "also leveraging".)} \citep{zhang2020concentration}. 

We believe our new concentration inequality is of separate interest for the analysis of sub-exponential distributions. 
Moreover, to the best of our knowledge, our proof provides the first \textit{finite-sample} concentration and anti-concentration analysis for multiplier bootstrap, which has broad applications in statistics and machine learning.

\textbf{Tuning parameters. }
% \todob{This would be better in the algorithmic section.}
In Theorem \ref{thm.MAB.1}, 
\name{MBE} has two tuning parameters $\lambda$ and $\sigma_{\omega}$. 
Intuitively, $\lambda$ controls the amount of external perturbation and $\sigma_{\omega}$ controls the magnitude of exploration from bootstrap. 
In general, higher values of these two parameters facilitate exploration but also lead to a slower convergence. 
The condition $\lambda \geq \left(1 + 4 / \sigma_{\omega} \right) + \sqrt{4  \left(1 + 4 / \sigma_{\omega} \right)/ \sigma_{\omega}  }$ requires that (i) $\lambda$ is not too small and (ii) the joint effect of $\lambda$ and $\sigma$ is not too small. 
Both are intuitive and reasonable. 
% consistent with our considerations in methodology design. 
% After satisfying these conditions, increase with these parameters
In practice, the theoretical condition could be loose: e.g., it requires $\lambda \geq 5 + 2\sqrt{5}$ when $\sigma_{\omega} = 1$. 
As we observe in Section \ref{sec:expriments}, \name{MBE} with a smaller $\lambda$ (e.g., $0.5$) still empirically performs well.

\section{Experiments}\label{sec:expriments}
% Here we only consider the distribution has a simple conjugate prior, i.e. Gaussian distribution (with Gaussian as its prior), Bernoulli distribution (with Beta as its prior), Beta distribution (with approximate as its prior \cite{o1991bayes}), Exponential distribution (with Lomax distribution as its prior). \red{And for the first three distributions, we will use the exact conjugate prior to updating the parameter; While for the exponential distribution, there is no explicit prior distribution for the exponential distribution, we just treat it as Gaussian distribution naively and use Gaussian conjugate prior to updating.}

% \todob{We need some text to say what comes next.}

In this section, we study the empirical performance of \name{MBE} with both simulation (Section \ref{sec:exp_MAB}) and real datasets (Section \ref{sec:exp_CB}). 

% conduct several experiments to evaluate \name{MBE} in different bandit problems. The algorithms are
% compared to several state-of-the-art baselines. And more experiments and detailed discussion can be seen in \label{sec:exp_details}.

\begin{figure*}[!t]
    \includegraphics[width=\linewidth]{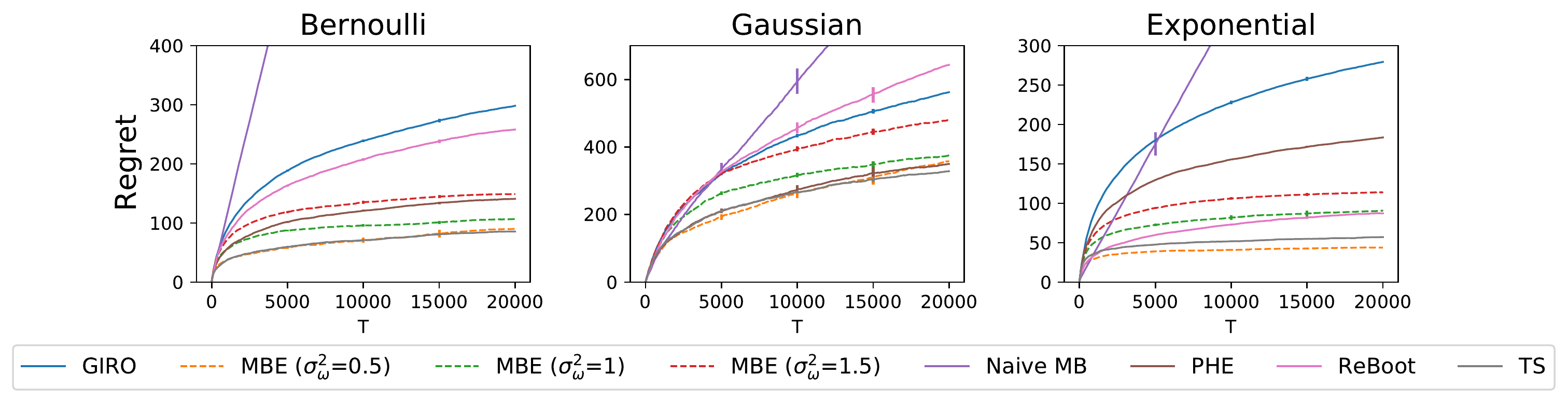}
    \vspace{-.8cm}
    \caption{Simulation results under MAB. The error bars indicate the standard errors, which may not be visible when the width is small. } 
    % of the averages
    \label{sim_MAB}
\end{figure*}

% \begin{figure}[!t]
% \hspace{-.5cm}
%     \includegraphics[width=1.1\linewidth]{}
%     \vspace{-.5cm}
%     \caption{Simulation results under MAB. The error bars indicate the standard errors, which may not be visible when the width is small. } \todob{Make the plots wider. The same in Figure 4.}
%     % of the averages
%     \label{sim_MAB}
% \end{figure}

 % and linear bandits
 
\subsection{MAB Simulation}\label{sec:exp_MAB}
We first experiment with simulated MAB instances.  % and linear bandit
The goal is to (i) further validate our theoretical findings, (ii) check whether \name{MBE} can yield comparable performance with standard methods, and (iii) study the robustness and adaptivity
 of \name{MBE}. 
We also experimented with linear bandits and the main findings are similar. 
To save space, we defer its results to Appendix \ref{sec:results_LB}.

We compare \name{MBE} with \name{TS} \citep{thompson1933likelihood},  \name{PHE} \citep{kveton2019perturbed}, \name{ReBoot} \citep{wang2020residual}, and \name{GIRO} \citep{kveton2019garbage}. 
The last three algorithms are the existing bootstrap- or perturbation-type algorithms reviewed in Section \ref{sec:related_work}. 
Specifically, \name{PHE} explores by  perturbing observed rewards with additive noise, without leveraging the intrinsic uncertainly in the data, 
\name{ReBoot} explores by perturbing the residuals of the rewards observed for each arm, 
and \name{GIRO} re-samples observed data points.  
In all experiments below, the weights of \name{MBE} are sampled from $\mathcal{N}(1,\sigma_{\omega}^2)$ \footnote{We also experimented with other weight distributions with similar main conclusions. 
Using Gaussian weights allows us to study impact of different multiplier magnitudes more clearly. 
}. 
We fix $\lambda = 0.5$ and run \name{MBE} with three different values of $\sigma_{\omega}^2$ as $0.5, 1$ and $1.5$. 
We also compare with the naive adaption of multiplier bootstrap (i.e., no pseudo-rewards; denoted as \name{Naive MB}). 
We run Algorithm \ref{alg:MBTS-practical} with $B=50$ replicates. 

% , including exponential and Poisson, 

% , to study its robustness to tuning.  

% adding pseudo-rewards

 % as a small amount of external perturbation,

  % (including its parameters)

We first study $10$-armed bandits, where the mean reward of each arm is independently sampled from $\text{Beta}(1, 8)$. 
We consider three reward distributions, including Bernoulli, Gaussian, and exponential. 
For Gaussian MAB, the reward noise is sampled from $\mathcal{N} (0, 1)$. 
The other two distributions are determined by their means. 
% \todob{What is the exponential noise?}
For \name{TS}, we always use the correct reward distribution class and its conjugate prior. 
The prior mean and variance are calibrated using the true model. 
Therefore, we use \name{TS} as a strong and standard baseline. 
For \name{GIRO} and \name{ReBoot}, we use the default implementations as they work well. 
% For \name{GIRO}, we add $0$ and $1$ as the pseudo-rewards as in the original paper.  
For \name{PHE}, the original paper adds Bernoulli perturbation since it only studies bounded reward distributions. 
We extend \name{PHE} by sampling additive noise from the same distribution family as the true rewards, as did in \citet{wu2022residual}. 
% we choose Bernoulli perturbation for Bernoulli rewards (bounded) and Gaussian perturbation for both Gaussian and exponential distributions (unbounded). 
\name{GIRO}, \name{ReBoot} and \name{PHE} all have one tuning parameter that control the degree of exploration. 
We tune these hyper-parameters over $\{2^{k-4}\}_{k=0}^6$, and report the best performance of each method. 
Without tuning, these algorithms generally do not perform well using the hyper-parameters suggested in the original papers, due to the differences in settings. 
We tuned \name{Naive MB} as well. 

 % \todob{What are the tuned hyper-parameters?} 
 % all are pseudo-rewards

% For the hyper-parameters of them, we either use the ones suggested in the original paper or use the best empirical ones after tuning when the recommended value does not perform well. 

%And we use the best pseudo-rewards in \name{PHE} should follow binomial, normal, and gamma distribution, respectively.

 %For all the experiments unless otherwise specified, we choose $a = 1$, 
%$a = 2.1 $, and $\sigma_a = 1.5$, for \name{GIRO}, \name{PHE}, and \name{ReBoot}, respectively, 

% \subsection{Robustness in MAB}

\textbf{Results. }
Results over $100$ runs are  displayed in Figure \ref{sim_MAB}. 
Our findings can be summarized as follows. 
First, without knowledge of the problem settings (e.g., the reward distribution family and its parameters, and the prior distribution) and without heavy tuning, \name{MBE} performs favorably and close to \name{TS}. 
Second,  pseudo-rewards are indeed important in exploration, otherwise the algorithm suffers a linear regret. 
Third, \name{MBE} has a stable performance with different $\sigma_\omega$ (note that other methods are tuned for their best performance). 
This is thanks to the data-driven nature of  \name{MBE}. 
Finally, the other three general-purpose exploration strategies perform reasonably after tuning, as expected. 
However, \name{GIRO} is computationally intense. 
For example, in Gaussian bandits, 
the time cost for \name{GIRO} is $2$ minutes while all the other algorithms can complete within $10$ seconds. 
The computational burden is due to the limitation of non-parametric bootstrap (see Section \ref{sec:computation_variant}). 
\name{ReBoot} also performs reasonably, yet by design it is not easy to extend it to many more complex problems (e.g., problems in Section \ref{sec:exp_CB}).

\textbf{Adaptivity. }
\name{PHE} relies on sampling additive noise from an appropriate distribution, and \name{TS} has similar dependency. 
In the results above, we provide auxiliary information about the environment to them and need to modify their implementation in different setups. 
% We note that the  \name{TS} needs to be modified in different problems (based on posterior derivations), while 
In contrast, \name{MBE} automatically adapts to these problems. 
As argued in Section \ref{sec:related_work}, one main advantage of \name{MBE} over them is its adaptiveness. % (as a bootstrap-based approach). 
To see this, we consider the following procedure: 
we run the Gaussian versions of \name{TS} and \name{PHE} in Bernoulli MAB, and run their Bernoulli versions in Gaussian MAB. 
We also run \name{MBE} with $\sigma_\omega^2 = 0.5$. 
\name{MBE} does not require any modifications across the two problems. 
The results presented in Figure \ref{fig:robust_dist} clearly demonstrate that \name{MBE} adapts to reward distributions.

Similarly, we also studied the adaptivity of these methods against the reward distribution scale (the standard deviation of the Gaussian noise, $\sigma$) and the task distribution (we sample the mean rewards from $\text{Beta}(\alpha, 8)$ and vary the parameter $\alpha$). 
For all settings, we use the algorithms tuned for Figure \ref{sim_MAB}. 
We observe that, \name{MBE} shows impressive adaptiveness, while \name{PHE} and \name{TS} may not perform well when the environment is not close to the one they are tuned for. 
Recall that, in real applications, heavy tuning is not possible without ground truths. 
This demonstrates the adaptivity of \name{MBE}, as a data-driven exploration strategy.

\textbf{Additional results. }
In Appendix \ref{sec:results_robust_lam}, we also try different values of $\lambda$ and $B$ for \name{MBE}. 
We also repeat the main experiment with $K = 25$. 
Our main observations above still hold and \name{MBE} is relatively robust to these tuning parameters. 
% , with similar findings. 

    % \caption{Robust results against the reward distribution variance and the task distribution. For the x-axis in both figures and the y-axis in the second one, we plot at the logarithmic scale for better visualization.}

% \red{Robustness to heteroskedasticity, different noise levels and the prior distribution (i.e., the distribution of problem instaences)}

\begin{figure}[!t]
    \includegraphics[width=\linewidth]{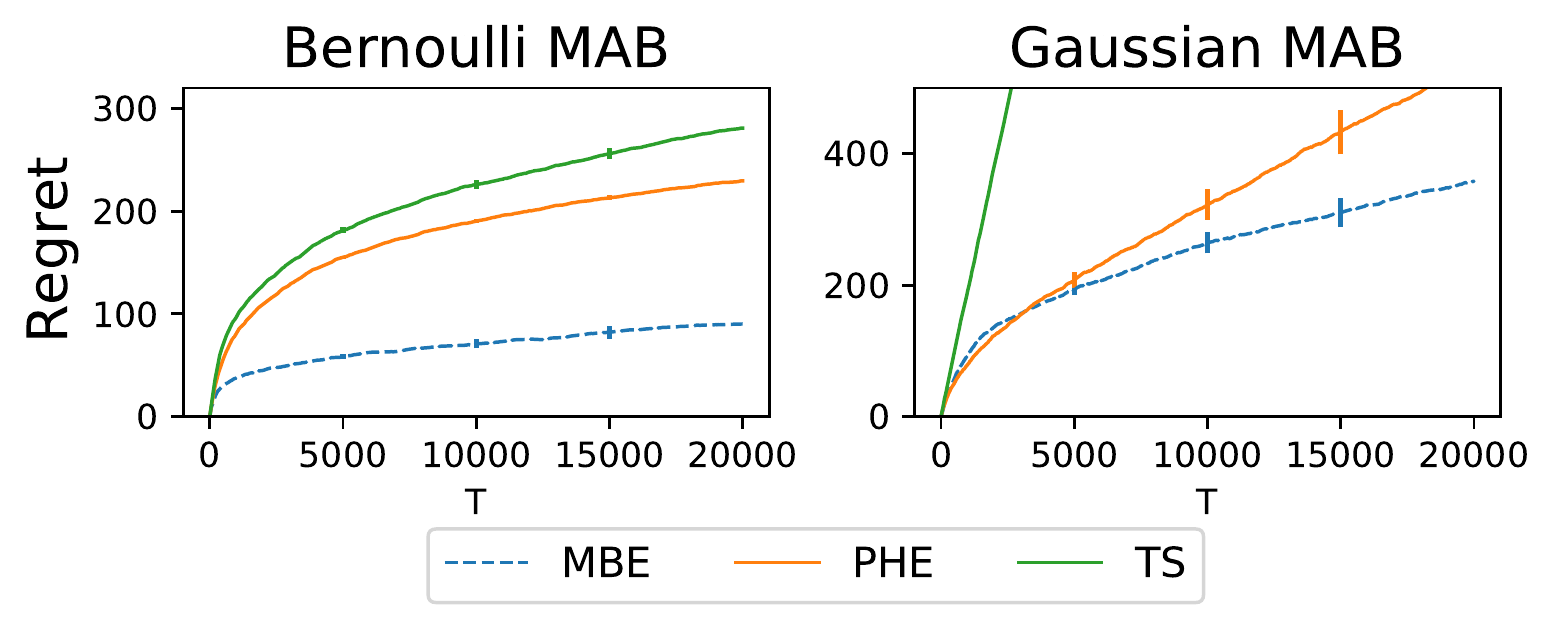}
    \vspace{-.8cm}
    \caption{Robust results against the reward distribution class. }
    \label{fig:robust_dist}
    \vspace{-.2cm}
\end{figure}

% \begin{figure}
%      % \centering
%          % \centering
%          \includegraphics[width=0.2\textwidth]{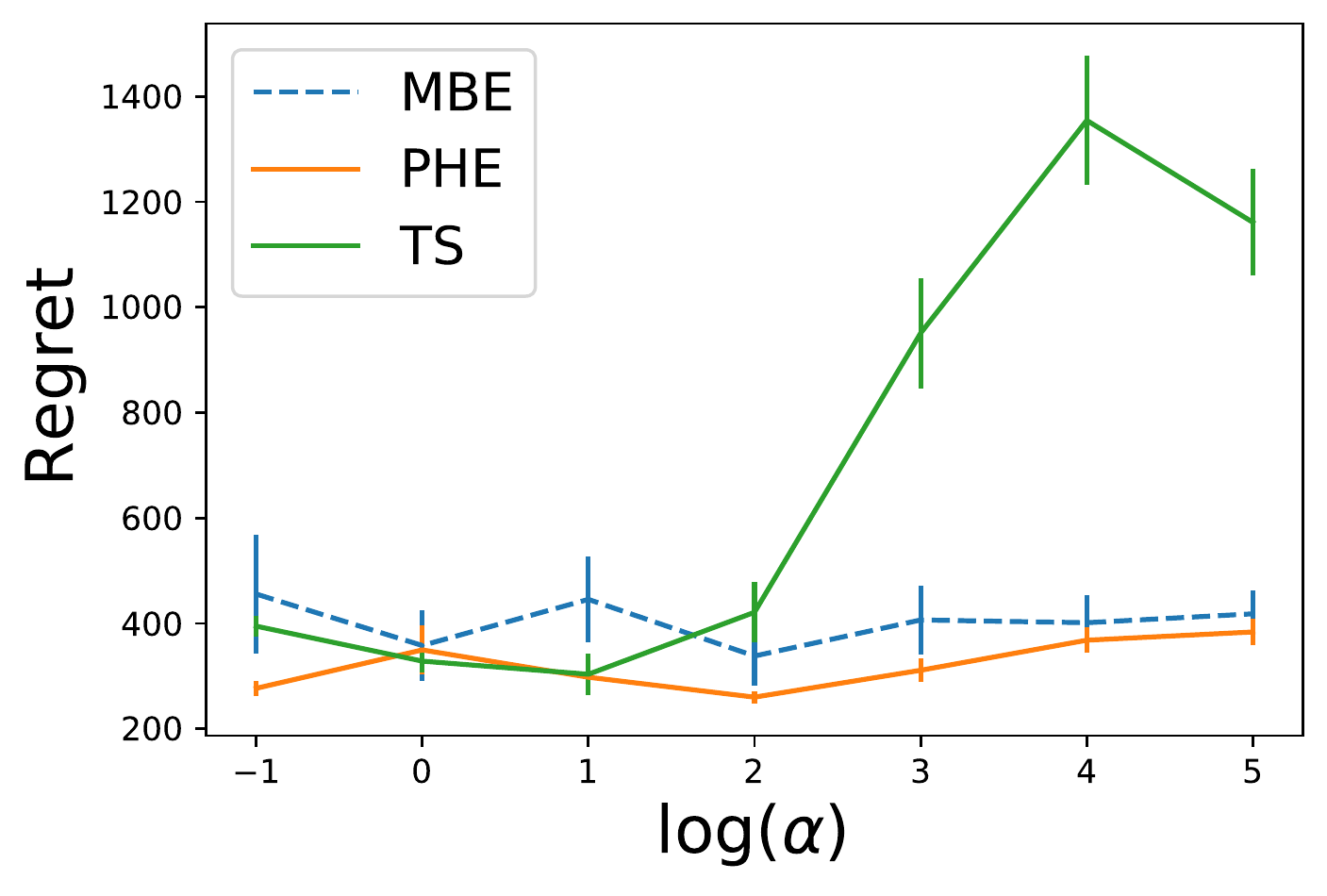}
%          \caption{$y=x$}
%          \label{fig:y equals x}
%      % \hfill
%          % \centering
%          \includegraphics[width=0.2\textwidth]{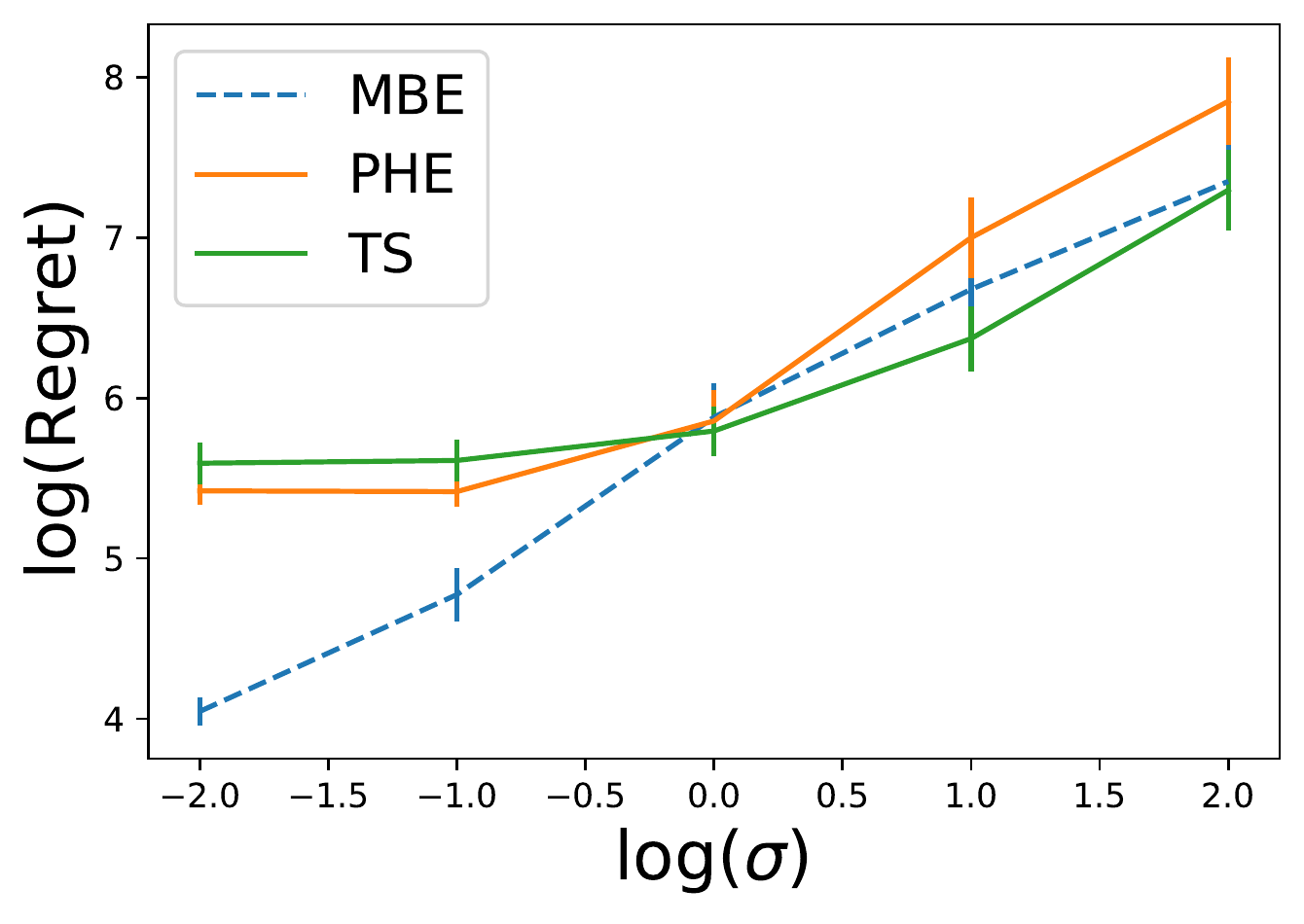}
%          \caption{$y=3\sin x$}
%          \label{fig:three sin x}
% \caption{Three simple graphs}
% \label{fig:three graphs}
% \end{figure}

\begin{figure}[t]


    \centering
    \subcaptionbox{Trend with  $\alpha$ }
  {\includegraphics[width=0.22\textwidth]{Fig/Robust_alpha.pdf}  }
  \hspace{.1cm}
    \subcaptionbox{Trend with $\sigma$}
  {\includegraphics[width=0.22\textwidth]{Fig/Robust_sigma.pdf}  }
  %   \subcaptionbox{ Impact of Number of Assets on Performance\label{pm-assets-count}}
  % {\includegraphics[width=0.3\textwidth]{Fig/Robust_sigma.pdf}  }
    \vspace{-.1cm}
    \caption{Results with different reward variances and  task distributions. 
    For the x-axis in both figures and the y-axis in the second one, we plot at the logarithmic scale for better visualization.}
    % \vspace{-10pt}
\end{figure}

\begin{figure*}[!htp]
    \includegraphics[width=0.95\linewidth]{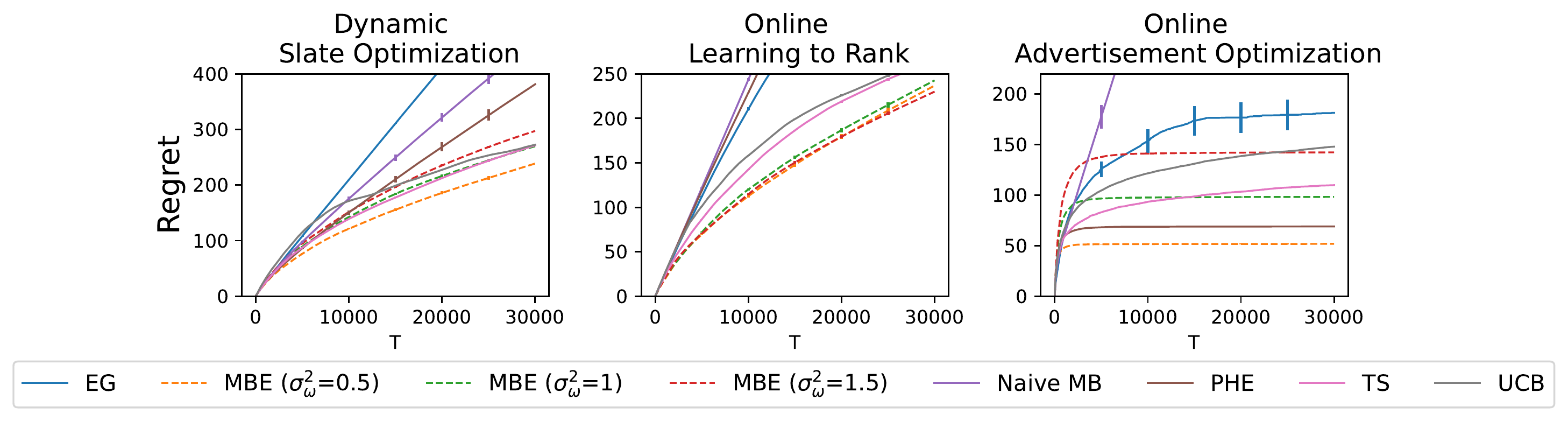} % real
    \vspace{-.5cm}
    \caption{Real data results for three structured bandit problems that need domain-specific models. 
% The error bars indicate the standard errors of the averages, which may not be visible when the width is small. 
}
    \label{fig:real}
\end{figure*}

\subsection{Real data applications}\label{sec:exp_CB}
% Contextual MAB and other structured bandits
The main advantage of \name{MBE} is that it easily generalizes to complex models. 
% its general applicability. 
In this section, we use real datasets to study this property. 
Our goal is to investigate that, without problem-specific algorithm design and without heavy tuning, whether \name{MBE} can achieve comparable performance with strong problem-specific baselines proposed in the literature.

% We start with the contextual MAB problem by studying the datasets considered in \citet{riquelme2018deep}. 
% These datasets are all converted from supervised learning problems: 
% at every round, we are given a feature vector, then need to choose from a fixed set of arms, and finally receive a corresponding random reward. 

% Different models v.s. different exploration strategies. 

% To demonstrate this point, we follow the standard practice to run  experiments by adapting multi-class classification datasets. 

% For \name{MBE}, we investigate base 
% linear, logistic, NN

% Besides contextual MAB, 
% There are also many real-world problems that rely on domain-specific models. 
We study the three problems considered in \citet{wan2022towards}, including 
cascading bandits for online learning to rank  \citep{kveton2015cascading}, 
combinatorial semi-bandits for online combinatorial optimization \citep{chen2013combinatorial}, 
and multinomial logit (MNL) bandits for dynamic slate optimization \citep{agrawal2017thompson, agrawal2019mnl}.  
All these are practical and important problems in real life. 
Yet, these domain models all have unique structures and require a case-by-case algorithm design.  % from MAB and contextual bandits,
For example, the rewards in MNL bandits follow multinomial distributions that have complex dependency with the pulled arms. 
To derive the posterior or confidence bound, one has to use a delicately designed epoch-type procedure \citep{agrawal2019mnl}.

We compared \name{MBE} with state-of-the-art baselines in the literature, including \name{TS-Cascade} \citep{zhong2021thompson} and \name{CascadeKL-UCB} \citep{kveton2015cascading} for cascading bandits, 
\name{CUCB} \citep{chen2016combinatorial} and \name{CTS} \citep{wang2018thompson} for semi-bandits, 
and \name{MNL-TS} \citep{agrawal2017thompson} and \name{MNL-UCB} \citep{agrawal2019mnl} for MNL bandits. 
To save space, we denote the TS-type algorithms by \name{TS} and UCB-type ones by \name{UCB}. 
We also study \name{PHE} and $\epsilon$-greedy (\name{EG}) as two other general-purpose exploration strategies. 

% We note that all these algorithms have theoretically 
% optimal??

% In all three problems, we compare with strong baselines  \citep{zhong2021thompson, wang2018thompson, agrawal2017thompson}, which are all TS-type. \red{we should probably add more baselines.}

% \red{Bayesian-UCB}
% We also experimented with a few UCB- and KL-UCB-type algorithms that have been proposed for these problems. 
% As widely observed in the literature, they are outperformed by the TS algorithms, so we do not present them in figures. 

% We have not compared with GIRO and Reboot for computation and applicability considerations. 

We use the three datasets studied in  \citet{wan2022towards}.  
Specifically, we use the Yelp rating dataset \citep{zong2016cascading} to recommend and rank $K$ restaurants, 
use the Adult dataset \cite{Dua:2019} to send advertisements to $K/2$ men and $K/2$ women (a combinatorial semi-bandit problem with continuous rewards),  
% for online  recommendation under sex constraints
and use the MovieLens dataset \citep{harper2015movielens} to display $K$ movies. 
% for recommending the optimal slate of movies. 
In our experiments, we fix $K= 4$ and randomly sample $30$ items from the dataset to choose from. 
We provide a summary of these datasets and problems in Appendix \ref{sec:exp_details}, and refer interested readers to \citet{wan2022towards} and references therein for more details. 

% Gaussian

% The goal of this experiments is to showcase CORe in a challenging bandit problem, where the action space is large (any ranked list of items) and feedback is complex (clicks on lower ranked items are less likely than on the higher ones).

% \textbf{tuning of all these methods. }

For the baseline methods, as in  Section \ref{sec:exp_MAB}, we either use the default hyperparameters in \citet{wan2022towards} or tune them extensively and present their best performance. 
% For the three \name{TS}-type problem-specific algorithms, we use the same hyperparameters with \citet{wan2022towards}. 
For \name{EG}, we set the exploration rate $\epsilon_t = \min(1, a/2\sqrt{t})$ with tuning parameter  $a$. 
% As in Section \ref{sec:exp_MAB}, we tune both \name{PHE} and \name{EG} extensively and present their best performance. 
For \name{MBE}, with every bootstrap sample, we estimate the reward model via maximum weighted likelihood estimation, which yields nice closed-form solution that allows online updating in all three problems. 
The other implementation details are similar to Section \ref{sec:exp_MAB}. 

% same for eg and phe
% We add a small amount of pseudo-rewards ($\lambda = 0.25$), 
% and use the three default different values of $\sigma_\omega^2$ as $0.5$, $1$, and $1.5$. 

We present the results in Figure \ref{fig:real}. 
The overall findings are consistent with simulation. 
First, without any additional derivations or algorithm design, 
\name{MBE} matches the performance of problem-specific algorithms. 
Second, pseudo-rewards are important to guarantee sufficient exploration, and naively applying multiplier bootstrap may fail. 
Third, \name{MBE} has relatively stable performance with $\sigma_\omega$, as its exploration is mostly data-driven. 
In contrast, we found that the hyper-parameters of \name{PHE} and \name{EG} have to be carefully tuned, due to that they rely on externally added perturbation or forced exploration. 
For example, the best parameters for \name{EG} are $a = 5$, $0.1$ and $0.5$ in three problems. 
Finally, we observe that \name{PHE} does not perform well in MNL and cascading bandits, where the outcomes are binary. 
From a closer look, we find one possible reason: 
the response rates (i.e., the probabilities for the binary outcome to be $1$) in the two datasets are low, so \name{PHE} introduces too much (additive) noise for exploration purpose, which slows down the estimation convergence. 

% adaptiveness 
% not model-driven

% Not equal to TS in this setting
% their limitations?

% which is specifically designed for online learning to rank in the cascade model. 

% cascading 
% sigma_omega = 1 
% total_eps = 0.1

% semi
% sigma_omega = 0.25 
% total_eps = 0.5

% MNL
% sigma_omega = 1 
% total_eps = 5

% I debate with myself a bit on whether we tune \lambda or \sigma_\omega. Finally I feel \sigma_\omega is more consistent with our story (compared with naïve MBE, we need pseudo rewards; but compared with PHE, we only need minimal pseudo rewards and hence the main source is still the bootstrap itself, which we tune).

% \subsection{Structured bandits}\label{sec:exp_SB}

% Xgboost
% greedy

\section{Conclusion}

In this paper, we propose a new bandit exploration strategy, Multiplier Bootstrap-based Exploration (\name{MBE}). 
The main advantage of \name{MBE} is its generality: for any reward model that can be estimated via weighted loss minimization, 
the idea of  \name{MBE} is applicable and requires minimal efforts on derivation or implementation of the exploration mechanism. 
As a data-driven method, \name{MBE} also shows nice adaptivity. 
% Compared with the non-parametric bootstrap, the practical implementation of \name{MBE} allows online updating and hence is computationally efficient. 
We prove near-optimal regret bounds for  \name{MBE} in the sub-Gaussian MAB setup, which is more general compared with other bootstrap-based bandit papers. 
Numerical experiments demonstrate that  \name{MBE} is general, efficient, and adaptive.

There are a few meaningful future extensions. 
First, the regret analysis for \name{MBE} (and more generally, other bootstrap-based bandit methods) in more complicated setups would be valuable. 
Second, adding pseudo-rewards at every round is needed for the analysis. 
We hypothesize that there exists a more adaptive way of adding them. 
% bernoulli-exponential (prior)
% initial
% efficient implementation of MBE -> also indicates that probably we add too many pseudo-rewards
Last, the practical implementation of \name{MBE} relies on an ensemble of models to approximate the bootstrap distribution and the online regression oracle to update the model estimation. 
Our numerical experiments show that such an approach works well empirically, but it would be still meaningful to have more theoretical understanding.

% Finally, an interesting observation is that, 

% Second, the role of the multiplier weight distribution $\rho(\omega)$ is also an intriguing research topic in both theory and experiments. 
% Besides, the application of \name{MBE} in industry, medicine, and economics or finance is also beneficial.

% We combine the pseudo rewards and multiplier bootstrap with the general tradition optimization problem in a novel way. And in multi-armed bandits and linear bandits, the algorithm of our framework can also lead to a relatively easy computation form and online update. More important, we prove in MAB, \name{MBE} can always achieve the near-optimal rate for cumulative regret if choosing the tuning parameters properly, and this theory is always valid for any sub-Gaussian rewards. 
% The experiment also shows the theory may also be extended to any sub-Weibull distributions. 
% and linear bandits. 

% for cumulative regret minimization in stochastic bandits problems

% \input{Alg/main_alg_MAB}

\newpage

\bibliography{0_Main.bib}
\bibliographystyle{icml2023}

%%%%%%%%%%%%%%%%%%%%%%%%%%%%%%%%%%%%%%%%%%%%%%%%%%%%%%%%%%%%%%%%%%%%%%%%%%%%%%%%%%%%%%%%%%%%%%%%%%%%%%%%%%%%%%%%%%%%%%%%%%%%%%%%%%%%%%%%%%%%%%%%%%%%%%%%%%%%%%%%%%%%%%%%%%%%%%%%%%%%%%%%%%%%%%%%%%%%%%%%%%%%%%%%%%%%%%%%%%%%%%%%%%%%%%%%%%%%%%%%%%%%%%%%%%%%%%%%%%%%%%%%%%%%%%%%%%%%%%%%%%%%%%%%%%%%%%%%%%%%%%%%%%%%%%%%%%%%%%%%%%%%%%%%%%%%%%%%%%%%%%%%%%%%%%%%%%%%%%%%%%%%%%%%%%%%%%%%%%%%%%%%%%%%%%%%%%%%%%%%%%%%%%%%%%%%%%%%%%%%%%%%%%%%%%%%%%%%%%%%%%%%%%%%%%

\newpage
\appendix
\onecolumn
\renewcommand{\appendixname}{Appendix~\Alph{section}}

\section{Additional Method Details}\label{sec:appendix_details}
\subsection{\name{MBE} for MAB}\label{sec:MBTS_MAB}
In this section, we present the concrete form of \name{MBE} when being applied to MAB. 
Recall that $\vx_t$ is null, $A_t \in [K]$, and $r_k$ is the mean reward of the $k$-th arm. 
We define $f(\vx_t, A_t; \vr) = r_{A_t}$, where the parameter vector $\vr = (r_1, \dots, r_K)^{\top}$. 
% The function class is then $\mathcal{F} = \{ f: (\vr, k) \mapsto r_k : \vr \in [0, 1]^K\}$. 
% Then, the problem reduces to 
% % \begin{align*}
% $\min_{\{r_1, \dots, r_K\}} \sum_{t=1}^T \omega_t || R_t - r_{A_t} ||^2_2$, 
% % \end{align*}
% which gives us the estimate $\hat{r}_k = |\{t:A_t = k\}|^{-1} \sum_{t:A_t = k} \omega_t R_t $, i.e., the arm-wise weighted average. 
% Similarly, in linear bandits, the problem can reduce to the weighted least square problem. 
% , which can be solved  efficiently. 
We define the loss function as 
\begin{align*}\label{eqn:MAB_form}
    % \mathcal{L}\Big( \{ 
    % \big(f(\vx_t, A_t \mid \vr ), R_t, \omega_t \big)
    % \}_{t=1}^T \Big)
    % = 
    \frac{1}{t'}\sum_{t=1}^{t'} \omega_t (r_{A_t} - R_t)^2. 
    % (\vx_t, A_t, R_t, \omega_t) \in \mathcal{D}
\end{align*}
The solution 
% \begin{align*}
%     \argmin_{\vr \in [0,1]^K} 
%     \mathcal{L}\Big( \{ 
%     \big(f(\vx_t, A_t \mid \vr), R_t, \omega_t^b \big)
%     \}_{t=1}^T \Big)  % (\vx_t, A_t, R_t, \omega_t) \in \mathcal{D}
% \end{align*}
is then $(\widehat{r}_1, \dots, \widehat{r}_K)^{\top}$ with $\widehat{r}_k = 
% |\{t:A_t = k\}|^{-1} 
(\sum_{t:A_t = k} \omega_t)^{-1}
\sum_{t:A_t = k} \omega_t R_t $, i.e., the arm-wise weighted average. After adding the pseudo rewards, we can give algorithm for MAB in Algorithm \ref{alg:MBTS-MAB-2}. 

Next, we provide intuitive explanation on why Algorithm \ref{alg:MBTS-MAB-2} works. Indeed, denote $s:= |\his_{k, T}|$, where $\his_{k, T}$ is the set of observed rewards for the $k$-th arm up to round $T$. 
Let $R_{k, l}$ be the $l$-th element in $\his_{k, T}$. 
Then
% Denote $s:= |\his_{k, T}|$ representing the time we pull $k$-th arm up to round $T$, and $R_{k, i}$ represents the $i$-th value of reward of arm $k$ in the history $\his_{k, T}$ through the paper and appendix, then
\[
    \begin{aligned}
        \overline{Y}_{k, s} & = \frac{\sum_{i = 1}^s \omega_i R_{k, i} + \lambda \sum_{i = 1}^s \omega_i'}{\sum_{i = 1}^s \omega_i + \lambda \sum_{i = 1}^s \omega_i' + \lambda \sum_{i = 1}^s \omega_i''} \\
        & = \frac{s^{-1} \sum_{i = 1}^s \omega_i ( R_{k, i} - \mu_k) + s^{-1} \sum_{i = 1}^s (\omega_i - 1) + \lambda s^{-1} \sum_{i = 1}^s (\omega_i' - 1) + \mu_k + \lambda}{s^{-1} \sum_{i = 1}^s (\omega_i - 1) + \lambda s^{-1} \sum_{i = 1}^s (\omega_i' - 1)  + \lambda s^{-1} \sum_{i = 1}^s (\omega_i'' - 1) + 1 +  2 \lambda} \, \xlongrightarrow{\pr} \, \frac{\mu_k + \lambda}{1 + 2 \lambda} 
    \end{aligned}
\]
by using the law of large numbers. 
Then, by Slutsky's theorem, 
 \[
     \begin{aligned}
         \sqrt{s} \left[ \overline{Y}_{k, s} - \frac{\mu_k + \lambda}{1 + 2 \lambda} \right] = \frac{1}{1 + 2\lambda} \left[ \frac{1}{\sqrt{s}} \sum_{i = 1}^s \omega_i ( R_{k, i} - \mu_k) + \frac{1}{\sqrt{s}} \sum_{i = 1}^s (\omega_i - 1) + \frac{\lambda}{\sqrt{s}} \sum_{i = 1}^s (\omega_i' - 1) \right] + o_p(1)
     \end{aligned}
 \]
will weakly converge to a mean-zero Gaussian distribution $\mathcal{N}\left(0, \frac{\sigma_k^2 + 2}{(1 + 2 \lambda)^2}  \sigma_{\omega}^2 \right)$. Therefore, our algorithm  preserves the order
of the arms for any $\lambda > 0$.

\begin{algorithm}[!h]
%\SetAlgoLined
\SetAlgoLined
\setcounter{AlgoLine}{1}
\KwData{
number of arms $K$, 
multiplier weight distribution $\rho(\omega)$, 
tuning parameter $\lambda$
}

 % \red{$c_k = 0$}, 
Set $\his_{k} = \{\}$ be the history of the arm $k$ and $\overline{Y}_{k} = +\infty, \forall k \in [K]$

\For{$t = 1, \dots, T$}{

    Pull $A_t = \argmax_{k \in [K]} \overline{Y}_{k}$ (break tie randomly), 

    Observe reward $R_t$
    
    Set $\his_{k} = \his_{k} \cup \{R_t\}$

    \For{$k = 1, \dots, K$}{
    \If{$|\his_{k}| > 0$}{
    
    Sample the multiplier weights $\{\omega_l, \omega'_l, \omega''_l\}_{l=1}^{|\his_{k}|} \sim \rho(\omega)$. 
    
    Update the mean reward 
    \begin{align*}
        \overline{Y}_{k} = \left(\sum_{\ell = 1}^{|\his_{k}|} (\omega_{\ell} \cdot R_{k, \ell} +  \omega_{\ell}' \cdot 1 \times \lambda + \omega_{\ell}'' \cdot 0 \times \lambda) \right) / \left( \sum_{\ell = 1}^{|\his_{k}|} (\omega_{\ell} + \lambda \omega_{\ell}' + \lambda \omega_{\ell}'') \right), 
    \end{align*}
    where $R_{k, l}$ is the $l$-th element in $\his_{k}$. 
    % with some tranformation $L$ and $s = |\his_{k, t}|$.

    }
    }
} 
% \todorw{Seems redundant given Algorithm 1. Can move to the appendix and only describe with words.}

\caption{\name{MBE} for MAB with sub-Gaussian rewards with mean bounded in $[0, 1]$}\label{alg:MBTS-MAB-2}
\end{algorithm}

% \begin{figure}[!htp]
%     \includegraphics[width=\linewidth]{}
%     \caption{Simulation results under MAB with different $\sigma_{\omega}$.}
%     \label{sim_MAB_sigma}
% \end{figure}

\subsection{\name{MBE} for stochastic linear bandits}\label{sec:MBTS_LB}

 % and bounded action $\|A_t \|_2 \leq 1$

\begin{algorithm}[!h]
\SetAlgoLined
\setcounter{AlgoLine}{1}
\KwData{
number of arms $K$, 
multiplier weight distribution $\rho(\omega)$, 
tuning parameter $\lambda$
}

Set $\his_{k} = \{\}$ be the history of the arm $k$, set  $A_0 = \mathbf{0}$, $\widehat{\boldsymbol\theta}_0 = \mathbf{0}$ with $b_0 = \mathbf{0}$, and $V_0 = (1 + \xi) I_p$.

\If{$t =1 , \ldots, p$}{
    Offer $A_t = t$.
}

\For{$t = p + 1, \dots, T$}{

    Offer $A_{t} =  \argmax_{\mathbf{a} \in \mathcal{A}_t} \mathbf{a}^{\top} \boldsymbol\theta_{t}$ (break tie randomly)

    Observe reward $R_t$
    
    Set $\his_{k} = \his_{k} \cup \{R_t\}$

    \For{$k = 1, \dots, K$}{
    \If{$|\his_{k}| > 0$}{
    
    Sample the multiplier weights $\{\omega_l, \omega'_l, \omega''_l\}_{l=1}^{|\his_{k}|} \sim \rho(\omega)$. 
    
    Update the following quantities:
    \begin{itemize}
        \item $V_{t + 1} = V_{t} + \omega_{t} A_{t} A_{t}^{\top} + \lambda \omega_t' 0  I_d + \lambda \omega_t'' I_d $;
        \item $b_{t + 1} = b_{t} + A_t \big( \omega_t R_t  + \lambda \omega_t' 0 + \lambda \omega_t'' 1 \big) $;
        \item Refresh the parameter as $ \widehat{\boldsymbol\theta}_{t + 1} = V_{t + 1}^{-1} b_{t +1}$.
    \end{itemize}
    % with some tranformation $L$ and $s = |\his_{k, t}|$.

    }
    }
    }

\caption{\name{MBE} for linear bandits.}\label{alg:MBTS-LB}
\end{algorithm}

In this section, we derive the form of \name{MBE} when applied to stochastic linear bandits. 
We focus on the setup where $\vx_t$ is empty and $A_t \in \mathbb{R}^p$ is a linear feature vector, and other setups of linear bandits can be formulated similarly. 
In this case, $f(\vx_t, A_t ; \boldsymbol\theta) = A_t^{\top} \boldsymbol\theta$ where the parameter vector is $\boldsymbol\theta \in \mathbb{R}^{p}$. 
Then, the weighted loss function is 
\[
    % \mathcal{L} \Big( \{ f(\vx_t, A_t ; \boldsymbol\theta),R_t, \omega_t \}_{t = 1}^T \Big) = 
    \sum_{t = 1}^T \omega_t \big( A_t^{\top} \boldsymbol\theta - R_t\big)^2 + \frac{\xi}{2} \| \boldsymbol\theta\|_2^2,
\]
where $\xi \geq 0$ is a penalty tuning parameter. 
The solution is the standard weighted ridge regression estimator and can be updated in the following way:
\begin{itemize}
    \item[0.] Initialization: $A_0 = \mathbf{0}$, $\widehat{\boldsymbol\theta}_0 = \mathbf{0}$ with $b_0 = \mathbf{0}$, and $V_0 = (\xi + 1 ) I_{{\operatorname{dim}(A_t)}}$.
    \item[1.] $\widehat{\boldsymbol\theta}_t = V_{t}^{-1} A_{t}$;
    \item[2.] $V_{t + 1} = V_{t} + \omega_{t} A_{t} A_{t}^{\top}$, $b_{t + 1} = b_{t} + \omega_t R_t A_t$, and hence update 
    \[
        \begin{aligned}
            \widehat{\boldsymbol\theta}_{t + 1} = V_{t + 1}^{-1} b_{t +1} & = \big(V_{t} + \omega_{t} A_{t} A_{t}^{\top} \big)^{-1} \big( b_{t} + \omega_t R_t A_t \big) \\
            & = V_t^{-1}  - V_t^{-1} A_{t} (\omega_{t}^{-1} + A_{t}^{\top} V_t^{-1}A_t)^{-1} A_t^{\top} V_t^{-1}
        \end{aligned}
    \]
    \item[3.] Take the action $A_{t + 1} = \argmax_{\mathbf{a} \in \mathcal{A}_t} \mathbf{a}^{\top} \boldsymbol\theta_{t + 1}$. 
    % (or $A_{t + 1} = \argmax_{\| \mathbf{a}\| = 1} \mathbf{a}^{\top} \boldsymbol\theta_{t + 1}$).
    % \| \mathbf{a}\| \leq 1
\end{itemize}
%\begin{align*}
    %\widehat{\boldsymbol\theta} &= V_{t+1}^{-1} A_{t+1}\\
    %V_t &= \sum_{l=1}^t \omega_l \vx_l \vx_l^{\top}, A_t = \sum_{l=1}^t \omega_l \vx_l R_l\\
    %V_{t+1}^{-1}
    %&=\Big( V_t + \omega_{t+1} \vx_{t+1} \vx_{t+1}^{\top} \Big)^{-1}
    %= V_t^{-1}  - V_t^{-1}\vx_{t+1}(\omega_{t+1}^{-1}+\vx_{t+1}^{\top} V_t^{-1}\vx_{t+1})^{-1} \vx_{t+1}^{\top} V_t^{-1}\\
    %A_{t+1} &= A_t  + \omega_{t+1} R_{t+1} \vx_{t+1}^{\top}
%\end{align*}
The \name{MBE} algorithm for linear bandits is presented in Algorithm \ref{alg:MBTS-LB}.

%\begin{align*}
    %\widehat{\beta} &= V_{t+1}^{-1} A_{t+1}\\
    %V_t &= \sum_{l=1}^t (\omega_l + \omega'_l + \omega''_l) \vx_l \vx_l^{\top}, A_t = \sum_{l=1}^t  \vx_l (\omega_l \times R_l + \omega'_l \times 0 + \omega''_l \times 1)\\
    %V_{t+1}^{-1}
    %&=\Big( V_t + (\omega_l + \omega'_l + \omega''_l) \vx_{t+1} \vx_{t+1}^{\top} \Big)^{-1}
    %= V_t^{-1}  - V_t^{-1}\vx_{t+1}\Big((\omega_l + \omega'_l + \omega''_l)^{-1}+\vx_{t+1}^{\top} V_t^{-1}\vx_{t+1} \Big)^{-1} \vx_{t+1}^{\top} V_t^{-1}\\
    %A_{t+1} &= A_t  + \vx_{t+1}^{\top} \times (\omega_{t+1} \times R_{t+1} + \omega'_{t+1} \times 0 + \omega''_{t+1} \times 1)
%\end{align*}

%\red{[an algorithm here]}

%By the Woodbury matrix identity. 

\subsection{Naive Adaptation of the Multiplier Bootstrap}\label{sec:naive}
We present the naive multiplier bootstrap-based exploration algorithm in Algorithm \ref{alg:MBTS-naive}. 
Specifically, there is no pseudo-rewards added. 

\begin{algorithm}[!h]
\SetAlgoLined
\setcounter{AlgoLine}{1}
\KwData{
Function class $\mathcal{F}$, 
loss function $\mathcal{L}$, 
(optional) penalty function ${J}$, 
multiplier weight distribution $\rho(\omega)$, 
tuning parameter $\lambda$
}

Set $\his = \{\}$ be the history be the pseudo-history

Initialize $\widehat{f}$ in an optimistic way
% of the arm $k$, $c_k = 0$ ,and $\overline{Y}_{k} = +\infty, \forall k \in [K]$

\For{$t = 1, \dots, T$}{

    Observe context $\vx_t$ and action set $\mathcal{A}_t$

    Offer $A_t = \arg \max_{a \in \mathcal{A}_t} \widehat{f}(\vx_t, a)$ (break tie randomly)

    Observe reward $R_t$
    
    Update $\his = \his \cup \{(\vx_t, A_t, R_t)\}$
    
    Sample the multiplier weights $\{\omega_l\}_{l=1}^t \sim \rho(\omega)$
    
    Solve the weighted loss minimization problem to update $\widehat{f}$ as
    \begin{equation*}
    \widehat{f} = 
    \argmin_{f \in \mathcal{F}} 
    \frac{1}{t}
    \sum_{l=1}^{t}
    \omega_l \mathcal{L}\big( 
    f(\vx_l, A_l), R_l
    \big) + {J}(f).  % (\vx_t, A_t, R_t, \omega_t) \in \mathcal{D}
    \end{equation*}
    % \begin{align*}
    %     \widehat{f} = \argmin_{f \in \mathcal{F}} \mathcal{L}\Big( 
    %     \big\{ 
    %     ( f(\vx_l, A_l), R_l, \omega_l \big)
    %     \big\}_{l=1}^t \Big)
    % \end{align*}
}

% Multiplier Bootstrap-based Thompson Sampling
\caption{A Naive Design of \name{MBE} }\label{alg:MBTS-naive}
\end{algorithm}

%%%%%%%%%%%%%%%%%%%%%%%%%%%%%%%%%%%%%%%%%%%%%%%%%%%%%%%%%%%%%%%%%%%%%%%%%%%%%%%%%%%%%%%%%%%%%%%%%%%%%%%%%%%%%%%%%%%%%%%%%%%%%%%%%%%%%%%%%%%%%%%%%%%%%%%%%%%%%%%%%%%%%%%%

\subsection{ReBoot}\label{sec:ReBoot}
For completeness, we introduce the details of \name{ReBoot} in this section and discuss its generalizability. 
More details can be found in the original papers \citep{wang2020residual, wu2022residual}. 

Consider a stochastic bandit problem with a fixed and finite set of arm $\mathcal{A}$. 
Every arm $a \in \mathcal{A}$ may have a fixed feature vector (which with slight overload of notation, we also denote as $a$). 
The mean reward of arm $a$ is $f(a)$. 
At each round $t'$, \name{ReBoot} first fit the model $f$ as $\widehat{f}$ using all data.
Then for each arm $a$, \name{ReBoot} first computes the corresponding residuals using rewards related to that arm as $\{\epsilon_t = R_t - \widehat{f}(a)\}_{t:A_t = a, t \le t'}$, 
then perturbs these residuals with random weights as $\{\omega_t \epsilon_t = R_t - \widehat{f}(a)\}_{t:A_t = a, t \le t'}$ (\name{ReBoot} also adds pseudo-residuals, which we omit for ease of notations), 
and finally use $\widehat{f}(a) + |\{t:A_t = a, t \le t'\}|^{-1} \sum_{t:A_t = a, t \le t'} \omega_t \epsilon_t$ as the perturbed estimation of the mean reward of arm $a$.  
By design, it can be seen that \name{ReBoot} critically relies on the reward history of each \textit{fixed} arm. 
Therefore, to the best of our understanding, it is not easy to extend \name{ReBoot} to problems with either changing (e.g., contextual problems) or infinite arms.

% We consider a general stochastic bandit problem. 
% For any positive integer $M$, let $[M] = \{1, \dots, M\}$. 
%  At each round $t \in [T]$, the agent observes a context vector $\vx_t$ (it is empty in non-contextual problems) and an action set $\mathcal{A}_t$, then chooses an action $A_t \in \mathcal{A}_t$, and finally receives the corresponding reward $R_t = f(\vx_t, A_t) + \epsilon_t$, 
% Here, $f$ is an unknown function and $\epsilon_t$ is a mean-zero independent noise term. 
% Without loss of generality, we assume $f(\vx_t, A_t) \in [0,1]$. 
% Our goal is to minimize the cumulative regret 
% \[
% \operatorname{Reg}_T = \sum_{t=1}^T \E \big[ \max_{a \in \mathcal{A}_t} f(\vx_t, a)  - f(\vx_t, A_t) \big].
% \]
% At time $t'$, 
% with an existing dataset $\mathcal{D} = \{(\vx_t, A_t, R_t)\}_{t \in [t']}$, 
% to decide the action $A_{t'+1}$ at time $t' + 1$, 
% most algorithms typically first estimate $f$ in some function class $\mathcal{F}$ by solving a weighted loss minimization problem

% bandits

% \subsection{Expert}

% \input{Alg/alg_expert}

\section{More Experiment Results and Details}\label{sec:appendix_experiments}
\subsection{Results for linear bandits}\label{sec:results_LB}
%\red{TBA.}

\begin{figure*}[!t]
    \includegraphics[width=\linewidth]{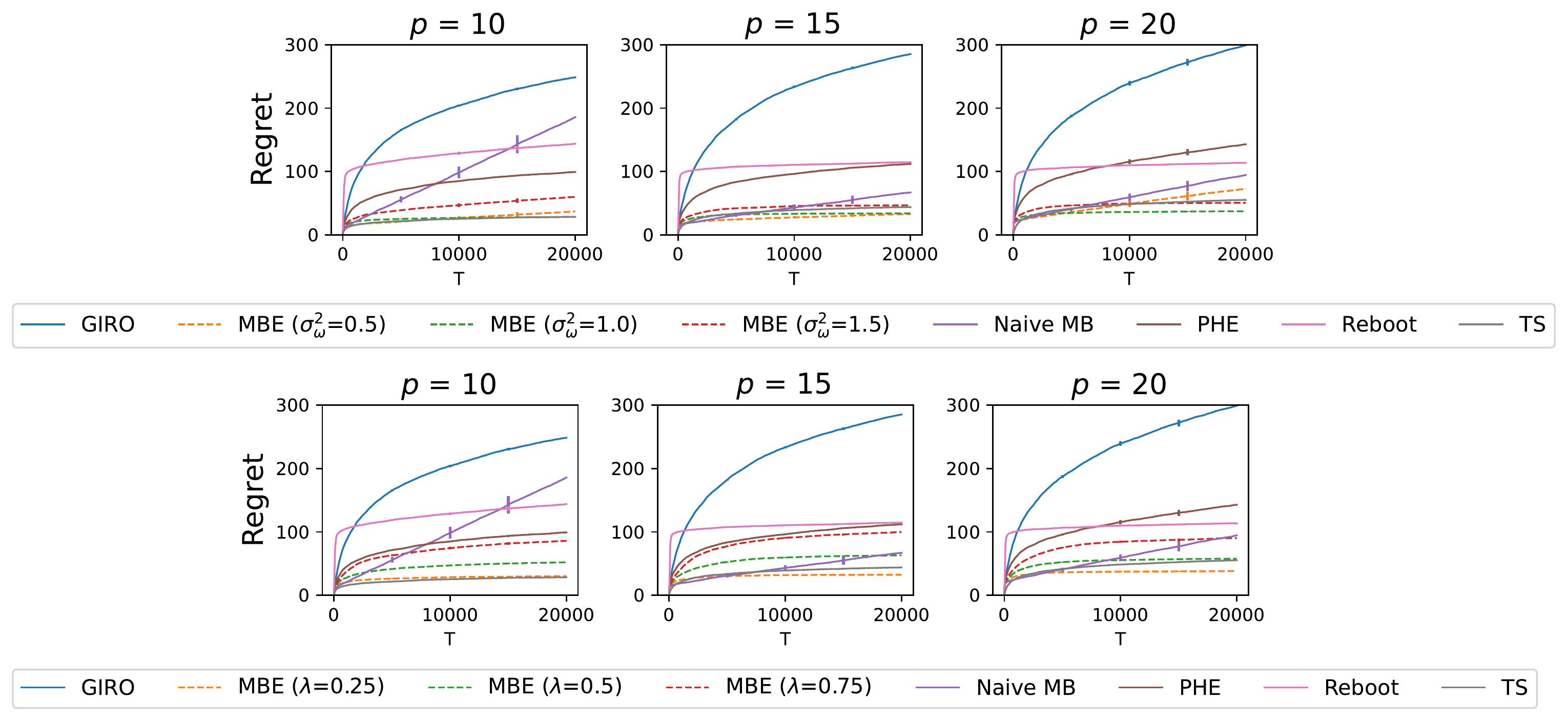} % real
    \vspace{-.8cm}
    \caption{Performance of \name{MBE} in three linear bandit problems.
% The error bars indicate the standard errors of the averages, which may not be visible when the width is small. 
}
    \label{fig:LB}
\end{figure*}

%\begin{figure*}[!t]
    %\includegraphics[width=\linewidth]{} % real
    % \includegraphics[width=\linewidth]{} % real
    %\vspace{-.8cm}
    %\caption{Performance of \name{MBE} with different $\lambda$ in three linear bandit problems.
% The error bars indicate the standard errors of the averages, which may not be visible when the width is small. 
%}
    %\label{fig:LB_lam}
%\end{figure*}

We also consider the linear bandit problem. 
The linear bandit version of \name{MBE} is presented in Appendix \ref{sec:MBTS_LB}. %, i.e. Algorithm \ref{alg:MBTS-LB}.
We experiment with several dimensions $p = 10, 15, 20$. The number of arms is $K = 100$. 
The feature vector $x_k \in \mathbb{R}^p$ of arm $k$ is generated as follows. 
For the last 10 arms, the features are drawn uniformly at random from $(0, 1)$. 
For the first $90$ arms, we consider a practical setup where they are low-rank: we first generate a loading metric $A = (a_{ij}) \in \mathbb{R}^{p \times 5}$ from $\text{Uniform} (0, 1)$, then  sample $b  \in \mathbb{R}^{5}$ from $\text{Uniform} (0, 1)$, and finally constructs  
$x_{k} = A b$. 
% two potential feature sub-matrices   and $B = (b_{ij}) \in \mathbb{R}^{40 \times 5}$. Their elements are independently drawn from  $\text{Uniform} (0, 1)$. The feature vector of $k$-th arm $x_k = (x_{k1}, \ldots, x_{kp})^{\top} \in \mathbb{R}^p$ is defined as $x_{ki} = \sum_{j = 1}^5 a_{ij} b_{kj}$ for $k =1, \ldots, 90$ and $i \in [p]$. 
% with $k = 91, \ldots, 100$. 
The parameter vector $\theta \in \mathbb{R}^p$ is uniformly sampled from $[0, 1]^p$. 
We normalize the feature vectors such that the mean reward $\mu_k = x_k^{\top} \theta $ falls within the interval $[0, 1]$. 
% We will use the transformation $x_k \mapsto \left( \max_{j \in [K]} x_j^{\top} \theta \right)^{-1} x_k $ to make sure the mean $\mu_k = x_k^{\top} \theta $ falls within the interval $[0, 1]$ for any $k \in [K]$. 
The rewards of arm $k$ are drawn i.i.d. from $\text{Bernoulli}(x_k^{\top} \theta)$.

% This construction makes the amount of randomness in the first 90 arms obviously larger than that in the last 10 arms. In this setting, the randomness from the observed data is not enough for the exploration \citep{bastani2021mostly}.

%\begin{itemize}
    %\item[1.] For the first $40$ arms, we first generate a potential feature sub-matrix $A = (a_{ij}) \in \mathbb{R}^{p \times 5}$ with its elements independently drawn from  $\text{Uniform} (0, 1)$.
    %\item[2.] Next, we generate another potential feature sub-matrix $B = (b_{ij}) \in \mathbb{R}^{40 \times 5}$. Each element in $B$ are independently drawn uniformly from $(0, 1)$.
    %\item[3.] The feature vector of $k$-th arm $x_k = (x_{k1}, \ldots, x_{kp})^{\top} \in \mathbb{R}^p$ is defined as $x_{ki} = \sum_{j = 1}^5 a_{ij} b_{kj}$.
    %\item[4.] For the last 10 arms, the entries of feature vector $x_k \in \mathbb{R}^p$ of $k$-th arm are independently uniformly from $(0, 1)$.  
%\end{itemize}
%for $k \leq 40$, we first generate the first $5$ the components from uniform distributions on $(0, 1)$, and let the other elements in $x_k$ to be zero; and for $k = 41, \ldots, 50$, each element in $x_k$ are independently drawn uniformly from $(0, 1)$. The vector interested $\theta \in \mathbb{R}^p$'s entries are also drawn from $\text{Uniform} (0, 1)$. And we will use the transformation $x_k \mapsto \left( \max_{k \in [K]} x_k^{\top} \theta \right)^{-1} x_k $ to make sure the mean $\mu_k = x_k^{\top} \theta $ falls within the interval $[0, 1]$. The reward of arm $k$ is drawn i.i.d. from $\operatorname{Bernoulli}(x_k^{\top} \theta)$. 

We still compare \name{MBE} with the method for linear bandit version of  \name{GIRO}, \name{PHE}, and \name{ReBoot} with tuning to their best performance over the hyper-parameter set $\{2^{k-4}\}_{k=0}^6$ and report the best performance of each method. 
For \name{TS}, we use Gaussian for both its reward and prior distribution, and calibrate their parameters using the true model. 
The total rounds are $T = 20000$ and our results are averaged over $50$ randomly chosen problems. 
Most other details are similar to our MAB experiments. 

%In each republication,  we try several different $\lambda = \cdots$. And it is worthy to note that again: when $\lambda = 0$, \name{MBE} is equivalent to the naive \name{MB}.

We present the results in Figure \ref{fig:LB}, where we vary either $\sigma_{\omega}$ or $\lambda$ in the two subplots. 
% fix $\lambda = 0.25$ and $\sigma_{\omega}^2 = 0.5$ 
% in Figure \ref{fig:LB}, respectively. 
We can see that \name{naive MB} leads to a linear regret. 
Hence, the pseudo-reward also matters in this problem. 
\name{MBE} achieves comparable performance with strong baselines such as \name{TS}. 
% when the dimension $p$ is relatively large. 
Another finding  is that \name{MBE} is robust to its tuning parameters. 
% both $\sigma_{\omega}^2$ and $\lambda$. 
%if we do not know the correct conjugate prior distribution. This advantage is robust to the increase of dimension. 
Finally, \name{Reboot} needs to pull $K$ times to initialize (the linear regret part in the first $K$ rounds) due to the nature of its design. 
In contrast, most other linear bandit algorithms typically only need $p$ rounds of forced exploration. 
This shows the limitation of the \name{ReBoot} framework (See Appendix \ref{sec:ReBoot}).

% while \name{GIRO} and \name{PHE}, and \name{MBE} need to pull $p$ times to initialize (the linear regret part in the first $p$ rounds). \name{TS} do not need such an initialization. 

% there is a point worthy of note is that in linear bandit problems, 

%It is worthy to note that when $\lambda = 0$

% robustness
\subsection{Additional results}\label{sec:results_robust_lam}

In this section, we study the performance of \name{MBE} with respect to a few other hyper-parameters.

We first study the robustness to another tuning parameter $\lambda$. 
Recall that $\lambda$ controls the amount of external perturbation. 
% , which we need it to be not too small to guarantee 
% As we argued, 
Specifically, we repeat the experiment in \ref{sec:exp_MAB} with $\sigma_\omega^2$ fixed as 0.5 and with different values of $\lambda$ $(0.25, 0.5, 0.75)$.  
From Figure \ref{fig:MAB_lam}, it can be seen that a small amount of pseudo-rewards ($\lambda=0.25$) seems sufficient in these settings, and the results are fairly stable. 
We believe this is because the exploration of \name{MBE} is main driven by the internal randomness in the data.

\begin{figure*}[!t]
    \includegraphics[width=\linewidth]{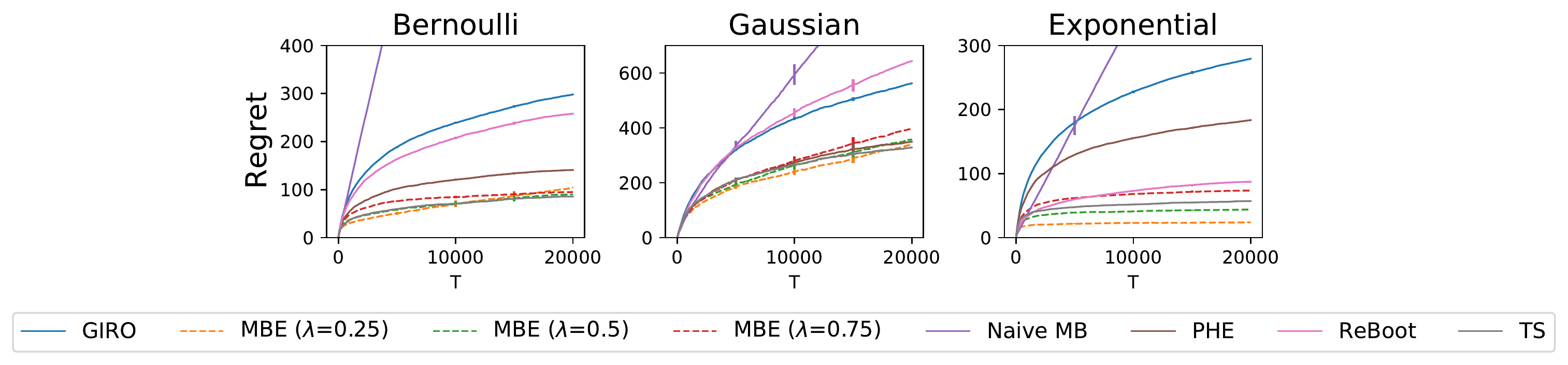}
    \vspace{-.5cm}
    \caption{Performance of \name{MBE} with different values of $\lambda$ in MAB. }
    % of the averages
    \label{fig:MAB_lam}
\end{figure*}

In Figure \ref{fig:MAB_K}, we repeat our main experiments with $K$ changed to $25$. 
We can see that our main conclusions still hold.

\begin{figure*}[!t]
    \includegraphics[width=\linewidth]{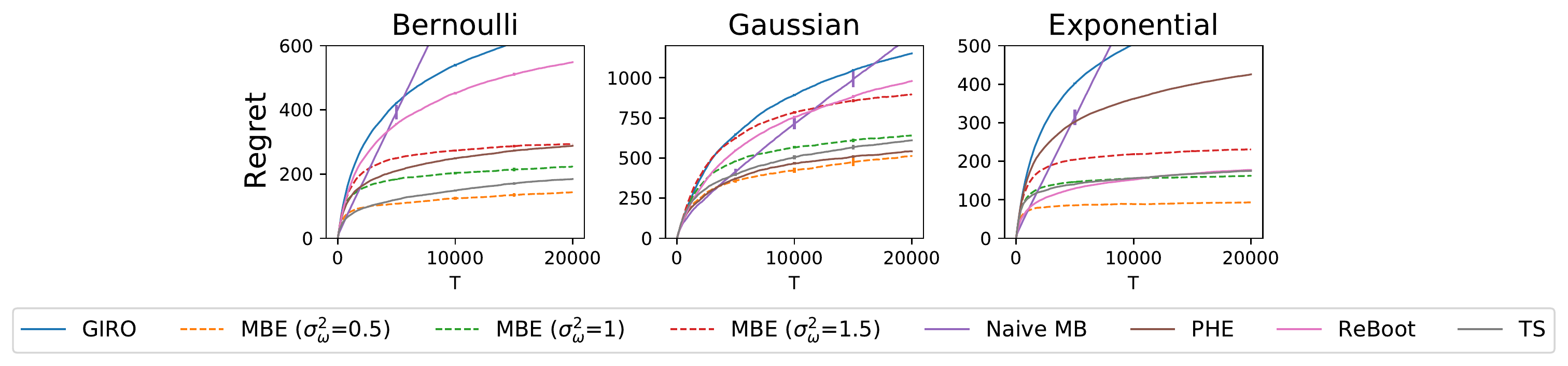}
    \vspace{-.5cm}
     \caption{Performance of \name{MBE} with $K = 25$. }
    % of the averages
    \label{fig:MAB_K}
\end{figure*}

Finally, in Figure \ref{fig:MAB_B}, we implement \name{MBE} with different number of replicates $B$. 
As expected, more replicates does help exploration due to a better approximation to the whole bootstrapping distribution. 
Yet, we find that $B=50$ suffices to generate comparable performance with \name{TS} and the performance of \name{MBE} becomes relatively stable for larger values of $B$. 
% We can see that our main conclusions still hold. 

\begin{figure*}[!t]
    \includegraphics[width=0.9\linewidth]{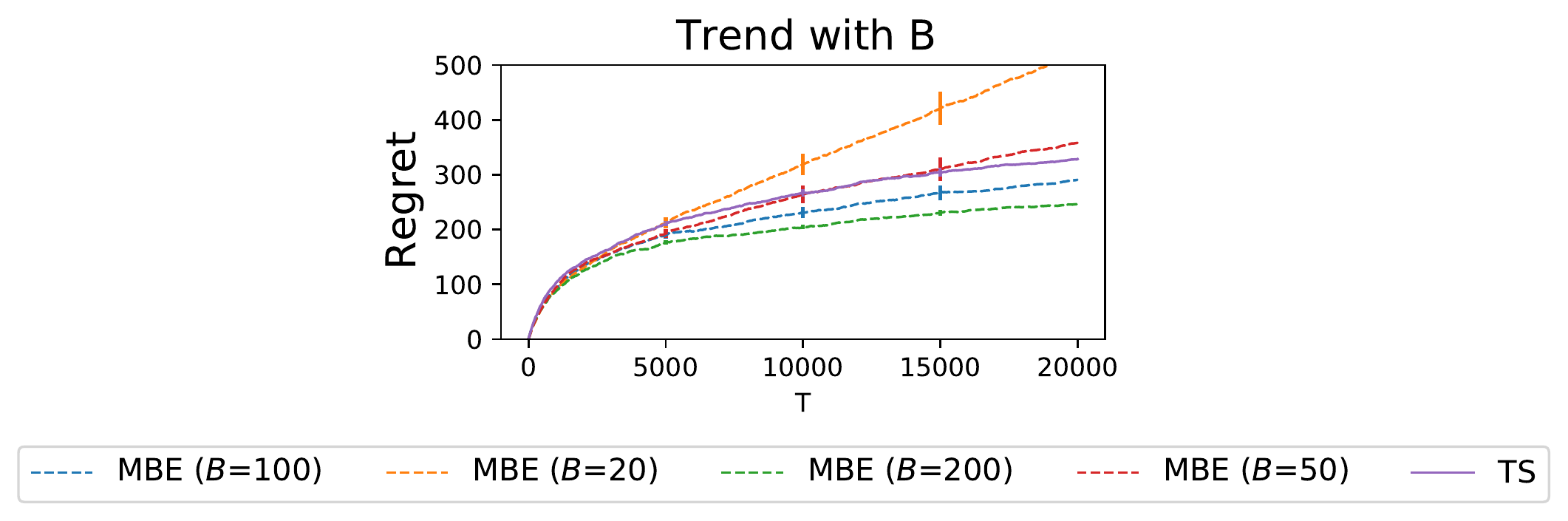}
    \vspace{-.5cm}
     \caption{Performance of \name{MBE} with different number of replicates $B$. }
    % of the averages
    \label{fig:MAB_B}
\end{figure*}

%%%%%%%%%%%%%%%%%%%%%%%%%%%%%%%%%%%%%%%%%%%%%%%%%%%%%%%%%%%%%%%%%%%%%%%%%%%%%%%%%%%%%%%%%%%%%%%%%%%%%%%%%%%%%%%%%%%%%%%%%%%%%%%%%%%%%%%%%%%%%%%%%%%%%%%%%%%%%%%%%%%%%%%%%%%%%%%%%%%%%%%%%%%%%%%%%%%%%%%%%%%%%%%%%%%%%%%%
\subsection{Details of the real data experiments}\label{sec:exp_details}

Our real data experiments closely follow \citet{wan2022towards}. 
For completeness, we provide information of the three problems here, and refer interested readers to \citet{wan2022towards} and references therein.

In an online learning to rank problem, we aim to select and rank $K$ items from a pool of $L$ ones. 
We itreatively interacts with users to learn about their preferences. 
The cascading model is popular in learning to rank \citep{kveton2015cascading}, which models the user behaviour as glancing from top to bottom (like a cascade) and choose to click an item following a Bernoulli distribution when she looks at that item. 
Therefore, we will binary outcomes for all items that the user has examined, and there are complex dependency between them. 

In a slate optimization (or called assortment optimization) problem, we aim to offer $K$ items from a pool of $L$ ones, especially when there exist substitution effects. 
The Multinomial Logit (MNL) model characterizes the choice behaviour as a multinomial distribution based on the attractiveness of each item. 
Since the offer subset changes over rounds, the joint likelihood is actually complex. 
To get the posterior or confidence bounds, one has to resort to an epoch-type offering schedule \citep{agrawal2017thompson}. 

Online combinatorial optimization also has numerous  applications \citep{wen2015efficient}, including maximum weighted matching, ads allocation, webpage optimization, etc. 
It is common that every chosen item will generate a separate observation, known as the semi-bandit problem. 
We consider a special problem in our experiments, where we need to choose $K$ persons from a pool under constraints.

The three datasets we used (and related problem setups) are studied in corresponding TS papers in the literature. 
To general random rewards, we need to either generate from a real data-calibrated model or  by directly sampling from the dataset. 
We follow  \citet{wan2022towards} and references therein. 
For cascading or MNL bandits, we split the dataset into a training and a testing set, use the training to estimate the reward model, and compare on the testing set. 
For semi-bandits, we sample rewards from the dataset.

\section{Main Proof}\label{proof_thm}

This section gives the proof of the main regret bound (Theorem \ref{thm.MAB.1}). 
Section \ref{sec_lem_components} gives the major lemmas required to bound the regret components used in this section. 
Section \ref{sec_lem_tech} lists all supporting technical lemmas, including the lower bound of the Gaussian tail and some novel results on the concentration property of sub-Gaussian and sub-exponential distributions.

We will present a complete version of the theory of \name{MBE} under MAB, Theorem \ref{thm.MAB}, as well as its proof, in which we allow arbitrary variance proxy $\sigma_k^2$ rather than requiring them to be one. 

\begin{theorem}\label{thm.MAB}
%Consider a $K$-armed bandit with sub-Gaussian rewards with mean $\mu_k$ for $k$-th arm. Suppose  $k$-th arm parameter $\sigma_k^2$. Suppose arm $1$ has the maximum expected reward, i.e. $\mu_1 = \max_{k \in [K]} \ \mu_K$. 
%\red{satisfying the mean of optimal arm's reward $\mu_1 \neq 1 / 2$.} % with its mean bounded in $[0, 1]$
Consider a $K$-armed bandit, where the reward distribution of arm $k$ is $\subG(\sigma_k^2)$ with mean $\mu_k$. Suppose $\mu_1 = \max_{k \in [K]} \ \mu_k$ and $\Delta_k = \mu_1 - \mu_k$. 
Take the multiplier weight distribution as $\mathcal{N}(1, \sigma_{\omega}^2)$ in Algorithm \ref{alg:MBTS-MAB-2}. 
Let the tuning parameters satisfy $\lambda \geq \left( \frac{4 \sigma_1}{\sigma_{\omega}} + 1 \right) + \sqrt{\frac{4\sigma_1}{\sigma_{\omega}} \left( \frac{4 \sigma_1}{\sigma_{\omega}} + 1\right)}$,
%\[
    %\sigma_{\omega}^2 > \frac{48 \sigma_1^2}{(2 \mu_1 - 1)^2}, \quad \lambda \geq  \frac{ - \sqrt{2 \pi } \left(2 \mu _1-1\right) \sigma _{\omega }^2+2 \sqrt{3}
    %\sqrt{\left(\left(4 \mu _1^2-\left(4+8 \sqrt{2 \pi }\right) \mu _1+4 \sqrt{2 \pi }+8
   %\pi +1\right) \sigma _1^2 \sigma _{\omega }^2\right)} + 24 \sigma _1^2}{\left(1-2 \mu _1\right){}^2 \sigma _{\omega }^2 - 48 \sigma
   %_1^2}.
%\]
%If $\frac{\sqrt{2 \pi}}{\lambda (1 - 2 \mu_1)}$ is not an integer, 
Then the problem-dependent regret is upper bounded by
\[
     \operatorname{Reg}_T \leq \sum_{k = 2}^K \Delta_k \Bigg[ 7 + \left\{ C_1(\sigma_1, \sigma_k, \lambda, \sigma_{\omega}) + \frac{C_2(\sigma_1,  \sigma_k, \lambda, \sigma_{\omega})}{\Delta_k^{2}} \right\} \log T \Bigg],
\]
%\[
    %\operatorname{Reg}_T \lesssim \sum_{k = 2}^K  \frac{\log T}{\Delta_k} + \sum_{k = 2}^K \Delta_k, 
    %\red{\text{This regret is really weird}}
%\]
where 
\[
    C_1(\sigma_1, \sigma_k, \lambda, \sigma_{\omega}) = 10 \bigg[ 8 \sqrt{2} \max_{k \in [K]} \sigma_{k}^2 \left( \frac{\log D_2 (\sigma_1, \sigma_{k}, \lambda, \sigma_{\omega})}{3 \log 2} + 1\right) + 38  \sigma_{\omega}^2 \bigg],
\]
and
\[
    \begin{aligned}
        C_2(\sigma_1, \sigma_k, \lambda, \sigma_{\omega}) & = 50 \lambda^2 \sigma_k^2 + 10 \bigg[ D_1 (\sigma_1, \sigma_{k}, \lambda, \sigma_{\omega}) \left( \frac{\log D_2 (\sigma_1, \sigma_{k}, \lambda, \sigma_{\omega})}{3 \log 2} + 1\right) + 38  \sigma_{\omega}^2 \bigg],
    \end{aligned}
\]
with 
\[
    \begin{aligned}
        D_1(\sigma_1, \sigma_k, \lambda, \sigma_{\omega}) & = \bigg[ \big(1 +  8 \sqrt{2} \max_{k \in [K] } \sigma_k^2 \big) \bigg( 16 + \frac{\sigma_{\omega}^2}{\sigma_1^2} \bigg) + 16 \sigma_1^4 + 3 \frac{\sigma_{\omega}^2}{\sigma_1^2} + 3 \sigma_{\omega}^2 + 1 \bigg] \sigma_{\omega}^2 \lambda^4,
    \end{aligned}
\]
and
\[
    \begin{aligned}
        D_2(\sigma_1, \sigma_k, \lambda, \sigma_{\omega}) & = 3 \bigg[ 1 +  \frac{3 \sqrt{\pi}\sigma_{\omega}^2}{2 \sigma_1} \left( \frac{\sigma_1}{\sigma_{\omega}} + 3 \lambda \right) + 16 \sqrt{\pi} \max_{k \in [K]} \sigma_{k}^2 \left( \frac{\sigma_{\omega}^2}{16 \sigma_1^4} + \frac{1}{\sigma_{\omega}^2} \right) + \frac{\lambda^2 \sigma_{\omega}^2}{4 \sigma_1^4} \bigg].
    \end{aligned}
\]
Furthermore, the problem-independent regret is upper bounded by

\[
    \operatorname{Reg}_T  \leq 7 K \mu_1 + \max_{k \in [K] \setminus \{ 1 \}}  C_1(\sigma_1, \sigma_k, \lambda) K \log T + 2 \sqrt{\max_{k \in [K] \setminus \{ 1 \}}  C_2 (\sigma_1, \sigma_k, \lambda) K T \log T}.
    %\red{\text{Do we have the summation here? No}}
\] 
%\todorw{for any $C > 0$??}
\end{theorem}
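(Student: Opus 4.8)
The plan is to adapt the Thompson-sampling-style regret analysis to the bootstrap statistic $\overline{Y}_{k,s}$, decomposing the regret arm-by-arm and controlling the number of suboptimal pulls through a matched concentration/anti-concentration pair. First I would write $\operatorname{Reg}_T = \sum_{k=2}^K \Delta_k \, \E[N_k(T)]$, where $N_k(T)$ is the number of pulls of arm $k$, and for each suboptimal $k$ introduce two thresholds $x_k < y_k$ lying strictly between the shifted asymptotic means $(\mu_k+\lambda)/(1+2\lambda)$ and $(\mu_1+\lambda)/(1+2\lambda)$. Splitting the event $\{A_{t+1}=k\}$ according to whether the optimal arm's statistic stays above $y_k$ (optimism holds) yields the standard bound
\begin{align*}
\E[N_k(T)] \le \sum_{s=0}^{T} \pr\big(\overline{Y}_{k,s} \ge y_k\big) + \E[\Lambda_k] + c_0,
\end{align*}
where $\Lambda_k$ counts the rounds on which arm $1$'s statistic falls below $y_k$ and $c_0$ is a burn-in constant that is the source of the additive ``$7$'' in the bound. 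Throughout I would work on the two good events described in the sketch, on which $\overline{Y}_{k,s}$, the unweighted shifted average $\overline{R}_{k,s}^*$, and the shifted mean are mutually close.

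The second step is bounding $\pr(\overline{Y}_{k,s} \ge y_k)$, i.e.\ showing the ratio $\overline{Y}_{k,s}$ concentrates around $(\mu_k+\lambda)/(1+2\lambda)$. Since the same Gaussian weights appear in numerator and denominator, I would first condition on the data event where $\overline{R}_{k,s}$ is close to $\mu_k$ (a routine sub-Gaussian deviation bound), and then analyze the weight randomness. Writing the numerator as $s^{-1}\sum_i \omega_i(R_{k,i}-\mu_k) + s^{-1}\sum_i(\omega_i-1) + \lambda s^{-1}\sum_i(\omega_i'-1) + (\mu_k+\lambda)$, the leading sum is a combination of products $\omega_i(R_{k,i}-\mu_k)$, each sub-exponential; controlling their deviation is exactly where I would invoke the new concentration inequality for functions of sub-exponential variables (Lemma~\ref{lem_subE_ineq}). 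The denominator concentrates around $1+2\lambda$, and I would keep it bounded away from zero using the lower-bound condition on $\lambda$. The crucial device is the carefully designed event of Lemma~\ref{lem_bounding_a_k_s_2}, which decorrelates numerator and denominator and removes the sub-Gaussian cross terms, reducing everything to sub-exponential tails; summing the resulting exponential-in-$s$ bound over $s$ produces the $\Delta_k^{-2}\log T$ contribution.

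The third step, and what I expect to be the main obstacle, is the anti-concentration bound for the optimal arm: I must show $\overline{Y}_{1,s}$ exceeds $y_k$ with a probability bounded below by a uniform positive constant, so that $\E[\Lambda_k]$ contributes only $O(\Delta_k^{-2}\log T)$. Conditional on the history, the numerator and denominator of $\overline{Y}_{1,s}$ are jointly Gaussian because $\rho(\omega)=\mathcal{N}(1,\sigma_\omega^2)$, so $\overline{Y}_{1,s}$ is a ratio of correlated Gaussians, and I would lower-bound the probability that this ratio exceeds $y_k$ via the Gaussian-tail lower bound among the technical lemmas, on the good event where $\overline{R}_{1,s}$ is near $\mu_1$. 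The condition on $\lambda$ guarantees that the injected pseudo-reward variance is large enough for this lower bound to be a genuine positive constant. The difficulty is that one must simultaneously keep the denominator positive, control the induced numerator–denominator correlation, and retain a lower bound that is uniform in $s$; to the best of my knowledge this is the first finite-sample anti-concentration analysis for multiplier bootstrap.

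Finally I would assemble the pieces: summing the concentration tail over $s$ and the anti-concentration count over rounds gives the problem-dependent bound $\operatorname{Reg}_T \le \sum_{k\ge2}\Delta_k\big[7 + (C_1 + C_2/\Delta_k^2)\log T\big]$, with the explicit constants $C_1, C_2$ tracking the accumulated $\sigma_k$-, $\lambda$-, and $\sigma_\omega$-dependence from Lemmas~\ref{lem_subE_ineq} and~\ref{lem_bounding_a_k_s_2}. For the problem-independent bound I would apply the standard reduction: fixing a gap threshold $\delta$, bound arms with $\Delta_k \le \delta$ trivially by $\delta T$ and arms with $\Delta_k > \delta$ by the problem-dependent bound, then optimize $\delta \asymp \sqrt{K\log T / T}$ to obtain the $\sqrt{C_2\, K T \log T}$ rate.
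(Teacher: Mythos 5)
Your overall architecture matches the paper's: the same GIRO/Thompson-style arm-by-arm decomposition (the paper's Lemma~\ref{lem.a.b}), good events on which $\overline{Y}_{k,s}$, the shifted average $\overline{R}^*_{k,s}$ and the shifted mean are mutually close, a Gaussian-tail lower bound for anti-concentration, the sub-exponential concentration lemma, and the standard gap-threshold reduction for the problem-independent bound. However, there is a genuine gap in your treatment of the optimal arm. The quantity that must be controlled is not a count $\Lambda_k$ of \emph{rounds} on which arm $1$'s statistic falls below $y_k$, but the sum over \emph{pull counts} $s$ of the expected truncated inverse conditional probability, $\sum_{s=0}^{T-1}\E\bigl[\min\{1/Q_{1,s}(\tau_k)-1,\,T\}\bigr]$ with $Q_{1,s}(\tau)=\pr\bigl(\overline{Y}_{1,s}>\tau\mid\his_{1,s}\bigr)$. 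Arm $1$'s pull count does not advance while suboptimal arms are being played, so the same unlucky history can persist for many rounds; only the geometric (inverse-probability) form of the bound, as in Lemma~\ref{lem.a.b}, correctly accounts for this.

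More importantly, your claim that a \emph{uniform-in-$s$} constant lower bound on the optimism probability makes this term $O(\Delta_k^{-2}\log T)$ is false: a constant lower bound on $Q_{1,s}$ gives a constant bound per value of $s$, and summing over $s=0,\dots,T-1$ yields a bound linear in $T$. In the paper, anti-concentration (Lemmas~\ref{lem_bounding_a_k_s_constant} and~\ref{lemma:gaussian_weight}) is used only for the first $s \lesssim \Delta_k^{-2}\log T$ pulls; for all larger $s$ one must additionally prove a \emph{concentration} statement for the optimal arm, namely $\E\bigl[\min\{1/Q_{1,s}(\tau_k)-1,T\}\bigr]\le 3T^{-1}$. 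This is not implied by a bound on the marginal probability $\pr(\overline{Y}_{1,s}\le\tau_k)$, because $Q_{1,s}$ is a conditional probability given the reward history and $1/Q_{1,s}$ can blow up on rare bad histories; the paper handles it with the truncation at $T$, the decomposition over the events $A_{1,s}$ and $G_{1,s}$, and the bound on $\E\exp\{-s/(\mathfrak{a}\overline{\xi}+\mathfrak{b})\}$ via the new sub-exponential inequality (Lemmas~\ref{lem_bounding_a_k_s_1}, \ref{lem_bounding_a_k_s_2}, \ref{lem_bounding_a_k_s_3}, and \ref{lem_subE_ineq}). Your proposal assigns these tools only to the suboptimal-arm concentration (your step two, which corresponds to the parallel $b_k$ lemmas) and leaves the optimal-arm concentration unaddressed, so as written the argument cannot get past a linear bound for the arm-$1$ term.
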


Before starting our proof, we first give the definition of sub-exponential variables: we call a mean-zero variable $X$ sub-exponential with parameters $\lambda$ and $\alpha$ if
\[
    \E \exp \{ t X \} \leq \exp \left\{ \frac{t^2 \lambda^2}{2}\right\}, \qquad |t| \leq \frac{1}{\alpha}.
\]
And we denote as $X \sim \subE(\lambda, \alpha)$ if sub-exponential $X$ has parameters $\lambda$ and $\alpha$.
We have the following concentration {property}
\[
    \pr (X \geq t) \leq \exp \left\{ - \frac{1}{2} \left( \frac{t^2}{\lambda^2} \wedge \frac{t}{\alpha}\right) \right\}, \qquad  {\forall} \, t \geq 0,
\]
where $a \wedge b$ is defined as $\min \{ a, b\}$. See \citet{zhang2020concentration} and \citet{zhang2022sharper} for more details. 
For simplicity, 
we denote $\subE(\lambda) := \subE(\lambda, \lambda)$. 
We call a random variable $X$ sub-Gaussian with variance proxy $\sigma^2$ if $\E \exp \{ t (X - \E X)\} \leq \exp \{ {t^2 \sigma^2} / {2}\}$ for any $t \in \mathbb{R}$, and denote it as $X\sim \subG(\sigma^2)$.

\textbf{Notations:} \textit{Denote $\pr_{\xi}(A) = \int_{A} \mathrm{d} F_{\xi}(x)$ as the probability of event $A$, where $F_{\xi}(x)$ is the distribution function of the random variable $\xi$, and similarly, we denote $\E_{\xi} f(\xi) = \int f(x) \, \mathrm{d} F_{\xi} (x)$ as the expectation. Recall that we write two functions $a(s, T) \lesssim b(s, T)$ if $a(s, T) \leq c \times b(s, T)$ for some constant $c$ free of $s$ and $T$. And we write $a(s, T) \asymp b(s, T)$ if both $a(s, T) \lesssim b(s, T)$ and $a(s, T) \gtrsim b(s, T)$. Furthermore, we define $a \vee b = \max\{a, b\}$ and $a \wedge b = \min \{ a, b\}$ for any real numbers $a$ and $b$. Similarly, we can define $a \vee b \vee c = \max\{ a, b, c\}$ and $a \wedge b \wedge c = \min\{ a, b, c\}$ for any $a, b, c \in \mathbb{R}$.}

\begin{proof}
\vspace{1ex}

\textbf{Step 0: Decomposition of the regret bound.} Our proof relies on the following decomposition of the cumulative regret \citep{kveton2019garbage}. 
%We will recap its proof in Appendix \ref{sec:proof_lemma} for completeness. 
%Let the history of arm $k$ after $s$ pulls be a set $\his_{k, s}$ of cardinality $s$.
We denote the first $s$ rewards from pulling arm $k$ as  $\his_{k, s}$, with the $i$-th observations denoted as $R_{k, i}$. 
Let $Q_{k, s} (\tau) = \pr \big( \overline{Y}_{k, s} > \tau \mid \his_{k, s} \big)$ be the tail probability that $\overline{Y}_{k, s}$ conditioned on history $\his_{k, s}$ is a least $\tau$, and $N_{k, s} (\tau) = 1 / Q_{k, s} (\tau) - 1$ is the expected number of rounds that the arm $k$ being underestimated given $s$ sample rewards. Here
\[
    \overline{Y}_{k, s} = \frac{\sum_{i = 1}^s \left[ \omega_i R_{k, i} + \omega_i' \cdot (1 \times \lambda) + \omega_i'' \cdot (0 \times \lambda) \right]}{\sum_{i = 1}^s (\omega_i + \lambda \omega_i' + \lambda \omega_i'')}
\]
is the objective function defined in Algorithm \ref{alg:MBTS-MAB-2}.

\begin{lemma}\label{lem.a.b}
% When we select arms according to the
Suppose in MAB we select arms according to the rule 
$A_t = \arg \max_{k \in [K]} \overline{Y}_{k, t}$ with $\overline{Y}_{k, t}$ defined in Algorithm \ref{alg:MBTS-MAB-2}.
% in the MAB problem, then 
Then for any $\{ \tau_k \}_{k = 2}^K \subseteq \mathbb{R}$, 
the expected $T$-round regret can be bounded above as
\[
    \operatorname{Reg}_T \leq \sum_{k = 2}^K \Delta_k \big( a_k + b_k \big),
\]
where $a_k = \sum_{s = 0}^{T - 1} a_{k, s}$ and $ b_k = \sum_{s = 0}^{T - 1} b_{k, s} + 1$, and $a_{k, s} = \E \big[N_{1, s}(\tau_k) \wedge T\big] $ and $b_k = \pr \big( Q_{k, s} (\tau_k) > T^{-1} \big)$.
\end{lemma}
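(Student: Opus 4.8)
The plan is to start from the standard per-arm regret decomposition $\operatorname{Reg}_T = \sum_{k=2}^K \Delta_k \, \E[T_k(T)]$, where $T_k(T) = \sum_{t=1}^T \1\{A_t = k\}$ counts the pulls of the suboptimal arm $k$, and then establish $\E[T_k(T)] \le a_k + b_k$ for each $k$ separately. The organizing idea is to classify every round in which arm $k$ is played according to whether the optimal arm is, at that moment, under-represented by its perturbed estimate. Concretely I would write
\[
T_k(T) = \sum_{t=1}^T \1\{A_t = k,\ \overline{Y}_{1,t} \le \tau_k\} + \sum_{t=1}^T \1\{A_t = k,\ \overline{Y}_{1,t} > \tau_k\}
\]
and bound the two sums by $a_k$ and $b_k$, respectively. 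Because the weights are resampled every round, conditionally on the histories $\his_{1,s}$ and $\his_{k,s}$ the estimates $\overline{Y}_{1,t}$ and $\overline{Y}_{k,t}$ are fresh draws whose probabilities of exceeding $\tau_k$ are exactly $Q_{1,s}(\tau_k)$ and $Q_{k,s}(\tau_k)$; this is what makes $Q$ and $N$ the natural bookkeeping quantities.

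For the second sum (the $b_k$ term) I would use that $A_t = k$ forces $\overline{Y}_{k,t} \ge \overline{Y}_{1,t} > \tau_k$, i.e. arm $k$ is \emph{over}-estimated, and then split over the current pull count $s$ of arm $k$ into two regimes. When $Q_{k,s}(\tau_k) \le T^{-1}$ a fresh draw exceeds $\tau_k$ with conditional probability at most $T^{-1}$, so summing these conditional probabilities over at most $T$ rounds contributes at most $T \cdot T^{-1} = 1$ in expectation (this is the source of the additive $+1$, and it uses that in Algorithm~\ref{alg:MBTS-MAB-2} every arm's estimate is recomputed each round). When $Q_{k,s}(\tau_k) > T^{-1}$ I would use that arm $k$ can be \emph{played} at a given count $s$ at most once (the pull advances it from $s$ to $s+1$), so these rounds contribute at most $\sum_{s=0}^{T-1}\1\{Q_{k,s}(\tau_k) > T^{-1}\}$, whose expectation is $\sum_{s=0}^{T-1}\pr(Q_{k,s}(\tau_k) > T^{-1})$. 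Together these give $b_k$.

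For the first sum (the $a_k$ term) I would control the rounds on which the optimal arm's fresh estimate stays below $\tau_k$ by a waiting-time argument indexed by arm $1$'s pull count $s$: conditionally on $\his_{1,s}$ each round arm $1$ sits at count $s$ produces an independent sample above $\tau_k$ with probability $Q_{1,s}(\tau_k)$, so the expected number of under-threshold rounds before arm $1$ is finally favored scales like $N_{1,s}(\tau_k) = 1/Q_{1,s}(\tau_k) - 1$, and after truncating at the horizon and summing over $s$ one obtains $a_k = \sum_{s=0}^{T-1}\E[N_{1,s}(\tau_k)\wedge T]$. \emph{The main obstacle is making this waiting-time step rigorous}: the epoch during which arm $1$ is stuck at count $s$ does not terminate precisely when its perturbed estimate first crosses $\tau_k$, since a different over-estimated arm may still win, so one cannot simply equate the count with a single geometric variable. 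I would resolve this with the Agrawal--Goyal-type conditional inequality
\[
\pr\big(A_t = k,\ \overline{Y}_{1,t} \le \tau_k \mid \mathcal{F}_{t-1}\big) \le \frac{1 - Q_{1,s}(\tau_k)}{Q_{1,s}(\tau_k)}\, \pr\big(A_t = 1 \mid \mathcal{F}_{t-1}\big),
\]
which holds because, given $\mathcal{F}_{t-1}$, the arm-$1$ sample is independent of the other arms' samples and $Q_{1,s}(\tau_k)$ is $\mathcal{F}_{t-1}$-measurable during the epoch. Summing over $t$, the conditional probabilities of playing arm $1$ while it sits at count $s$ sum to at most $1$, which turns the factor $(1-Q_{1,s})/Q_{1,s}$ into $N_{1,s}(\tau_k)$; truncating at $T$ and summing over $s$ yields $a_k$. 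Assembling the two bounds into $\E[T_k(T)] \le a_k + b_k$ and multiplying by $\Delta_k$ completes the argument.
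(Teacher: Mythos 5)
Your overall skeleton is the right one---it is essentially the Agrawal--Goyal Thompson-sampling analysis that \citet{kveton2019garbage} adapt, which is what the paper cites for this lemma instead of proving it---and your $b_k$ argument is sound: at most one pull of arm $k$ per count $s$ covers the rounds with $Q_{k,s}(\tau_k)>T^{-1}$, and the remaining rounds contribute at most $T\cdot T^{-1}=1$ via the conditional probability of a fresh draw exceeding $\tau_k$. The genuine gap is in the $a_k$ part: you split rounds according to whether the \emph{optimal} arm's sample satisfies $\overline{Y}_{1,t}\le\tau_k$, and your key inequality
\[
\pr\big(A_t = k,\ \overline{Y}_{1,t} \le \tau_k \mid \mathcal{F}_{t-1}\big) \le \frac{1 - Q_{1,s}(\tau_k)}{Q_{1,s}(\tau_k)}\, \pr\big(A_t = 1 \mid \mathcal{F}_{t-1}\big)
\]
is false in general. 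Conditional independence and measurability are not what drive the Agrawal--Goyal transfer; what drives it is that the conditioning event forces \emph{every} arm's sample below the threshold, so that arm $1$ only needs to clear $\tau_k$ in order to win. Your event leaves $\overline{Y}_{k,t}$ unconstrained. Concretely, take $K=2$, $k=2$, and a history under which arm $k$'s conditional sample distribution is concentrated near some $v>\tau_k$ (e.g.\ many observed rewards near $1$) while arm $1$'s history gives $Q_{1,s}(\tau_k)=1/2$. Then the left side is approximately $\pr(\overline{Y}_{1,t}\le\tau_k)=1/2$, while $\pr(A_t=1\mid\mathcal{F}_{t-1})\approx Q_{1,s}(v)$, so the right side is approximately $Q_{1,s}(v)$, which is strictly below $1/2$ whenever arm $1$'s sample puts mass in $(\tau_k,v]$. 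The inequality fails precisely on the rounds where arm $k$ is badly over-estimated and arm $1$ is simultaneously under-estimated---rounds that your split places in the $a_k$ bucket, where the transfer argument cannot absorb them.

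The fix is to split on the \emph{played} arm's sample: $\1\{A_t=k\}=\1\{A_t=k,\ \overline{Y}_{k,t}\le\tau_k\}+\1\{A_t=k,\ \overline{Y}_{k,t}>\tau_k\}$. The second sum is still covered by your $b_k$ argument (your current second sum is a subset of this one, since $A_t=k$ and $\overline{Y}_{1,t}>\tau_k$ imply $\overline{Y}_{k,t}>\tau_k$). For the first sum, the event $\{A_t=k,\ \overline{Y}_{k,t}\le\tau_k\}$ forces $\overline{Y}_{1,t}\le\tau_k$ and $M_t:=\max_{j\ne 1}\overline{Y}_{j,t}\le\tau_k$; using conditional independence of $\overline{Y}_{1,t}$ and $M_t$ given $\mathcal{F}_{t-1}$,
\[
\pr\big(A_t=k,\ \overline{Y}_{k,t}\le\tau_k\mid\mathcal{F}_{t-1}\big)\le\big(1-Q_{1,s}(\tau_k)\big)\,\pr\big(M_t\le\tau_k\mid\mathcal{F}_{t-1}\big)\le\frac{1-Q_{1,s}(\tau_k)}{Q_{1,s}(\tau_k)}\,\pr\big(A_t=1\mid\mathcal{F}_{t-1}\big),
\]
where the last step uses $\{\overline{Y}_{1,t}>\tau_k,\ M_t\le\tau_k\}\subseteq\{A_t=1\}$. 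With this corrected transfer, your waiting-time summation over the epochs at count $s$, truncated at $T$, goes through as you describe and yields $a_k$.
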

Recall the summation index $s$ is the number of times we pull the $k$-th arm. 
 % in the total $T$ rounds
%\red{[(a and b) just numbers and they can not control anything. They represent something, and bounding them means something is controlled.]}
The definitions of $a_{k}$ and $b_{k}$ have important meanings: $a_{k}$ represents the expected number of rounds that optimal arm $1$ has been being underestimated, whereas $b_{k}$ is the probability that the suboptimal arm $k$ is being overestimated.
%$a_k$ represents that the  probability of pulling the optimal arm is not too small in its tails, while $b_k$ is the tail probabilities of rewards of all arms are not too large. 
%\red{[A general comment: I suggest adding more "," and prepositions such as "that" to provide a better flow. Omitting them is convenient for ourselves, but not the readers. 
Here we only need to consider the lower bound of the tail of the distribution of the rewards from the optimal arm. 
%\red{(Like commented last time, an arm does not have a tail, but its reward distribution has. As commented above, clarifying the logic will help our proof as well.)}
The intuition is: (i) we only need the rewards from the optimal arm taking a relatively large value with a probability that is not too small; (ii) we do not care about the tiny probability of a large reward from suboptimal arms. 

Therefore, our target is then to bound $a_k$ and $b_k$ for any $k \geq 2$. 
These are completed in Step 1 and Step 2 below, respectively. 

%The idea of this decomposition is here: consider the optimal arm 1 and a suboptimal arm $k$. Given a constant level $\tau_k \in \mathbb{R}$, at round $T$, we define the event of underestimated the optimal arm 1 as
%\[
%F_{s_1}=\left\{\overline{Y}_{1, s_1}^* \leq \tau_k\right\}
%\]
%and the event of overestimated a suboptimal arm $k$ as
%\[
%E_{s_k}^c=\left\{\overline{Y}_{k, s_k}^*>\tau_k\right\} .
%\]
%If we pick $\tau_k \leq \frac{\mu_1 + \lambda}{1 + 2\lambda}$ and the distribution of indexes both have exponential decaying tails, theory of large deviation indicates that the events of $F_{s_1}$ and $E_{s_k}^c$ both are rare events asymptotically. Given both events $F_{s_1}$ and $E_{s_k}^c$ happens, the agent will not pull the suboptimal arm ${k}$. The quantities $a_{k}$ and $b_k$ precisely characterize the size of $F_{s_1}$ and $E_{s_k}^c$.

\vspace{2ex}
%\red{(this is part of the intuition; should not put a "." here. Maybe something like "the intuition is that (i) xxx and (ii) xxx.)}
% And this Lemma is valid for any algorithm in MAB problem as long as the action step is $A_{t} = \arg \max_{k \in [K]} \overline{Y}_{k, t}$. 

%In the following steps, we will bound $a_k$ and $b_k$ separately up to order $\log T$. 
%The proof of all lemmas can be found in Appendix \ref{sec:proof_lemma}. 

\textbf{Step 1: Bounding $a_k$.} 

%\textit{Idea of bounding $a_k$:} 
We first provide a roadmap 
for proving $a_k$ is bounded by a term of $O(\log T)$ order:
For a given constant level $\tau_k$, the probability of the optimal arm 1 being underestimated given $s$ rewards is $1-$ $Q_{1, s}\left(\tau_k\right)$. If we pick the level to satisfy $\tau_k < \frac{\mu_1 + \lambda}{1 + 2 \lambda}$, the theory of large deviation gives
\[
    \lim_{s \rightarrow \infty} Q_{1, s} (\tau_k) = 1 .
\]
Hence the expected number of rounds to observe a not-underestimated case $N_{1, s}(\tau_k) =\frac{1}{Q_{1, s}(\tau_k)}-1$ has the property $\lim_{s \rightarrow \infty} N_s (\tau_k) = 1$ as the number of pulls $s$ grows to infinity. Thus, given the round $T$, there exists a constant $s_0(T)$ such that $N_s\left(\tau_k\right) \leq T$ for all $s$ over $s_0(T)$. Consequently, the quantity $a_k$ in regret bound will be bounded by
\[
    a_k \leq \sum_{s=0}^{s_0(T)} \E \left[ N_{1, s} (\tau_k)\wedge T\right] .
\]
The fact that the constant $s_0(T)$ is at the order of $\log T$ will be shown in Lemma \ref{lem_bounding_a_k_s_1}, Lemma \ref{lem_bounding_a_k_s_2}, and Lemma \ref{lem_bounding_a_k_s_3}. For small number of pulls $s \leq s_0(T)$, we show in Lemma \ref{lem_bounding_a_k_s_constant} that $\E \left[ N_{1, s} (\tau_k)\wedge T\right] \leq 2 + 4 e^{9 / 8} $ for any $s \in \mathbb{N}$. Thus, it is enough to conclude that $a_k$ can be bounded by a term of $O(\log T)$ order.

%In detail, we need to use decomposition to argue the above analysis in a non-asymptotic sense.
To formally bound $a_k$ in the non-asymptotic sense following the intuition above, we need to decompose $a_k$.
For the decomposition, a common approach is to use indicators on \textit{good} events. Denote the shifted (sample-) mean reward as 
\[
    \overline{R}_{k, s}^* = \frac{\sum_{i = 1}^s R_{k, i} + \lambda s}{s(1 + 2 \lambda)} = \frac{\lambda}{1 + 2 \lambda} + \frac{1}{1 + 2 \lambda} \overline{R}_{k, s}.
\]
where $\overline{R}_{k, s} = \frac{1}{s} \sum_{i = s} R_{k, i}$ is the mean reward of the $k$-th arm. Then we can define the following \textit{good} events for $l$-th arm as
\[
    A_{l, s} = \left\{ - C_1 \Delta_k < \overline{R}_{l, s}^* - \frac{\mu_l + \lambda}{1 + 2 \lambda}  < C_1 \Delta_k \right\},
\]
and
\[
    G_{l, s} = \left\{ - C_2 \Delta_k \leq \overline{Y}_{l, s} - \overline{R}_{l, s}^* \leq C_2 \Delta_k \right\}.
\]

The definitions of $A_{l, s}$ and $G_{l, s}$ are intuitive: $A_{l, s}$ represents the sample mean does not derivative from the true mean too far, and events on $G_{l, s}$ means $\overline{Y}_{l, s}$ is not too far away from the scaled-shifted sample mean $\overline{R}_{l, s}^*$. Here $C_1, C_2 \in (0, 1)$ are some constants.

%It is worthy to note that $k'$ in $G_{k', s}$ is different from the fixed index $k$ in $\Delta_k$. And it is also easy to see that under two good events $A_{k', s} \cap G_{k', s}$, our target function $\overline{Y}_{k', s}$ is close to the shifted mean $\frac{\mu_{k'} + \lambda}{1 + 2\lambda}$, which means $\overline{Y}_{k', s}$ does remain the order of expected mean of each arm. 
Therefore, by using these good events, we decompose $a_{k, s}$ as the following three parts:
\begin{equation}\label{def_a_k_s_1}
    a_{k, s, 1} = \E \Big[ \big( N_{1, s} (\tau_k) \wedge T \big) \ind (A_{1, s}^c)\Big],
\end{equation}
\begin{equation}\label{def_a_k_s_2}
    a_{k, s, 2} = \E \Big[ \big( N_{1, s} (\tau_k) \wedge T \big) \ind (A_{1, s}) \ind (G_{1, s}^c)\Big],
\end{equation}
and
\begin{equation}\label{def_a_k_s_3}
    a_{k, s, 3} = \E \Big[ \big( N_{1, s} (\tau_k) \wedge T \big) \ind (A_{1, s}) \ind (G_{1, s})\Big].
\end{equation}
Let $C_1 = \frac{1}{6 \lambda}$ and $C_2 = \frac{1}{12 \lambda}$ with fixed $\lambda > 1$, then $C_1, C_2 \in (0, 1)$. Denote
\[
    \mathfrak{a}_1 = \frac{192 \sigma_{\omega}^2 \lambda^4}{3 (1 + 2\lambda)^2 \Delta_k^2} , \qquad \mathfrak{b}_1 = \frac{2 \sigma_{\omega}^2 \left[ 96 \lambda^4 \sigma_1^2 + (1 + 2 \lambda)^2 C_2^2 \Delta_k^2 \right]}{3(1 + 2 \lambda)^2 \Delta_k^2},
\]
\[
    \mathfrak{a}_2 = \frac{36 \sigma_{\omega}^2 \lambda^4}{3 (\lambda - 1)^2 \Delta_k^2}, \qquad \mathfrak{b}_2 = \frac{\sigma_{\omega}^2\left[ 72 \lambda^4 \sigma_1^2 + 25 (1 + 2\lambda)^2 \Delta_k^2 \right]}{6 (\lambda - 1)^2  \Delta_k^2}, \qquad \Lambda_k = 8 \sqrt{2} \sigma_k^2.
\]
Consider the case $T \geq 2$. Define 
\[
    s_{a, j}(T) = \max \{ s: a_{k, s, j} \geq T^{-1}\}, \qquad j = 1, 2, 3.
\]
Lemma \ref{lem_bounding_a_k_s_1} in Appendix \ref{sec_lem_components} proves that by taking
\[
    s \geq s_{a, 1} (T) = \frac{144 \lambda^2 \sigma^2_1}{(1 + 2\lambda)^2 \Delta_k^2} \log T,
\]
we will have $a_{k, s, 1} \leq T^{-1}$.
%there exists $s_{a, 1} \asymp \log T$ such that $a_{k, s, 1} \leq T^{-1}$ for any $s \geq s_{a, 1} (T)$; 
Similarly, Lemma \ref{lem_bounding_a_k_s_2} and Lemma \ref{lem_bounding_a_k_s_3} in Appendix \ref{sec_lem_components} say that: if we take
\[
    \begin{aligned}
       s \geq s_{a, 2}(T) = \Bigg[ \big[ \Lambda_1 + \mathfrak{a}_1 + \mathfrak{b}_1 + \Lambda_1 \mathfrak{a}_1 \big] & \left(\frac{1}{3}\log^{-1}2 \times \log \left\{ 3 \left[  1 + \frac{\sqrt{\pi \mathfrak{b}_1}}{2} + \frac{\sqrt{2 \pi} \Lambda_1 \mathfrak{a}_1}{2 \mathfrak{b}_1^2} + \frac{2 \mathfrak{a}_1}{\mathfrak{b}_1} \right]  \right\}  + 1\right)\\
        &  \vee \frac{18 \sigma_{\omega}^2 (2 \mu_1 - 1)^2}{(\lambda^2 + \lambda + 1 / 4)^2} \vee \frac{2 \sigma_{\omega}^2 (1 + 2 \lambda^2)}{(1 + 2 \lambda)^2} \Bigg] \times 3 \log T
    \end{aligned}
\]
and
\[
    \begin{aligned}
        s \geq s_{a, 3} (T) =  \Bigg[ \big[ \Lambda_1 + \mathfrak{a}_2 + \mathfrak{b}_2 + \Lambda_1 \mathfrak{a}_2 \big] & \left(\frac{1}{3}\log^{-1}2 \times \log \left\{ 3 \left[  1 +\frac{\sqrt{\pi \mathfrak{b}_1}}{2}  + \frac{\sqrt{2 \pi} \Lambda_1 \mathfrak{a}_2}{2 \mathfrak{b}_2^2} + \frac{2 \mathfrak{a}_2}{\mathfrak{b}_2} \right]  \right\}  + 1\right) \\
        & \vee \frac{18 \sigma_{\omega}^2 (2 \mu_k - 1)^2}{(\lambda^2 + \lambda + 1 / 4)^2} \vee \frac{2 \sigma_{\omega}^2 (1 + 2 \lambda^2)}{(1 + 2 \lambda)^2} \Bigg] \times 3 \log T,
    \end{aligned}
\]
%we can find $s_{a, 2}(T)$ and $s_{a, 3}(T)$ with order $\log T$ such that
then $a_{k, s, 2} \leq T^{-1}$ and $a_{k, s, 3} \leq T^{-1}$, respectively.
%, there exists $\{ s_{a, j}\}_{j = 1}^3$ with order $\log T$ such that $a_{k, s, j } \leq T^{-1}$ for $s \geq s_{a, j}$ for $j = 1, 2, 3$ respectively.
%one has that $ a_{k, s} \leq 3 T^{-1}$ for any $s > \max \left\{s_{a, 1}, s_{a, 2}, s_{a, 3}\right\}$. 
%\todohw{Since there are three lemmas, we did not display them as lemmas. Instead, I stated the lemmas in sentences in details to avoid breach of integrity.}
Denote $\Lambda_{\max} = \max_{k \in [K]} \Lambda_k$ and $\Lambda_{\min} = \min_{k \in [K]} \Lambda_k$, then for any 
\[
    \begin{aligned}
        s & \geq s_{a} (T) = \frac{144 \lambda^2 \sigma_1^2}{(1 + 2 \lambda)^2 \Delta_k^2} \log T +  \Bigg[ \big[\Lambda_{\max} + ( \mathfrak{a}_1 + \mathfrak{a}_2) + (\mathfrak{b}_1 + \mathfrak{b}_2) + \Lambda_{\max}  (\mathfrak{a}_1 + \mathfrak{a}_2) \big]  \\
        & \qquad \qquad \qquad \qquad \left(\frac{1}{3}\log^{-1}2 \times \log \left\{ 3 \left[  1 + \frac{\sqrt{\pi}}{2} \left( {\sqrt{\mathfrak{b}_1}} + \sqrt{\mathfrak{b}_2} \right) + \frac{\sqrt{2 \pi} \Lambda_{\max}  }{2}  \left( \frac{\mathfrak{a}_1}{\mathfrak{b}_1^2} + \frac{\mathfrak{a}_2}{\mathfrak{b}_2^2} \right) + 2\left( \frac{\mathfrak{a}_1}{\mathfrak{b}_1} + \frac{\mathfrak{a}_2}{\mathfrak{b}_2} \right) \right]  \right\}  + 1\right) \\
        &  \qquad \qquad \qquad \qquad  \vee \frac{18 \sigma_{\omega}^2 (2 \mu_1 - 1)^2 \vee (2 \mu_k - 1)^2}{(\lambda^2 + \lambda + 1 / 4)^2} \vee \frac{2 \sigma_{\omega}^2 (1 + 2 \lambda^2)}{(1 + 2 \lambda)^2} \Bigg] \times 3 \log T,
    \end{aligned}
\]
we have $s \geq \max_{j = 1, 2, 3} s_{a, j} (T)$ due to that
\[
    s_a(T) = s_{a, 1} (T) + \max_{j =2 , 3} s_{a, j} (T)  \geq \max_{j = 1, 2, 3} s_{a, j} (T).
\]
Hence, for any $s \geq s_{a} (T)$, we have 
\[
    a_{k, s} = a_{k, s, 1} + a_{k, s, 2} + a_{k, s, 3} \leq 3 T^{-1}.
\]
Finally, Lemma \ref{lem_bounding_a_k_s_constant} in Appendix \ref{sec_lem_components} ensures if we take $\lambda \geq \frac{4 \sigma_1}{\sigma_{\omega}}  + \sqrt{\frac{4\sigma_1}{\sigma_{\omega}} \left( \frac{4 \sigma_1}{\sigma_{\omega}} + 1\right)}$, the component $a_{k, s} \leq 2 + 4 e^{9 / 8}$ for any $s \geq 0$. Therefore, by setting $\lambda \geq \left( \frac{4 \sigma_1}{\sigma_{\omega}} + 1 \right) + \sqrt{\frac{4\sigma_1}{\sigma_{\omega}} \left( \frac{4 \sigma_1}{\sigma_{\omega}} + 1\right)}$, we have 
% Now, set $\lambda \geq \frac{4 \sigma_1}{\sigma_{\omega}} + \sqrt{\frac{4\sigma_1}{\sigma_{\omega}} \left( \frac{4 \sigma_1}{\sigma_{\omega}} + 1\right)}$, 
\[
    \begin{aligned}
    	a_{k} & = \sum_{s = 0}^{T - 1} a_{k, s}\\
            & \leq \sum_{s < s_{a} (T)} \max_{s \in \{ 0,1, \ldots, T - 1\}} a_{k, s} + \sum_{s_{a} (T) \leq s < T } 3 T^{-1} \\
            & = \max_{s \in \{ 0,1, \ldots, T - 1\}} a_{k, s} \times s_{a} (T) + 3 T^{-1} \times \big(T -  s_{a} (T) \big) \\
    	% & \asymp 1 \times \log T + 1 = O  (\log T), 
            & \leq 2 (1 + 2 e^{9 / 8}) \Bigg\{ \frac{144 \lambda^2 \sigma_1^2}{(1 + 2 \lambda)^2 \Delta_k^2} \log T +   \bigg[ \big[ \Lambda_{\max} + ( \mathfrak{a}_1 + \mathfrak{a}_2) + (\mathfrak{b}_1 + \mathfrak{b}_2) + \Lambda_{\max}  (\mathfrak{a}_1 + \mathfrak{a}_2) \big]  \\
        & \qquad \qquad \qquad \qquad \left(\frac{1}{3}\log^{-1}2 \times \log \left\{ 3 \left[  1 + \frac{\sqrt{\pi}}{2} \left( {\sqrt{\mathfrak{b}_1}} + \sqrt{\mathfrak{b}_2} \right) + \frac{\sqrt{2 \pi} \Lambda_{\max} }{2}  \left( \frac{\mathfrak{a}_1}{\mathfrak{b}_1^2} + \frac{\mathfrak{a}_2}{\mathfrak{b}_2^2} \right) + 2\left( \frac{\mathfrak{a}_1}{\mathfrak{b}_1} + \frac{\mathfrak{a}_2}{\mathfrak{b}_2} \right) \right]  \right\}  + 1\right) \\
        &  \qquad \qquad \qquad \qquad  \vee \frac{18 \sigma_{\omega}^2 (2 \mu_1 - 1)^2 \vee (2 \mu_k - 1)^2 }{(\lambda^2 + \lambda + 1 / 4)^2} \vee \frac{2 \sigma_{\omega}^2 (1 + 2 \lambda^2)}{(1 + 2 \lambda)^2} \bigg]\Bigg\} \log T + 3
    \end{aligned}
\]
for any $T \geq 2$.
%which gives 
%$a_{k} = \sum_{s = 0}^{T - 1} a_{k, s} \lesssim \log T$.
%if we take $\lambda \geq \frac{4 \sigma_1}{\sigma_{\omega}} + \sqrt{\frac{4\sigma_1}{\sigma_{\omega}} \left( \frac{4 \sigma_1}{\sigma_{\omega}} + 1\right)}$.
%\[
    %\begin{aligned}
        %a_{k} & = \sum_{s = 0}^{T - 1} a_{k, s}  \\
        %& \leq 
    %\end{aligned}
%\]
%\vspace{2ex}

\textbf{Step 2: Bounding $b_k$.} 

%\textit{Idea of bounding $b_k$:} 
Again, we will provide a roadmap 
for proving $a_k$ is bounded by a term of $O(\log T)$ order. Similar to Step 1, we set a fixed level $\tau_k$ such that $\tau_k > \frac{\mu_k + \lambda}{1 + 2\lambda}$, then the theory of large deviation gives
\[
    \lim_{s \rightarrow \infty} Q_{k, s} (\tau_k) = 0.
\]
Thus, given the time horizon $T$, there exists a constant $s_0(T)$ such that $Q_{k, s}(\tau_k) \leq T^{-1}$ for all $s$ over $s_0(T)$. As a result, the event $\left\{Q_{k, s} (\tau_k ) > T^{-1}\right\}$ is empty if the number of pulls $s$ is beyond $s_0(T)$. Consequently, 
\[
    b_k \leq \sum_{s=0}^{s_0(T)} \pr \left(Q_{k, s}\left(\tau_k\right)>T^{-1}\right).
\]
The fact that the constant $s_0(T)$ is of $O(\log T)$ order will be shown in Lemma \ref{lem_bounding_b_k_s_1}, Lemma \ref{lem_bounding_b_k_s_2}, and Lemma \ref{lem_bounding_b_k_s_3}. For small number of pull $s \leq s_0(T)$, we apply a trivial bound $\pr \left(Q_{k, s}\left(\tau_k\right)>T^{-1}\right) \leq 1$ that holds for any $s$. 
Therefore, it is sufficient to conclude that $b_k$ can be bounded by a term of $O(\log T)$ order.

Note that $b_{k, s}$ is naturally bounded by {the} constant $1$. Similar to Step 2, decompose $b_{k, s} = \pr \big( Q_{k, s} (\tau_k) > T^{-1}\big)$ as $b_k = b_{k, s, 1} + b_{k, s, 2} + b_{k, s, 3}$ with
\begin{equation}\label{def_b_k_s_1}
    b_{k, s, 1} = \E \big[ \ind ( Q_{k, s} (\tau_k) > T^{-1}) \ind (A_{k, s}^{c} )\big],
\end{equation}
\begin{equation}\label{def_b_k_s_2}
    b_{k, s, 2} = \E \big[ \ind ( Q_{k, s} (\tau_k) > T^{-1}) \ind (A_{k, s}) \ind (G_{k, s}^c)\big],
\end{equation}
and
\begin{equation}\label{def_b_k_s_3}
    b_{k, s, 3} = \E \big[ \ind ( Q_{k, s} (\tau_k) > T^{-1}) \ind (A_{k, s}) \ind (G_{k, s})\big],
\end{equation}
Again, define $s_{b, j}: = \max \{ s: b_{k, s, j} \geq T^{-1}\}$ for $j = 1, 2, 3$, then Lemma \ref{lem_bounding_b_k_s_1} in Appendix \ref{sec_lem_components} guarantees that take $s \geq s_{b, 1} (T)$ with
\[
    s_{b, 1} (T) = \frac{72 \lambda^2 \sigma^2_k}{(1 + 2\lambda)^2 \Delta_k^2} \log T,
\]
then $b_{k, s, 1} \leq T^{-1}$. Consider $T \geq 2$ and let $C_1 = \frac{1}{6 \lambda}$ and $C_2 = \frac{1}{12 \lambda}$ with fixed $\lambda > 1$ as in Step 1, 
%there exists $s_{a, 1} \asymp \log T$ such that $a_{k, s, 1} \leq T^{-1}$ for any $s \geq s_{a, 1} (T)$; 
Lemma \ref{lem_bounding_b_k_s_2} and Lemma \ref{lem_bounding_b_k_s_3} in Appendix \ref{sec_lem_components} proves that:  if we take
\[
    \begin{aligned}
        s \geq s_{b, 2}(T) = \Bigg[ \big[ \Lambda_k + \mathfrak{a}_1 + \mathfrak{b}_1 + \Lambda_k \mathfrak{a}_1 \big] & \left(\frac{1}{3}\log^{-1}2 \times \log \left\{ 3 \left[  1 + \frac{\mathfrak{b}_1}{4 \Lambda_k \mathfrak{a}_1} + \frac{\sqrt{2 \pi} \Lambda_k \mathfrak{a}_1}{2 \mathfrak{b}_1^2} + \frac{2 \mathfrak{a}_1}{\mathfrak{b}_1} \right]  \right\}  + 1\right)\\
        &  \vee \frac{18 \sigma_{\omega}^2 (2 \mu_1 - 1)^2}{(\lambda^2 + \lambda + 1 / 4)^2} \vee \frac{2 \sigma_{\omega}^2 (1 + 2 \lambda^2)}{(1 + 2 \lambda)^2} \Bigg] \times 3 \log T
    \end{aligned}
\]
and
\[
    \begin{aligned}
        s \geq s_{b, 3} (T) =  \Bigg[ \big[ \Lambda_k + \mathfrak{a}_2 + \mathfrak{b}_2 + \Lambda_k \mathfrak{a}_2 \big] & \left(\frac{1}{3}\log^{-1}2 \times \log \left\{ 3 \left[  1 + \frac{\mathfrak{b}_2}{4 \Lambda_k \mathfrak{a}_2} + \frac{\sqrt{2 \pi} \Lambda_k \mathfrak{a}_2}{2 \mathfrak{b}_2^2} + \frac{2 \mathfrak{a}_2}{\mathfrak{b}_2} \right]  \right\}  + 1\right) \\
        & \vee \frac{18 \sigma_{\omega}^2 (2 \mu_k - 1)^2}{(\lambda^2 + \lambda + 1 / 4)^2} \vee \frac{2 \sigma_{\omega}^2 (1 + 2 \lambda^2)}{(1 + 2 \lambda)^2} \Bigg] \times 3 \log T,
    \end{aligned}
\]
%we can find $s_{a, 2}(T)$ and $s_{a, 3}(T)$ with order $\log T$ such that
then we have $b_{k, s, 2} \leq T^{-1}$ and $b_{k, s, 3} \leq T^{-1}$, respectively. Let 
\[
    \begin{aligned}
        s_{b}(T) & := \frac{72 \lambda^2 \sigma_k^2}{(1 + 2 \lambda)^2 \Delta_k^2} \log T  +  \Bigg[ \big[ \Lambda_{\max} + ( \mathfrak{a}_1 + \mathfrak{a}_2) + (\mathfrak{b}_1 + \mathfrak{b}_2) + \Lambda_{\max} (\mathfrak{a}_1 + \mathfrak{a}_2) \big]  \\
        & \qquad \qquad  \left(\frac{1}{3}\log^{-1}2 \times \log \left\{ 3 \left[  1 + \frac{\sqrt{\pi}}{2} \left( {\sqrt{\mathfrak{b}_1}} + \sqrt{\mathfrak{b}_2} \right) + \frac{\sqrt{2 \pi} \Lambda_{\max} }{2}  \left( \frac{\mathfrak{a}_1}{\mathfrak{b}_1^2} + \frac{\mathfrak{a}_2}{\mathfrak{b}_2^2} \right) + 2\left( \frac{\mathfrak{a}_1}{\mathfrak{b}_1} + \frac{\mathfrak{a}_2}{\mathfrak{b}_2} \right) \right]  \right\}  + 1\right) \\
        &  \qquad \qquad \qquad \qquad  \vee \frac{18 \sigma_{\omega}^2 (2 \mu_1 - 1)^2 \vee (2 \mu_k - 1)^2}{(\lambda^2 + \lambda + 1 / 4)^2} \vee \frac{2 \sigma_{\omega}^2 (1 + 2 \lambda^2)}{(1 + 2 \lambda)^2} \Bigg] \times 3 \log T,
    \end{aligned}
\]
then for any $s \geq s_b(T)$, we have $s \geq \max_{j = 1, 2, 3} s_{b, j} (T)$ due to $s_b(T) = s_{b, 1} (T) + \max_{j = 2, 3} s_{b, j} (T)$. Thus, $b_{k, s, 1} + b_{k, s, 2} + b_{k, s, 3} \leq 3 T^{-1}$ for any $s \geq s_b(T)$.
%Again, Lemma \ref{lem_bounding_b_k_s_1} in Appendix \ref{sec_lem_components} ensures that there exists $s_{b, 1}$ with order $\log T$ such that $b_{k, s, 1} \leq T^{-1}$ for any $s \geq s_{b, 1} (T)$, and Lemma \ref{lem_bounding_b_k_s_2} and Lemma \ref{lem_bounding_b_k_s_3} in Appendix \ref{sec_lem_components} proves that we can find $s_{b, 2}(T) \asymp \log T$ and $s_{b, 3}(T) \asymp \log T$ such that $b_{k, s, 2} \leq T^{-1}$ for any $s \geq s_{b, 2} (T)$ and $a_{k, s, 3} \leq T^{-1}$ for any $s \geq s_{b, 3} (T)$ respectively.
Note that $b_{k, s} = \pr \big( Q_{k, s} (\tau_k) > T^{-1}\big) \leq 1$ for any $s \geq 0$, then 
%Lemmas \ref{lem_bounding_b_k_s_1}, \ref{lem_bounding_b_k_s_2}, and \ref{lem_bounding_b_k_s_3} indicates there exists $\{ s_{b, j}\}_{j = 1}^3$ with order $\log T$ such that $b_{k, s, j } \leq T^{-1}$ for $s \geq s_{b, j}$ for $j = 1, 2, 3$ respectively. Then
%one has that, for any $s >\max \left\{s_{b, 1}(T), s_{b, 2}(T), s_{b, 3}(T)\right\}, b_{k, s} \leq  T^{-1}$. Then
\[
    \begin{aligned}
        b_{k} & = 1 + \sum_{s = 0}^{T - 1} b_{k, s, 1}\\
        & \leq 1 + 1 \times s_{b}(T) + 3 T^{-1} \times \big(T - s_{b}(T) \big) \\
        & \leq \Bigg\{ \frac{72 \lambda^2 \sigma_k^2}{(1 + 2 \lambda)^2 \Delta_k^2} + \bigg[ \big[ \Lambda_{\max} + ( \mathfrak{a}_1 + \mathfrak{a}_2) + (\mathfrak{b}_1 + \mathfrak{b}_2) + \Lambda_{\max} (\mathfrak{a}_1 + \mathfrak{a}_2) \big]  \\
        & \qquad \qquad  \left(\frac{1}{3}\log^{-1}2 \times \log \left\{ 3 \left[  1 + \frac{\sqrt{\pi}}{2} \left( {\sqrt{\mathfrak{b}_1}} + \sqrt{\mathfrak{b}_2} \right) + \frac{\sqrt{2 \pi} \Lambda_{\max} }{2}  \left( \frac{\mathfrak{a}_1}{\mathfrak{b}_1^2} + \frac{\mathfrak{a}_2}{\mathfrak{b}_2^2} \right) + 2\left( \frac{\mathfrak{a}_1}{\mathfrak{b}_1} + \frac{\mathfrak{a}_2}{\mathfrak{b}_2} \right) \right]  \right\}  + 1\right) \\
        &  \qquad \qquad \qquad \qquad  \vee \frac{18 \sigma_{\omega}^2 (2 \mu_1 - 1)^2 \vee (2 \mu_k - 1)^2}{(\lambda^2 + \lambda + 1 / 4)^2} \vee \frac{2 \sigma_{\omega}^2 (1 + 2 \lambda^2)}{(1 + 2 \lambda)^2} \bigg] \Bigg\} \log T + 4
    \end{aligned}
\]
for any $T \geq 2$.
%which implies $b_k \lesssim \log T$.
%\vspace{2ex}

% \textbf{Step 3: Bounding $T$-round regret $\operatorname{Reg}_T$.}
\textbf{Step 3: Aggregating results.}

Denote 
\[
    d_1 = \Delta_k^2 \big\{ ( \mathfrak{a}_1 + \mathfrak{a}_2) + (\mathfrak{b}_1 + \mathfrak{b}_2) + \Lambda_{\max}  (\mathfrak{a}_1 + \mathfrak{a}_2) \big\}
\]
and
\[
    d_2 = 3 \left[  1 + \frac{\sqrt{\pi}}{2} \left( {\sqrt{\mathfrak{b}_1}} + \sqrt{\mathfrak{b}_2} \right) + \frac{\sqrt{2 \pi} \Lambda_{\max} }{2}  \left( \frac{\mathfrak{a}_1}{\mathfrak{b}_1^2} + \frac{\mathfrak{a}_2}{\mathfrak{b}_2^2} \right) + 2\left( \frac{\mathfrak{a}_1}{\mathfrak{b}_1} + \frac{\mathfrak{a}_2}{\mathfrak{b}_2} \right) \right],
\]
then the bounds in Step 1 and Step 2 imply
\[
    \begin{aligned}
        a_k  \leq 2 (1 + 2 e^{9 / 8}) & \Bigg\{ \frac{144 \lambda \sigma_1^2}{(1 + 2 \lambda)^2 \Delta_k^2} \\
        & + \bigg[ ( \Lambda_{\max} + d_1) \bigg( \frac{1}{3}\frac{\log  d_2}{\log 2} + 1 \bigg) \vee \frac{18 \sigma_{\omega}^2 (2 \mu_1 - 1)^2 \vee (2 \mu_k - 1)^2}{(\lambda^2 + \lambda + 1 / 4)^2} \vee \frac{2 \sigma_{\omega}^2 (1 + 2 \lambda^2)}{(1 + 2 \lambda)^2} \Bigg]  \Bigg\} \log T + 3.
    \end{aligned}
\]
and
\[
    \begin{aligned}
        b_k & \leq \Bigg\{ \frac{72 \lambda^2 \sigma_k^2}{(1 + 2 \lambda)^2 \Delta_k^2} \\
        & \qquad \qquad \qquad + \bigg[ ( \Lambda_{\max} + d_1) \bigg( \frac{1}{3}\frac{\log d_2}{\log 2} + 1 \bigg) \vee \frac{18 \sigma_{\omega}^2 (2 \mu_1 - 1)^2 \vee (2 \mu_k - 1)^2}{(\lambda^2 + \lambda + 1 / 4)^2} \vee \frac{2 \sigma_{\omega}^2 (1 + 2 \lambda^2)}{(1 + 2 \lambda)^2} \Bigg]  \Bigg\} \log T + 4
    \end{aligned}
\]
for any $T \geq 2$. Therefore, by applying Lemma \ref{lem.a.b}, we have 
\begin{equation}\label{reg_T_med}
    \begin{aligned}
        \operatorname{Reg}_T & \leq \sum_{k = 2}^K \Delta_k \big( a_k + b_k \big) \\
        &\leq \sum_{k = 2}^K \Delta_k \bigg[ 7 + \left\{ c_1(\mu_1, \sigma_1, \mu_k, \sigma_k, \lambda, \sigma_{\omega}) + c_2(\mu_1, \sigma_1, \mu_k, \sigma_k, \lambda, \sigma_{\omega})\Delta_k^{-2} \right\} \log T \bigg]
    \end{aligned}
\end{equation}
for any $T \geq 2$, where
\[
    c_1(\mu_1, \sigma_1, \mu_k, \sigma_k, \lambda) = (3 + 2 e^{9 / 8}) \bigg[ \bigg( \Lambda_{\max} \left( \frac{\log d_2}{3 \log 2} + 1\right) \bigg) \vee \frac{18 \sigma_{\omega}^2 \{ (2 \mu_1 - 1)^2 \vee (2 \mu_k - 1)^2 \}}{(\lambda^2 + \lambda + 1 / 4)^2} \vee \frac{2 \sigma_{\omega}^2 (1 + 2 \lambda^2)}{(1 + 2 \lambda)^2}  \bigg],
\]
and
\[
    \begin{aligned}
        c_2(\mu_1, \sigma_1, \mu_k, \sigma_k, \lambda) & = \frac{72 \lambda^2 \sigma_k^2}{(1 + 2 \lambda)^2 } ( 5 + 4 e^{9 / 8}) \\
        & \qquad + (3 + 2 e^{9 / 8}) \bigg[ \bigg( d_1 \left( \frac{\log d_2}{3 \log 2} + 1\right) \bigg) \vee \frac{18 \sigma_{\omega}^2 \{ (2 \mu_1 - 1)^2 \vee (2 \mu_k - 1)^2 \} }{(\lambda^2 + \lambda + 1 / 4)^2} \vee \frac{2 \sigma_{\omega}^2 (1 + 2 \lambda^2)}{(1 + 2 \lambda)^2}  \bigg],
    \end{aligned}
\]
for $k = 2, \ldots, K$. When the total round $T = 1$, the bound \eqref{reg_T_med} still holds because $\operatorname{Reg}_T \leq \max_{k = 2, \ldots K} \Delta_k \leq 7 \sum_{k = 2}^K \Delta_k$. 
\vspace{2ex}

\textbf{Step 4: Simplifying bounds.}
%when $\sigma_k \equiv 1$.}

Note that
\[
    \begin{aligned}
        d_1 = & \Delta_k^2 \big\{ ( \mathfrak{a}_1 + \mathfrak{a}_2) + (\mathfrak{b}_1 + \mathfrak{b}_2) + \Lambda_{\max}  (\mathfrak{a}_1 + \mathfrak{a}_2) \big\} \\
        \alignedoverset{\text{by }\Delta_k \leq 1}{\leq} \frac{1 + \Lambda_{\max}}{3} \sigma_{\omega}^2 \left[ \frac{192 \lambda^4}{(1 + 2 \lambda)^2} + \frac{36 \lambda^4}{(\lambda - 1)^2}\right]  \\
        & \qquad \qquad \qquad + \sigma_{\omega}^2 \left[ \frac{2 \left[96 \lambda^4 \sigma_1^2 + (1 + 2\lambda)^2 / (36 \lambda^2) \right]}{3 (1 + 2\lambda)^2} + \frac{72 \lambda^4 \sigma_1^2 + 25 (1 + 2\lambda)^2}{6 (\lambda - 1)^2}\right].
    \end{aligned}
\]
By $\lambda \geq \left( \frac{4 \sigma_1}{\sigma_{\omega}} + 1 \right) + \sqrt{\frac{4\sigma_1}{\sigma_{\omega}} \left( \frac{4 \sigma_1}{\sigma_{\omega}} + 1\right)}$, we have $(\lambda - 1)^2 \geq 16 \sigma_1^2 / \sigma_{\omega}^2$, and hence
\[
    \frac{192 \lambda^4}{(1 + 2 \lambda)^2} + \frac{36 \lambda^4}{(\lambda - 1)^2} \leq \frac{192 \lambda^4}{4 \lambda^2} + \frac{36 \lambda^4}{16 \sigma_1^2 / \sigma_{\omega}^2 } = 48 \lambda^2 + 3 \frac{\sigma_{\omega}^2}{\sigma_1^2}  \lambda^4.
\]
Similarly,
\[
     \begin{aligned}
         & \frac{2 \left[96 \lambda^4 \sigma_1^2 + (1 + 2\lambda)^2 / (36 \lambda^2) \right]}{3 (1 + 2\lambda)^2} + \frac{72 \lambda^4 \sigma_1^2 + 25 (1 + 2\lambda)^2}{6 (\lambda - 1)^2} \\
         \leq & \frac{2 \times 96 \lambda^4 \sigma_1^2}{3 \times 4 \lambda^2} + \frac{2 }{3 \times 36 \lambda^2} + \frac{72 \lambda^4 \sigma_1^2 + 25 (1 + 2\lambda)^2}{6 \times 16 \sigma_1^2 / \sigma_{\omega}^2 } \\
         \leq & 16 \lambda^2 \sigma_1^2 + \frac{1}{48} + \lambda^4 \sigma_{\omega}^2 + \frac{25 \times 9 \lambda^2}{6 \times 16 \sigma_1^2 / \sigma_{\omega}^2} \leq 16 \lambda^2 \sigma_1^2 + \frac{1}{48} + 3 \lambda^4 \sigma_{\omega}^2 + 3 \lambda^2 \sigma_{\omega}^2 / \sigma_1^2.
     \end{aligned}
\]
Thus, 
\begin{equation}\label{final_bound_1}
    \begin{aligned}
        d_1 = & \Delta_k^2 \big\{ ( \mathfrak{a}_1 + \mathfrak{a}_2) + (\mathfrak{b}_1 + \mathfrak{b}_2) + \Lambda_{\max}  (\mathfrak{a}_1 + \mathfrak{a}_2) \big\} \\
        \leq & \frac{(1 + \Lambda_{\max}) \sigma_{\omega}^2}{3} \left[ 48 \lambda^2 + 3 \frac{\sigma_{\omega}^2}{\sigma_1^2}  \lambda^4 \right] + \sigma_{\omega}^2 \left[ 16 \lambda^2 \sigma_1^2 + \frac{1}{48} + 3 \lambda^4 \sigma_{\omega}^2 + 3 \frac{ \sigma_{\omega}^2}{\sigma_1^2} \lambda^2 \right] \\
        = & \sigma_{\omega}^2 \bigg\{ \bigg[ 16 (1 + \Lambda_{\max}) + 16 \sigma_1^4 + 3 \frac{\sigma_{\omega}^2}{\sigma_1^2}  \bigg] \lambda^2 + \bigg[ (1 + \Lambda_{\max}) \frac{\sigma_{\omega}^2}{\sigma_1^2}  + 3 \sigma_{\omega}^2\bigg] \lambda^4 +\frac{1}{48} \bigg\}  \\
        \leq & \sigma_{\omega}^2 \bigg[ 16 (1 + \Lambda_{\max}) + 16 \sigma_1^4 + 3 \frac{\sigma_{\omega}^2}{\sigma_1^2}  + (1 + \Lambda_{\max}) \frac{\sigma_{\omega}^2}{\sigma_1^2}  + 3 \sigma_{\omega}^2 + 1 \bigg] \lambda^4 \\
        \leq &  \bigg[ (1 + \Lambda_{\max}) \bigg( 16 + \frac{\sigma_{\omega}^2}{\sigma_1^2} \bigg) + 16 \sigma_1^4 + 3 \frac{\sigma_{\omega}^2}{\sigma_1^2} + 3 \sigma_{\omega}^2 + 1 \bigg] \sigma_{\omega}^2 \lambda^4.
    \end{aligned}
\end{equation}
Define
\[
    D_1(\sigma_1, \sigma_{k}, \lambda, \sigma_{\omega}) :=  \bigg[ (1 + \Lambda_{\max}) \bigg( 16 + \frac{\sigma_{\omega}^2}{\sigma_1^2} \bigg) + 16 \sigma_1^4 + 3 \frac{\sigma_{\omega}^2}{\sigma_1^2} + 3 \sigma_{\omega}^2 + 1 \bigg] \sigma_{\omega}^2 \lambda^4,
\]
then $d_1 \leq D_1(\sigma_1, \sigma_{k}, \lambda, \sigma_{\omega})$.
%Denote
%\[
    %\begin{aligned}
        %d_1 & = \frac{1 + \Lambda_{\max}}{3} \sigma_{\omega}^2 \left[ \frac{768 \lambda^4}{(1 + 2 \lambda)^2} + \frac{144 \lambda^4}{(\lambda - 1)^2}\right] + \sigma_{\omega}^2 \left[ \frac{2 \left[384 \lambda^4 \sigma_1^2 + (1 + 2\lambda)^2 / (144 \lambda^2) \right]}{3 (1 + 2\lambda)^2} + \frac{288 \lambda^4 \sigma_1^2 + 25 (1 + 2\lambda)^2}{6 (\lambda - 1)^2}\right] \\
        %& \geq \Delta_k^2 \big\{ ( \mathfrak{a}_1 + \mathfrak{a}_2) + (\mathfrak{b}_1 + \mathfrak{b}_2) + \Lambda_{\max}  (\mathfrak{a}_1 + \mathfrak{a}_2) \big\},
    %\end{aligned}
%\]
On the other hand, for $d_2$, we have 
\[
    \begin{aligned}
        d_2 = & 3 \left[  1 + \frac{\sqrt{\pi}}{2} \left( {\sqrt{\mathfrak{b}_1}} + \sqrt{\mathfrak{b}_2} \right) + \frac{\sqrt{2 \pi} \Lambda_{\max} }{2}  \left( \frac{\mathfrak{a}_1}{\mathfrak{b}_1^2} + \frac{\mathfrak{a}_2}{\mathfrak{b}_2^2} \right) + 2\left( \frac{\mathfrak{a}_1}{\mathfrak{b}_1} + \frac{\mathfrak{a}_2}{\mathfrak{b}_2} \right) \right] \\
        \leq & 3 \Bigg[ 1 +  \frac{\sqrt{\pi}}{2} \left( \sqrt{\frac{2 \sigma_{\omega}^2 (1 + 2 \lambda)^2  \Delta_k^2 / (144 \lambda^2) }{3 (1 + 2 \lambda)^2 \Delta_k^2}} + \sqrt{\frac{25 \sigma_{\omega}^2 (1 + 2\lambda)^2 \Delta_k^2}{6 (\lambda - 1)^2 \Delta_k^2}} \right)\\
        & \qquad \quad  +  \frac{\sqrt{2 \pi} \Lambda_{\max} }{2} \left( \frac{(1 + 2\lambda)^2}{64 \lambda^4 \sigma_1^4} + \frac{(\lambda - 1)^2}{12 \lambda^4 \sigma_1^2 \sigma_{\omega}^2} \right) \\
        & \qquad \quad + 2 \left( \frac{96 \lambda^4}{96 \lambda^4 \sigma_1^2 + (1 + 2 \lambda)^2 / (36 \lambda^2)} + \frac{72 \lambda^4}{72 \lambda^4 \sigma_1^2 + 25 (1 + 2\lambda)^2} \right)\ \Bigg].
    \end{aligned}
\]
Similarly, by the fact $\lambda > 1$,
\[
   \begin{aligned}
        \sqrt{\frac{2 \sigma_{\omega}^2 (1 + 2 \lambda)^2 \Delta_k^2 / (144 \lambda^2)}{3 (1 + 2 \lambda)^2 \Delta_k^2}} + \sqrt{\frac{25 \sigma_{\omega}^2 (1 + 2\lambda)^2 \Delta_k^2}{6 (\lambda - 1)^2 \Delta_k^2}} & \leq \frac{\sigma_{\omega}}{12 \lambda} + \frac{5 \sigma_{\omega} (1 + 2 \lambda)}{2(\lambda - 1)} \\
        & \leq \sigma_{\omega} + 3 \frac{\sigma_{\omega}^2}{\sigma_1} (1 + 2\lambda) \leq \frac{3\sigma_{\omega}^2}{\sigma_1} \left[ \frac{\sigma_1}{\sigma_{\omega}} + 3 \lambda \right]
   \end{aligned}
\]
\vspace{1ex}
\[
    \begin{aligned}
        \frac{(1 + 2\lambda)^2}{64 \lambda^4 \sigma_1^4} + \frac{(\lambda - 1)^2}{12 \lambda^4 \sigma_1^2 \sigma_{\omega}^2} & \leq \frac{9 \lambda^2}{64 \lambda^4 \sigma_1^4} + \frac{ 16 \sigma_1^2 / \sigma_{\omega}^2}{12 \lambda^4 \sigma_1^2 \sigma_{\omega}^2} \\
        & \leq  \frac{9 \lambda^2}{64 \lambda^4 \sigma_1^4} + \frac{ 16 \sigma_1^2 / \sigma_{\omega}^2}{12 \lambda^2  \sigma_1^2 \sigma_{\omega}^2} \\
        & \leq 2 \left[ \frac{1}{\sigma_1^2 \lambda^2} + \frac{1}{\sigma_{\omega}^2 \lambda^2}\right] \leq  2 \left[ \frac{\sigma_{\omega}^2}{16 \sigma_1^4} + \frac{1}{\sigma_{\omega}^2}\right],
    \end{aligned}
\]
and 
\[
    \begin{aligned}
        & \frac{96 \lambda^4}{96 \lambda^4 \sigma_1^2 + (1 + 2 \lambda)^2 / (36 \lambda^2)} + \frac{72 \lambda^4}{72 \lambda^4 \sigma_1^2 + 25 (1 + 2\lambda)^2} \\
        \leq &  \frac{96 \lambda^4}{96 \lambda^4 \sigma_1^2} + \frac{72 \lambda^4}{72 \lambda^4 \sigma_1^2} = \frac{2 \lambda^2}{\lambda^2 \sigma_1^2} \leq \frac{2 \lambda^2}{16 \sigma_1^4 / \sigma_{\omega}^2} = \frac{\lambda^2 \sigma_{\omega}^2}{8 \sigma_1^4}.
    \end{aligned}
\]
Thus,
\begin{equation}\label{final_bound_2}
    \begin{aligned}
        d_2 & = 3 \left[  1 +  \frac{\sqrt{\pi}}{2} \left( {\sqrt{\mathfrak{b}_1}} + \sqrt{\mathfrak{b}_2} \right) + \frac{\sqrt{2 \pi} \Lambda_{\max} }{2}  \left( \frac{\mathfrak{a}_1}{\mathfrak{b}_1^2} + \frac{\mathfrak{a}_2}{\mathfrak{b}_2^2} \right) + 2\left( \frac{\mathfrak{a}_1}{\mathfrak{b}_1} + \frac{\mathfrak{a}_2}{\mathfrak{b}_2} \right) \right] \\
        & \leq 3 \bigg[ 1 + \frac{3 \sqrt{\pi}\sigma_{\omega}^2}{2 \sigma_1} \left( \frac{\sigma_1}{\sigma_{\omega}} + 3 \lambda \right) + \sqrt{2 \pi} \Lambda_{\max} \left( \frac{\sigma_{\omega}^2}{16 \sigma_1^4} + \frac{1}{\sigma_{\omega}^2} \right) + \frac{\lambda^2 \sigma_{\omega}^2}{4 \sigma_1^4} \bigg].
    \end{aligned}
\end{equation}
Again, we define 
\[
    D_2 (\sigma_1, \sigma_{k}, \lambda, \sigma_{\omega}) = 3 \bigg[ 1 +  \frac{3 \sqrt{\pi}\sigma_{\omega}^2}{2 \sigma_1} \left( \frac{\sigma_1}{\sigma_{\omega}} + 3 \lambda \right) + \sqrt{2 \pi} \Lambda_{\max} \left( \frac{\sigma_{\omega}^2}{16 \sigma_1^4} + \frac{1}{\sigma_{\omega}^2} \right) + \frac{\lambda^2 \sigma_{\omega}^2}{4 \sigma_1^4} \bigg],
\]
then $d_2 \leq D_2 (\sigma_1, \sigma_{k}, \lambda, \sigma_{\omega})$.

The next step is giving a simply bounds for $c_1(\mu_1, \sigma_1, \mu_k, \sigma_k, \lambda, \sigma_{\omega})$ and $c_2(\mu_1, \sigma_1, \mu_k, \sigma_k, \lambda, \sigma_{\omega})$. We use the fact $a \vee b \leq a + b$ and $a \vee b \vee c = [ (a \vee b) \vee c ]$, then obtain that
\[
    \begin{aligned}
         & \bigg( \Lambda_{\max} \left( \frac{\log d_2}{3 \log 2} + 1\right) \bigg) \vee \frac{18 \sigma_{\omega}^2 \{ (2 \mu_1 - 1)^2 \vee (2 \mu_k - 1)^2 \}}{(\lambda^2 + \lambda + 1 / 4)^2} \vee \frac{2 \sigma_{\omega}^2 (1 + 2 \lambda^2)}{(1 + 2 \lambda)^2}  \\
        \leq & \bigg( \Lambda_{\max} \left( \frac{\log d_2}{3 \log 2} + 1\right) \bigg) \vee \frac{72 \sigma_{\omega}^2 \{ (2 \mu_1 - 1)^2 + (2 \mu_k - 1)^2 \} }{(1 + 2\lambda)^2} \vee \frac{2 \sigma_{\omega}^2 (1 + 2 \lambda^2)}{(1 + 2 \lambda)^2} \\
        \leq & \bigg( \Lambda_{\max} \left( \frac{\log d_2}{3 \log 2} + 1\right) \bigg) \vee \frac{72 \sigma_{\omega}^2 \times 2 + 2 \sigma_{\omega}^2 (1 + 2 \lambda^2)}{(1 + 2 \lambda)^2} \\
        \leq & \bigg( \Lambda_{\max} \left( \frac{\log d_2}{3 \log 2} + 1\right) \bigg) \vee \frac{144 \sigma_{\omega}^2 + 2 \sigma_{\omega}^2 + 4 \lambda^2 \sigma_{\omega}^2}{4 \lambda^2} \\
        \leq & \bigg[ \Lambda_{\max} \left( \frac{\log d_2}{3 \log 2} + 1\right) \bigg] + 38  \sigma_{\omega}^2 \leq \Lambda_{\max} \left( \frac{\log D_2 (\sigma_1, \sigma_{k}, \lambda, \sigma_{\omega})}{3 \log 2} + 1\right) + 38  \sigma_{\omega}^2.
    \end{aligned}
\]
and 
\[
    \begin{aligned}
        & \bigg( d_1 \left( \frac{\log d_2}{3 \log 2} + 1\right) \bigg) \vee \frac{18 \sigma_{\omega}^2 \{ (2 \mu_1 - 1)^2 \vee (2 \mu_k - 1)^2 \} }{(\lambda^2 + \lambda + 1 / 4)^2} \vee \frac{2 \sigma_{\omega}^2 (1 + 2 \lambda^2)}{(1 + 2 \lambda)^2} \\
        \leq & D_1 (\sigma_1, \sigma_{k}, \lambda, \sigma_{\omega}) \left( \frac{\log D_2 (\sigma_1, \sigma_{k}, \lambda, \sigma_{\omega})}{3 \log 2} + 1\right) + 38  \sigma_{\omega}^2,
    \end{aligned}
\]
where the last steps in two inequalities above are by \eqref{final_bound_1} and \eqref{final_bound_2}.
Define 
\[
    C_1 (\sigma_1, \sigma_k, \lambda, \sigma_{\omega}) = 10 \bigg[ \Lambda_{\max} \left( \frac{\log D_2 (\sigma_1, \sigma_{k}, \lambda, \sigma_{\omega})}{3 \log 2} + 1\right) + 38  \sigma_{\omega}^2 \bigg],
\]
and
\[
    C_2 (\sigma_1, \sigma_k, \lambda, \sigma_{\omega}) = 50 \lambda^2 \sigma_k^2 + 10 \bigg[ D_1 (\sigma_1, \sigma_{k}, \lambda, \sigma_{\omega}) \left( \frac{\log D_2 (\sigma_1, \sigma_{k}, \lambda, \sigma_{\omega})}{3 \log 2} + 1\right) + 38  \sigma_{\omega}^2 \bigg].
\]
Note that 
\[
    3 + 2 e^{9 / 8} \leq 10, \qquad \frac{72 \lambda^2 \sigma_k^2}{(1 + 2 \lambda)^2}  (5 + 4 e^{9 / 8}) \leq \frac{72 \lambda^2 \sigma_k^2}{25}  (5 + 4 e^{9 / 8}) \leq 50 \lambda^2 \sigma_k^2,
\]
we will have $c_1 (\mu_1, \sigma_1, \mu_k, \sigma_k,\lambda, \sigma_{\omega}) \leq C_1 (\sigma_1, \sigma_k, \lambda, \sigma_{\omega})$ and $c_2 (\mu_1, \sigma_1, \mu_k, \sigma_k,\lambda, \sigma_{\omega}) \leq C_2 (\sigma_1, \sigma_k, \lambda, \sigma_{\omega})$.
Finally, by using the bound in \eqref{reg_T_med},  one has
\[
    \begin{aligned}
        \operatorname{Reg}_T & \leq \sum_{k = 2}^K \Delta_k \bigg[ 7 + \left\{ c_1(\mu_1, \sigma_1, \mu_k, \sigma_k, \lambda, \sigma_{\omega}) + c_2(\mu_1, \sigma_1, \mu_k, \sigma_k, \lambda, \sigma_{\omega})\Delta_k^{-2} \right\} \log T \bigg] \\
        & \leq \sum_{k = 2}^K \Delta_k \bigg[ 7 + \left\{ C_1(\sigma_1, \sigma_k, \lambda, \sigma_{\omega}) + C_2(\sigma_1, \sigma_k, \lambda, \sigma_{\omega})\Delta_k^{-2} \right\} \log T \bigg].
    \end{aligned}
\]
For the problem-dependent case. For the problem-independent case, denote
\[
    J_{k, T} := \sum_{t = 1}^T \ind(I_t = k) = \sum_{k = 2}^K (a_k + b_k)
\]
then
\begin{equation}\label{pro_ind_tech}
    \begin{aligned}
        \operatorname{Reg}_T & = \sum_{\Delta_k : \Delta_k < \Delta} \Delta_k \E J_{k, T} + \sum_{\Delta_k : \Delta_k \geq \Delta} \Delta_k \E J_{k, T} \\
        \alignedoverset{\text{by the bounds of } a_k, b_k}{\leq} T \Delta + \sum_{\Delta_k : \Delta_k \geq \Delta}  \Delta_k \bigg[ 7 + \left\{ C_1(\sigma_1, \sigma_k, \lambda, \sigma_{\omega}) + C_2(\sigma_1, \sigma_k, \lambda, \sigma_{\omega})\Delta_k^{-2} \right\} \log T \bigg] \\
        % & \leq T \Delta + C_1 (\mu_1, \sigma_1, \mu_k, \sigma_k, \lambda) \log T + 7 K \Delta_k + \frac{C_2 (\mu_1, \sigma_1, \mu_k, \sigma_k, \lambda)}{\Delta} K \log T \\
        & = T \Delta + 7 \sum_{\Delta_k : \Delta_k \geq \Delta} \Delta_k + \sum_{\Delta_k : \Delta_k \geq \Delta} C_1(\sigma_1, \sigma_k, \lambda, \sigma_{\omega})  \Delta_k \log T + \sum_{\Delta_k : \Delta_k \geq \Delta} \frac{C_2 (\sigma_1, \sigma_k, \lambda, \sigma_{\omega}) }{\Delta_k} \log T \\
        \alignedoverset{\text{by } \Delta_k \leq \mu_1 \leq 1}{\leq} T \Delta + 7 K \mu_1 + \max_{k \in [K] \setminus \{ 1 \}}  C_1(\sigma_1, \sigma_k, \lambda, \sigma_{\omega}) K \log T + \max_{k \in [K] \setminus \{ 1 \}}  C_2 ( \sigma_1, \sigma_k, \lambda, \sigma_{\omega}) \frac{K \log T}{\Delta},
        % & \leq T \Delta + \sum_{\Delta_k : \Delta_k \geq \Delta} 
        % & \leq T \Delta + C_1 (\mu_1, \sigma_1, \mu_k, \sigma_k) \log T + 7 K \Delta_k
        %& \lesssim T \Delta + \frac{ K \log T}{\Delta} \lesssim \sqrt{K T \log T}.
    \end{aligned}
\end{equation}
for any previously specified $\Delta \in (0, 1)$. Take $\Delta =  \sqrt{\max_{k \in [K] \setminus \{ 1 \}}  C_2 (\sigma_1, \sigma_k, \lambda, \sigma_{\omega}) K \log T / T}$, we have 
\[
     \operatorname{Reg}_T \leq 7 K \mu_1 + \max_{k \in [K] \setminus \{ 1 \}}  C_1(\sigma_1, \sigma_k, \lambda, \sigma_{\omega}) K \log T + 2 \sqrt{\max_{k \in [K] \setminus \{ 1 \}}  C_2 (\sigma_1, \sigma_k, \lambda, \sigma_{\omega}) K T \log T}.
\]
Therefore, we finish the proof of Theorem \ref{thm.MAB}.
%And taking $\sigma_k = 1$ for $k \in [K]$, we can get the result in Theorem \ref{thm.MAB.1}. 

For proving Theorem \ref{thm.MAB.1}. Set $\sigma_k \equiv 1$ for $k \in [K]$, we have 
\[
    \begin{aligned}
        D_1 (1, 1, \lambda, \sigma_{\omega}) & = \bigg[ \big(1 +  8 \sqrt{2} \big) \big( 16 + \sigma_{\omega}^2 \big) + 16 + 3 \sigma_{\omega}^2 + 3 \sigma_{\omega}^2 + 1 \bigg] \sigma_{\omega}^2 \lambda^4 \\
        & \leq \bigg[ 13 \big( 16 + 3 \sigma_{\omega}^2 \big) + 6 \sigma_{\omega}^2 + 17 \bigg] \sigma_{\omega}^2 \lambda^4 = 45 (3 +  \sigma_{\omega}^2) \lambda^4
    \end{aligned}
\]
and
\[
    \begin{aligned}
        D_2 (1, 1, \lambda, \sigma_{\omega}) &=  3 \bigg[ 1 +  \frac{3 \sqrt{\pi}\sigma_{\omega}^2}{2} \left( \frac{1}{\sigma_{\omega}} + 3 \lambda \right) + 8 \sqrt{ \pi}\left( \frac{\sigma_{\omega}^2}{16} + \frac{1}{\sigma_{\omega}^2} \right) + \frac{\lambda^2 \sigma_{\omega}^2}{4 } \bigg] \\
        & \leq  3 \bigg[ 1 +  \frac{3 \sqrt{\pi}\sigma_{\omega}^2}{2} \left( \frac{1}{\sigma_{\omega}} + 3 \right) + 8\sqrt{\pi}\left( \frac{\sigma_{\omega}^2}{16 } + \frac{1}{\sigma_{\omega}^2} \right) + \frac{\sigma_{\omega}^2}{4} \bigg] \lambda^2 \leq \big( 1 + 15 \sigma_{\omega}^{-2} + 3 \sigma_{\omega} + 10 \sigma_{\omega}^2\big) \lambda^2.
    \end{aligned}
\]
Therefore, we have 
\[
    \begin{aligned}
        C_1 (1, 1, \lambda, \sigma_{\omega}) & = 10 \bigg[ 8 \sqrt{2}  \left( \frac{\log D_2 (1, 1, \lambda, \sigma_{\omega})}{3 \log 2} + 1\right) + 38  \sigma_{\omega}^2 \bigg] \\
        & \leq 10 \bigg[ 8 \sqrt{2}  \left( \frac{\log \big[ ( 1 + 15 \sigma_{\omega}^{-2} + 3 \sigma_{\omega} + 10 \sigma_{\omega}^2)  \lambda^2\big]}{3 \log 2} + 1\right) + 38  \sigma_{\omega}^2 \bigg],
    \end{aligned}
\]
and
\[
    \begin{aligned}
        C_2( 1,1, \lambda, \sigma_{\omega}) & = 50 \lambda^2 + 10 \bigg[ D_1 (1, 1, \lambda, \sigma_{\omega}) \left( \frac{\log D_2 (1, ,1, \lambda, \sigma_{\omega})}{3 \log 2} + 1\right) + 38  \sigma_{\omega}^2 \bigg] \\
        & \leq 50 \lambda^2 + 10 \left[ 45( 3+ \sigma_{\omega}^2) \left( \frac{\log \big[ ( 1 + 15 \sigma_{\omega}^{-2} + 3 \sigma_{\omega} + 10 \sigma_{\omega}^2)  \lambda^2\big]}{3 \log 2} + 1\right) \lambda^4  + 38  \sigma_{\omega}^2 \right].
    \end{aligned}
\]
The above bounds together with the fact that
\[
    C_1(\sigma_1, \sigma_k, \lambda, \sigma_{\omega}) + C_2(\sigma_1, \sigma_k, \lambda, \sigma_{\omega})\Delta_k^{-2} \leq \frac{  C_1(\sigma_1, \sigma_k, \lambda, \sigma_{\omega}) + C_2(\sigma_1, \sigma_k, \lambda, \sigma_{\omega})}{\Delta_k^2}
\]
gives the first result in Theorem \ref{thm.MAB.1}. Finally, by using the same technique as in \eqref{pro_ind_tech}, we obtain the problem-independent regret in we require.
\end{proof}

\section{Lemmas on Bounding Regret Components}\label{sec_lem_components}

\subsection{Lemmas on bounding $a_k$.}

\begin{lemma}[Bounding $a_{k, s}$ for any $s>0$]\label{lem_bounding_a_k_s_constant}
    Set 
    \[
        \lambda \geq \frac{4 \sigma_1}{\sigma_{\omega}} + \sqrt{\frac{4\sigma_1}{\sigma_{\omega}} \left( \frac{4 \sigma_1}{\sigma_{\omega}} + 1\right)},
    \]
    then 
    \[
        a_{k, s} \leq 2 + 4 e^{9 / 8}
    \]
    for any $s \geq 0$.
\end{lemma}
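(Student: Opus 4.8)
The plan is to reduce the conditional tail probability $Q_{1,s}(\tau_k)$ to a standard-normal CDF and then control the truncated expectation of $N_{1,s}(\tau_k)=1/Q_{1,s}(\tau_k)-1$ by splitting the reward history into a typical part and a lower-deviation part. First I would condition on the reward history $\his_{1,s}$ and exploit that the multiplier weights are $\mathcal{N}(1,\sigma_\omega^2)$. Writing $\{\overline{Y}_{1,s}>\tau_k\}$ as $\{\sum_i \omega_i(R_{1,i}-\tau_k)+\lambda(1-\tau_k)\sum_i\omega_i'-\lambda\tau_k\sum_i\omega_i''>0\}$, the left-hand side is, conditionally on $\his_{1,s}$, Gaussian with mean $s(1+2\lambda)(\overline{R}_{1,s}^*-\tau_k)$ and variance $\sigma_\omega^2 V_s$, where $V_s=\sum_{i=1}^s(R_{1,i}-\tau_k)^2+\lambda^2 s[(1-\tau_k)^2+\tau_k^2]$. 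Hence $Q_{1,s}(\tau_k)=\Phi(X_s)$ with $X_s=s(1+2\lambda)(\overline{R}_{1,s}^*-\tau_k)/(\sigma_\omega\sqrt{V_s})$, turning the lemma into a statement about $\E[(1/\Phi(X_s)-1)\wedge T]$.

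Next I would split on the sign of $X_s$. On $\{\overline{R}_{1,s}^*\ge\tau_k\}$ we have $X_s\ge0$, so $Q_{1,s}\ge\Phi(0)=1/2$ and $N_{1,s}\le1$, contributing a constant. On the deviation event $\{\overline{R}_{1,s}^*<\tau_k\}$, set $y=-X_s>0$ and threshold at $y=3/2$. For $y\le3/2$, monotonicity gives $Q_{1,s}=\Phi(-y)\ge\Phi(-3/2)$, so $N_{1,s}$ is bounded by an absolute constant; this is where the factor $e^{9/8}=e^{(3/2)^2/2}$ enters, via a clean exponential lower bound for $\Phi(-3/2)$. For $y>3/2$ I would use the Gaussian lower tail bound $1-\Phi(y)\ge\tfrac{1}{\sqrt{2\pi}}\tfrac{y}{y^2+1}e^{-y^2/2}$ to obtain $N_{1,s}\le 1/Q_{1,s}\lesssim e^{y^2/2}$ up to a mild polynomial factor.

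The decisive step is then to bound $\E[e^{y^2/2}\ind(\overline{R}_{1,s}^*<\tau_k)]$ uniformly in $s$. Using $V_s\ge\lambda^2 s[(1-\tau_k)^2+\tau_k^2]\ge \tfrac12\lambda^2 s$ (dropping the reward part) together with the elementary inequality $(1+2\lambda)(\tau_k-\overline{R}_{1,s}^*)\le \mu_1-\overline{R}_{1,s}$, valid on the deviation event whenever $\tau_k\le(\mu_1+\lambda)/(1+2\lambda)$, one gets $y^2/2\lesssim s(\mu_1-\overline{R}_{1,s})^2/(\sigma_\omega^2\lambda^2)$. Since $Z_s:=\sqrt{s}\,(\mu_1-\overline{R}_{1,s})$ is mean-zero $\sigma_1^2$-sub-Gaussian, the square-exponential moment $\E\exp(aZ_s^2)$ is finite and bounded by $(1+2a\sigma_1^2)/(1-2a\sigma_1^2)$ exactly when $2a\sigma_1^2<1$, with $a\asymp 1/(\sigma_\omega^2\lambda^2)$. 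The hypothesis $\lambda\ge 4\sigma_1/\sigma_\omega+\sqrt{(4\sigma_1/\sigma_\omega)(4\sigma_1/\sigma_\omega+1)}$ is, by inspection, the root condition $\lambda^2\ge(4\sigma_1/\sigma_\omega)(2\lambda+1)$, i.e.\ $4\sigma_1(2\lambda+1)\le\sigma_\omega\lambda^2$, which forces this parameter comfortably below the critical level (in particular below $1/8$); the moment is therefore controlled by an $s$-free constant, and the truncation $\wedge T$ makes every term finite before the moment bound is applied and renders the final estimate independent of $T$. Collecting the three contributions yields $a_{k,s}\le 2+4e^{9/8}$ for all $s\ge0$, the case $s=0$ being trivial since $\overline{Y}_{1,0}=+\infty$ gives $N_{1,0}=0$.

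I expect the main obstacle to be this last paragraph: obtaining the \emph{sub-Gaussian} (rather than exactly Gaussian) square-exponential moment bound with explicit constants, passing cleanly through the correlated ratio $X_s$ via the lower bound on $V_s$, and verifying that the precise threshold on $\lambda$ is what makes the product of the Gaussian tail ($\sim e^{y^2/2}$) and the deviation probability integrate to the stated numerical constant rather than merely to an unspecified $O(1)$ quantity.
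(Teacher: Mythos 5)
Your proof skeleton is in fact the paper's: condition on $\his_{1,s}$, use Gaussianity of the multiplier weights to write $Q_{1,s}(\tau_k)$ as a normal CDF of a mean-over-standard-deviation ratio, lower-bound the conditional variance by keeping only the pseudo-reward terms, convert the deviation of $\overline{R}^*_{1,s}$ into one of $\overline{R}_{1,s}$ via $(1+2\lambda)(\tau_k-\overline{R}^*_{1,s})\le \mu_1-\overline{R}_{1,s}$, and close with a square-exponential moment bound for the sub-Gaussian sample mean; you also read the hypothesis correctly as $\sigma_\omega\lambda^2\ge 4\sigma_1(2\lambda+1)$. The genuine gap is in the tail-probability bookkeeping, and it is fatal to the stated constant. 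Your middle regime rests on a false inequality: numerically $\Phi(-3/2)\approx 0.0668$ while $\tfrac14 e^{-9/8}\approx 0.0812$, so there is no ``clean exponential lower bound'' of the kind you invoke, and the best that regime can give is $1/\Phi(-3/2)-1\approx 14$. Since the middle event can carry probability close to one (small $s$), this alone nearly exhausts the target $2+4e^{9/8}\approx 14.3$ before the outer regime is even counted; and in the outer regime the Mills-ratio bound carries the prefactor $(y^2+1)/y$, which is unbounded and can only be absorbed by inflating the exponent, worsening the constant further. So your route yields $a_{k,s}=O(1)$ but not $a_{k,s}\le 2+4e^{9/8}$. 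The numerical coincidence $(3/2)^2/2=9/8$ misled you: in the paper $e^{9/8}$ is not a threshold value at all, but the sub-Gaussian square-exponential moment bound of Lemma \ref{lem_subGtosubE}, namely $\E\exp\{c(\overline{R}_{1,s}-\mu_1)^2\}\le \exp\{64c^2\sigma_1^4/s^2+c\sigma_1^2/s\}\le e^{1+1/8}$ for $c\le s/(8\sigma_1^2)$.

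The repair (which is the paper's proof) is to drop the three-way split entirely and use the uniform lower bound of Lemma \ref{lemma:gaussian_weight}, $\pr(Z>x)\ge \tfrac14 e^{-x^2}$ for all $x\ge 0$ --- note the exponent $x^2$ rather than $x^2/2$, which is exactly what eliminates any polynomial prefactor at the price of doubling the exponent. With $\tau_k=(\mu_1+\lambda)/(1+2\lambda)$ and the variance lower bound $T_x+T_y+T_z\ge T_z=s\lambda^2(\lambda+\mu_1)^2\ge s\lambda^4$, this gives
\[
\frac{1}{Q_{1,s}}\le 2\,\ind\big(\overline{R}_{1,s}>\mu_1\big)+4\exp\left\{\frac{s(2\lambda+1)^2(\overline{R}_{1,s}-\mu_1)^2}{\sigma_\omega^2\lambda^4}\right\}\ind\big(\overline{R}_{1,s}\le\mu_1\big),
\]
and the hypothesis on $\lambda$ forces $(2\lambda+1)^2/(\sigma_\omega^2\lambda^4)\le 1/(16\sigma_1^2)\le 1/(8\sigma_1^2)$, so Lemma \ref{lem_subGtosubE} bounds the expectation of the exponential by $e^{9/8}$ uniformly in $s$, giving $a_{k,s}\le 2+4e^{9/8}$. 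Your own setup would survive the same surgery: keeping your (weaker) variance bound $V_s\ge \tfrac12\lambda^2 s$ together with your mean inequality and the $\tfrac14 e^{-x^2}$ bound yields exponent coefficient $2/(\sigma_\omega^2\lambda^2)\le 1/(8\sigma_1^2)$ under the hypothesis, hence $1+4e^{9/8}$ --- but the three-regime decomposition and the $\Phi(-3/2)$ step as written do not prove the lemma.
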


\begin{proof}
Note that if we take
\begin{equation}\label{requirement_0}
    \tau_k \leq \frac{\mu_1 + \lambda}{1 + 2 \lambda},
\end{equation}
then we will have
\begin{equation}\label{constant_bound}
    \begin{aligned}
    	a_{k, s} & = \E \left[ \left( \frac{1}{Q_{1, s}(\tau_k)}  - 1\right) \wedge T \right] \\
        \alignedoverset{\text{by } Q_{1, s}(\cdot) \text{ is decreasing}}{\leq}\E \frac{1}{Q_{1, s}(\tau_k)} \leq \E Q_{1, s}^{-1} \left( \frac{\mu_1 + \lambda}{1 + 2\lambda}\right),
    \end{aligned}
\end{equation}
Hence, we need to find a lower bound of the tail probability $Q_{1, s} \left( \frac{\mu_1 + \lambda}{1 + 2\lambda}\right) = \pr \left( \overline{Y}_{1, s} > \frac{\mu_1 + \lambda}{1 + 2\lambda} \mid \his_{1, s}\right)$. And throughout the proof, we will use the choice of \eqref{requirement_0} for $\tau_k$, and we can take advantage of the bound \eqref{constant_bound}. 

For further analysis \eqref{constant_bound}, we write it as the probability with respect to the weighted random summation of the Gaussian random variables $\{ \omega_i, \omega_i', \omega_i''\}$.

Let $x_i := (2 \lambda + 1) R_{1, i} - (\mu_1 + \lambda)$, $y_i := \lambda (\mu_1 - 1 - \lambda) $, $z_i :=  \lambda (\lambda + \mu_1) $ for $i \in [s]$, then
\[
    S_x = \sum_{i = 1}^s x_i = s \big( (2 \lambda + 1) \overline{R}_{1, s} - (\mu_1 + \lambda) \big), \quad S_y = \sum_{i = 1}^s y_i = s \lambda (\mu_1 - 1 - \lambda), \quad S_z = \sum_{i = 1}^s z_i = s \lambda (\lambda + \mu_1),
\]
with $S_x - S_y - S_z = s (2 \lambda + 1) (\overline{R}_{1, s} - \mu_1)$ and
\[
    T_x = \sum_{i = 1}^s x_i^2 = \sum_{i = 1}^s \big[  (2 \lambda + 1) \overline{R}_{1, s} - (\mu_1 + \lambda) \big]^2, \quad T_y = \sum_{i = 1}^s y_i^2 = s \lambda^2 (\mu_1 - 1 - \lambda)^2, \quad T_z = \sum_{i = 1}^s z_i^2 = s \lambda^2 (\lambda + \mu_1)^2,
\]
with $T_x + T_y + T_z = \sum_{i = 1}^s \big[  (2 \lambda + 1) \overline{R}_{1, s} - (\mu_1 + \lambda) \big]^2  + s \big[ \lambda^2 (\mu_1 - 1 - \lambda)^2 + \lambda^2 (\lambda + \mu_1)^2\big]$. For a standard Gaussian variable $Z$, we have Lemma \ref{lemma:gaussian_weight}, and
%And we first consider the case that $\mu_1 \neq {1} / {2}$. By applying Lemma \ref{lemma:gaussian_weight}
then we write the probability as
\[
    \begin{aligned}
        & Q_{1, s} \left( \frac{\mu_1 + \lambda}{1 + 2 \lambda}\right) \\
        = & \pr \left( \overline{Y}_{1, s} > \frac{\mu_1 + \lambda}{1 + 2\lambda} \mid \his_{1, s}\right) \\
        = & \pr_{\omega, \omega', \omega''} \left( \sum_{i = 1}^s \big( (2 \lambda + 1) R_{1, i} - (\mu_1 + \lambda) \big)\omega_i > \lambda (\mu_1 - 1 - \lambda) \sum_{i = 1}^s \omega_i' + \lambda (\lambda + \mu_1) \sum_{i = 1}^s \omega_i'' \right).
    \end{aligned}
\]
Hence, by applying the first inequality in Lemma \ref{lemma:gaussian_weight} and using the fact that $T_x + T_y + T_z \geq T_z \geq s \lambda^4$, we can get that
%one cannot give a valid tuning condition.
\[
    \begin{aligned}
        & Q_{1, s} \left( \frac{\mu_1 + \lambda}{1 + 2 \lambda}\right) \\
        = & \pr_{\omega, \omega', \omega''} \left( \sum_{i = 1}^s x_i \omega_i - \sum_{i = 1}^s y_i \omega_i' -   \sum_{i = 1}^s z_i \omega_i'' > 0 \right) \\
        = & \pr_{Z} \left( Z > \frac{- (S_x - S_y - S_z)}{\sigma_{\omega} \sqrt{T_x + T_y + T_z}}\right) \\
        \geq & \frac{1}{2} \times \ind \left( \frac{- (S_x - S_y - S_z)}{\sigma_{\omega} \sqrt{T_x + T_y + T_z}} < 0 \right) + \frac{1}{4} \exp \left\{ - \frac{(S_x - S_y - S_z)^2}{\sigma_{\omega}^2 (T_x + T_y + T_z)} \right\} \times  \ind \left( \frac{- (S_x - S_y - S_z)}{\sigma_{\omega} \sqrt{T_x + T_y + T_z}} > 0 \right) \\
        \geq & \frac{1}{2} \times \ind \left(S_x - S_y - S_z > 0 \right) + \frac{1}{4} \exp \left\{ - \frac{(S_x - S_y - S_z)^2}{\sigma_{\omega}^2 \cdot s \lambda^4} \right\} \times  \ind \left( S_x - S_y - S_z \leq 0 \right) \\
        = & \frac{1}{2} \ind \left( \overline{R}_{1, s} - \mu_1 > 0 \right) + \frac{1}{4} \exp \left\{ - \frac{s (2 \lambda + 1)^2 (\overline{R}_{1, s} - \mu_1 )^2}{\sigma_{\omega}^2\lambda^4}\right\} \ind \left( \overline{R}_{1, s} - \mu_1 \leq 0 \right).
    \end{aligned}
\]
%which gives the lower bound of $Q_{1, s} \left( \frac{\mu_1 + \lambda}{1 + 2 \lambda}\right)$ in the last step.
%where the $\geq$ is by the following lemma.
%\red{1. ", by"; 2. equations are also part of the sentence. Avoid a long sentence. Try something like ". the equality/inequality is due to the following lemma."}
where $Z$ again is the standard Gaussian variable and we will use this notation throughout this proof. Then this result combined with the fact \eqref{constant_bound} gives the upper bound for $a_{k, s}$ as follows:
\[
    \begin{aligned}
    	a_{k, s} & \leq 2 \pr \left( \overline{R}_{1, s} - \mu_1 > 0 \right) + 4  \E \exp \left\{ \frac{s (2 \lambda + 1)^2 (\overline{R}_{1, s} - \mu_1 )^2}{\sigma_{\omega}^2\lambda^4}\right\} \ind \left( \overline{R}_{1, s} - \mu_1 \leq 0 \right) \\
    	& \leq 2 + 4  \E \exp \left\{ \frac{s (2 \lambda + 1)^2 (\overline{R}_{1, s} - \mu_1 )^2}{\sigma_{\omega}^2\lambda^4}\right\}.
    \end{aligned}
\]
%\[
    %\begin{aligned}
        %a_{k, s} & \leq \E \exp \left\{\frac{3s (2 \lambda + 1)^2 (\overline{R}_{1, s} - \mu_1)^2}{2 \sigma_{\omega}^{2} \big[ \lambda^2 (\mu_1 - 1 - \lambda)^2 + \lambda^2 (\lambda + \mu_1)^2\big]}\right\}\ind \left\{ \overline{R}_{1, s} - \mu_1 > \frac{\sqrt{2 \pi}}{s (2 \lambda + 1)}\right\} \\
        % ~~~~~~~~~~~~~~~~~~~~~~~~ + \big[ 1 - \Phi(\sqrt{2 \pi})\big]^{-1} \times \pr \left( \overline{R}_{1, s} - \mu_1 \leq \frac{\sqrt{2 \pi}}{s (2 \lambda + 1)}\right) \\
        %\leq \E \exp \left\{\frac{3s (2 \lambda + 1)^2 (\overline{R}_{1, s} - \mu_1)^2}{2 \sigma_{\omega}^{2} \big[ \lambda^2 (\mu_1 - 1 - \lambda)^2 + \lambda^2 (\lambda + \mu_1)^2\big]}\right\}\ind \left\{ \overline{R}_{1, s} - \mu_1 > \frac{\sqrt{2 \pi}}{s (2 \lambda + 1)}\right\} + \big[ 1 - \Phi(\sqrt{2 \pi})\big]^{-1}
    %\end{aligned}
%\]
%Plug the above two bounds into \eqref{a_k_constant_2}, we obtain that
From the above expression, for proving $a_{k, s}$ is bounded by a constant free of $s$, it remains to show the expectation
\begin{equation}\label{a_k_constant_3}
	\E \exp \left\{ \frac{s (2 \lambda + 1)^2 (\overline{R}_{1, s} - \mu_1 )^2}{\sigma_{\omega}^2\lambda^4}\right\} 
\end{equation}
can be bounded below some constants free of $s$.
Applying Lemma \ref{lem_subGtosubE}, we know that if we take
\[
    \frac{s(2 \lambda + 1)^2}{\sigma_{\omega}^2 \lambda^4} \leq \frac{s}{8 \sigma_1^2},
\]
then \eqref{a_k_constant_3} can be upper bounded by $e^{9 / 8}$. A sufficient condition for the above inequality is
\[
    \frac{s(2 \lambda + 1)^2}{\sigma_{\omega}^2 \lambda^4} \leq \frac{s}{16 \sigma_1^2}, \qquad \lambda^2 - \frac{8\sigma_1}{\sigma_{\omega}} \lambda - \frac{4 \sigma_1}{\sigma_{\omega}} \geq 0,
\]
or say,
\begin{equation}\label{tuning_condition_exact}
	\lambda \geq \frac{4 \sigma_1}{\sigma_{\omega}} + \sqrt{\frac{4\sigma_1}{\sigma_{\omega}} \left( \frac{4 \sigma_1}{\sigma_{\omega}} + 1\right)}.
\end{equation}

%Combine the results for case of $\mu_1 \neq \frac{1}{2}$ and $\mu_1 = \frac{1}{2}$, $a_{k, s}$ has a constant upper bound as follows:
Therefore, if we take the tuning parameters $\lambda$ and $\sigma_{\omega}$ as in \eqref{tuning_condition_exact}, then $a_{k, s}$ will be bounded by a constant such that 
\begin{equation}\label{a_k_constant_final}
	a_{k, s} \leq 2 + 4 e^{9 / 8}.
\end{equation}
%\begin{equation}\label{a_k_constant_final}
    %\begin{aligned}
        %a_{k, s} & \leq \big( e^{9 / 8} +  \big[ 1 - \Phi(\sqrt{2 \pi})\big]^{-1} \big) \ind ( \mu_1 \neq 1 / 2 )  + 2 \big( 1 + 2 e^{9 / 8} \big) \ind ( \mu_1 = 1/ 2 ).
    %\end{aligned}
%\end{equation}
Note that this step derives the tuning conditions for the tuning parameters in Theorem \ref{thm.MAB}. 
\end{proof}

\begin{lemma}[Bounding $a_{k, s, 1}$ at \eqref{def_a_k_s_1}]\label{lem_bounding_a_k_s_1}
    Take 
    \[
        s_{a, 1}(T) := \frac{4 \sigma_1^2}{C_1^2 (1 + 2 \lambda)^2 \Delta_k^2} \times\log T.
    \]
    Then for any $s \geq s_{a, 1}(T)$,
    \[
        a_{k, s, 1} \leq T^{-1},
    \]
    when $T \geq 2$.
\end{lemma}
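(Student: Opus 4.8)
The plan is to bound $a_{k,s,1}$ by discarding all of the bootstrap randomness and reducing to a plain concentration statement about the sample mean of arm $1$'s rewards. The crucial observation is that $a_{k,s,1}$ interacts with $N_{1,s}(\tau_k)$ only through the truncation at $T$, so I would first use the trivial bound $N_{1,s}(\tau_k)\wedge T\le T$ (valid since $Q_{1,s}(\tau_k)\le 1$ forces $N_{1,s}\ge 0$) to get $a_{k,s,1}\le T\,\pr(A_{1,s}^c)$. This removes $\overline{Y}_{1,s}$ and the weights $\{\omega_i,\omega_i',\omega_i''\}$ entirely from the analysis; only the event $A_{1,s}^c$, which depends on the raw rewards alone, survives.

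Next I would rewrite $A_{1,s}^c$ as a deviation event for $\overline{R}_{1,s}$. Using $\overline{R}_{1,s}^*=\frac{\lambda}{1+2\lambda}+\frac{1}{1+2\lambda}\overline{R}_{1,s}$ gives the clean identity $\overline{R}_{1,s}^*-\frac{\mu_1+\lambda}{1+2\lambda}=\frac{1}{1+2\lambda}(\overline{R}_{1,s}-\mu_1)$, so that $A_{1,s}^c=\{\,|\overline{R}_{1,s}-\mu_1|\ge C_1(1+2\lambda)\Delta_k\,\}$. Since the rewards $R_{1,i}$ are i.i.d.\ $\subG(\sigma_1^2)$ with mean $\mu_1$, the centered sample mean $\overline{R}_{1,s}-\mu_1$ is $\subG(\sigma_1^2/s)$, and the standard sub-Gaussian tail bound yields $\pr(A_{1,s}^c)\le 2\exp\{-sC_1^2(1+2\lambda)^2\Delta_k^2/(2\sigma_1^2)\}$.

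It then remains to substitute the threshold. Plugging in $s=s_{a,1}(T)=\frac{4\sigma_1^2}{C_1^2(1+2\lambda)^2\Delta_k^2}\log T$ makes the exponent exactly $2\log T$, so $\pr(A_{1,s}^c)$ is of order $T^{-2}$ and $a_{k,s,1}\le T\cdot T^{-2}=T^{-1}$; since the exponent is increasing in $s$, this extends to every $s\ge s_{a,1}(T)$. I expect no genuine obstacle here: this is the easiest of the three $a_{k,s,j}$ bounds precisely because the weights drop out, leaving only Hoeffding-type concentration. The only care needed is bookkeeping, namely matching the constant $4$ in $s_{a,1}(T)$ to the target exponent $2\log T$, and handling the factor coming from the two-sided tail. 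That factor can be dealt with by observing that on the over-estimation half of $A_{1,s}^c$ (i.e.\ $\overline{R}_{1,s}$ large) the conditional quantile $Q_{1,s}(\tau_k)$ is near $1$ and hence $N_{1,s}\approx 0$, so only the lower-tail half genuinely contributes and a one-sided bound suffices; alternatively the harmless constant is absorbed into the $a_{k,s}\le 3T^{-1}$ aggregation of Step 1. The hypothesis $T\ge 2$ is used only to keep $\log T$ bounded away from $0$.
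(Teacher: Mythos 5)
Your proposal is correct and follows essentially the same route as the paper: bound $N_{1,s}(\tau_k)\wedge T$ by $T$ to get $a_{k,s,1}\le T\,\pr(A_{1,s}^c)$, rewrite $A_{1,s}^c$ as $\{|\overline{R}_{1,s}-\mu_1|\ge C_1(1+2\lambda)\Delta_k\}$, and apply the sub-Gaussian tail bound with $\overline{R}_{1,s}-\mu_1\sim\subG(\sigma_1^2/s)$ before plugging in $s_{a,1}(T)$. Your explicit handling of the two-sided factor $2$ (absorbing it into the downstream aggregation constants) is actually more careful than the paper's own proof, which silently treats $2\exp\{-2\log T\}$ as if it were $\le T^{-2}$.
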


\begin{proof}
    We will bound $a_{k, s, 1}$ by bounding the probability of the event $A_{k, s}^c$. Write
\begin{equation}\label{a_k_1_bound}
    a_{k, s, 1} = \E \Big[ \big( N_{1, s} (\tau_k) \wedge T \big) \ind (A_{1, s}^c)\Big] \leq \E \Big[ T  \ind(A_{1, s}^c)\Big] = T \pr (A_{1, s}^c) 
\end{equation}
Since the summation of independent sub-Gaussian variables is still sub-Gaussian, \eqref{lem_subGtosubE_1} in Lemma \ref{lem_subGtosubE} gives $\overline{R}_{1, s} - \mu_1 \sim \subG (\sigma_1^2 /  s)$. By the concentration inequality of the sub-Gaussian variable,
\[
    \begin{aligned}
        \pr (A_{1, s}^c) & = \pr \left( \big| \overline{R}_{1, s} - \mu_1 \big| \geq (1 + 2\lambda) C_1 \Delta_k \right) \\
        & \leq 2 \exp \left\{ - \frac{(1 + 2\lambda)^2 C_1^2 \Delta_k^2 s}{2 \sigma_1^2}\right\} \\
        \alignedoverset{\text{take } s \geq s_{a, 1} (T) \geq  \begin{matrix} \frac{2 \sigma_1^2}{C_1^2 (1 + 2 \lambda)^2 \Delta_k^2} \times\log (\sqrt{2} T) \end{matrix}}{\leq} \frac{1}{T^2}.
    \end{aligned}
\]
Hence, we obtain that
\[
    a_{k, s, 1} \leq  T \pr(A_{1, s}^c) \leq T^{-1}, \qquad \text{ for any } s \geq s_{a, 1} (T) \asymp \log T.
\]
\end{proof}

\begin{lemma}[Bounding $a_{k, s, 2}$ at \eqref{def_a_k_s_2}]\label{lem_bounding_a_k_s_2}
    Take 
    \[
        \begin{aligned}
            s_{a, 2}(T) := \Bigg[ \big[ \Lambda_1 + \mathfrak{a}_1 + \mathfrak{b}_1 + \Lambda_1\mathfrak{a}_1 \big] & \left(\frac{1}{3}\log^{-1}2 \times \log \left\{ 3 \left[  1 + \frac{\sqrt{\pi \mathfrak{b}_1}}{2}  + \frac{\sqrt{2 \pi} \Lambda_1 \mathfrak{a}_1}{2 \mathfrak{b}_1^2} + \frac{2 \mathfrak{a}_1}{\mathfrak{b}_1} \right]  \right\}  + 1\right)\\
            & \vee \frac{18 \sigma_{\omega}^2 (2 \mu_1 - 1)^2}{(\lambda^2 + \lambda + 1 / 4)^2} \vee \frac{2 \sigma_{\omega}^2 (1 + 2 \lambda^2)}{(1 + 2 \lambda)^2} \Bigg] \times 3 \log T,
        \end{aligned}
    \]
    where 
    \[
        \mathfrak{a}_1 = \frac{4 \sigma_{\omega}^2 \lambda^2}{3 C_2^2 (1 + 2\lambda)^2 \Delta_k^2} , \qquad \mathfrak{b}_1 = \frac{2 \sigma_{\omega}^2 \left[ 2 \lambda^2 \sigma_1^2 + (1 + 2 \lambda)^2 C_2^2 \Delta_k^2 \right]}{3 C_2^2 (1 + 2 \lambda)^2 \Delta_k^2}, \qquad \Lambda_1 = 8 \sqrt{2} \sigma_1^2.
    \]
    Then for any $s \geq s_{a, 2}(T)$,
    \[
        a_{k, s, 2} \leq T^{-1},
    \]
    for any $T \geq 2$.
\end{lemma}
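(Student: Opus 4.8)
The plan is to discard the factor $N_{1,s}(\tau_k)\wedge T$ in favor of its trivial upper bound $T$ and then estimate the probability of the bad event $A_{1,s}\cap G_{1,s}^c$. Since $N_{1,s}(\tau_k)$ and the indicator $\ind(A_{1,s})$ are measurable with respect to the history $\his_{1,s}$, while $G_{1,s}^c$ additionally depends on the bootstrap weights, I would first condition on $\his_{1,s}$ and write
\[
a_{k,s,2} \leq T\,\E\big[\ind(A_{1,s})\,\pr(G_{1,s}^c \mid \his_{1,s})\big].
\]
It then suffices to show $\E[\ind(A_{1,s})\,\pr(G_{1,s}^c\mid\his_{1,s})] \leq T^{-2}$ once $s\geq s_{a,2}(T)$.

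For the inner conditional probability I would expand $\overline{Y}_{1,s}-\overline{R}_{1,s}^*$ as a ratio and center each weight ($\omega_i-1,\ \omega_i'-1,\ \omega_i''-1\sim\mathcal N(0,\sigma_\omega^2)$). A direct computation, in which the deterministic part of the numerator cancels exactly against $\overline{R}_{1,s}^*$, gives
\[
\overline{Y}_{1,s}-\overline{R}_{1,s}^* = \frac{\tfrac1s\sum_i(\omega_i-1)(R_{1,i}-\overline{R}_{1,s}^*) + \tfrac{\lambda}{s}\sum_i(\omega_i'-1)(1-\overline{R}_{1,s}^*) - \tfrac{\lambda}{s}\sum_i(\omega_i''-1)\overline{R}_{1,s}^*}{\tfrac1s\sum_i(\omega_i+\lambda\omega_i'+\lambda\omega_i'')},
\]
so that, conditional on $\his_{1,s}$, the numerator is a mean-zero Gaussian with variance proportional to $\tfrac{\sigma_\omega^2}{s}\big(\tfrac1s\sum_i(R_{1,i}-\overline{R}_{1,s}^*)^2+\lambda^2\big)$, while the denominator concentrates around $1+2\lambda$. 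The event $\{|\overline{Y}_{1,s}-\overline{R}_{1,s}^*|>C_2\Delta_k\}$ then splits into the denominator falling far below $1+2\lambda$ and a Gaussian-tail event for the numerator. Controlling the denominator — a deviation of $\tfrac1s\sum_i(\omega_i+\lambda\omega_i'+\lambda\omega_i'')$ with conditional variance $\tfrac{\sigma_\omega^2(1+2\lambda^2)}{s}$ — forces $s$ past the $\tfrac{2\sigma_\omega^2(1+2\lambda^2)}{(1+2\lambda)^2}$ term in the $\vee$ inside $s_{a,2}(T)$, and the analogous control of the $\lambda$-cross terms produces the $\tfrac{18\sigma_\omega^2(2\mu_1-1)^2}{(\lambda^2+\lambda+1/4)^2}$ term.

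For the numerator tail, the Gaussian bound yields $\pr(G_{1,s}^c\mid\his_{1,s})\lesssim\exp\{-c\,s\,C_2^2\Delta_k^2/V_s\}$, where the random scale $V_s\asymp\sigma_\omega^2\big(\tfrac1s\sum_i(R_{1,i}-\overline{R}_{1,s}^*)^2+\lambda^2\big)$ sits in the \emph{denominator} of the exponent. The crucial step is to bound $\E[\ind(A_{1,s})\exp\{-c\,s\,\Delta_k^2/V_s\}]$, since $\tfrac1s\sum_i(R_{1,i}-\overline{R}_{1,s}^*)^2$ is only sub-exponential (a square of sub-Gaussian rewards), with parameter $\Lambda_1=8\sqrt2\,\sigma_1^2$. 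Here I would use the good event $A_{1,s}$ to pin $\overline{R}_{1,s}$ near $\mu_1$, which both removes the correlated sub-Gaussian cross terms entering $\sum_i(R_{1,i}-\overline{R}_{1,s}^*)^2$ and reduces the scale to a clean sub-exponential average of $(R_{1,i}-\mu_1)^2$. Applying the novel sub-exponential concentration inequality (Lemma \ref{lem_subE_ineq}) to this average converts the expectation into the prefactor $\Lambda_1+\mathfrak a_1+\mathfrak b_1+\Lambda_1\mathfrak a_1$ times the logarithmic factor $\tfrac{1}{3\log2}\log\{3[1+\cdots]\}+1$ displayed in $s_{a,2}(T)$; choosing $s\geq 3\log T$ times this quantity makes the whole expectation at most $T^{-2}$.

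The main obstacle is precisely this last estimate: the weights appear in both numerator and denominator, so one must simultaneously keep the denominator away from zero and control a Gaussian tail whose variance is itself a sub-exponential function of the rewards. This forces an $\E[\exp\{-\text{const}/\text{sub-exponential}\}]$ quantity that standard sub-Gaussian and sub-exponential tail bounds do not address directly, which is exactly why the dedicated inequality of Lemma \ref{lem_subE_ineq}, together with the device of using $A_{1,s}$ to strip off the correlated sub-Gaussian terms, is needed.
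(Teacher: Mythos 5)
Your proposal is correct in outline and shares the paper's skeleton — bound $a_{k,s,2}$ by $T$ times a conditional probability, write $\overline{Y}_{1,s}-\overline{R}_{1,s}^*$ as a self-normalized Gaussian ratio given $\his_{1,s}$, split off the event that the weight denominator is negative (this is exactly the paper's Lemma \ref{lem_gau_frac} step, producing the $2\sigma_{\omega}^2(1+2\lambda^2)/(1+2\lambda)^2$ component), and finally control $\E\exp\{-s/(\mathfrak{a}_1\overline{\xi}+\mathfrak{b}_1)\}$ for a sub-exponential average $\overline{\xi}$ via Lemma \ref{lem_subE_ineq}. The one place you genuinely depart from the paper is the treatment of the correlated cross term $s\lambda^2(2\mu_1-1)(\overline{R}_{1,s}-\mu_1)$ appearing in the conditional variance: the paper \emph{discards} the indicator $\ind(A_{1,s})$ at the very first step and instead introduces a fresh event $\{\mathfrak{R}_2\le 0\}$, paying $\pr(\mathfrak{R}_2>0)$ via sub-Gaussian concentration of $\overline{R}_{1,s}-\mu_1$ — this is precisely what generates the $18\sigma_{\omega}^2(2\mu_1-1)^2/(\lambda^2+\lambda+1/4)^2$ component of $s_{a,2}(T)$, and is the "novel event" the main text advertises. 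You instead propose to keep $\ind(A_{1,s})$ (which is legitimately available, since it sits inside the definition of $a_{k,s,2}$) and use it to bound the cross term \emph{deterministically} by $O(s\lambda^2\Delta_k)$, folding it into the constant part $\mathfrak{b}_1$ of the denominator. That route is sound and arguably more economical — it reuses an event you already have and removes the need for the extra probability bound — but it proves the lemma with different constants: the $(2\mu_1-1)$ component would not arise in the stated form, and $\mathfrak{b}_1$ acquires an extra term of order $\sigma_{\omega}^2\lambda^3/\big((1+2\lambda)\Delta_k\big)$, so you obtain an equally valid $O(\log T)$ threshold rather than the exact $s_{a,2}(T)$ displayed. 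One thing to tidy up: your second paragraph attributes the $(2\mu_1-1)$ term to a probability-based "control of the $\lambda$-cross terms" (the paper's mechanism), while your third paragraph proposes the deterministic $A_{1,s}$-based control; these are alternatives, not complements, and you should commit to one — if you use $A_{1,s}$, that term disappears from your version of the threshold, and if you use the probability bound, you are simply reproducing the paper's event $\{\mathfrak{R}_2 \le 0\}$ under another name.
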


\begin{proof}
    We will bound the probability of $G_{1, s}^c$ conditioning on $\his_{1, s}$ to prove this lemma.
    %we use the fact of the following lemma for Gaussian variable.
% that $\omega_i, \omega_i', \omega_i''$ is symmetric about their means $1$: 
%we use the fact $G_{k', s}^c = \{ |\overline{Y}_{k', s} - \overline{R}^*_{k', s}| > C_3 \Delta_k\}$ 
%Since a symmetric event. Hence, 
Note that 
\[
    \begin{aligned}
    	& \E_{\omega, \omega', \omega''} \left[ \sum_{i = 1}^s \big[ (1 + 2\lambda) R_{1, i} - (\overline{R}_{1, s} + \lambda)\big] \omega_i + \sum_{i = 1}^s \lambda (1 + \lambda - \overline{R}_{1, s}) \omega_i' - \sum_{i = 1}^s \lambda (\overline{R}_{1, s} + \lambda) \omega_i'' \right] \\
    	 = & s \big( (1 + 2\lambda) \overline{R}_{1, s} - (\overline{R}_{1, s} + \lambda) + \lambda (1 + \lambda - \overline{R}_{1, s}) - \lambda (\overline{R}_{1, s} + \lambda) \big) = 0,
    \end{aligned}
\]
we can bound the tail probability $\pr (G_{1, s}^c \mid \his_{1, s})$ by
\begin{equation}\label{pr_G_bound}
    \begin{aligned}
        & \pr (G_{1, s}^c \mid \his_{1, s})\\
        = & \pr \big( |\overline{Y}_{k', s} - \overline{R}^*_{k', s}| > C_3 \Delta_k \mid \his_{1, s}\big)\\
        = & 2 \pr_{\omega, \omega', \omega''}  \left\{ \left| \frac{\sum_{i = 1}^s \big[ (1 + 2\lambda) R_{1, i} - (\overline{R}_{1, s} + \lambda)\big] \omega_i + \sum_{i = 1}^s \lambda (1 + \lambda - \overline{R}_{1, s}) \omega_i' - \sum_{i = 1}^s \lambda (\overline{R}_{1, s} + \lambda) \omega_i''}{(1 + 2 \lambda)(\sum_{i = 1}^s \omega_i + \lambda \sum_{i = 1}^s \omega_i' + \lambda \sum_{i = 1}^s \omega_i'')} \right| > C_2 \Delta_k \right\} \\
        \alignedoverset{\text{by Lemma \ref{lem_gau_frac}}}{\leq} 2 \pr_{\omega, \omega', \omega''} \Bigg\{ \sum_{i = 1}^s \big[ (1 + 2\lambda) R_{1, i} - (\overline{R}_{1, s} + \lambda)\big] \omega_i + \sum_{i = 1}^s \lambda (1 + \lambda - \overline{R}_{1, s}) \omega_i'  \\
        & ~~~~~~~~~~~~~~~~~~~~~~~~~~~~~~~~~~~ - \sum_{i = 1}^s \lambda (\overline{R}_{1, s} + \lambda) \omega_i'' - C_2 (1 + 2\lambda) \Delta_k \bigg[ \sum_{i = 1}^s \omega_i + \lambda \sum_{i = 1}^s \omega_i' + \lambda \sum_{i = 1}^s \omega_i''\bigg] > 0\Bigg\} \\
        & ~~~~~~~~~~  + \pr_{\omega, \omega', \omega''} \left\{ \sum_{i = 1}^s \omega_i + \lambda \sum_{i = 1}^s \omega_i' + \lambda \sum_{i = 1}^s \omega_i'' < 0\right\} = 2I + II,
    \end{aligned}
\end{equation}
where 
\[
    \begin{aligned}
        I & = \pr_{\omega, \omega', \omega''} \Bigg\{ \sum_{i = 1}^s \big[ (1 + 2\lambda) R_{1, i} - (\overline{R}_{1, s} + \lambda)\big] \omega_i + \sum_{i = 1}^s \lambda (1 + \lambda - \overline{R}_{1, s}) \omega_i'  \\
        & ~~~~~~~~~~~~~~~~~~~~~~~~~~~~~~~~~~~~~~~ - \sum_{i = 1}^s \lambda (\overline{R}_{1, s} + \lambda) \omega_i'' - C_2 (1 + 2\lambda) \Delta_k \bigg[ \sum_{i = 1}^s \omega_i + \lambda \sum_{i = 1}^s \omega_i' + \lambda \sum_{i = 1}^s \omega_i''\bigg] > 0\Bigg\}.
    \end{aligned}
\]
and 
\[
    II = \pr_{\omega, \omega', \omega''} \left\{ \sum_{i = 1}^s \omega_i + \lambda \sum_{i = 1}^s \omega_i' + \lambda \sum_{i = 1}^s \omega_i'' < 0\right\}.
\]
%To analyze the above probability,
For bounding $\pr (G_{1, s}^c \mid \his_{1, s})$, it suffices to bound $I^{(1)}$ and $II^{(2)}$ separately. we first study $II^{(1)}$. Write
\[
    \begin{aligned}
    	II & = \pr_{\omega, \omega', \omega''} \left( \sum_{i = 1}^s \omega_i + \lambda \sum_{i = 1}^s \omega_i' + \lambda \sum_{i = 1}^s \omega_i'' < 0\right) \\
    	& = \pr_{\omega, \omega', \omega''} \left( \frac{\sum_{i = 1}^s (\omega_i - 1) + \lambda \sum_{i = 1}^s (\omega_i' - 1) + \lambda \sum_{i = 1}^s (\omega_i'' - 1)}{\sqrt{s \sigma_{\omega}^2 (1 +2 \lambda^2)}} < - \frac{(1 + 2\lambda) s}{\sqrt{s \sigma_{\omega}^2 (1 +2 \lambda^2)}} \right) \\
    	& = \pr_{\omega, \omega', \omega''} \left( Z < - \frac{(1 + 2\lambda) }{\sigma_{\omega} \sqrt{1 + 2 \lambda^2}} \times \sqrt{s} \right) \\
    	\alignedoverset{\text{by Lemma \ref{lemma:gaussian_weight}}}{\leq} \frac{1}{2} \exp \left\{ - \frac{s (1 + 2 \lambda)^2}{2 \sigma_{\omega}^2 (1 + 2 \lambda^2)}\right\}.
    \end{aligned}
\]
For studying $I^{(1)}$, we define the following functions of $\{ R_{1,i}\}_{i = 1}^s$:
\[
   f \big(\{ R_{1,i}\}_{i = 1}^s\big) = f_1 \big(\{ R_{1,i}\}_{i = 1}^s \big) + f_2 \big(\{ R_{1,i}\}_{i = 1}^s \big) - f_3 \big(\{ R_{1,i}\}_{i = 1}^s \big)
\]
with 
\[
    \begin{aligned}
    	f_1 \big(\{ R_{1,i}\}_{i = 1}^s \big) & = \sum_{i = 1}^s \big[ (R_{1, i} - \overline{R}_{1, s}) + \lambda (2 \overline{R}_{1, i} - 1) - (1 + 2 \lambda ) C_2 \Delta_k  \big] \omega_i,
    \end{aligned}
\]
\[
    f_2 \big(\{ R_{1,i}\}_{i = 1}^s \big) = \sum_{i = 1}^s \big[ \lambda (\lambda + 1  - \overline{R}_{1, s}) - (1 + 2 \lambda ) C_2 \Delta_k  \big] \omega_i',
\]
and
\[
    f_3 \big(\{ R_{1,i}\}_{i = 1}^s \big) = \sum_{i = 1}^s \big[ \lambda (\overline{R}_{1, s} + \lambda) + (1 + 2 \lambda ) C_2 \Delta_k  \big] \omega_i''.
\]
%\[
    %f_1 \big(\{ R_{1,i}\}_{i = 1}^s \big) = \sum_{i = 1}^s \big[ (1 + 2 \lambda) R_{1, i} - \lambda - \overline{R}_{1, s} - C_3 (1 + 2 \lambda) \Delta_k \big] \omega_i,
%\]
%\[
    %f_2 \big(\{ R_{1,i}\}_{i = 1}^s \big) = \sum_{i = 1}^s \big[ \lambda (1 + 2 \lambda) - \lambda^2 - \overline{R}_{1, s} \lambda - \lambda C_3 (1 + 2 \lambda) \Delta_k \big] \omega_i',
%\] 
%\[
    %f_3 \big(\{ R_{1,i}\}_{i = 1}^s \big) =  \sum_{i = 1}^s \big[ (\lambda^2 + \lambda \overline{R}_{1, s}) + \lambda C_3 (1 + 2\lambda) \Delta_k \big] \omega_i''.
%\]
Then we can write $I^{(1)} = 2 \pr_{\omega, \omega', \omega'} (f_1 + f_2 - f_3 > 0)$. Note that $f_1, f_2$, and $f_3$ are mutually independent conditioning on $\his_{1, s}$. Then 
%\todorw{Few decomposition are "by luck". You must have a reason to do so (e.g., is "independent" the motivation for such a decomposition? and they are independent for a reason.). Always share the motivation with readers. It also helps us to think more clearer about our proof strategy. }
the expectation is
\[
    \E \big[ f_1 + f_2 - f_3 \mid \his_{1, s}\big] = - 3 C_2 (1 + 2 \lambda) \Delta_k s,
\] 
and the variance is
\begin{equation}\label{bound_var_f}
    \begin{aligned}
        & \var \big(f_1 + f_2 - f_3 \mid \his_{1, s} \big) \\
        = & \sigma_{\omega}^2 \Bigg [ \sum_{i = 1}^s \big[ (R_{1, i} - \overline{R}_{1, s}) + \lambda (2 \overline{R}_{1, i} - 1) - (1 + 2 \lambda ) C_2 \Delta_k  \big]^2 \\
        & \qquad \qquad \qquad + \sum_{i = 1}^s \big[ \lambda (\lambda + 1  - \overline{R}_{1, s}) - (1 + 2 \lambda ) C_2 \Delta_k  \big]^2 + \sum_{i = 1}^s \big[ \lambda (\overline{R}_{1, s} + \lambda) + (1 + 2 \lambda ) C_2 \Delta_k  \big]^2 \Bigg] \\ 
        = & \sigma_{\omega}^2 \Big[ V_1 + V_2 + V_3 \Big],
    \end{aligned}
 \end{equation}
 %\todorw{Isn't I already used?}
 where 
 \[
     V_1 = \sum_{i = 1}^s \big[ (R_{1, i} - \overline{R}_{1, s}) + \lambda (2 \overline{R}_{1, i} - 1) - (1 + 2 \lambda ) C_2 \Delta_k  \big]^2,
 \]
 \[
      V_2 = \sum_{i = 1}^s \big[ \lambda (\lambda + 1  - \overline{R}_{1, s}) - (1 + 2 \lambda ) C_2 \Delta_k  \big]^2,
 \]
 and
 \[
      V_3 = \sum_{i = 1}^s \big[ \lambda (\overline{R}_{1, s} + \lambda) + (1 + 2 \lambda ) C_2 \Delta_k  \big]^2.
 \]
 For bounding the conditional variance above, we calculate its components as follows. 
 \begin{equation}\label{bound_I}
 	\begin{aligned}
 	    V_1 & = \sum_{i = 1}^s  \big[ (R_{1, i} - \overline{R}_{1, s}) + \lambda (2 \overline{R}_{1, i} - 1) \big]^2  + s \lambda^2 (\lambda + 1  - \overline{R}_{1, s})^2 + s \lambda^2 (\overline{R}_{1, s} + \lambda)^2 \\
 	    & = \sum_{i = 1}^s (R_{1, i} - \overline{R}_{1, s})^2 + 2 s \lambda^2 \big( 3 \overline{R}_{1, s}^2 - 3 \overline{R}_{1, s} + \lambda^2 + \lambda + 1 \big) \\
 	    & = \sum_{i = 1}^s (R_{1, i} - \mu_1)^2 + (6 \lambda^2 - 1)s (\overline{R}_{1, s} - \mu_1)^2 + 6  s \lambda^2 (2 \mu_1 - 3)  (\overline{R}_{1, s} - \mu_1) + 2 s \lambda^2 \big[\lambda^2 + \lambda + 1 + 3 \mu_1 (\mu_1 - 1) \big] \\
 	    \alignedoverset{\text{Cauchy inequality}}{\leq} 6 \lambda^2 \sum_{i = 1}^s (R_{1, i} - \mu_1)^2  + 6  s \lambda^2 (2 \mu_1 - 1)  (\overline{R}_{1, s} - \mu_1) + 2 s \lambda^2 \big( \lambda^2 + \lambda + 1 - \mu_1 (1 - \mu_1) \big) \\
 	    & = 6 \lambda^2 \sum_{i = 1}^s (R_{1, i} - \mu_1)^2 + 6  s \lambda^2 (2 \mu_1 - 1)  (\overline{R}_{1, s} - \mu_1)  + 2 s \lambda^2 \left( \lambda^2 + \lambda + 1 - \mu_1 (1 - \mu_1)\right),
 	\end{aligned}
 \end{equation}
 \[
     \begin{aligned}
         V_2 & = - 2 (1 + 2\lambda) C_3 \Delta_k \bigg[ \sum_{i = 1}^s  \big( (R_{1, i} - \overline{R}_{1, s}) + \lambda (2 \overline{R}_{1, i} - 1) \big) + s \lambda (\lambda + 1  - \overline{R}_{1, s}) -  s \lambda (\overline{R}_{1, s} + \lambda) \bigg] \\
         & = - 2 s (1 + 2\lambda) C_3 \Delta_k \Big[ \big(\overline{R}_{1, s} - \overline{R}_{1, s}) + \lambda (2 \overline{R}_{1, i} - 1) + \lambda (\lambda + 1  - \overline{R}_{1, s}) \big) -  \lambda (\overline{R}_{1, s} + \lambda) \Big] = 0,
     \end{aligned}
 \]
 and
$$V_3= 3 s (1 + 2\lambda)^2 C_3^2 \Delta_k^2. $$ 
Therefore, the conditional variance in \eqref{bound_var_f} is bounded by
 \begin{equation}\label{bound_var_f_new}
 	\begin{aligned}
 		 \sigma_{\omega}^{-2}\var \big(f_1 + f_2 - f_3 \mid \his_{1, s} \big) & \leq I^{(2)} + II^{(2)} + III^{(2)} \\
 		 & \leq  \underbrace{6 \lambda^2 \sum_{i = 1}^s (R_{1, i} - \mu_1)^2 + 6 s \lambda^2 (2\mu_1 - 1) \left( \overline{R}_{1, s} - \mu_1\right)}_{\text{the random part}} \\
 		 & ~~~~~~~~~~~~~~~~~~~ + \underbrace{2 s \lambda^2 \left( \lambda^2 + \lambda + 1 - 3 \mu_1 (1 - \mu_1)\right) + 3 s (1 + 2\lambda)^2 C_2^2 \Delta_k^2}_{\text{the determined part}} \\
 		 & = \mathfrak{R}_1 + \mathfrak{R}_2 + \mathfrak{D},
 	\end{aligned}
 \end{equation}
where
 \[
     \mathfrak{R}_1 = 6 \lambda^2 \sum_{i = 1}^s (R_{1, i} - \mu_1)^2, \qquad \mathfrak{R}_2 =  6 s \lambda^2 (2\mu_1 - 1) \left( \overline{R}_{1, s} - \mu_1\right) + 2 s \lambda^2 \left( \lambda^2 + \lambda + 1 - 3 \mu_1 (1 - \mu_1)\right),
 \]
and
\[
    \mathfrak{D} = 3 s (1 + 2\lambda)^2 C_2^2 \Delta_k^2.
\]
Obviously, $\mathfrak{R}_1$ and $\mathfrak{R}_2$ are strictly positive. Furthermore, for $\mathfrak{R}_2$, we can also prove it is non-positive with a high probability.
%For the random quantity,
 %\[
     %\mathfrak{R}_2 = 6 s \lambda^2 (2\mu_1 - 1) \left( \overline{R}_{1, s} - \mu_1\right) + 2 s \lambda^2 \left( \lambda^2 + \lambda + 1 - 3 \mu_1 (1 - \mu_1)\right),
%\]
%we can prove it is non-positive with a high probability. 
Indeed, note that $0 \leq 3 \mu_1 (1 - \mu_1) \leq \frac{3}{4}$, we have
 \begin{equation}\label{R_2_pr}
     \begin{aligned}
     	\pr ( \mathfrak{R}_2 > 0) & = \pr \left( 3 (2 \mu_1 - 1)\left( \overline{R}_{1, s} - {\mu_1} \right) > \lambda^2 + \lambda + 1 - 3 \mu_1 (1 - \mu_1)\right) \\
     	& \leq \pr \left( \Big| 3 (2 \mu_1 - 1)\left( \overline{R}_{1, s} - {\mu_1} \right)\Big| > \Big| \lambda^2 + \lambda + 1 - 3 \mu_1 (1 - \mu_1) \Big| \right) \ind\left( \mu_1 \neq  \frac{1}{2}\right) + \pr (\varnothing) \times \ind\left( \mu_1 = \frac{1}{2}\right) \\
     	\alignedoverset{0 \leq 3 \mu_1 (1 - \mu_1) \leq \frac{3}{4}}{\leq} \pr \left( \Big| \overline{R}_{1, s} - {\mu_1} \Big| > \frac{\lambda^2 + \lambda + 1 / 4}{|3 (2 \mu_1 - 1)|} \right) \ind\left( \mu_1 \neq  \frac{1}{2}\right) + 0 \times \ind\left( \mu_1 = \frac{1}{2}\right) \\
     	\alignedoverset{\text{by sub-Gaussian inequality}}{\leq} \exp \left\{ - \frac{(\lambda^2 + \lambda + 1 / 4)^2 s}{18 \sigma_{\omega}^2 (2 \mu_1 - 1)^2}\right\} \times \ind\left( \mu_1 \neq \frac{1}{2}\right) + 0 \times \ind\left( \mu_1 = \frac{1}{2}\right).
     \end{aligned}
 \end{equation}
Then by applying Lemma \ref{lemma:gaussian_weight} again,
\[
    \begin{aligned}
        \pr (G_{1, s}^c \mid \his_{1, s}) & = 2 I + II\\
        & = 2 \pr \left(\frac{f_1 + f_2 - f_3 - \E \big( f_1 + f_2 - f_3 \mid \his_{1, s}\big)}{\sqrt{\var \big(f_1 + f_2 - f_3 \mid \his_{1, s}\big)}} > - \frac{\E \big( f_1 + f_2 - f_3 \mid \his_{1, s}\big)}{\sqrt{\var \big(f_1 + f_2 - f_3 \mid \his_{1, s}\big)}} \right) \\
        & ~~~~~~~~~~~ + \frac{1}{2} \exp \left\{ - \frac{s (1 + 2 \lambda)^2}{2 \sigma_{\omega}^2 (1 + 2 \lambda^2)}\right\} \\
        & \leq 2 \exp \left\{ - \frac{\E^2 \big( f_1 + f_2 - f_3 \mid \his_{1, s}\big)}{2 \var \big(f_1 + f_2 - f_3 \mid \his_{1, s}\big)} \right\} + \frac{1}{2} \exp \left\{ - \frac{s (1 + 2 \lambda)^2}{2 \sigma_{\omega}^2 (1 + 2 \lambda^2)}\right\}.
        %(\text{by \eqref{bound_var_f}} )& \leq 2 \exp \left\{ - \frac{9 C_3^2 (1 + 2 \lambda)^2 \Delta_k^2 s^2 }{2 \sigma_{\omega}^2 \big[ \mathbb{G}_{1}(s, \lambda) + \mathbb{G}_{2}(s, \lambda) + g_3(s, \lambda) \big]}\right\}.
    \end{aligned}
\]
Therefore, combining with \eqref{bound_var_f_new}, 
$a_{k, s, 2}$ will be bounded by 
\[
    \begin{aligned}
        a_{k, s, 2} & \leq \E \Big[ T  \ind(G_{1, s}^c)\Big] =  T \E \Big[ \pr(G_{1, s}^c \mid \his_{1, s}) \Big] \\
        \alignedoverset{\text{by \eqref{pr_G_bound}}}{\leq} \leq T \big[ 2 I + II \big] \\
        & \leq 2 T \E \exp \left\{ - \frac{\E^2 \big( f_1 + f_2 - f_3 \mid \his_{1, s}\big)}{2 \var \big(f_1 + f_2 - f_3 \mid \his_{1, s}\big)} \right\} + \frac{T}{2} \exp \left\{ - \frac{s (1 + 2 \lambda)^2}{2 \sigma_{\omega}^2 (1 + 2 \lambda^2)}\right\} \\
        & = T \E \exp \left\{ - \frac{\E^2 \big( f_1 + f_2 - f_3 \mid \his_{1, s}\big)}{2 \var \big(f_1 + f_2 - f_3 \mid \his_{1, s}\big)} \right\} \big( \ind (\mathfrak{R}_2 \leq 0) + \ind(\mathfrak{R}_2 > 0) \big) + \frac{T}{2} \exp \left\{ - \frac{s (1 + 2 \lambda)^2}{2 \sigma_{\omega}^2 (1 + 2 \lambda^2)}\right\}\\
        \alignedoverset{\text{by the decomposition \eqref{bound_var_f_new}}}{\leq} T \E \exp \left\{ - \frac{\E^2 \big( f_1 + f_2 - f_3 \mid \his_{1, s}\big)}{2 \sigma_{\omega}^2 \big(\mathfrak{R}_1 + \mathfrak{R}_2 + \mathfrak{D} \big)}\right\} \ind (\mathfrak{R}_2 \leq 0) + T \pr (\mathfrak{R}_2 > 0) +  \frac{T}{2} \exp \left\{ - \frac{s (1 + 2 \lambda)^2}{2 \sigma_{\omega}^2 (1 + 2 \lambda^2)}\right\}\\
        & \leq T \E \exp \left\{ - \frac{\E^2 \big( f_1 + f_2 - f_3 \mid \his_{1, s}\big)}{2 \sigma_{\omega}^2 \big(\mathfrak{R}_1 + \mathfrak{D} \big)}\right\} + T \pr (\mathfrak{R}_2 > 0) + \frac{T}{2} \exp \left\{ - \frac{s (1 + 2 \lambda)^2}{2 \sigma_{\omega}^2 (1 + 2 \lambda^2)}\right\}.
    \end{aligned}
\]
%On the other hand, 
Recall $R_{1, i} - \mu_1 \sim \subG(\sigma_1^2)$, and then $(R_{1, i} - \mu_1)^2 - \var (R_{1, i}) := \xi_i \sim \subE(8 \sqrt{2} \sigma_1^2)$.
%hence by Corollary 4.2 (a) in \cite{zhang2020concentration},
%\[
    %\sum_{i = 1}^s (R_{1, i} - \mu_1)^2 - s \var (R_{1, i}) \, \sim \, \subE (8 \sqrt{2 s} \sigma_1^2).
%\]
Therefore, for furthermore bounding $\mathfrak{R}_1 + \mathfrak{D}$, we have
\begin{equation}\label{G_function_bound}
	\begin{aligned}
		\mathfrak{R}_1 + \mathfrak{D} & = 6 \lambda^2 \left[ \sum_{i = 1}^s (R_{1, i} - \mu_1)^2 - s \var (R_{1, i}) \right] + \left[ s \var (R_{1, i}) + 3 s(1 + 2\lambda)^2 C_2^2 \Delta_k^2\right] \\
             & = 6 \lambda^2 \sum_{i = 1}^s \xi_i + s \left[ 6 \lambda^2\var (R_{1, i}) + 3 (1 + 2\lambda)^2 C_2^2 \Delta_k^2\right] \\
		%& =  6 \lambda^2 \subE (8 \sqrt{2 s} \sigma_1^2) + s \left[ 6 \lambda^2\var (R_{1, i}) + 3 (1 + 2\lambda)^2 C_2 \Delta_k^2\right] \\
		\alignedoverset{\text{by $\var(R_{1, i}) \leq \sigma_1^2$}}{\leq}  6\lambda^2 \sum_{i = 1}^s \xi_i + 3 s \left[ 2 \lambda^2 \sigma_1^2 + (1 + 2\lambda)^2 C_2^2 \Delta_k^2\right]  \\
        & = 3 s \left[ 2 \lambda^2 \overline{\xi} +  2 \lambda^2 \sigma_1^2 + (1 + 2\lambda)^2 C_2^2 \Delta_k^2  \right],
	\end{aligned}
\end{equation}
where $\overline{\xi} = \frac{1}{s} \sum_{i = 1}^s \xi_i$. Then %applying Lemma \ref{lem_subE_ineq} in \eqref{G_function_bound} will further give
\begin{equation}\label{a_2_bound}
	\begin{aligned}
		a_{k, s, 2} & \leq T \E \exp \left\{ - \frac{\E^2 \big( f_1 + f_2 - f_3 \mid \his_{1, s}\big)}{2 \sigma_{\omega}^2 \big(\mathfrak{R}_1 + \mathfrak{D} \big)}\right\} + T \pr (\mathfrak{R}_2 > 0) + \frac{T}{2} \exp \left\{ - \frac{s (1 + 2 \lambda)^2}{2 \sigma_{\omega}^2 (1 + 2 \lambda^2)}\right\} \\
		\alignedoverset{\text{by \eqref{G_function_bound} and \eqref{R_2_pr}}}{\leq} T \E\exp \left\{ - \frac{9 C_2^2 (1 + 2\lambda)^2 \Delta_k^2 s^2 }{6  \sigma_{\omega}^2 s  \left[ 2 \lambda^2 \overline{\xi} +  2 \lambda^2 \sigma_1^2 + (1 + 2\lambda)^2 C_2^2 \Delta_k^2  \right] }\right\} \\
		 & ~~~~~~~~~~~~~~~ + T \left[ \exp \left\{ - \frac{(\lambda^2 + \lambda + 1 / 4)^2 s}{18 \sigma_{\omega}^2 (2 \mu_1 - 1)^2}\right\} \times \ind\left( \mu_1 \neq \frac{1}{2}\right) + 0 \times \ind\left( \mu_1 = \frac{1}{2}\right) \right] \\
           & ~~~~~~~~~~~~~~~ + \frac{T}{2} \exp \left\{ - \frac{s (1 + 2 \lambda)^2}{2 \sigma_{\omega}^2 (1 + 2 \lambda^2)}\right\} \\
		 & = T \E\exp \left\{ - \frac{3 C_2^2 (1 + 2\lambda)^2 \Delta_k^2 s }{ 2 \sigma_{\omega}^2  \left[ 2 \lambda^2 \overline{\xi} +  2 \lambda^2 \sigma_1^2 + (1 + 2\lambda)^2 C_2^2 \Delta_k^2  \right] }\right\} \\
		 & ~~~~~~~~~~~~~~~ + T \left[ \exp \left\{ - \frac{(\lambda^2 + \lambda + 1 / 4)^2 s}{18 \sigma_{\omega}^2 (2 \mu_1 - 1)^2}\right\} \times \ind\left( \mu_1 \neq \frac{1}{2}\right) + 0 \times \ind\left( \mu_1 = \frac{1}{2}\right) \right] \\
          & ~~~~~~~~~~~~~~~ + \frac{T}{2} \exp \left\{ - \frac{s (1 + 2 \lambda)^2}{2 \sigma_{\omega}^2 (1 + 2 \lambda^2)}\right\} .
	\end{aligned}
\end{equation}
The next step is to apply Lemma \ref{lem_subE_ineq}. Take 
\[
     \mathfrak{a}_1 = \frac{4 \sigma_{\omega}^2 \lambda^2}{3 C_2^2 (1 + 2\lambda)^2 \Delta_k^2} , \qquad \mathfrak{b}_1 = \frac{2 \sigma_{\omega}^2 \left[ 2 \lambda^2 \sigma_1^2 + (1 + 2 \lambda)^2 C_2^2 \Delta_k^2 \right]}{3 C_2^2 (1 + 2 \lambda)^2 \Delta_k^2}, \qquad \lambda_i \equiv 8 \sqrt{2} \sigma_1^2, \qquad i \in [s],
\]
and $\Lambda_1 = (s^{-1}\sum_{i = 1}^s \lambda_i^2)^{1 / 2} = 8 \sqrt{2} \sigma_1^2$, then Lemma \ref{lem_subE_ineq} gives 
\begin{equation}\label{app_sub_E_ineq_1}
    \begin{aligned}
        & \E\exp \left\{ - \frac{3 C_2^2 (1 + 2\lambda)^2 \Delta_k^2 s }{ 2 \sigma_{\omega}^2  \left[ 2 \lambda^2 \overline{\xi} +  2 \lambda^2 \sigma_1^2 + (1 + 2\lambda)^2 C_2^2 \Delta_k^2  \right] }\right\} \\
        & = \E \exp \left\{ - \frac{s}{\frac{4 \sigma_{\omega}^2 \lambda^2}{3 C_2^2 (1 + 2\lambda)^2 \Delta_k^2} \overline{\xi}  + \frac{2 \sigma_{\omega}^2 \left[ 2 \lambda^2 \sigma_1^2 + (1 + 2 \lambda)^2 C_2^2 \Delta_k^2 \right]}{3 C_2^2 (1 + 2 \lambda)^2 \Delta_k^2}}\right\} = \E \exp \left\{ - \frac{s}{\mathfrak{a}_1 \overline{\xi} + \mathfrak{b}_1} \right\} \\
        & \leq \left[  1 + \frac{\sqrt{\pi \mathfrak{b}_1}}{2} + \frac{\sqrt{2 \pi} \Lambda_1 \mathfrak{a}_1}{2 \mathfrak{b}_1^2} + \frac{2 \mathfrak{a}_1}{\mathfrak{b}_1} \right]  \exp \bigg\{ - \frac{s}{\sqrt{2 \Lambda_1\mathfrak{a}_1} \vee (\mathfrak{b}_1 + \Lambda_1\mathfrak{a}_1)}\bigg\}  \\
        \alignedoverset{\text{by } \sqrt{2 \Lambda_1\mathfrak{a}_1} \vee (\mathfrak{b}_1 + \Lambda_1\mathfrak{a}_1) \leq \mathfrak{b}_1 + \Lambda_1\mathfrak{a}_1 + \Lambda_1+ \mathfrak{a}_1}{\leq} \left[  1 + \frac{\sqrt{\pi \mathfrak{b}_1}}{2}  + \frac{\sqrt{2 \pi} \Lambda_1\mathfrak{a}_1}{2 \mathfrak{b}_1^2} + \frac{2 \mathfrak{a}_1}{\mathfrak{b}_1} \right]  s \exp \bigg\{ - \frac{s}{\Lambda_1+ \mathfrak{a}_1 + \mathfrak{b}_1 + \Lambda_1\mathfrak{a}_1}\bigg\}.
        %& = \Bigg[ 1 + \frac{1}{32 \sqrt{2} \sigma_1^2}\bigg( \sigma_1^2 + \frac{(1 +2 \lambda)^2}{2 \lambda^2}\bigg) + \sqrt{\frac{\pi}{2}} \frac{24 \sqrt{2} \sigma_1^2 C_2^2 (1 + 2 \lambda)^2 \lambda^2 \Delta_k^2 }{\sigma_{\omega}^2 \left[ 2 \lambda^2 \sigma_1^2 + (1 + 2 \lambda)^2 C_2^2 \Delta_k^2 \right]} \\
        %& ~~~~~~~~~~~~~~~~~~~~~~~~~~~~~~~~~~~~~~~~~~~~~~~~~~~~~~~~~~~~~~~ + \frac{4 \lambda^2}{2 \lambda^2 \sigma_1^2 + (1 + 2\lambda)^2 C_2^2 \Delta_k^2} \Bigg] \exp \left\{ - \frac{s}{}\right\}
    \end{aligned}
\end{equation}
%We denote $h_{\ell} = h_{\ell} (\cdot, \cdot, \cdot, \cdot) > 0$ as the different functions with different $\ell \in \mathbb{N}$, 
%then Lemma \ref{lem_subE_ineq} indicates there exists $h_1 (\cdot, \cdot, \cdot, \cdot) > 0$ such that
Hence, by taking 
\[
    \begin{aligned}
        s & \geq \big[ \Lambda_1+ \mathfrak{a}_1 + \mathfrak{b}_1 + \Lambda_1\mathfrak{a}_1 \big] \left(\log^{-1}2 \times \log \left\{ 3 \left[  1 + \frac{\sqrt{\pi \mathfrak{b}_1}}{2}  + \frac{\sqrt{2 \pi} \Lambda_1\mathfrak{a}_1}{2 \mathfrak{b}_1^2} + \frac{2 \mathfrak{a}_1}{\mathfrak{b}_1} \right]  \right\}  + 3\right) \log T \\
        & \geq \big[ \Lambda_1+ \mathfrak{a}_1 + \mathfrak{b}_1 + \Lambda_1\mathfrak{a}_1 \big] \log \left\{ 3 \left[  1 + \frac{\sqrt{\pi \mathfrak{b}_1}}{2}  + \frac{\sqrt{2 \pi} \Lambda_1\mathfrak{a}_1}{2 \mathfrak{b}_1^2} + \frac{2 \mathfrak{a}_1}{\mathfrak{b}_1} \right] T^3 \right\},
    \end{aligned}
\]
we have
\[
    \E\exp \left\{ - \frac{3 C_2^2 (1 + 2\lambda)^2 \Delta_k^2 s }{ 2 \sigma_{\omega}^2  \left[ 2 \lambda^2 \overline{\xi} +  2 \lambda^2 \sigma_1^2 + (1 + 2\lambda)^2 C_2^2 \Delta_k^2  \right] }\right\}  \leq \frac{s}{3 T^3} \leq \frac{1}{3T^2}.
\]
%\[
    %\log \E\exp \left\{ - \frac{3 C_2^2 (1 + 2\lambda)^2 \Delta_k^2 s }{ 2 \sigma_{\omega}^2  \left[ 2 \lambda^2 \overline{\xi} +  2 \lambda^2 \sigma_1^2 + (1 + 2\lambda)^2 C_2^2 \Delta_k^2  \right] }\right\} \leq - s \times h_1 (\sigma_1, \sigma_{\omega},\lambda, \Delta_k, C_2).
%\]
%By bounding \eqref{a_2_bound} below $T^{-1}$, we solve 
%\[
    % \E\exp \left\{ - \frac{3 C_2^2 (1 + 2\lambda)^2 \Delta_k^2 s }{ 2 \sigma_{\omega}^2  \left[ 2 \lambda^2 \overline{\xi} +  2 \lambda^2 \sigma_1^2 + (1 + 2\lambda)^2 C_2^2 \Delta_k^2  \right] }\right\} \leq \frac{1}{3T}
%\]
%and get the solution that
%\[
    %s \geq \frac{\log 3 + 2 \log T}{h_1(\sigma_1, \sigma_{\omega},\lambda, \Delta_k, C_2)}.
%\]
Similarly, 
\[
    T \left[ \exp \left\{ - \frac{(\lambda^2 + \lambda + 1 / 4)^2 s}{18 \sigma_{\omega}^2 (2 \mu_1 - 1)^2}\right\} \times \ind\left( \mu_1 \neq \frac{1}{2}\right) + 0 \times \ind\left( \mu_1 = \frac{1}{2}\right) \right] \leq \frac{1}{3T}, 
\]
and
\[
    \frac{T}{2} \exp \left\{ - \frac{s (1 + 2 \lambda)^2}{2 \sigma_{\omega}^2 (1 + 2 \lambda^2)}\right\} \leq \frac{1}{3T}
\]
have the solution
\[
    s \geq \frac{18 \sigma_{\omega}^2 (2 \mu_1 - 1)^2}{(\lambda^2 + \lambda + 1 / 4)^2} \left[ \log 3 + 2 \log T \right].
\]
and
\[
    s \geq \frac{2 \sigma_{\omega}^2 (1 + 2 \lambda^2)}{(1 + 2 \lambda)^2} \left[ \log (3 / 2) + 2 \log T \right]
\]
respectively.
Therefore, one will have $a_{k, s, 2} \leq \frac{1}{T}$  if we take 
\[
    \begin{aligned}
        s_{a, 2}(T) = \Bigg[ \big[ \Lambda_1+ \mathfrak{a}_1 + \mathfrak{b}_1 + \Lambda_1\mathfrak{a}_1 \big] & \left(\frac{1}{3}\log^{-1}2 \times \log \left\{ 3 \left[  1 + \frac{\sqrt{\pi \mathfrak{b}_1}}{2}  + \frac{\sqrt{2 \pi} \Lambda_1\mathfrak{a}_1}{2 \mathfrak{b}_1^2} + \frac{2 \mathfrak{a}_1}{\mathfrak{b}_1} \right]  \right\}  + 1\right)\\
        &  \vee \frac{18 \sigma_{\omega}^2 (2 \mu_1 - 1)^2}{(\lambda^2 + \lambda + 1 / 4)^2} \vee \frac{2 \sigma_{\omega}^2 (1 + 2 \lambda^2)}{(1 + 2 \lambda)^2} \Bigg] \times 3 \log T
    \end{aligned}
\]
for any $T \geq 2$.
\end{proof}

\begin{lemma}[Bounding $a_{k, s, 3}$ at \eqref{def_a_k_s_3}]\label{lem_bounding_a_k_s_3}
    Take 
    \[
        \begin{aligned}
             s_{a, 3}(T) = \Bigg[ \big[ \Lambda_1+ \mathfrak{a}_2 + \mathfrak{b}_2 + \Lambda_1\mathfrak{a}_2 \big] & \left(\frac{1}{3}\log^{-1}2 \times \log \left\{ 3 \left[  1 + \frac{\sqrt{\pi \mathfrak{b}_2}}{2}  + \frac{\sqrt{2 \pi} \Lambda_1\mathfrak{a}_2}{2 \mathfrak{b}_2^2} + \frac{2 \mathfrak{a}_2}{\mathfrak{b}_2} \right]  \right\}  + 1\right) \\
             & \vee \frac{18 \sigma_{\omega}^2 (2 \mu_1 - 1)^2}{(\lambda^2 + \lambda + 1 / 4)^2} \vee \frac{2 \sigma_{\omega}^2 (1 + 2 \lambda^2)}{(1 + 2 \lambda)^2} \Bigg] \times 3 \log T
        \end{aligned}
    \]
    where 
    \[
        \mathfrak{a}_2 = \frac{\sigma_{\omega}^2 \lambda^2}{3 (\lambda - 1)^2 C_1^2 \Delta_k^2}, \qquad \mathfrak{b}_2 = \frac{\sigma_{\omega}^2\left[ 2 \lambda^2 \sigma_1^2 + 25 (1 + 2\lambda)^2 C_1^2 \Delta_k^2 \right]}{6 (\lambda - 1)^2 C_1^2 \Delta_k^2}, \qquad \Lambda_1= 8 \sqrt{2} \sigma_1^2.
    \]
    Then for any $s \geq s_{a, 3}(T)$ and $\lambda > 1$,
    \[
        a_{k, s, 3} \leq T^{-1}.
    \]
    for any $T \geq 2$.
\end{lemma}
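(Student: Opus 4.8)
The plan is to mirror the proof of Lemma~\ref{lem_bounding_a_k_s_2} almost verbatim, the only genuinely new ingredient being the reduction of the ``both good events'' quantity $a_{k,s,3}$ to a one-sided weight-tail of the optimal arm. Conditioning on the rewards $\his_{1,s}$, the factor $\ind(G_{1,s})$ integrates out to $\pr(G_{1,s}\mid\his_{1,s})\le 1$, so
\[
a_{k,s,3}=\E_{\his}\Bigl[(N_{1,s}(\tau_k)\wedge T)\,\ind(A_{1,s})\,\pr(G_{1,s}\mid\his_{1,s})\Bigr]\le \E_{\his}\Bigl[(N_{1,s}(\tau_k)\wedge T)\,\ind(A_{1,s})\Bigr].
\]
I would then record the elementary bound $N_{1,s}(\tau_k)\wedge T\le 2T\bigl(1-Q_{1,s}(\tau_k)\bigr)$, valid for every $s$ and $T\ge1$ (check $1-Q_{1,s}\le\tfrac12$ and $1-Q_{1,s}>\tfrac12$ separately). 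This leaves the left tail $1-Q_{1,s}(\tau_k)=\pr(\overline{Y}_{1,s}\le\tau_k\mid\his_{1,s})$ to be controlled on the good reward event $A_{1,s}$.

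Second, I would exploit the choice of threshold $\tau_k$. Placing $\tau_k$ strictly between the two shifted means, e.g.\ at the midpoint $\tau_k=\frac{\mu_1+\lambda}{1+2\lambda}-\frac{\Delta_k}{2(1+2\lambda)}$, the event $A_{1,s}$ forces $\overline{R}^*_{1,s}>\frac{\mu_1+\lambda}{1+2\lambda}-C_1\Delta_k$, whence
\[
\{\overline{Y}_{1,s}\le\tau_k\}\subseteq\Bigl\{\overline{R}^*_{1,s}-\overline{Y}_{1,s}\ge\tfrac{\lambda-1}{6\lambda(1+2\lambda)}\Delta_k\Bigr\}\qquad\text{on }A_{1,s},
\]
since $\frac{\Delta_k}{2(1+2\lambda)}-C_1\Delta_k=\frac{\lambda-1}{6\lambda(1+2\lambda)}\Delta_k$. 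This is exactly where the hypothesis $\lambda>1$ enters, as it makes the margin strictly positive, and it is the source of the factors $(\lambda-1)$ and $C_1$ appearing in $\mathfrak{a}_2,\mathfrak{b}_2$: after clearing the $(1+2\lambda)$ in the denominator of $\overline{Y}_{1,s}$, the effective threshold becomes $(\lambda-1)C_1\Delta_k$, playing the role that $(1+2\lambda)C_2\Delta_k$ played in Lemma~\ref{lem_bounding_a_k_s_2}.

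Third, with the tail now expressed as a one-sided deviation of a Gaussian-weighted sum in $\{\omega_i,\omega_i',\omega_i''\}$, I would run the identical machine as in Lemma~\ref{lem_bounding_a_k_s_2}: apply Lemma~\ref{lem_gau_frac} to linearize the ratio, apply the Gaussian-weight estimate of Lemma~\ref{lemma:gaussian_weight} to convert the weighted-sum tail into an $\exp\{-\E^2/(2\var)\}$ form, and bound the conditional variance by splitting it into a sub-exponential random part $\mathfrak{R}_1=6\lambda^2\sum_i\xi_i$ with $\xi_i=(R_{1,i}-\mu_1)^2-\var(R_{1,i})\sim\subE(8\sqrt{2}\,\sigma_1^2)$, a part $\mathfrak{R}_2$ that is negative with high probability (controlled by the sub-Gaussian tail of $\overline{R}_{1,s}$, yielding the $\frac{18\sigma_\omega^2(2\mu_1-1)^2}{(\lambda^2+\lambda+1/4)^2}$ term), and a deterministic part $\mathfrak{D}$. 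I would then invoke the new concentration inequality of Lemma~\ref{lem_subE_ineq} with $\Lambda_1=8\sqrt{2}\,\sigma_1^2$ and the stated $\mathfrak{a}_2,\mathfrak{b}_2$ to bound $\E\exp\{-s/(\mathfrak{a}_2\overline\xi+\mathfrak{b}_2)\}$, together with the auxiliary term giving $\frac{2\sigma_\omega^2(1+2\lambda^2)}{(1+2\lambda)^2}$. Collecting the three exponential terms and forcing each to be at most $\tfrac13 T^{-1}$ produces precisely the stated $s_{a,3}(T)$, hence $a_{k,s,3}\le T^{-1}$ for $T\ge 2$.

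The main obstacle is the conditional-variance computation in the third step: one must recompute the analogues of $V_1,V_2,V_3$ for the one-sided threshold $(\lambda-1)C_1\Delta_k$ and track the numerical constants carefully (this is what produces the $6$ and the $25$ in $\mathfrak{b}_2$, and the single rather than quadruple coefficient in $\mathfrak{a}_2$), while verifying that $\mathfrak{R}_2$ is negative with the required probability and that $\overline\xi$ enters exactly in the affine form $\mathfrak{a}_2\overline\xi+\mathfrak{b}_2$ demanded by Lemma~\ref{lem_subE_ineq}. Everything downstream is a routine transcription of the $a_{k,s,2}$ argument.
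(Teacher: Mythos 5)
Your proposal follows the paper's proof in all of its essential machinery: the reduction of $a_{k,s,3}$ to a one-sided left tail of $\overline{Y}_{1,s}$ with a strictly positive margin on the good reward event (this is exactly where $\lambda>1$ enters, as in the paper), the linearization via Lemma~\ref{lem_gau_frac} and Lemma~\ref{lemma:gaussian_weight}, the variance split into $\mathfrak{R}_1+\mathfrak{R}_2+\mathfrak{D}$, and the application of Lemma~\ref{lem_subE_ineq} to $\E\exp\{-s/(\mathfrak{a}_2\overline{\xi}+\mathfrak{b}_2)\}$. Your two preliminary reductions are valid and arguably cleaner than the paper's: discarding $\ind(G_{1,s})$ costs nothing, because the margin $\Gamma_1=\tau_k-\overline{R}^*_{1,s}$ is $\his_{1,s}$-measurable and is already pinned down by $A_{1,s}$ alone (the paper's detour through $A_{1,s}\cap G_{1,s}$ in \eqref{Gamma_bound} is inessential and in fact loses a constant); and your pointwise bound $N_{1,s}(\tau_k)\wedge T\le 2T\,[1-Q_{1,s}(\tau_k)]$ is a correct substitute for the paper's inclusion \eqref{a_k_3_sw}, at the price of a factor $2$ that only perturbs the additive constants inside the logarithms.

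The one concrete inaccuracy is your claim that the midpoint threshold reproduces ``precisely the stated $s_{a,3}(T)$,'' including the single coefficient in $\mathfrak{a}_2$. It does not: the stated constants are tied to the paper's choice $\tau_1=\frac{\mu_1+\lambda}{1+2\lambda}-\frac{6\lambda C_1\Delta_k}{1+2\lambda}$ (with $C_1=\frac{1}{6\lambda}$ this sits at the suboptimal arm's shifted mean $\frac{\mu_k+\lambda}{1+2\lambda}$, a full $\frac{\Delta_k}{1+2\lambda}$ below the optimal one), which yields the margin $-\Gamma_1\ge \frac{2C_1(\lambda-1)\Delta_k}{1+2\lambda}$ and upper bound $-\Gamma_1\le 5C_1\Delta_k$; squaring gives the $36$ in the exponent's numerator and the $25$ in $\mathfrak{b}_2$, and hence $\mathfrak{a}_2=\frac{\sigma_\omega^2\lambda^2}{3(\lambda-1)^2C_1^2\Delta_k^2}$. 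Your midpoint $\tau_k=\frac{\mu_1+\lambda}{1+2\lambda}-\frac{\Delta_k}{2(1+2\lambda)}$ gives exactly half that margin, $\frac{(\lambda-1)C_1\Delta_k}{1+2\lambda}$, so the numerator constant becomes $9$ rather than $36$ and both $\mathfrak{a}_2$ and the $\sigma_1^2$-part of $\mathfrak{b}_2$ come out four times larger than stated --- i.e.\ you would prove the lemma only with an $s_{a,3}(T)$ inflated by roughly a factor of $4$. This is a constants-level defect rather than a structural one (and your midpoint has the side benefit of being a single threshold usable consistently for both the $a_k$ and $b_k$ parts of Lemma~\ref{lem.a.b}, whereas the paper's proofs of the $a$- and $b$-lemmas tacitly use two different $\tau_k$'s); substituting the paper's threshold into your pipeline recovers the stated $\mathfrak{a}_2$, $\mathfrak{b}_2$.
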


\begin{proof}
    Different from the proofs for bounding $a_{k, s, 1}$ and $a_{k, s, 2}$, in which we control the probability of the bad events,
$a_{k, s, 3}$ is defined on good event instead. We need a different technique to deal with $a_{k, s, 3}$. Note that 
%\todorw{The "(reason) = " is very confusing to read in this layout. }
\begin{equation}\label{a_k_3_sw}
    \begin{aligned}
        \{N_{1, s}(\tau_{k})<T^{-1}\} & \supseteq \{N_{1, s}(\tau_{k}) < (T^{2}-1)^{-1}\} \\
        \alignedoverset{\text{by } (T^2 - 1)^{-1} \leq T^{-1}}{=} \big\{Q_{1, s}\left(\tau_{k}\right)^{-1}<1+\left(T^{2}-1\right)^{-1}\big\} \\
        & = \big\{Q_{1, s}(\tau_{k})^{-1} < (1-T^{-2})^{-1}\big\} = \big\{\left[1-Q_{1, s}(\tau_{k})\right]<T^{-2}\big\},
    \end{aligned}
\end{equation}
and $a_{k, s, 3} \leq \E \big[ N_{1, s} (\tau_k) \ind (A_{1, s} \cap G_{1, s}) \big]$. 
Thus, similar to 
\citet{wang2020residual}, to bound $a_{k, s, 3} \leq T^{-1}$ for $s \geq s_{a. 3}$, it is sufficient to find $s \geq s_{a,3}$ satisfying $\big\{[1-Q_{1,s}(\tau_{k})]<T^{-2}\big\}$ on the event $\{A_{1, s} \cap G_{1, s}\}$.

We write 
\[
    \big[1-Q_{1, s}(\tau_{k})\big]  \ind (A_{1, s})\ind  (G_{1,s}) = \pr \big(\overline{Y}_{1, s} - \overline{R}_{1, s}^* \leq \Gamma_1 \, | \, \his_{1, s} \big)  \ind (A_{1, s} \, \cap \, G_{1, s})
\]
with $\Gamma_1 := \Gamma_1(\tau_1) = \tau_1 - \overline{R}_{1, s}^*$ measuring the difference between $\overline{Y}_{1, s}$ and $\overline{R}_{1,s}^*$. 
On the event $A_{1, s} \, \cap \, G_{1, s}$, $\Gamma_1$ can be bounded in an interval.
%It is not hard to see $\Gamma_1$ should be negative with a moderate value. And $\Gamma_1$ does have an appropriate on the event $A_{1, s} \, \cap \, G_{1, s}$, this can be true. 
To see this, 
\[
    \begin{aligned}
    	A_{1, s} & = \bigg\{ \overline{R}_{1, s}^* - \frac{\mu_1 + \lambda}{1 + 2\lambda} > - C_1 \Delta_k \bigg\} \, \cap \, \bigg\{ \overline{R}_{k, s}^* - \frac{\mu_k + \lambda}{1 + 2 \lambda}  < C_1 \Delta_k \bigg\} \\
    	& = \bigg\{ \Gamma_1 < C_1 \Delta_k - \frac{\mu_1 + \lambda}{1 + 2\lambda} + \tau_1 \bigg\} \, \cap \, \bigg\{ \Gamma_1 > - C_2 \Delta_k - \frac{\mu_1 + \lambda}{1 + 2\lambda} + \tau_1 \bigg\} \\
    	& = \bigg\{ \tau_1 - C_1 \Delta_k - \frac{\mu_1 + \lambda}{1 + 2\lambda}  < \Gamma_1 < \tau_1 + C_1 \Delta_k - \frac{\mu_1 + \lambda}{1 + 2\lambda}  \bigg\}
    \end{aligned}
\]
and
\[
    \begin{aligned}
        G_{1, s} & = \big\{ - C_2 \Delta_k \leq \overline{Y}_{1, s} - \overline{R}_{1, s}^* \leq C_2 \Delta_k \big\} \\
        & = \big\{ \tau_1 -C_2 \Delta_k \leq \overline{Y}_{1, s} + \Gamma_1 \leq \tau_1 + C_2 \Delta_k \big\} \\
        & = \big\{ \tau_1 - C_2 \Delta_k - \overline{Y}_{1, s} \leq \Gamma_1 \leq \tau_1 + C_2 \Delta_k - \overline{Y}_{1, s}\big\} \\
        \alignedoverset{\text{by } - C_2 \Delta_k + \overline{R}_{1, s}^* \leq \overline{Y}_{1, s}  \leq C_2 \Delta_k + \overline{R}_{1, s}^*}{\subseteq} \big\{ \tau_1 - 2 C_2 \Delta_k - \overline{R}_{1, s}^* \leq \Gamma_1 \leq \tau_1 + 2 C_2 \Delta_k - \overline{R}_{1, s}^* \big\}.
        %(\text{by plugging in the inequality for } \overline{Y}_{1, s})& \subseteq 
        % & \subseteq \bigg\{ \tau_1 -  2 C_3 \Delta_k - \frac{1 + \lambda}{1 + 2 \lambda} \leq \Gamma_1 \leq \tau_1 + 2 C_3 \Delta_k - \frac{\lambda}{1 + 2 \lambda}\bigg\},
    \end{aligned}
\]
Then combine the above two results and take 
\begin{equation}\label{C_1_2}
    C_1 = 2 C_2,
\end{equation}
we will have
%\todohw{Need to revise for the first step.}
%\[
    %\begin{aligned}
        %A_{1, s} \, \cap \, G_{1, s} \\
        % & = \bigg\{ \tau_1 -  2 C_3 \Delta_k - \frac{1 + \lambda}{1 + 2 \lambda} \leq \Gamma_1 < \tau_1 + C_1 \Delta_k - \frac{\mu_1 + \lambda}{1 + 2\lambda} \bigg\} \\
        %(\text{by plugging the inequality for $\overline{R}_{1, s}^*$ in } A_{1, s})\subseteq & \bigg\{ \tau_1 -  2 C_3 \Delta_k - 1 \leq \Gamma_1 < C_1 \Delta_k + \tau_1 - \frac{\mu_1}{2}\bigg\} \\
        %(\text{take } C_1 \geq 2 C_3 ) \subseteq & \bigg\{ \tau_1 -  C_1 \Delta_k - 1 \leq \Gamma_1 < C_1 \Delta_k + \tau_1 - \frac{\mu_1}{2}\bigg\}.
    %\end{aligned}
%\]
\begin{equation}\label{Gamma_bound}
    \begin{aligned}
    	& A_{1, s} \, \cap \, G_{1, s}  \\
     & \subseteq \bigg\{ \tau_1 - C_1 \Delta_k - \overline{R}_{1, s}^* \wedge \frac{\mu_1 + \lambda}{1 + 2 \lambda} \leq \Gamma_1 \leq \tau_1 + C_1 \Delta_k - \overline{R}_{1, s}^* \vee \frac{\mu_1 + \lambda}{1 + 2 \lambda}  \bigg\} \\
    	\alignedoverset{\text{by} \frac{\mu_1 +  \lambda}{1 + 2 \lambda} - C_1 \Delta_k  < \overline{R}_{1, s}^* < \frac{\mu_1 +  \lambda}{1 + 2 \lambda} + C_1 \Delta_k }{\subseteq} \bigg\{\tau_1 - 2 C_1 \Delta_k - \frac{\mu_1 + \lambda}{1 + 2 \lambda} \leq \Gamma_1 \leq \tau_1 + 2 C_1 \Delta_k - \frac{\mu_1 + \lambda}{1 + 2 \lambda} \bigg\}.
    	%\alignedoverset{\text{by } \mu_1 < 1 + \lambda \text{ and } \mu_1 < \frac{2 \lambda}{2 \lambda - 1}}{\subseteq} \bigg\{ \tau_1 -  2 C_1 \Delta_k - 1 \leq \Gamma_1 \leq \tau_1 + 2 C_1 \Delta_k - \frac{\mu_1}{2}\bigg\}.
    	% & (\text{by plugging the inequality for $\overline{R}_{1, s}^*$ in } A_{1, s})\subseteq \left\{ \tau_1 - (2 C_2 - C_1) \Delta_k - \frac{\mu_1 + \lambda{}}{1 + 2 \lambda} < \Gamma_1 < \tau_1 + (2 C_2 - C_1) \Delta_k - \frac{\mu_1 + \lambda}{1 + 2 \lambda}\right\}\\
    	%(\text{by } \begin{matrix} \frac{\mu_1}{2} < \frac{\mu_1 + \lambda}{1 + 2 \lambda} < 1 \end{matrix})\subseteq & \bigg\{ \tau_1 - (2 C_2 -  C_1) \Delta_k - 1 \leq \Gamma_1 < \tau_1 + (2 C_2 - C_1) \Delta_k  - \frac{\mu_1}{2}\bigg\}.
    \end{aligned}
\end{equation}
%It is easy to see that in the above derivation, we add the requirement that
%\red{
%Since $\tau_1 - C_1 \Delta_k - 1 < 0$, to control $\Gamma_1$ within a small interval, we add conditions: 
%\begin{equation}\label{requirement_1}
     %\Delta_k + \tau_1 - \frac{\mu_1}{2} \leq 0 \qquad \text{ together with }  \qquad C_1 \geq 2C_2.
%\end{equation}
%Hence, we obtain the range of the positive quantity $-\Gamma_1$ lies in
%\[
    %C_1 \Delta_k = \frac{\mu_1 }{2} - \tau_1 - C_1 \Delta_k <  - \Gamma_1 \leq 1 + C_1 \Delta_k  - \tau_k = 1 - \frac{\mu_1}{2} + 3 C_1 \Delta_k < 4.
%\]
%}
%Now fix
%\begin{equation}\label{requirement_2}
    %\tau_1 = \frac{\mu_1}{2} - \Delta_k,
%\end{equation}
%Now take 
%\[
    %\tau_1 := \frac{\mu_1}{2} - 3 C_1 \Delta_k
%\]
%satisfying \eqref{requirement_0} as $ \tau_1  < \frac{\mu_1}{2} < \frac{\mu_1 + \lambda}{1 + 2 \lambda}$. We furthermore get on $A_{1, s} \, \cap \, G_{1, s}$, 
%\[
    %\Gamma_1 \leq \tau_1 + 2 C_1 \Delta_k - \frac{\mu_1}{2} \leq - C_1 \Delta_k < 0
%\]
%and
%\[
    %0 > \Gamma_1 \geq \tau_1 - 2 C_1 \Delta_k - 1 \geq \frac{\mu_1}{2} - 1 - 3 C_1 \Delta_k \geq 0 - 1 - 3 = - 4.
%\]
Now take
\[
    \tau_1 :=  \frac{\mu_1 + \lambda}{1 + 2\lambda} - \frac{6 \lambda C_1 \Delta_k}{1 + 2\lambda}.
\]
then $ \tau_1 \leq \frac{\mu_1 + \lambda}{1 + 2 \lambda}$, and thus this choice satisfying \eqref{requirement_0}.
%satisfying \eqref{requirement_0} as $ \tau_1 < \frac{\mu_1 + \lambda}{1 + 2 \lambda}$. 
We furthermore get on $A_{1, s} \, \cap \, G_{1, s}$, 
\[
    \begin{aligned}
        \Gamma_1 & \leq \tau_1 + 2 C_1 \Delta_k - \frac{\mu_1 + \lambda}{1 + 2\lambda} \\
        & = 2 C_1 \Delta_k - \frac{6 \lambda C_1 \Delta_k}{1 + 2\lambda} = - 2 C_1 \Delta_k \frac{\lambda - 1}{2 \lambda + 1} \\
        \alignedoverset{\text{by } \lambda \geq 1}{\leq} - \frac{2 C_1 (\lambda - 1)  \Delta_k}{2 \lambda + 1} < 0,
    \end{aligned}
\]
and
\[
    \begin{aligned}
        0 > \Gamma_1 & \geq \tau_1 - 2 C_1 \Delta_k - \frac{\mu_1 + \lambda}{1 + 2\lambda} \\
        & = - 2 C_1 \Delta_k - \frac{6 \lambda C_1 \Delta_k}{1 + 2\lambda} = - 2 C_1 \Delta_k \left[ 1 + \frac{3 \lambda}{1 + 2 \lambda}\right] \\
        \alignedoverset{\text{by } \frac{3 \lambda}{1 + 2\lambda} \leq \frac{3}{2}}{\geq} - 2 C_1 \Delta_k \cdot \frac{5}{2} = - 5 C_1 \Delta_k.
    \end{aligned}
\]
Thus, we get $ - \Gamma_1 \in \left( \frac{2 C_1 (\lambda - 1)  \Delta_k}{2 \lambda + 1}, 5 C_1 \Delta_k \right) \subseteq (0, + \infty)$. Return to the quantity we care about, $ \big[ 1 - Q_{1,s} (\tau_k)\big] \ind (A_{1, s}) \ind (G_{1, s})$, we write it as 
\[
    \begin{aligned}
        & \big[ 1 - Q_{1,s} (\tau_k)\big] \ind (A_{1, s}) \ind (G_{1, s}) \\
        = & \pr \big(\overline{Y}_{1, s} - \overline{R}_{1, s}^* \leq \Gamma_1 \, | \, \his_{1, s} \big)  \ind (A_{1, s} \, \cap \, G_{1, s})\\
        = & \E \Bigg[ \pr_{\omega, \omega', \omega''} \Bigg\{ \sum_{i = 1}^s (R_{1, i} - \overline{R}_{1, s}) \omega_i + \lambda \sum_{i = 1}^s (2 R_{1, i} \omega_i - \overline{R}_{1, s} \omega_i' - \overline{R}_{1, s} \omega_i'') 
        \\
        & ~~~~~~~~~~~~~~~~~~~~~~~~~~~~~~ + \lambda \sum_{i = 1}^s (\omega_i' - \omega_i) + \lambda^2 \sum_{i = 1}^s (\omega_i' - \omega''_i) \\
        & ~~~~~~~~~~~~~~~~~~~~~~~~~~~~~~ - \Gamma_1 (1 + 2\lambda) \bigg[ \sum_{i = 1}^s \omega_i + \lambda \sum_{i = 1}^s \omega_i' + \lambda \sum_{i = 1}^s \omega_i''\bigg] < 0\Bigg\} \ind (A_{1, s} \, \cap \, G_{1, s}) \Bigg] .
    \end{aligned}
\]
Similar to the technique we use to bound the conditional probability $\pr (G_{1, s}^c \mid \his_{1, s})$, we just need to replace $C_2 \Delta_k$ by $-\Gamma_1$ in \eqref{a_2_bound} and get that 
\begin{equation}\label{Gamma_1_bound_next}
    \begin{aligned}
         & \big[ 1 - Q_{1,s} (\tau_k)\big] \ind (A_{1, s}) \ind(G_{1, s}) \\
         \alignedoverset{\text{apply steps in  \eqref{pr_G_bound}}}{\leq} T \E \Bigg[ \exp \left\{ - \frac{9 (- \Gamma_1)^2 (1 + 2 \lambda)^2 s^2 }{6  \sigma_{\omega}^2 s  \left[ 2 \lambda^2 \overline{\xi} +  2 \lambda^2 \sigma_1^2 + (- \Gamma_1)^2 (1 + 2 \lambda)^2 \right] }\right\}  \ind (A_{1, s} \, \cap \, G_{1, s}) \Bigg] \\
         & ~~~~~~~~~~~~~~~ + T \left[ \exp \left\{ - \frac{(\lambda^2 + \lambda + 1 / 4)^2 s}{18 \sigma_{\omega}^2 (2 \mu_1 - 1)^2}\right\} \times \ind\left( \mu_1 \neq \frac{1}{2}\right) + 0 \times \ind\left( \mu_1 = \frac{1}{2}\right) \right] \\
          & ~~~~~~~~~~~~~~~ + \frac{T}{2} \exp \left\{ - \frac{s (1 + 2 \lambda)^2}{2 \sigma_{\omega}^2 (1 + 2 \lambda^2)}\right\} \\
         \alignedoverset{\text{by the bound of } -\Gamma_1}{\leq} T \E\exp \left\{ - \frac{36 (\lambda - 1)^2 C_1^2 \Delta_k^2 s^2 }{6  \sigma_{\omega}^2 s  \left[ 2 \lambda^2 \overline{\xi} +  2 \lambda^2 \sigma_1^2 + 25 (1 + 2 \lambda)^2 C_1^2 \Delta_k^2  \right] }\right\}  \\
         & ~~~~~~~~~~~~~~~ + T \left[ \exp \left\{ - \frac{(\lambda^2 + \lambda + 1 / 4)^2 s}{18 \sigma_{\omega}^2 (2 \mu_1 - 1)^2}\right\} \times \ind\left( \mu_1 \neq \frac{1}{2}\right) + 0 \times \ind\left( \mu_1 = \frac{1}{2}\right) \right] \\
          & ~~~~~~~~~~~~~~~ + \frac{T}{2} \exp \left\{ - \frac{s (1 + 2 \lambda)^2}{2 \sigma_{\omega}^2 (1 + 2 \lambda^2)}\right\} .
         %\\
         %\alignedoverset{\text{Lemma \ref{lem_subE_ineq}}}{\leq} T \exp \left\{ - s \times h_2 (\sigma_1, \sigma_{\omega},\lambda, \Delta_k, C_1)\right\}.
    \end{aligned}
\end{equation}
%
%in the above section. By using the same technique, one can get that
%\iffalse
%\begin{footnotesize}
%\[
    %\begin{aligned}
        %& \big[ 1 - Q_{1,s} (\tau_k)\big] 1(A_{1, s}) 1(G_{1, s}) \\
        %\leq & \E \exp \left\{ - \frac{\Gamma_1^2 (1 + 2\lambda)^4 s}{4 (\subE (8\sqrt{2}\sigma_1^2, 8\sigma_1^2) + 2\lambda^2 \big[ (\lambda + \mu_1) + (1 + 2 \lambda) \Gamma_1 \big]^2 + (2 \lambda + 1 )^2 \sigma_1^2 + 2 \lambda^2(\sqrt{5} + \lambda)^2 + 2 (1 + \lambda^2)(1 + 2 \lambda)^2 C_3^2 \Delta_k^2 ) }\right\} \\
        %\leq & \E \exp \left\{ - \frac{C_1^2 \Delta_k^2 (1 + 2\lambda)^4 s}{4 (\subE (8\sqrt{2}\sigma_1^2, 8\sigma_1^2) + 2\lambda^2 \big[ (\lambda + \mu_1) -  (1 + 2 \lambda) 4 \big]^2 + (2 \lambda + 1 )^2 \sigma_1^2 + \lambda^2(\sqrt{5} + \lambda)^2 + (1 + \lambda^2)(1 + 2 \lambda)^2 C_3^2 \Delta_k^2) }\right\} \\
        %\leq &  T  \exp \left\{ - \frac{C_1^2 (1 + 2\lambda)^4 \Delta_k^2 s}{8 \lambda^2 (7 \lambda + 4 - \mu_1)^2 + 4 (2 \lambda + 1 )^2 \sigma_1^2 + 4 [ \lambda^2(\sqrt{5} + \lambda)^2 + (1 + \lambda^2)(1 + 2 \lambda)^2 C_3^2 \Delta_k^2]}\right\} \\
        %& \times \big[1 + C_1^2 \Delta_k^2 (1 + 2\lambda)^4  (h_1^* + h_2^*)\big] 
    %\end{aligned}
%\]
%\end{footnotesize}
%\fi
Similarly, we define
\[
    \mathfrak{a}_2 = \frac{\sigma_{\omega}^2 \lambda^2}{3 (\lambda - 1)^2 C_1^2 \Delta_k^2}, \qquad \mathfrak{b}_2 = \frac{\sigma_{\omega}^2\left[ 2 \lambda^2 \sigma_1^2 + 25 (1 + 2\lambda)^2 C_1^2 \Delta_k^2 \right]}{6 (\lambda - 1)^2 C_1^2 \Delta_k^2}.
\]
%there exists a positive function $h_2(\cdot, \cdot, \cdot, \cdot)$ such that 
Take
\[
    s \geq \big[ \Lambda_1+ \mathfrak{a}_2 + \mathfrak{b}_2 + \Lambda_1\mathfrak{a}_2 \big] \left(\log^{-1}2 \times \log \left\{ 3 \left[  1 + \frac{\sqrt{\pi \mathfrak{b}_2}}{2}  + \frac{\sqrt{2 \pi} \Lambda_1\mathfrak{a}_2}{2 \mathfrak{b}_2^2} + \frac{2 \mathfrak{a}_2}{\mathfrak{b}_2} \right]  \right\}  + 3\right) \log T,
\]
then applying Lemma \ref{lem_subE_ineq} and the steps in \eqref{app_sub_E_ineq_1} again, 
\[
     %T \E\exp \left\{ - \frac{36 (\lambda - 1)^2 C_1^2 \Delta_k^2 s^2 }{6  \sigma_{\omega}^2 s  \left[ 2 \lambda^2 \overline{\xi} +  2 \lambda^2 \sigma_1^2 + 25 (1 + 2 \lambda)^2 C_1^2 \Delta_k^2  \right] }\right\} \leq  T \exp \left\{ - s \times h_2 (\sigma_1, \sigma_{\omega},\lambda, \Delta_k, C_1)\right\}.
     \begin{aligned}
         & T \E\exp \left\{ - \frac{36 (\lambda - 1)^2 C_1^2 \Delta_k^2 s^2 }{6  \sigma_{\omega}^2 s  \left[ 2 \lambda^2 \overline{\xi} +  2 \lambda^2 \sigma_1^2 + 25 (1 + 2 \lambda)^2 C_1^2 \Delta_k^2  \right] }\right\} \\
         & = T \E\exp \left\{ - \frac{6 (\lambda - 1)^2 C_1^2 \Delta_k^2 s}{\sigma_{\omega}^2 \left[ 2 \lambda^2 \overline{\xi} +  2 \lambda^2 \sigma_1^2 + 25 (1 + 2 \lambda)^2 C_1^2 \Delta_k^2  \right] }\right\} \\
         & = T \E \exp \left\{- \frac{s}{\frac{\sigma_{\omega}^2 \lambda^2}{3 (\lambda - 1)^2 C_1^2 \Delta_k^2} \overline{\xi} + \frac{\sigma_{\omega}^2\left[ 2 \lambda^2 \sigma_1^2 + 25 (1 + 2\lambda)^2 C_1^2 \Delta_k^2 \right]}{6 (\lambda - 1)^2 C_1^2 \Delta_k^2}} \right\} = T \E \exp \left\{ -\frac{s}{\mathfrak{a}_2 \overline{\xi} + \mathfrak{b}_2}\right\} \\
         \alignedoverset{\text{by applying steps in \eqref{app_sub_E_ineq_1}}}{\leq} \frac{1}{3T}
     \end{aligned}
\]
%where $h_2(\cdot, \cdot, \cdot, \cdot)$ is some positive function indicated by Lemma \ref{lem_subE_ineq}. 
Therefore, to find $s$ such that $[1 - Q_{1, s} (\tau_k)] < T^{-2}$ on $\{ A_{1, s} \, \cap \, G_{1, s}\}$, we let
\[
   \begin{aligned}
       s \geq s_{a, 3}(T) = \Bigg[ \big[ \Lambda_1+ \mathfrak{a}_2 + \mathfrak{b}_2 + \Lambda_1\mathfrak{a}_2 \big] & \left(\frac{1}{3}\log^{-1}2 \times \log \left\{ 3 \left[  1 + \frac{\sqrt{\pi \mathfrak{b}_2}}{2}  + \frac{\sqrt{2 \pi} \Lambda_1\mathfrak{a}_2}{2 \mathfrak{b}_2^2} + \frac{2 \mathfrak{a}_2}{\mathfrak{b}_2} \right]  \right\}  + 1\right) \\
       & \vee \frac{18 \sigma_{\omega}^2 (2 \mu_1 - 1)^2}{(\lambda^2 + \lambda + 1 / 4)^2} \vee \frac{2 \sigma_{\omega}^2 (1 + 2 \lambda^2)}{(1 + 2 \lambda)^2} \Bigg] \times 3 \log T,
       %\frac{2 \log T}{h_2(\sigma_1, \sigma_{\omega},\lambda, \Delta_k, C_1)},
   \end{aligned}
\]
and then we get that $a_{s, 3} \leq T^{-1}$.
%we get the result we desire.
%from \eqref{Gamma_1_bound_next}.
\end{proof}

\subsection{Lemmas on bounding $b_k$.}

\begin{lemma}[Bounding $b_{k, s, 1}$ at \eqref{def_b_k_s_1}]\label{lem_bounding_b_k_s_1}
    %There exists $s_{b, 1} \asymp \log T$ such that 
    Take 
    \[
        s_{b, 1} (T) = \frac{2 \sigma_k^2}{C_1^2 (1 + 2 \lambda)^2 \Delta_k^2} \times\log T.
    \]
    Then for any $s \geq s_{b, 1}(T)$,
    \[
        b_{k, s, 1} \leq T^{-1},
    \]
    when $T \geq 2$.
\end{lemma}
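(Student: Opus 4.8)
The plan is to mirror the proof of Lemma~\ref{lem_bounding_a_k_s_1}, which bounds the companion term $a_{k,s,1}$, while exploiting a simplification special to the $b$-side. First I would observe that the factor $\ind\big(Q_{k,s}(\tau_k) > T^{-1}\big)$ appearing in the definition \eqref{def_b_k_s_1} of $b_{k,s,1}$ is an indicator and hence bounded by $1$, so that
\[
    b_{k,s,1} = \E\big[\ind(Q_{k,s}(\tau_k)>T^{-1})\,\ind(A_{k,s}^c)\big] \leq \E\big[\ind(A_{k,s}^c)\big] = \pr(A_{k,s}^c).
\]
This is the key structural difference from the $a_{k,s,1}$ bound: there the truncated count $N_{1,s}(\tau_k)\wedge T$ forces a factor of $T$ in front of $\pr(A_{1,s}^c)$, so one must drive that probability below $T^{-2}$; here no such factor appears, and it suffices to make $\pr(A_{k,s}^c)\le T^{-1}$. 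This is precisely why the stated threshold $s_{b,1}(T)$ is half of $s_{a,1}(T)$, with $\sigma_1$ replaced by the arm-$k$ variance proxy $\sigma_k$.

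Next I would unfold the complement of the good event. Since $\overline{R}_{k,s}^* - \tfrac{\mu_k+\lambda}{1+2\lambda} = \tfrac{1}{1+2\lambda}(\overline{R}_{k,s}-\mu_k)$, the event $A_{k,s}^c$ is exactly $\{|\overline{R}_{k,s}-\mu_k|\ge (1+2\lambda)C_1\Delta_k\}$. Because the rewards of arm $k$ are $\subG(\sigma_k^2)$ and the empirical mean of $s$ independent sub-Gaussian variables is again sub-Gaussian, \eqref{lem_subGtosubE_1} of Lemma~\ref{lem_subGtosubE} gives $\overline{R}_{k,s}-\mu_k \sim \subG(\sigma_k^2/s)$. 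The standard two-sided sub-Gaussian tail inequality then yields
\[
    \pr(A_{k,s}^c) \leq 2\exp\left\{-\frac{(1+2\lambda)^2 C_1^2\Delta_k^2\, s}{2\sigma_k^2}\right\}.
\]
Substituting $s\ge s_{b,1}(T)=\tfrac{2\sigma_k^2}{C_1^2(1+2\lambda)^2\Delta_k^2}\log T$ makes the magnitude of the exponent at least $\log T$, so the right-hand side is at most $T^{-1}$ once the harmless leading constant is absorbed (exactly as the $\log(\sqrt 2\,T)$ adjustment is used in Lemma~\ref{lem_bounding_a_k_s_1}, and consistent with the fact that the ambient constants in Theorem~\ref{thm.MAB} are not tracked tightly). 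This delivers $b_{k,s,1}\le T^{-1}$ for $T\ge 2$.

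There is no serious obstacle here. Unlike the harder components $a_{k,s,2}$, $a_{k,s,3}$ and their $b$-counterparts --- which must control the random \emph{ratio} $\overline{Y}_{k,s}$ through the novel sub-exponential concentration inequality of Lemma~\ref{lem_subE_ineq} --- this lemma touches only the \emph{unweighted} shifted mean $\overline{R}_{k,s}^*$, whose fluctuations are purely sub-Gaussian. The only points demanding care are bookkeeping: correctly cancelling the shift $\lambda/(1+2\lambda)$ and the scale $1/(1+2\lambda)$ when passing from $A_{k,s}^c$ to a tail event for $\overline{R}_{k,s}-\mu_k$, using the arm-$k$ variance proxy $\sigma_k^2$ rather than $\sigma_1^2$, and confirming that the factor $2$ is absorbed for all $T\ge 2$.
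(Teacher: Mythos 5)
Your proof is correct and follows essentially the same route as the paper's own: bound the indicator $\ind\big(Q_{k,s}(\tau_k)>T^{-1}\big)$ by one so that $b_{k,s,1}\le\pr(A_{k,s}^c)$, rewrite $A_{k,s}^c$ as the two-sided deviation event for $\overline{R}_{k,s}-\mu_k\sim\subG(\sigma_k^2/s)$, and apply the sub-Gaussian (Hoeffding-type) tail bound at the stated threshold $s_{b,1}(T)$, noting that only $T^{-1}$ (rather than the $T^{-2}$ needed on the $a$-side) is required. The one blemish you share with the paper is the leading factor $2$ from the two-sided tail, which at the stated threshold strictly gives $2T^{-1}$; the paper absorbs this constant without comment, exactly as you do.
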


\begin{proof}
    Similar to Lemma \ref{lem_bounding_b_k_s_1}, by noting 
    \[
        s_{b, 1}(T) \geq \frac{\sigma_k^2}{C_1^2 (1 + 2 \lambda)^2 \Delta_k^2} \times\log (2 T),
    \]
    the Hoeffding inequality gives
    \[
        \begin{aligned}
            b_{k, s, 1} & = \E  \big[ \ind(Q_{k, s} (\tau_k) > T^{-1} ) \ind (A_{k, s}^c)\big] \\
            & \leq \pr (A_{k, s}^c) \\
            & = \pr \left( |\overline{R}_{k, s} - \mu_k| \geq (1 + 2\lambda) C_1 \Delta_k \right) \overset{\text{by Hoeffding inequality}}{\leq} T^{-1}.
        \end{aligned}
    \]
\end{proof}

\begin{lemma}[Bounding $b_{k, s, 2}$ at \eqref{def_b_k_s_2}]\label{lem_bounding_b_k_s_2}
    There exists $s_{b, 2}(T) \asymp \log T$ such that 
    Take
    \[
        \begin{aligned}
            s_{b, 2}(T) = \Bigg[ \big[ \Lambda_k + \mathfrak{a}_1 + \mathfrak{b}_1 + \Lambda_k \mathfrak{a}_1 \big] & \left(\frac{1}{3}\log^{-1}2 \times \log \left\{ 3 \left[  1 + \frac{\sqrt{\pi \mathfrak{b}_1}}{2}  + \frac{\sqrt{2 \pi} \Lambda_k \mathfrak{a}_1}{2 \mathfrak{b}_1^2} + \frac{2 \mathfrak{a}_1}{\mathfrak{b}_1} \right]  \right\}  + 1\right) \\
            & \vee \frac{18 \sigma_{\omega}^2 (2 \mu_k - 1)^2}{(\lambda^2 + \lambda + 1 / 4)^2}  \vee \frac{2 \sigma_{\omega}^2 (1 + 2 \lambda^2)}{(1 + 2 \lambda)^2} \Bigg] \times 3 \log T,
        \end{aligned}
    \]
    where 
    \[
        \mathfrak{a}_1 = \frac{4 \sigma_{\omega}^2 \lambda^2}{3 C_2^2 (1 + 2\lambda)^2 \Delta_k^2} , \qquad \mathfrak{b}_1 = \frac{2 \sigma_{\omega}^2 \left[ 2 \lambda^2 \sigma_1^2 + (1 + 2 \lambda)^2 C_2^2 \Delta_k^2 \right]}{3 C_2^2 (1 + 2 \lambda)^2 \Delta_k^2}, \qquad \Lambda_k = 8 \sqrt{2} \sigma_k^2.
    \]
    Then for any $s \geq s_{b, 2}(T)$,
    \[
        b_{k, s, 2} \leq T^{-1}
    \]
    when $T \geq 2$.
\end{lemma}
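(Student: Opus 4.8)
The plan is to follow the template of Lemma \ref{lem_bounding_a_k_s_2} essentially verbatim, now applied to the suboptimal arm $k$ rather than the optimal arm $1$, exploiting the fact that the indicator $\ind(Q_{k,s}(\tau_k) > T^{-1})$ contributes at most $1$ (in contrast to $N_{1,s}(\tau_k)\wedge T$, which contributes at most $T$). First I would discard the two indicators that only shrink the expectation, writing
\[
    b_{k,s,2} = \E\big[\ind(Q_{k,s}(\tau_k) > T^{-1})\,\ind(A_{k,s})\,\ind(G_{k,s}^c)\big] \le \pr(G_{k,s}^c) = \E\big[\pr(G_{k,s}^c \mid \his_{k,s})\big],
\]
so that the level $\tau_k$ and the event $A_{k,s}$ drop out entirely and the whole task reduces to bounding the conditional probability $\pr(G_{k,s}^c \mid \his_{k,s})$ that the bootstrap mean $\overline{Y}_{k,s}$ deviates from the shifted sample mean $\overline{R}_{k,s}^*$ by more than $C_2\Delta_k$.

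For the conditional bound I would reproduce the argument in the proof of Lemma \ref{lem_bounding_a_k_s_2}. Using Lemma \ref{lem_gau_frac} to convert the event on the ratio $\overline{Y}_{k,s}-\overline{R}_{k,s}^*$ into a linear inequality in the Gaussian multipliers $\{\omega_i,\omega_i',\omega_i''\}$, and Lemma \ref{lemma:gaussian_weight} to express the resulting probability as a standard Gaussian tail, one obtains a bound of the form $2\exp\{-\E^2(\cdot)/(2\var(\cdot))\} + \tfrac12\exp\{-s(1+2\lambda)^2/(2\sigma_\omega^2(1+2\lambda^2))\}$, where the conditional mean equals $-3C_2(1+2\lambda)\Delta_k s$ and the conditional variance is $\sigma_\omega^2(V_1+V_2+V_3)$. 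I would then carry out the same variance decomposition, writing $\sigma_\omega^{-2}\var(\cdot \mid \his_{k,s}) \le \mathfrak{R}_1 + \mathfrak{R}_2 + \mathfrak{D}$ with $\mathfrak{R}_1 = 6\lambda^2\sum_{i=1}^s(R_{k,i}-\mu_k)^2$, a random cross term $\mathfrak{R}_2$, and the deterministic part $\mathfrak{D} = 3s(1+2\lambda)^2 C_2^2\Delta_k^2$. The event $\{\mathfrak{R}_2 > 0\}$ is controlled by a sub-Gaussian tail on $\overline{R}_{k,s}-\mu_k$, which produces the term involving $(2\mu_k - 1)^2$, and on $\{\mathfrak{R}_2 \le 0\}$ the quantity $\mathfrak{R}_1 + \mathfrak{D}$ is linear in the sub-exponential average $\overline{\xi} = s^{-1}\sum_{i}[(R_{k,i}-\mu_k)^2 - \var(R_{k,i})]$, where $\xi_i \sim \subE(8\sqrt2\,\sigma_k^2)$ by Lemma \ref{lem_subGtosubE}; this is exactly where $\Lambda_k = 8\sqrt2\,\sigma_k^2$ enters.

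Finally I would apply the sub-exponential concentration bound of Lemma \ref{lem_subE_ineq} with parameters $\mathfrak{a}_1,\mathfrak{b}_1$ and $\Lambda_k$ to the average $\overline{\xi}$, obtaining $\E\exp\{-s/(\mathfrak{a}_1\overline{\xi}+\mathfrak{b}_1)\} \lesssim s\,\exp\{-s/(\Lambda_k + \mathfrak{a}_1 + \mathfrak{b}_1 + \Lambda_k\mathfrak{a}_1)\}$, and then solve for the smallest $s$ making each of the three terms at most $T^{-1}/3$; taking the maximum of the three thresholds yields $s_{b,2}(T)$. The only genuine difference from the optimal-arm case is bookkeeping: because there is no prefactor $T$ here, the required decay is $T^{-1}$ rather than $T^{-2}$, so the stated $s_{b,2}(T)$, which is identical to $s_{a,2}(T)$ up to replacing $\Lambda_1,\mu_1$ by $\Lambda_k,\mu_k$, is in fact more than sufficient for $T \ge 2$. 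The main obstacle is not new to this lemma but is inherited from Lemma \ref{lem_bounding_a_k_s_2}: establishing the clean variance decomposition that removes the correlation between numerator and denominator and invoking the nonstandard sub-exponential tail bound of Lemma \ref{lem_subE_ineq}; once those are in hand, the present statement is essentially a transcription with the arm index changed.
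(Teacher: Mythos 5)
Your proposal matches the paper's own proof: the paper likewise discards the indicators $\ind(Q_{k,s}(\tau_k)>T^{-1})$ and $\ind(A_{k,s})$ to reduce the task to bounding $\pr(G_{k,s}^c)$, then reruns the argument of Lemma \ref{lem_bounding_a_k_s_2} with the arm index $1$ replaced by $k$ (conditional Gaussian tail, the $\mathfrak{R}_1+\mathfrak{R}_2+\mathfrak{D}$ variance decomposition, the $(2\mu_k-1)^2$ term from $\{\mathfrak{R}_2>0\}$, and Lemma \ref{lem_subE_ineq} applied to the sub-exponential average with parameter $\Lambda_k=8\sqrt{2}\sigma_k^2$), concluding exactly as you do that the absence of the prefactor $T$ makes the threshold more than sufficient ($b_{k,s,2}\leq T^{-2}\leq T^{-1}$).
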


\begin{proof}
    Similar to bounding $a_{k, s, 2}$ in Lemma \ref{lem_bounding_a_k_s_2}, if we take
    \[
        \begin{aligned}
            s_{b, 2}(T) = \Bigg[ \big[ \Lambda_k + \mathfrak{a}_1 + \mathfrak{b}_1 + \Lambda_k \mathfrak{a}_1 \big] & \left(\frac{1}{3}\log^{-1}2 \times \log \left\{ 3 \left[  1 + \frac{\sqrt{\pi \mathfrak{b}_1}}{2}  + \frac{\sqrt{2 \pi} \Lambda_k \mathfrak{a}_1}{2 \mathfrak{b}_1^2} + \frac{2 \mathfrak{a}_1}{\mathfrak{b}_1} \right]  \right\}  + 1\right) \\
            & \vee \frac{18 \sigma_{\omega}^2 (2 \mu_k - 1)^2}{(\lambda^2 + \lambda + 1 / 4)^2}  \vee \frac{2 \sigma_{\omega}^2 (1 + 2 \lambda^2)}{(1 + 2 \lambda)^2} \Bigg] \times 3 \log T,
        \end{aligned}
    \]
    with $\Lambda_k = 8 \sqrt{2} \sigma_k^2$, we have 
    \[
        \begin{aligned}
            b_{k, s, 2} & \leq  \E \big[ \ind(G_{k, s}^c) \big] \\
            & \leq \E\exp \left\{ - \frac{3 C_2^2 (1 + 2\lambda)^2 \Delta_k^2 s }{ 2 \sigma_{\omega}^2  \left[ 2 \lambda^2 \overline{\zeta^{(k)}}  +  2 \lambda^2 \sigma_k^2 + (1 + 2\lambda)^2 C_2^2 \Delta_k^2  \right] }\right\} \\
		& ~~~~~~~~ +  \left[ \exp \left\{ - \frac{(\lambda^2 + \lambda + 1 / 4)^2 s}{18 \sigma_{\omega}^2 (2 \mu_k - 1)^2}\right\} \times \ind\left( \mu_k \neq \frac{1}{2}\right) + 0 \times \ind\left( \mu_k = \frac{1}{2}\right) \right] + \frac{T}{2} \exp \left\{ - \frac{s (1 + 2 \lambda)^2}{2 \sigma_{\omega}^2 (1 + 2 \lambda^2)}\right\} \leq T^{-2},
        \end{aligned}
    \]
    %where the function $h_1$ is the same as in \eqref{a_2_bound} and 
    where $\overline{\zeta^{(k)}} = \frac{1}{s} \sum_{i = 1}^s \zeta^{(k)}_i$ with $\zeta^{(k)}_i$ are iid independent sub-Exponential variables such that $\zeta^{(k)}_i \sim \subE(8 \sqrt{2} \sigma_k^2)$. Then $b_{k, s, 2} \leq T^{-2} \leq T^{-1}$.
\end{proof}

\begin{lemma}[Bounding $b_{k, s, 3}$ at \eqref{def_b_k_s_3}]\label{lem_bounding_b_k_s_3}
    Take
    \[
        \begin{aligned}
            s_{b, 3} (T) = \Bigg[\big[ \Lambda_k + \mathfrak{a}_2 + \mathfrak{b}_2 + \Lambda_k \mathfrak{a}_2 \big] & \left(\frac{1}{3}\log^{-1}2 \times \log \left\{ 3 \left[  1 + \frac{\sqrt{\pi \mathfrak{b}_2}}{2} + \frac{\sqrt{2 \pi} \Lambda_k\mathfrak{a}_2}{2 \mathfrak{b}_2^2} + \frac{2 \mathfrak{a}_2}{\mathfrak{b}_2} \right]  \right\}  + 1\right) \\
            & \vee \frac{18 \sigma_{\omega}^2 (2 \mu_k - 1)^2}{(\lambda^2 + \lambda + 1 / 4)^2}  \vee \frac{2 \sigma_{\omega}^2 (1 + 2 \lambda^2)}{(1 + 2 \lambda)^2} \Bigg] \times 3 \log T,
        \end{aligned}
    \]
    where
    \[
        \mathfrak{a}_2 = \frac{\sigma_{\omega}^2 \lambda^2}{3 (\lambda - 1)^2 C_1^2 \Delta_k^2}, \qquad \mathfrak{b}_2 = \frac{\sigma_{\omega}^2\left[ 2 \lambda^2 \sigma_1^2 + 25 (1 + 2\lambda)^2 C_1^2 \Delta_k^2 \right]}{6 (\lambda - 1)^2 C_1^2 \Delta_k^2}, \qquad \Lambda_k = 8 \sqrt{2} \sigma_k^2.
    \]
    Then for any $s \geq s_{b, 3}(T)$ and $\lambda > 1$,
    \[
        b_{k, s, 3} \leq T^{-1},
    \]
    when $T \geq 2$.
\end{lemma}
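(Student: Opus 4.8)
The statement is the overestimation analogue of Lemma~\ref{lem_bounding_a_k_s_3}, and the plan is to rerun that argument with the optimal arm $1$ replaced by the suboptimal arm $k$ and the lower tail replaced by the upper tail. Unlike $b_{k,s,1}$ and $b_{k,s,2}$, which are controlled by bounding the probability of a bad event, $b_{k,s,3}$ is supported on the good event $A_{k,s}\cap G_{k,s}$; so the goal is to show that once $s$ is of order $\log T$ the conditional tail $Q_{k,s}(\tau_k)=\pr(\overline{Y}_{k,s}>\tau_k\mid\his_{k,s})$ is already at most $T^{-1}$ on this event. I would begin with the Markov bound $\ind(Q_{k,s}(\tau_k)>T^{-1})\le T\,Q_{k,s}(\tau_k)$, which gives $b_{k,s,3}\le T\,\E[\,Q_{k,s}(\tau_k)\,\ind(A_{k,s})\ind(G_{k,s})\,]$, so that it suffices to bound the right-hand expectation by a term of order $T^{-2}$.

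First I would fix the threshold on the correct side of arm~$k$'s shifted mean. Step~2 requires $\tau_k>\frac{\mu_k+\lambda}{1+2\lambda}$; taking the mirror image of the choice made for $a_{k,s,3}$ places $\tau_k$ a multiple of $\Delta_k$ above $\frac{\mu_k+\lambda}{1+2\lambda}$. Writing $\Gamma_k:=\tau_k-\overline{R}^*_{k,s}$, the same interval computation as in \eqref{Gamma_bound} — combining the definitions of $A_{k,s}$ and $G_{k,s}$ with $C_1=2C_2$ and $\lambda>1$ — confines $\Gamma_k$ to the strictly positive interval $\bigl[\tfrac{2C_1(\lambda-1)\Delta_k}{1+2\lambda},\,5C_1\Delta_k\bigr]$. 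Positivity of the lower endpoint, which is exactly what $\lambda>1$ buys, is what makes the upper tail small; it is also what produces the factors $(\lambda-1)^2$ and $25(1+2\lambda)^2$ appearing in the stated $\mathfrak{a}_2$ and $\mathfrak{b}_2$.

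Next I would rewrite $Q_{k,s}(\tau_k)=\pr(\overline{Y}_{k,s}-\overline{R}^*_{k,s}>\Gamma_k\mid\his_{k,s})$ and expand $\overline{Y}_{k,s}$ in the Gaussian multiplier weights exactly as in the bound of $\pr(G^c_{k,s}\mid\his_{k,s})$, substituting $\Gamma_k$ for $C_2\Delta_k$ (now a one-sided upper tail). Lemma~\ref{lemma:gaussian_weight} converts this into a Gaussian tail with exponent $\E^2/(2\var)$; bounding the numerator below through $\Gamma_k\ge\tfrac{2C_1(\lambda-1)\Delta_k}{1+2\lambda}$ and the conditional variance above through $\Gamma_k\le 5C_1\Delta_k$ reproduces the desired coefficients. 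As in Lemma~\ref{lem_bounding_a_k_s_2}, the conditional variance splits into a deterministic part and a term proportional to $\sum_i\zeta^{(k)}_i$, where $\zeta^{(k)}_i=(R_{k,i}-\mu_k)^2-\var(R_{k,i})\sim\subE(8\sqrt2\,\sigma_k^2)$; after taking expectations this leaves $\E\exp\{-s/(\mathfrak{a}_2\,\overline{\zeta^{(k)}}+\mathfrak{b}_2)\}$ together with two residual Gaussian-tail terms (the denominator-negativity term and the $\mathfrak{R}_2>0$ term for arm~$k$).

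The crux, identical in spirit to Lemma~\ref{lem_bounding_a_k_s_3}, is to invoke the novel sub-exponential concentration inequality Lemma~\ref{lem_subE_ineq} with $\Lambda_k=8\sqrt2\,\sigma_k^2$, $\mathfrak{a}_2$, and $\mathfrak{b}_2$ to bound $\E\exp\{-s/(\mathfrak{a}_2\,\overline{\zeta^{(k)}}+\mathfrak{b}_2)\}$ by a term of order $T^{-2}$ once $s$ exceeds the first branch of $s_{b,3}(T)$, while the two Gaussian-tail terms are absorbed by the remaining branches $\frac{18\sigma_{\omega}^2(2\mu_k-1)^2}{(\lambda^2+\lambda+1/4)^2}$ and $\frac{2\sigma_{\omega}^2(1+2\lambda^2)}{(1+2\lambda)^2}$; the leading Markov factor $T$ then yields $b_{k,s,3}\le T^{-1}$. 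I expect no new idea beyond what Lemmas~\ref{lem_bounding_a_k_s_2} and~\ref{lem_bounding_a_k_s_3} already supply, so the main obstacle is purely bookkeeping: keeping the overestimation direction consistent (hence the sign of $\Gamma_k$ and of $\mathfrak{R}_2$) and checking that substituting the arm-$k$ sub-exponential parameter $\Lambda_k=8\sqrt2\,\sigma_k^2$ into Lemma~\ref{lem_subE_ineq} reproduces exactly the stated $s_{b,3}(T)$.
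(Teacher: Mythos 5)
Your proposal is correct and follows essentially the same route as the paper's proof, which itself runs the argument of Lemma~\ref{lem_bounding_a_k_s_3} in mirror image: choose $\tau_k=\frac{\mu_k+\lambda}{1+2\lambda}+\frac{6\lambda C_1\Delta_k}{1+2\lambda}$, trap $\Gamma_k$ in $\bigl(\tfrac{2C_1(\lambda-1)\Delta_k}{2\lambda+1},\,5C_1\Delta_k\bigr)$ on $A_{k,s}\cap G_{k,s}$, substitute $\Gamma_k$ for $C_2\Delta_k$ in the \eqref{pr_G_bound} chain, and invoke Lemma~\ref{lem_subE_ineq} with $\zeta^{(k)}_i\sim\subE(8\sqrt2\,\sigma_k^2)$ plus the two residual Gaussian-tail branches of $s_{b,3}(T)$. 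Your explicit Markov step $\ind(Q_{k,s}(\tau_k)>T^{-1})\le T\,Q_{k,s}(\tau_k)$ is in fact a cleaner articulation of the factor $T$ that appears (with the expectation over the history left implicit) in the paper's displayed bound.
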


\begin{proof}
    The basic idea is the same as bounding $a_{k, s, 3}$. The only difference is we replace $\big[ 1 - Q_{k, s} (\tau_k) \big]$ by $Q_{k, s} (\tau_k)$. Again, we will first bound $\Gamma_k = \overline{Y}_{k, s}  - \overline{R}_{k, s}^*$ on the event $A_{k, s} \, \cap \, G_{k, s}$. %Write
%\[
    %A'_{k, s} = \bigg\{ \overline{R}_{k, s}^* - \frac{\mu_k + \lambda}{1 + 2\lambda} < C_2 \Delta_k \bigg\} = \bigg\{ \tau_k - \frac{\mu_k + \lambda}{1 + 2\lambda} - C_2 \Delta_k  < \Gamma_k \bigg\},
%\]
%and
%\[
    %\begin{aligned}
        %G_{k, s} = \big\{ -C_3 \Delta_k \leq \overline{Y}_{k, s} - \overline{R}_{k, s}^* \leq C_3 \Delta_k \big\} & = \big\{ \tau_k -C_3 \Delta_k \leq \overline{Y}_{k, s} + \Gamma_k \leq \tau_k + C_3 \Delta_k \big\} \\
        %& = \big\{ \tau_k - 2 C_3 \Delta_k - \overline{R}_{k, s}^* \leq \Gamma_k \leq \tau_k + 2 C_3 \Delta_k - \overline{R}_{k, s}^* \big\}.
        %& \subseteq \bigg\{ \tau_k - 2 C_3 \Delta_k - \frac{1 + \lambda}{1 + 2\lambda} \leq \Gamma_k \leq \tau_k + 2 C_3 \Delta_k - \frac{\lambda}{1 + 2\lambda}\bigg\}.
    %\end{aligned}
%\]
%\hl{2 sentences (a common observation)}
%Hence, we conclude that 
%\[
    %\begin{aligned}
        %A_{k, s} \, \cap \, G_{k, s}\\
        %(\text{by plugging the inequality for $\overline{R}_{k, s}^*$ in } A'_{k, s}) \subseteq & \bigg\{ \tau_k - \frac{\mu_k + \lambda}{1 + 2\lambda} - C_2 \Delta_k  < \Gamma_k \leq\tau_k + C_2 \Delta_k - \frac{\lambda}{1 + 2\lambda} \bigg\} \\
        %(\text{let } 2 C_3 \geq C_2) \subseteq & \bigg\{ \tau_k  - C_2 \Delta_k - \frac{\mu_k}{2}  < \Gamma_k < \tau_k + C_2 \Delta_k \bigg\}
    %\end{aligned}
%\]
% It is worthwhile to recall that the construction of $s_{a, 1, 2}$ is free of any $\tau_k$, hence this argument is valid. 
Exactly the same as before, we let
\[
    \tau_k = \frac{\mu_k + \lambda}{1 + 2 \lambda} + \frac{6 \lambda C_1 \Delta_k}{1 + 2 \lambda}
\]
%\[
    %\tau_k := \frac{\mu_k}{2} + 3 C_1 \Delta_k,
%\]
%we require $\tau_k - C_2 \Delta_k - \frac{\mu_k }{2} \leq 0$ to ensure $\Gamma_k$ falls in a relatively small interval. 
%To achieve this, we can just take
%\begin{equation}\label{requirement_4}
    %\tau_k = 2 C_2 \Delta_k + \mu_k / 2
%\end{equation}
%together with
%\begin{equation}\label{requirement_3}
    %2 C_3 \geq C_2.
%\end{equation}
%By using the technique in \eqref{Gamma_bound} and the fact that \red{$\tau_k > \frac{\mu_k + \lambda}{1 + 2\lambda}$}, 
which satisfies $\tau_k \geq \frac{\mu_k + \lambda}{1 + 2 \lambda} $. To ensure $\tau_k \leq \frac{\mu_1 + \lambda}{1 + 2 \lambda}$, one just need to take $C_1 = \frac{1}{6 \lambda}$, and then $C_2 = \frac{1}{2} C_1 =  \frac{1}{12 \lambda}$ by \eqref{C_1_2}. Next, we will obtain a bound for $\Gamma_k$ on $A_{k, s} \, \cap \, G_{k, s}$ as
\[
    \begin{aligned}
        \Gamma_k & \geq \tau_k - 2 C_1 \Delta_k - \frac{\mu_k + \lambda}{1 + 2 \lambda} \\
        & = 2 C_1 \Delta_k \frac{\lambda - 1}{2 \lambda + 1} \\
        \alignedoverset{\text{by } \lambda > 1}{>} 0,
    \end{aligned}
\]
and
\[
    \begin{aligned}
        \Gamma_k & \leq \tau_k + 2 C_1 \Delta_k - \frac{\mu_k + \lambda}{1 + 2 \lambda} \\
        & = 2 C_1 \Delta_k \left( 1 + \frac{3 \lambda}{1 + 2 \lambda}\right) \\
        \alignedoverset{\text{by } \frac{3 \lambda}{1 + 2\lambda} \geq \frac{3}{2}}{\leq} 2 C_1 \Delta_k \left( 1 + \frac{3}{2}\right) = 5 C_1 \Delta_k,
    \end{aligned}
\]
i.e., $\Gamma_k \in \left(\frac{2 C_1 (\lambda - 1) \Delta_k}{2 \lambda + 1}, 5 C_1 \Delta_k \right) \subseteq (0, \infty)$. Therefore,
%\[
    %C_1 \Delta_k = \tau_k  - C_1 \Delta_k - \frac{\mu_k}{2} < \Gamma_k < \tau_k + C_1 \Delta_k = 3 C_1 \Delta_k + \frac{\mu_k}{2} \leq \frac{7}{2} < 4,
%\]
%\hl{avoid notations like below.. $1 > 2 < 3$.}
%i.e. $\Gamma_k \in (C_2 \Delta_k, 4) \subset (0, + \infty)$. Similar to the technique in \eqref{a_2_bound}, we have
% \hl{I understand that equations are part of the text, but I would still recommend at least putting one or two words in between (such as "we have" }
\[
    \begin{aligned}
         &  Q_{k,s} (\tau_k) \ind (A_{k, s}) \ind(G_{k, s}) \\
         & = \pr \left( \overline{Y}_{k, s} - \overline{R}_{k, s}^* \geq \Gamma_k \mid \his_{k, s} \right)  \ind (A_{k, s} \, \cap \, G_{k, s}) \\
         \alignedoverset{\text{apply steps in  \eqref{pr_G_bound}}}{\leq} T \E \Bigg[ \exp \left\{ - \frac{9 \Gamma_k^2 (1 + 2 \lambda)^2 s^2 }{6  \sigma_{\omega}^2 s  \left[ 2 \lambda^2 \overline{\zeta^{(k)}} +  2 \lambda^2 \sigma_1^2 + \Gamma_k^2 (1 + 2 \lambda)^2 \right] }\right\}  \ind (A_{k, s} \, \cap \, G_{k, s})  \Bigg] \\ & ~~~~~~~~~~~~~~~ + T \left[ \exp \left\{ - \frac{(\lambda^2 + \lambda + 1 / 4)^2 s}{18 \sigma_{\omega}^2 (2 \mu_k - 1)^2}\right\} \times \ind\left( \mu_k \neq \frac{1}{2}\right) + 0 \times \ind\left( \mu_k = \frac{1}{2}\right) \right] \\
          & ~~~~~~~~~~~~~~~ + \frac{T}{2} \exp \left\{ - \frac{s (1 + 2 \lambda)^2}{2 \sigma_{\omega}^2 (1 + 2 \lambda^2)}\right\} \\
         \alignedoverset{\text{by the bound of } \Gamma_k}{\leq} T \E\exp \left\{ - \frac{36 (\lambda - 1)^2 C_1^2 \Delta_k^2 s^2}{6  \sigma_{\omega}^2 s  \left[ 2 \lambda^2 \overline{\zeta^{(k)}} +  2 \lambda^2 \sigma_1^2 + 25 (1 + 2 \lambda)^2 C_1^2 \Delta_k^2  \right] }\right\} \\
         & ~~~~~~~~~~~~~~~ + T \left[ \exp \left\{ - \frac{(\lambda^2 + \lambda + 1 / 4)^2 s}{18 \sigma_{\omega}^2 (2 \mu_k - 1)^2}\right\} \times \ind\left( \mu_k \neq \frac{1}{2}\right) + 0 \times \ind\left( \mu_k = \frac{1}{2}\right) \right] \\
          & ~~~~~~~~~~~~~~~ + \frac{T}{2} \exp \left\{ - \frac{s (1 + 2 \lambda)^2}{2 \sigma_{\omega}^2 (1 + 2 \lambda^2)}\right\} .
         %\\
         %\alignedoverset{\text{Lemma \ref{lem_subE_ineq}}}{\leq} T \exp \left\{ - s \times  h_3 (\sigma_k, \sigma_{\omega},\lambda, \Delta_k )\right\}.
    \end{aligned}
\]
where $\overline{\zeta^{(k)}} = \frac{1}{s} \sum_{i = 1}^s \zeta^{(k)}_i$ with $\zeta^{(k)}_i$ are iid independent sub-Exponential variables such that $\zeta^{(k)}_i \sim \subE(8 \sqrt{2} \sigma_k^2)$.
Take
\[
    s \geq \big[ \Lambda_k + \mathfrak{a}_2 + \mathfrak{b}_2 + \Lambda_k \mathfrak{a}_2 \big] \left(\frac{1}{3}\log^{-1}2 \times \log \left\{ 3 \left[  1 + \frac{\sqrt{\pi \mathfrak{b}_2}}{2} + \frac{\sqrt{2 \pi} \Lambda_k\mathfrak{a}_2}{2 \mathfrak{b}_2^2} + \frac{2 \mathfrak{a}_2}{\mathfrak{b}_2} \right]  \right\}  + 1\right),
\]
by applying Lemma \ref{lem_subE_ineq} again,
%there exists some positive function $h_3 (\cdot, \cdot, \cdot, \cdot)$ such that 
\[
     \begin{aligned}
         & T \E\exp \left\{ - \frac{36 (\lambda - 1)^2 C_1^2 \Delta_k^2 s^2}{6  \sigma_{\omega}^2 s  \left[ 2 \lambda^2 \overline{\zeta^{(k)}} +  2 \lambda^2 \sigma_1^2 + 25 (1 + 2 \lambda)^2 C_1^2 \Delta_k^2  \right] }\right\} \\
         = & T \E \exp \left\{ - \frac{s}{\mathfrak{a}_2  \overline{\zeta^{(k)}} + \mathfrak{b}_2}\right\} \leq \frac{1}{3T}.
     \end{aligned}
\]
%where the function $h_3$ is defined in Lemma \ref{lem_subE_ineq} 
%by replacing the corresponding parameters again 
%and  Therefore, one can just take 
Therefore, we have $b_{k, s, 3} \leq T^{-1}$ if we take
\[
    \begin{aligned}
         s \geq s_{b, 3} (T) = \Bigg[\big[ \Lambda_k + \mathfrak{a}_2 + \mathfrak{b}_2 + \Lambda_k \mathfrak{a}_2 \big] & \left(\frac{1}{3}\log^{-1}2 \times \log \left\{ 3 \left[  1 + \frac{\sqrt{\pi \mathfrak{b}_2}}{2} + \frac{\sqrt{2 \pi} \Lambda_k\mathfrak{a}_2}{2 \mathfrak{b}_2^2} + \frac{2 \mathfrak{a}_2}{\mathfrak{b}_2} \right]  \right\}  + 1\right) \\
         & \vee \frac{18 \sigma_{\omega}^2 (2 \mu_k - 1)^2}{(\lambda^2 + \lambda + 1 / 4)^2}  \vee \frac{2 \sigma_{\omega}^2 (1 + 2 \lambda^2)}{(1 + 2 \lambda)^2} \Bigg] \times 3 \log T.
    \end{aligned}
\]
\end{proof}
%\hl{The sentence break is weird, if not wrong..}

%Finally, {what} remains is to show {that} the restrictions (\ref{requirement_0}), (\ref{requirement_1}), (\ref{requirement_2}), (\ref{requirement_3}), and (\ref{requirement_4}) are compatible. Indeed, they will build the following system of constraints,
%\[
    %\left\{ \begin{array}{l}
         %\frac{\mu_1}{2} - 2 C_1 \Delta_k \leq \frac{\mu_1 + \lambda}{1 + 2 \lambda} \\
         %2 C_` \Delta_k + \frac{\mu_k}{2} \leq \frac{\mu_1 + \lambda}{1 + 2 \lambda} \\
        %\tau_k - \frac{\mu_1}{2} - C_1 \Delta_k  \leq \tau_k + C_1\Delta_k - \frac{\mu_k }{2} \\
        %C_1 \geq 2C_2 \\
        %2 C_2 \geq C_1 \\
        %\frac{\mu_1}{2} - 2 C_1 \Delta_k = 2 C_1 \Delta_k + \frac{\mu_k}{2}
    %\end{array}\right..
%\]
%We can see that 
%\[
    %C_1 = \frac{1}{8}, \qquad C_2 = \frac{1}{16},
%\]
%is the unique solution of the above system. 

\section{Technical Lemmas}\label{sec_lem_tech}

\begin{lemma}\label{lemma:gaussian_weight}
	Let $Z$ be a standard Gaussian variable, then the tail probability $\pr(Z > x)$ satisfies
	\[
	    \frac{1}{4} \exp (- x^2 ) < \pr(Z > x) \leq \frac{1}{2} \exp ( - {x^2} / {2} )
	\]
	for any $x \geq 0$. 
\end{lemma}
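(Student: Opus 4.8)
The plan is to prove the two bounds separately, both starting from the integral representation $\pr(Z > x) = \int_x^\infty \frac{1}{\sqrt{2\pi}} e^{-t^2/2}\,dt$, valid for $x \geq 0$.

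For the upper bound, I would substitute $t = x + u$ and use the elementary inequality $(x+u)^2 = x^2 + 2xu + u^2 \geq x^2 + u^2$, which holds for $x, u \geq 0$. This yields
\[
\pr(Z > x) = \int_0^\infty \frac{1}{\sqrt{2\pi}} e^{-(x+u)^2/2}\,du \leq e^{-x^2/2} \int_0^\infty \frac{1}{\sqrt{2\pi}} e^{-u^2/2}\,du = \frac{1}{2} e^{-x^2/2},
\]
where the final equality uses that the half-line Gaussian integral equals $1/2$. This settles the upper bound, with equality exactly at $x = 0$.

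For the lower bound, the plan is a monotonicity argument. Define $g(x) = \pr(Z > x) - \frac{1}{4} e^{-x^2}$. First note $g(0) = \frac{1}{2} - \frac{1}{4} = \frac{1}{4} > 0$ and $\lim_{x \to \infty} g(x) = 0$, since both terms vanish. Differentiating gives
\[
g'(x) = -\frac{1}{\sqrt{2\pi}} e^{-x^2/2} + \frac{x}{2} e^{-x^2} = e^{-x^2/2}\left[ \frac{x}{2} e^{-x^2/2} - \frac{1}{\sqrt{2\pi}} \right].
\]
The key step is to show the bracketed quantity is strictly negative for all $x \geq 0$. Maximizing $h(x) = x e^{-x^2/2}$ over $x \geq 0$ via $h'(x) = e^{-x^2/2}(1 - x^2)$ locates the maximum at $x = 1$ with value $e^{-1/2}$, so $\frac{x}{2} e^{-x^2/2} \leq \frac{1}{2} e^{-1/2}$. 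The needed inequality $\frac{1}{2} e^{-1/2} < \frac{1}{\sqrt{2\pi}}$ is equivalent, after squaring, to $\pi < 2e$, which holds numerically. Hence $g'(x) < 0$ on $(0,\infty)$, so $g$ is strictly decreasing; together with $\lim_{x\to\infty} g(x) = 0$ this forces $g(x) > 0$ for every finite $x \geq 0$, which is exactly the claimed lower bound.

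The main obstacle is the lower bound, and within it the only nontrivial point is establishing $g'(x) < 0$ globally rather than merely near the endpoints. Once the maximum of $x e^{-x^2/2}$ is pinned at $x = 1$, everything collapses to the clean elementary fact $\pi < 2e$; the remainder is routine calculus together with the ``strictly decreasing to a limit of $0$'' argument.
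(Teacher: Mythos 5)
Your proposal is correct. The upper bound is essentially the paper's argument in disguise: the paper bounds $e^{x^2/2}\pr(Z>x)$ by replacing $t^2-x^2$ with $(t-x)^2$ inside the integral, which after the substitution $u=t-x$ is exactly your inequality $(x+u)^2\ge x^2+u^2$; both yield $\pr(Z>x)\le\frac12 e^{-x^2/2}$ with equality at $x=0$.

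Your lower bound, however, takes a genuinely different route. The paper uses the classical two-dimensional trick: for independent standard Gaussians $Z_1,Z_2$, the square $[-x,x]^2$ sits inside the disk of radius $\sqrt2\,x$, so passing to polar coordinates gives $\bigl[1-2Q(x)\bigr]^2\le 1-e^{-x^2}$, hence $e^{-x^2}\le 4Q(x)-4Q^2(x)<4Q(x)$. You instead compare the two functions directly: setting $g(x)=\pr(Z>x)-\frac14e^{-x^2}$, you show $g'(x)=e^{-x^2/2}\bigl[\frac{x}{2}e^{-x^2/2}-\frac{1}{\sqrt{2\pi}}\bigr]<0$ everywhere by maximizing $xe^{-x^2/2}$ at $x=1$ and invoking $\pi<2e$, so $g$ is strictly decreasing to its limit $0$ and therefore strictly positive. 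Both arguments are complete and airtight (your "strictly decreasing with limit zero implies positive" step is sound). The paper's approach is shorter, avoids any calculus beyond a change of variables, and actually proves the slightly stronger bound $e^{-x^2}\le 4Q(x)\bigl(1-Q(x)\bigr)$; yours is self-contained single-variable calculus, makes the strictness of the inequality transparent for every $x\ge 0$, and isolates the only numerical input as the clean fact $\pi<2e$. Either proof would serve the paper's purposes.
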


\begin{proof}
	Denote $Q(x) := \pr (Z > x)$ for $x > 0$ be the tail probability for a standard Gaussian variable $Z$. Let $Z_1$ and $Z_2$ are two independent standard Gaussian random variables, then
	\[
	    \begin{aligned} 
	        \pr (Z_1 \leq x, \, Z_2 \leq x) &=\int_{-x}^x \int_{-x}^x \frac{1}{2 \pi} \exp \left[\left(-z_1^2-z_2^2\right) / 2\right] \, \mathrm{d} z_1 \mathrm{d} z_2 \\ 
	        & \leq \int_0^{\sqrt{2} x} \int_0^{2 \pi} \frac{1}{2 \pi} \exp \left(-r^2 / 2\right) r \, \mathrm{d} \theta \, \mathrm{d} r \\ 
	        &=1-\exp (-x^2)
	    \end{aligned}
	\]
	giving $[1-2 Q(x)]^2 \leq 1-\exp (-x^2)$ for $x \geq 0$, or, equivalently, $\exp (-x^2) \leq 4 Q(x)-4 Q^2(x)$. But, since $4 Q^2(x)>0$ for all $x$, we get that
	\[
	    Q(x) > \frac{1}{4} \exp (-x^2)
	\]
    which gives the lower bound in the lemma. For the upper bound, we note that
    \[
        \begin{aligned}
            \exp \left(x^2 / 2\right) Q(x) &=\exp \left(x^2 / 2\right) \int_x^{\infty}(2 \pi)^{-1 / 2} \exp \left(-t^2 / 2\right) \mathrm{d} t \\
            &=\int_x^{\infty}(2 \pi)^{-1 / 2} \exp \left(-\left(t^2-x^2\right) / 2\right) \mathrm{d} t \\
            &<\int_x^{\infty}(2 \pi)^{-1 / 2} \exp \left(-(t-x)^2 / 2\right) \mathrm{d} t =\frac{1}{2},
         \end{aligned}
    \]
    which gives the upper bound.
\end{proof}

\begin{lemma}\label{lem_subGtosubE}
    Suppose $\{X_i\}_{i = 1}^n$ are i.i.d. zero-mean sub-Gaussian with variance proxy $\sigma^2$, and $\overline{X} = \frac{1}{n} \sum_{i = 1}^n X_i$, then
    \[
        \E \exp\big( \lambda\overline{X}^2 \big) \leq e^{9 / 8}
        %\exp \left\{ \frac{64 \lambda^2 \sigma^4}{n^2} + \frac{\lambda \sigma^2}{n}\right\}, \qquad \forall \, |\lambda| \leq \frac{n}{8 \sqrt{2} \sigma^2}
    \]
    for any $|\lambda| \leq \frac{n}{8 \sigma^2}$.
\end{lemma}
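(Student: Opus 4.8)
The plan is to reduce the claim to a standard bound on the square-exponential moment of a single sub-Gaussian variable. The negative-$\lambda$ case is immediate: since $\overline{X}^2\ge 0$, we have $\exp(\lambda\overline{X}^2)\le 1\le e^{9/8}$ whenever $\lambda\le 0$, so it suffices to treat $0\le\lambda\le n/(8\sigma^2)$, and we may further assume $\lambda>0$ since $\lambda=0$ gives the value $1$.

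First I would observe that $\overline{X}$ is itself sub-Gaussian. Because the $X_i$ are independent and zero-mean with variance proxy $\sigma^2$, the moment generating function factorizes and $\E\exp(t\overline{X})=\prod_i\E\exp((t/n)X_i)\le\exp(t^2\sigma^2/(2n))$ for all $t\in\mathbb{R}$; hence $\overline{X}\sim\subG(\tau^2)$ with $\tau^2:=\sigma^2/n$. The hypothesis $\lambda\le n/(8\sigma^2)$ then translates into the clean inequality $\lambda\tau^2\le 1/8$, which is what all the constants will hinge on.

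Next I would bound $\E\exp(\lambda\overline{X}^2)$ by integrating the sub-Gaussian tail. Writing the moment as a layer-cake integral,
\[
\E\exp(\lambda\overline{X}^2)=1+\int_1^\infty\pr\!\left(\overline{X}^2>\tfrac{\ln u}{\lambda}\right)du,
\]
and inserting the two-sided tail bound $\pr(|\overline{X}|>t)\le 2\exp(-t^2/(2\tau^2))$ gives the integrand $2u^{-a}$ with $a:=1/(2\lambda\tau^2)$. Since $\lambda\tau^2\le 1/8$ forces $a\ge 4$, the integral converges and equals $2/(a-1)\le 2/3$, so $\E\exp(\lambda\overline{X}^2)\le 1+2/3=5/3$. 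Finally, $5/3<e^{9/8}$ (indeed $e^{9/8}\approx 3.08$), completing the bound.

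I do not expect a genuine obstacle here; the proof is a short chain of standard facts. The only points requiring care are (i) confirming that the variance proxy of $\overline{X}$ shrinks by the factor $1/n$, which is exactly what converts the hypothesis into $\lambda\tau^2\le 1/8$, and (ii) keeping the convergence margin: the tail integral is finite precisely when $a>1$, i.e. $\lambda\tau^2<1/2$, and the stated range leaves a comfortable margin ($a\ge 4$). An alternative I would keep in reserve is the Gaussian decoupling identity $\exp(\lambda\overline{X}^2)=\E_g\exp(\sqrt{2\lambda}\,g\,\overline{X})$ with $g\sim\mathcal{N}(0,1)$ independent; after Tonelli and the sub-Gaussian moment generating function bound this yields $\E\exp(\lambda\overline{X}^2)\le(1-2\lambda\tau^2)^{-1/2}\le(3/4)^{-1/2}=2/\sqrt{3}<e^{9/8}$, an even sharper route to the same conclusion.
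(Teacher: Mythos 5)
Your proof is correct, and it takes a genuinely different route from the paper's. The paper's own argument shares your first step ($\overline{X}\sim\subG(\sigma^2/n)$ via factorizing the MGF), but then invokes an external result (Proposition 4.3 of \citet{zhang2020concentration}) stating that the centered square of a sub-Gaussian variable is sub-exponential, here $\overline{X}^2 - n^{-1}\var(X_1)\sim\subE\bigl(8\sqrt{2}\sigma^2/n,\;8\sigma^2/n\bigr)$; applying the sub-exponential MGF bound, which is valid precisely on $|\lambda|\le n/(8\sigma^2)$, yields
\[
\E\exp\bigl(\lambda\overline{X}^2\bigr)\le\exp\Bigl(\tfrac{64\lambda^2\sigma^4}{n^2}+\tfrac{\lambda\sigma^2}{n}\Bigr)\le e^{1+1/8}=e^{9/8},
\]
which is exactly where both the constant $e^{9/8}$ and the admissible range in the statement come from. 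You instead integrate the sub-Gaussian tail directly (layer cake), obtaining $1+2/(a-1)\le 5/3$ with $a=1/(2\lambda\tau^2)\ge 4$, with Gaussian decoupling giving $(1-2\lambda\tau^2)^{-1/2}\le 2/\sqrt{3}$ as a backup. Both of your routes are self-contained — no appeal to the sub-Gaussian-square-is-sub-exponential machinery — and produce strictly sharper constants than $e^{9/8}$; what they do not do is explain the particular form of the paper's constant and cutoff, but since the lemma only asserts an upper bound this is immaterial. The delicate points (the $1/n$ scaling of the variance proxy, convergence of the tail integral requiring $a>1$, and the trivial $\lambda\le 0$ case, which the paper's MGF argument also covers since the sub-exponential bound is two-sided in $\lambda$) are all handled correctly in your write-up.
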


\begin{proof}
    Note that $X - \mu \sim \subG (\sigma^2)$ implies
    \begin{equation}\label{lem_subGtosubE_1}
        \overline{X} - \mu \sim \subG (\sigma^2 / n)
    \end{equation}
    by the fact that
    \[
        \begin{aligned}
        	\E \exp \left\{ s (\overline{X} - \mu) \right\} & = \prod_{i = 1}^n \exp \left\{ \frac{s}{n} ({X}_i - \mu) \right\} \\
        	(\text{by } X_i - \mu \text{ is sub-Gaussian}) & \leq \prod_{i = 1}^n \exp \left\{ \frac{s^2}{n^2} \frac{\sigma^2}{2}\right\} = \exp \left\{ \frac{s^2 (\sigma^2 / n)}{2}\right\}.
        \end{aligned}
    \]
    Then by Proposition 4.3 in \citet{zhang2020concentration}, $(\overline{X} - \mu)^2 - n^{-1} \var (X) \sim \subE (8\sqrt{2}\sigma^2 / n, 8\sigma^2 / n)$. Therefore,
    \[
        \begin{aligned}
            \E \exp\left\{ \lambda(\overline{X} - \mu)^2 \right\} & = \E \exp\left\{ \lambda \big( (\overline{X} - \mu)^2 - n^{-1}\var(X) \big) \right\} \E \exp \{ \lambda n^{-1}  \var(X)\} \\
            & \leq \exp \left\{ \frac{(8 \sqrt{2} \sigma^2 / n)^2 \lambda^2}{2}\right\} \cdot \exp \left\{ \lambda n^{-1} \sigma^2 \right\} = \exp \left\{ \frac{64 \lambda^2 \sigma^4}{n^2} + \frac{\lambda \sigma^2}{n}\right\} \leq e^{9 / 8}
        \end{aligned}
    \]
    for any $\lambda \leq \frac{n}{8 \sigma^2}$.
\end{proof}

\begin{lemma}\label{lem_gau_frac}
	Suppose $X$ and $Y$ are Gaussian variables satisfying $\E X = 0$ and $\E Y > 0$, then
	\[
	    \pr \left(\frac{|X|}{|Y|} > c \right) \leq 2 \pr (X > c Y) + \pr \left( Y < 0 \right)
	\]
	for any $c > 0$.
\end{lemma}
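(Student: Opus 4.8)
The plan is to condition on the sign of $Y$ and peel the symmetric event $\{|X|/|Y|>c\}$ into its two one-sided tails, so that everything reduces to the single probability $\pr(X>cY)$. A harmless first move is to rescale to $c=1$: replacing $Y$ by $cY$ (still Gaussian, still with positive mean because $c>0$) sends $\{|X|/|Y|>c\}$ to $\{|X|>|Y|\}$, $\{X>cY\}$ to $\{X>Y\}$, and leaves $\{Y<0\}$ unchanged. Since $Y$ is a nondegenerate Gaussian, $\pr(Y=0)=0$, so I may work on $\{Y>0\}\cup\{Y<0\}$ freely.

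For the upper tail, the inclusion $\{X>c|Y|\}\subseteq\{X>cY\}$ is immediate from $|Y|\ge Y$ and $c>0$, giving $\pr(X>c|Y|)\le\pr(X>cY)$ with no remainder. For the lower tail I split on the sign of $Y$: on $\{Y\ge 0\}$ one has $|Y|=Y$, so $\{X<-c|Y|,\,Y\ge 0\}=\{X<-cY,\,Y\ge 0\}$, while the part on $\{Y<0\}$ is absorbed into $\pr(Y<0)$. Using $\{|X|>c|Y|\}\subseteq\{X>c|Y|\}\cup\{X<-c|Y|\}$, these two bounds combine to $\pr(|X|>c|Y|)\le \pr(X>cY)+\pr(X<-cY,\,Y>0)+\pr(Y<0)$, so the entire claim collapses to the single inequality $\pr(X<-cY,\,Y>0)\le\pr(X>cY)$.

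That last inequality is the main obstacle, and it is exactly where the hypothesis $\E X=0$ must be used, via the reflection $X\mapsto -X$. Writing $\{X<-cY\}=\{-X>cY\}$ and noting $-X$ is again a centered Gaussian, one would like $\pr(-X>cY,\,Y>0)$ to be dominated by $\pr(X>cY)$. This is clean and in fact exact when $X$ and $Y$ are independent: conditioning on $Y=y$, the symmetry of $X$ gives $\pr(X<-cy)=\pr(X>cy)$ for each $y$, so $\pr(X<-cY,\,Y>0)=\pr(X>cY,\,Y>0)\le\pr(X>cY)$. It also goes through whenever $\mathrm{Cov}(X,Y)\le 0$, since then $\mathrm{sd}(X+cY)\le\mathrm{sd}(X-cY)$ and the Gaussian tail forms $\Phi\!\big(-c\,\E Y/\mathrm{sd}(X+cY)\big)$ and $\Phi\!\big(-c\,\E Y/\mathrm{sd}(X-cY)\big)$ order correctly even after discarding the constraint $\{Y>0\}$. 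The genuinely delicate regime is positive correlation between $X$ and $Y$, where reflecting $X$ flips the sign of the covariance and dropping $\{Y>0\}$ is too lossy; there the restriction $\{Y>0\}$ must be retained and the comparison pushed through the conditional law of $X$ given $Y$. I expect this correlation bookkeeping — not any of the elementary set inclusions — to be the only real work, and to lean on the fact that in the intended application the denominator $Y$ is sharply concentrated about its positive mean.
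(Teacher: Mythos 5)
Your reduction is the same as the paper's: split $\{|X|>c|Y|\}$ according to the signs of $X$ and $Y$, absorb the pieces with $Y<0$ into $\pr(Y<0)$, bound the upper tail by $\pr(X>cY)$ via $c|Y|\ge cY$, and reduce the whole lemma to the single inequality $\pr(X<-cY,\,Y>0)\le\pr(X>cY)$. The gap is that you never prove this inequality under the lemma's hypotheses: you verify it when $X$ and $Y$ are independent and when $\cov(X,Y)\le 0$, but for the positively correlated case you only sketch a hope that ``correlation bookkeeping'' plus sharp concentration of $Y$ --- a feature of the paper's application, not a hypothesis of the lemma --- will close the argument. As written, the proposal is therefore not a proof of the stated lemma.

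What you should know, however, is that your hesitation is exactly right: the missing step cannot be filled at this level of generality, because the lemma as stated is false for positively correlated Gaussian pairs. Take $X=Z$ and $Y=\mu+Z/c+\epsilon W$ with $Z,W$ independent standard normals, $\mu>0$, and $\epsilon>0$ small. Then $\pr(X>cY)=\pr(c\epsilon W<-c\mu)=\Phi(-\mu/\epsilon)\to 0$ and $\pr(Y<0)\to\Phi(-c\mu)$, while $\pr(|X|>c|Y|)\to\pr\big(|Z|>|Z+c\mu|\big)=\Phi(-c\mu/2)>\Phi(-c\mu)$, so the claimed bound fails once $\epsilon$ is small. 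The paper's own proof slides over precisely this point: its final step, justified as ``by using $X$ is symmetric about $0$,'' replaces $\pr\left(\{-X>cY\}\cap\{-X\ge 0\}\cap\{Y>0\}\right)$ by a term bounded by $\pr(X>cY)$, which is legitimate only if the \emph{joint} law of $(X,Y)$ is invariant under $X\mapsto -X$ (e.g., independence, or joint Gaussianity with zero correlation); marginal symmetry of $X$ is not enough. So you stopped exactly where the paper's argument is itself unsound. Under the extra hypothesis $\cov(X,Y)=0$ (or independence), both your conditioning argument and the paper's symmetry step go through, and some such hypothesis is genuinely needed --- a caveat that is not vacuous here, since in the paper's application the numerator and denominator are linear combinations of the same multiplier weights and are in general correlated.
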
 

\begin{proof}
	%Write the probability as
        We have
	\[
	    \begin{aligned}
	    	\pr \left(\frac{|X|}{|Y|} > c \right) & = \pr \left(|X| > c |Y| \right) \\
	    	& = \pr \left( \{ X > c Y \} \, \cap \, \{ X > 0\} \, \cap \, \{ Y > 0 \} \right) + \pr \left( \{ - X > c Y \} \, \cap \, \{ X \leq 0\} \, \cap \, \{ Y > 0 \} \right) \\
	    	& ~~~~~~~~~~ + \pr \left( \{ X >-  c Y \} \, \cap \, \{ X > 0\} \, \cap \, \{ Y \leq 0 \} \right) + \pr \left( \{ - X > - c Y \} \, \cap \, \{ X \leq 0\} \, \cap \, \{ Y \leq 0 \}\right) \\
	    	& = \pr \left( \{ X > c Y \} \, \cap \, \{ X > 0\} \, \cap \, \{ Y > 0 \} \right)  + \pr \left( \{ - X > c Y \} \, \cap \, \{ - X \geq 0\} \, \cap \, \{ Y > 0 \} \right) + \pr (Y \leq 0) \\
	    	\alignedoverset{\text{by using $X$ is symmetric about $0$.}}{\leq} 2 \pr (  X > c Y ) + \pr (Y \leq 0) 
	    \end{aligned}
	\]
	for any $c > 0$.
\end{proof}

\begin{lemma}[sub-Exponential concentration]\label{lem_subE_ineq}
    Consider zero-mean independent random variables $ X_i \sim \subE(\lambda_i)$, and positive constants $a$ and $b$ satisfy $a \overline{X} + b > 0$ with the mean $\overline{X} := \frac{1}{s} \sum_{i = 1}^s X_i$. Then,
    \[
        \E \exp \left\{ - \frac{s}{a \overline{X} + b}\right\} \leq \left[  1 +  \frac{\sqrt{\pi b}}{2}  + \frac{\sqrt{2 \pi} \lambda a}{2 b^2} + \frac{2 a}{b} \right] s \exp \bigg\{ - \frac{s}{\sqrt{2 \lambda a} \vee (b + \lambda a)}\bigg\},
    \]
    for $s \in \mathbb{N}$ such that $2 \lambda a s / b \geq 1$, where $\lambda := \left(\frac{1}{s} \sum_{i=1}^s \lambda_i^2\right)^{1 / 2}$. Specially, we have $\log \E \exp \left\{ - \frac{s}{a \overline{X} + b}\right\} \lesssim -s$.
    %where $h_1$ and $h_2$ are finite positive functions free of $c$.
    %\todohw{Need to revised. I find $h_1$ and $h_2$ have simple expression.}
\end{lemma}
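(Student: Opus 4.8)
The plan is to exploit that the integrand $\exp\{-s/(a\overline{X}+b)\}$ is an increasing function of $\overline{X}$ which sits near $\exp\{-s/b\}$ around the mean $\E\overline{X}=0$ and only climbs toward its supremum $1$ on the rare event that $\overline{X}$ is large and positive. Writing $\phi(v)=\exp\{-s/(av+b)\}$, which is increasing with $\phi(v)\to 0$ as $av+b\to 0^+$, I would first record the layer-cake identity $\E[\phi(\overline X)]=\int \phi'(v)\,\pr(\overline X>v)\,dv$, valid since $\phi'\ge 0$, and split the range at $v=0$. The contribution from $v\le 0$ telescopes to at most $\phi(0)=\exp\{-s/b\}$, which is dominated by the target since $b\le M$; the substance is the tail part $v\ge 0$.

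For $v\ge 0$ I would combine two ingredients: the elementary identity $\phi'(v)=\phi(v)\,sa/(av+b)^2$ together with $\phi(v)\le 1$, and a sub-exponential deviation bound. Since the $X_i$ are independent and $\subE(\lambda_i)$, the sum $\sum_i X_i$ is sub-exponential with variance-type parameter $\sqrt{s}\,\lambda$, so the concentration property recalled before the proof of Theorem \ref{thm.MAB} gives $\pr(\overline X>v)\le\exp\{-\tfrac{s}{2}(v^2/\lambda^2\wedge v/\lambda)\}$. Multiplying the two exponentials, the integrand carries the factor $\exp\{-s\,g(v)\}$ with $g(v)=\tfrac{1}{av+b}+\tfrac12(v^2/\lambda^2\wedge v/\lambda)$, and the tail part is at most $sa\int_0^\infty (av+b)^{-2}\exp\{-s g(v)\}\,dv$.

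The crux is the uniform lower bound $g(v)\ge 1/M$ on $v\ge 0$, with $M=\sqrt{2\lambda a}\vee(b+\lambda a)$, which I would prove in the two regimes matching the kink of the deviation exponent. On $0\le v\le\lambda$ one has $g(v)\ge (av+b)^{-1}\ge(a\lambda+b)^{-1}$, giving the rate $b+\lambda a$; on $v>\lambda$, minimising $\tfrac{1}{av+b}+\tfrac{v}{2\lambda}$ by the arithmetic–geometric mean inequality at $av+b=\sqrt{2\lambda a}$ yields the rate $\sqrt{2\lambda a}$, and taking the larger of the two denominators stitches the regimes into $g\ge 1/M$. After factoring $\exp\{-s/M\}$ out, the residual $sa\int_0^\infty(av+b)^{-2}\exp\{-s(g(v)-1/M)\}\,dv$ is a Laplace-type integral: its Gaussian piece (near the minimiser) contributes a factor of order $\lambda/\sqrt{s}$, the source of the $\sqrt{\pi b}$ and $\sqrt{2\pi}\lambda a/b^2$ constants, while $\int(av+b)^{-2}\,dv$ and the exponential-regime remainder produce the $1$ and $2a/b$ constants; the mild largeness condition $2\lambda a s/b\ge 1$ is precisely what lets me absorb the $v>\lambda$ remainder into these terms. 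Collecting everything, bounding $\exp\{-s/b\}\le\exp\{-s/M\}$, and noting the bracketed prefactor is at most linear in $s$ while $M$ is free of $s$, the displayed inequality follows and taking logarithms gives $\log\E\exp\{-s/(a\overline X+b)\}\lesssim -s$.

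The main obstacle I anticipate is exactly the uniform bound $g(v)\ge 1/M$: the decreasing term $(av+b)^{-1}$ is largest precisely where $\phi$ is smallest, so it must be balanced against the increasing deviation term across the kink at $v=\lambda$, and it is this interplay that forces the unusual rate $M=\sqrt{2\lambda a}\vee(b+\lambda a)$ rather than a single clean denominator. A secondary nuisance is that $\overline X$ need not satisfy $a\overline X+b>0$ almost surely for general centred sub-exponential $X_i$; I would either carry the hypothesis as stated, as holds in every application (for instance in \eqref{app_sub_E_ineq_1}, where $\mathfrak a_1\overline\xi+\mathfrak b_1\ge 2\sigma_\omega^2/3>0$), or restrict the layer-cake integral to $\{av+b>0\}$, on whose complement $\phi$ vanishes.
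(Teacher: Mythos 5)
Your proposal matches the paper's proof essentially step for step: the same layer-cake/tail-integral representation (your integration-by-parts form is exactly the paper's integral after the substitution $u = a v + b$), the same split at $v=0$ contributing $\exp\{-s/b\}$, the same sub-exponential tail bound for $\overline{X}$, the same balancing of $1/(av+b)$ against the two regimes of the deviation exponent across the kink at $v=\lambda$ to produce the rate $\sqrt{2\lambda a}\vee(b+\lambda a)$, and the same use of $2\lambda a s/b\ge 1$ to control the far tail. The only difference is presentational---you factor out a uniform $\exp\{-s/M\}$ and treat the residual as a Laplace-type integral, whereas the paper bounds the corresponding pieces $I$, $II_1$, $II_2$ directly---so this is essentially the paper's argument.
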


\begin{proof}
    %Denote $Y := a X + b$ is a strictly positive random variable, then by the above lemma,
    Denote $Y := a \overline{X} + b$ as a strictly positive random variable. For any non-negative strictly increasing function $f$, we have
    \[
        \E f(Y) = \int_0^{\infty} \pr \left( f(Y) > r\right) \, \mathrm{d} r = \int_0^{\infty} \pr \left( Y > f^{-1} (r)\right) \, \mathrm{d} r.
    \]
    Then for any fixed $s \in \mathbb{N}$,
    \[
        \begin{aligned}
        	\E \exp \left\{ - \frac{s}{a\overline{X} + b}\right\} & = \E \exp \left\{ - \frac{s}{Y} \right\} = \int_0^{\infty} \pr \left( \exp \left\{ - \frac{s}{Y}\right\} > r \right) \, \mathrm{d} r \\
        	& = \int_0^1 \pr \left( Y > \frac{s}{\log r^{-1}} \right) \, \mathrm{d} r \\
        	& = \int_0^1 \pr \left( \overline{X} > \frac{1}{a} \left[ \frac{s}{\log r^{-1}} - b \right]\right) \, \mathrm{d} r \\
        	& = \int_0^{\exp \left\{ -\frac{s}{b}\right\}} \pr \left( \overline{X} > \underbrace{\frac{1}{a} \left[ \frac{s}{\log r^{-1}} - b \right]}_{\leq 0}\right) \, \mathrm{d} r + \int_{\exp \left\{ -\frac{s}{b}\right\}}^{1} \pr \left( \overline{X} > \underbrace{\frac{1}{a} \left[ \frac{s}{\log r^{-1}} - b \right]}_{> 0}\right) \, \mathrm{d} r \\
        	& \leq \exp \left\{ -\frac{s}{b}\right\} \times 1 + \int_{\exp \left\{ -\frac{s}{b}\right\}}^{1} \pr \left( \overline{X} > \underbrace{\frac{1}{a} \left[ \frac{s}{\log r^{-1}} - b \right]}_{> 0}\right) \, \mathrm{d} r \\
        	\alignedoverset{\text{by letting $u = \frac{s}{\log r^{-1}}$}}{=} \exp \left\{ -\frac{s}{b}\right\} + \int_b^{\infty} \pr \left( \overline{X} > \underbrace{\frac{u - b}{a}}_{>0} \right) \exp\left\{ - \frac{s}{u}\right\} \frac{s}{u^2} \, \mathrm{d} r.
        \end{aligned} 
    \]
    Recall $\overline{X}$ is the average of independent zero-mean sub-exponential variables, denote $\lambda := \left(\frac{1}{s} \sum_{i=1}^s \lambda_i^2\right)^{1 / 2}$. 
    Then, we have 
    \[
        \pr \left( \overline{X} > t \right) \leq \exp \left\{ -\frac{1}{2}\left(\frac{s t^2}{{\lambda}^2} \wedge \frac{s t}{\lambda}\right) \right\}
    \]
    for any $t > 0$ by Corollary 4.2.(c) in \citet{zhang2020concentration}. Therefore, we can furthermore bound the expectation as
    \[
        \begin{aligned}
            \E \exp \left\{ - \frac{s}{a\overline{X} + b}\right\} \alignedoverset{\text{by sub-Exponential inequality}}{\leq} \exp \left\{ -\frac{s}{b}\right\} +\int_b^{\infty} \frac{s}{u^2} \exp \left\{ - \frac{s}{u}\right\} \exp \left\{- \frac{1}{2} \frac{s (u - b)^2}{\lambda^2 a^2} \wedge \frac{s (u - b)}{\lambda a} \right\} \, \mathrm{d} u \\
        	& \leq \exp \left\{ -\frac{s}{b}\right\} +\frac{s}{b^2} \int_b^{\infty} \exp \left\{- \left[ \frac{s}{u} + \frac{1}{2} \frac{s(u - b)^2}{\lambda^2 a^2} \wedge \frac{s(u - b)}{\lambda a} \right] \right\} \, \mathrm{d} u \\
        	& = \exp \left\{ -\frac{s}{b}\right\} + \left[ \int_b^{b + \lambda  a} + \int_{b + \lambda a }^{\infty} \right] \frac{s}{u^2} \exp \left\{- \left[ \frac{s}{u} + \frac{1}{2} \frac{s(u - b)^2}{\lambda^2 a^2} \wedge \frac{s(u - b)}{\lambda a} \right] \right\} \, \mathrm{d} u \\
        	& =: \exp \left\{ -\frac{s}{b}\right\} + \big[ I + II\big].
        \end{aligned}
    \]
    The next step is bounding both $I$ and $II$. For $I$, we obtain
    \[
        \begin{aligned}
        	I & = \int_b^{b + \lambda  a} \frac{s}{u^2}  \exp \left\{ - \left[ \frac{s}{u} + \frac{s(u - b)^2}{2 \lambda^2 a^2}\right]\right\} \, \mathrm{d} u \\
        	& \leq \int_b^{b + \lambda  a} \frac{s}{b^2} \exp \left\{ - \left[ \frac{s}{b + \lambda  a} + \frac{s(u - b)^2}{2 \lambda^2 a^2}\right]\right\} \, \mathrm{d} u \\
        	& = \frac{s}{b^2} \exp \left\{ - \frac{s}{b + \lambda a}\right\} \int_b^{b + \lambda  a} \exp \left\{ - \frac{ s (u - b)^2}{2 \lambda^2 a^2}\right\} \, \mathrm{d} u \\
        	& \leq \frac{s}{b^2} \exp \left\{ - \frac{s}{b + \lambda a}\right\} \int_b^{\infty} \exp \left\{ - \frac{(u - b)^2}{2 \lambda^2 a^2 / s}\right\} \, \mathrm{d} u = \frac{\sqrt{2 \pi} \lambda a }{2 b^2} \exp \left\{ -\frac{s}{b + \lambda a}\right\}
        \end{aligned}
    \]
    For $II$, we decompose it as
    \[
        \begin{aligned}
            II &= \int_{b + \lambda  a}^{\infty} \frac{s}{u^2}  \exp \left\{ - \left[ \frac{s}{u} + \frac{s(u - b)}{2 \lambda a }\right]\right\} \, \mathrm{d} u \\
            & = \left[ \int_{b + \lambda a}^{\frac{2 \lambda a s}{b}} + \int_{\frac{2 \lambda a c}{b}}^{\infty} \right] \frac{s}{u^2}  \exp \left\{ - \left[ \frac{s}{u} + \frac{s(u - b)}{2 \lambda a }\right]\right\} \, \mathrm{d} u \\
            & = \int_{b + \lambda a }^{\frac{2 \lambda a s}{b}} \frac{s}{u^2}  \exp \left\{ - \left[ \frac{s}{u} + \frac{s(u - b)}{2 \lambda a }\right]\right\} \, \mathrm{d} u + \exp \left\{ -\frac{s}{b}\right\} \int_{\frac{2 \lambda a s}{b}}^{\infty} \frac{s}{u^2} \exp \left\{ - \frac{s(u - b) ( u - \frac{2 \lambda a s}{b})}{2 \lambda a u}\right\} \, \mathrm{d} u \\
            & =: II_1 + II_2 .
        \end{aligned}
    \] 
    And for $II_1$, 
    \[
        \begin{aligned}
            II_1 & = \int_{b + \lambda a}^{\frac{2 \lambda a s}{b}} \frac{s}{u^2}  \exp \left\{ - \left[ \frac{s}{u} + \frac{s(u - b)}{2 \lambda a }\right]\right\} \, \mathrm{d} u \\
            & = \int_{b + \lambda a}^{\frac{2 \lambda a s}{b}} \frac{s}{u^2}  \exp \left\{ - s \times \left[ \frac{1}{u} + \frac{(u - b)}{2 \lambda a }\right]\right\} \, \mathrm{d} u \\
            \alignedoverset{g(u) = \frac{1}{u} + \frac{(u - b)}{2 \lambda a } \text{ is increasing on } u \geq \sqrt{2 \lambda a}}{\leq} \int_{b + \lambda a}^{\frac{2 \lambda a s}{b}} \frac{s}{u^2}  \exp \left\{ - s \times \left[ \frac{1}{\sqrt{2 \lambda a} \vee (b + \lambda a )} + \frac{(\sqrt{2 \lambda a} \vee (b + \lambda a ) - b)}{2 \lambda a }\right]\right\} \, \mathrm{d} u  \\
            & \leq \frac{s}{(b + \lambda a) } \left[ \frac{2 \lambda a s}{b} - \big( b + \lambda a \big) \right] \exp \left\{ - \frac{s}{\sqrt{2 \lambda a} \vee (b + \lambda a )}\right\}.
        \end{aligned}
    \]
    For $II_2$, if $\frac{2 \lambda a s}{b} \geq 1$, i.e. $\lambda s \geq \frac{b}{2 a}$, we obtain that
    \[
        \begin{aligned}
        	II_2 & = \exp \left\{ -\frac{s}{b}\right\}  \int_{\frac{2 \lambda a s}{b}}^{\infty} \frac{s}{u^2} \exp \left\{ - \frac{s(u - b) ( u - \frac{2 \lambda a s}{b})}{2 \lambda a u}\right\} \, \mathrm{d} u \\
        	& \leq \exp \left\{ -\frac{s}{b}\right\}  \int_{\frac{2 \lambda a s}{b}}^{\infty} \frac{s}{u^2} \exp \left\{ - \frac{s(u - \frac{2 \lambda a s}{b})^2}{2 \lambda a u}\right\} \, \mathrm{d} u \\
            \alignedoverset{\text{by } \frac{2 \lambda a s}{b} \geq 1}{\leq} s \exp \left\{ -\frac{s}{b}\right\} \int_{\frac{2 \lambda a s}{b}}^{\infty} \frac{1}{u^{3 / 2}} \exp \left\{ - \frac{(u - \frac{2 \lambda a s}{b})^2}{2 \lambda a u}\right\} \, \mathrm{d} u \\
            \alignedoverset{\text{by } \frac{2 \lambda a s}{b} \geq 1}{\leq} s \exp \left\{ -\frac{s}{b}\right\} \sqrt{\frac{2 \lambda a s}{b}} \int_{\frac{2 \lambda a s}{b}}^{\infty} \frac{1}{u^{3 / 2}} \exp \left\{ - \frac{(u - \frac{2 \lambda a s}{b})^2}{2 \lambda a u}\right\} \, \mathrm{d} u \\
            & = s \exp \left\{ - \frac{s}{b}\right\}   \sqrt{\frac{2 \lambda a s}{b}} \frac{b}{2s}\sqrt{\frac{\pi}{2 \lambda a}} \left[ 1 - e^{4 s / b} \frac{2}{\sqrt{\pi}} \int_{2 \sqrt{s / b}}^{\infty} e^{-t^2} \,  \mathrm{d} t \right] \\
            \alignedoverset{\text{by Lemma \ref{lem_erfc}}}{\leq} s \exp \left\{ - \frac{s}{b}\right\}   \sqrt{\frac{2 \lambda a s}{b}} \frac{b}{2s}\sqrt{\frac{\pi}{2 \lambda a}} (1 - 0) = \frac{\sqrt{\pi b s}}{2} \exp \left\{ - \frac{s}{b}\right\}.
                %& \$\$\$\$\$\$\$\$\$\$\$\$\$\$\$\$\$\$\$\$ \\
                %& \leq s \exp \left\{ - \frac{s}{b}\right\} \int_{\frac{2 \lambda a s}{b}}^{\infty} \frac{1}{u}  \exp \left\{ - \frac{(u - \frac{2 \lambda a s}{b})^2}{2 \lambda a u}\right\} \, \mathrm{d} u \\
        	%& \leq c \sqrt{\frac{b}{2 \lambda a s}} \exp \left\{ -\frac{s}{b}\right\} \int_{\frac{2 \lambda a s}{b}}^{\infty} \frac{1}{u^{3 / 2}} \exp \left\{ - \frac{(u - \frac{2 \lambda a s}{b})^2}{2 \lambda a u}\right\} \, \mathrm{d} u \\
        	%& = \sqrt{\frac{b s}{2 \lambda a}} \cdot \frac{b}{2s}\sqrt{\frac{\pi}{2 \lambda a}} \exp \left\{ -\frac{s}{b}\right\}  \left[ 1 - e^{4 s / b} \frac{2}{\sqrt{\pi}} \int_{2 \sqrt{s / b}}^{\infty} e^{-t^2} \,  \mathrm{d} t \right] \\
        	%\alignedoverset{\text{by Lemma \ref{lem_erfc}}}{\leq}\sqrt{\frac{b s}{2 \lambda a}} \cdot \frac{b}{2s}\sqrt{\frac{\pi}{2 \lambda a}} \exp \left\{ -\frac{s}{b}\right\}  \cdot (1 - 0) \\
            %& = \sqrt{\frac{\pi b}{s}} \frac{b}{4 \lambda a} \exp \left\{ -\frac{s}{b}\right\} .
        \end{aligned}
    \]
    Combining these results, we obtain that
    \[
        \begin{aligned}
        	\E \exp \left\{ - \frac{s}{aX + b}\right\} & \leq \exp \left\{ -\frac{s}{b}\right\} + I +  II_1 + II_2 \\
        	& \leq \exp \left\{ -\frac{s}{b}\right\} + \frac{\sqrt{2 \pi} \lambda a s}{2 b^2} \exp \left\{ -\frac{s}{b + \lambda a}\right\} \\
        	& \qquad + \frac{s}{(b + \lambda a) } \left[ \frac{2 \lambda a s}{b} - \big( b + \lambda a \big) \right] \exp \left\{ - \frac{s}{\sqrt{2 \lambda a} \vee (b + \lambda a )}\right\} + \frac{\sqrt{\pi b s}}{2} \exp \left\{ - \frac{s}{b}\right\} \\
                & \leq \left[ 1 +  \frac{\sqrt{\pi b s}}{2}  \right] \exp \left\{ - \frac{s}{b}\right\} + \left[ \frac{\sqrt{2 \pi} \lambda a s}{2 b^2}  + \frac{2 \lambda a s^2}{b(b + \lambda s)}\right] \exp \bigg\{ - \frac{s}{\sqrt{2 \lambda a} \vee (b + \lambda a)}\bigg\} \\
                & \leq \left[ 1 +  \frac{\sqrt{\pi b s}}{2}  \right] \exp \left\{ - \frac{s}{b} \right\} + \left[ \frac{\sqrt{2 \pi} \lambda a}{2 b^2} + \frac{2 a}{b}\right] s \exp \bigg\{ - \frac{s}{\sqrt{2 \lambda a} \vee (b + \lambda a)}\bigg\}  \\
                \alignedoverset{\text{by } b \leq \sqrt{2 \lambda a} \vee (b + \lambda a)}{\leq} \left[  1 +  \frac{\sqrt{\pi b}}{2}  + \frac{\sqrt{2 \pi} \lambda a}{2 b^2} + \frac{2 a}{b} \right] s \exp \bigg\{ - \frac{s}{\sqrt{2 \lambda a} \vee (b + \lambda a)}\bigg\} ,
        \end{aligned}
    \]
    which gives the result we need.
    %Take the $\log$ transform, and we get the result.
\end{proof}

\begin{lemma}\label{lem_erfc}
	For any $x \geq 0$,
	\[
	    \frac{2 e^{x^2}}{\sqrt{\pi}} \int_x^{\infty} e^{-t^2} \, \mathrm{d} t \leq 1.
	\]
\end{lemma}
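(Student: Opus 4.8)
The plan is to recognize the integrand as a shifted Gaussian and reduce the claim to the elementary half-line Gaussian integral $\int_0^\infty e^{-s^2}\,\mathrm{d}s = \sqrt{\pi}/2$. Equivalently, I want to show $\int_x^\infty e^{-t^2}\,\mathrm{d}t \le \frac{\sqrt{\pi}}{2}\, e^{-x^2}$ for every $x \ge 0$, which upon multiplying both sides by $\frac{2}{\sqrt{\pi}}\,e^{x^2}$ is exactly the stated inequality.

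First I would perform the change of variables $t = x + s$ with $s \ge 0$, so that $t^2 = x^2 + 2xs + s^2$ and the integral becomes $\int_0^\infty e^{-(x+s)^2}\,\mathrm{d}s = e^{-x^2}\int_0^\infty e^{-2xs - s^2}\,\mathrm{d}s$. The factor $e^{-x^2}$ pulls out cleanly, leaving only the task of bounding $\int_0^\infty e^{-2xs - s^2}\,\mathrm{d}s$ by the constant $\sqrt{\pi}/2$, independently of $x$.

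The key step is to discard the cross term: because both $x \ge 0$ and $s \ge 0$ on the domain of integration, we have $2xs \ge 0$, hence $e^{-2xs} \le 1$, which gives the pointwise bound $e^{-2xs - s^2} \le e^{-s^2}$. Integrating this inequality yields $\int_0^\infty e^{-2xs - s^2}\,\mathrm{d}s \le \int_0^\infty e^{-s^2}\,\mathrm{d}s = \frac{\sqrt{\pi}}{2}$, and multiplying through by $\frac{2}{\sqrt{\pi}}\,e^{x^2}$ completes the argument.

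There is no serious obstacle here; the only point requiring any care is that the pointwise bound $e^{-2xs} \le 1$ relies on the sign restriction $x \ge 0$ (together with $s \ge 0$ from the substitution), which is precisely the hypothesis of the lemma—indeed the inequality fails for $x < 0$, so this is where the assumption is genuinely used. As a sanity check I note that equality holds at $x = 0$, where both sides of the reduced inequality equal $\sqrt{\pi}/2$, consistent with $\frac{2}{\sqrt{\pi}}\int_0^\infty e^{-t^2}\,\mathrm{d}t = 1$.
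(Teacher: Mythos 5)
Your proof is correct and is essentially the paper's own argument in a slightly reordered form: the paper bounds $e^{-(t^2-x^2)} \le e^{-(t-x)^2}$ and then substitutes, while you substitute $t = x+s$ first and then drop the cross term via $e^{-2xs} \le 1$; these are the same inequality, since $t^2 - x^2 \ge (t-x)^2$ is exactly $2xs \ge 0$ with $s = t-x$. Both reduce the claim to $\int_0^\infty e^{-s^2}\,\mathrm{d}s = \sqrt{\pi}/2$, so there is nothing to add.
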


\begin{proof}
	\[
	    \begin{aligned}
	    	\frac{2 e^{x^2}}{\sqrt{\pi}} \int_x^{\infty} e^{-t^2} \, \mathrm{d} t & = \frac{2}{\sqrt{\pi}} \int_x^{\infty} e^{- (t^2 - x^2)} \, \mathrm{d} t \\
	    	& \leq \frac{2}{\sqrt{\pi}} \int_x^{\infty} e^{- (t - x)^2} \, \mathrm{d} t \\
	    	& = \frac{2}{\sqrt{\pi}} \int_0^{\infty} \frac{e^{- t^2}}{2 \times \frac{1}{2}} \, \mathrm{d} t = \frac{2}{\sqrt{\pi}} \frac{\sqrt{2 \pi \times \frac{1}{2}}}{2} = 1.
	    \end{aligned}
	\]
\end{proof}

\end{document}